\newtheorem{theorem}{Theorem}[section]
\newtheorem{definition}[theorem]{Definition}
\def\BibTeX{{\rm B\kern-.05em{\sc i\kern-.025em b}\kern-.08em
    T\kern-.1667em\lower.7ex\hbox{E}\kern-.125emX}}
\begin{document}

\title{Disentangling Autoencoders (DAE)
\thanks{Under review.}
}

\author{\IEEEauthorblockN{Jaehoon Cha}
\IEEEauthorblockA{\textit{Scientific Machine Learning group} \\
\textit{Rutherford Appleton Laboratory}\\
\textit{Science and Technology Facilities Council}\\
United Kingdom \\
jaehoon.cha@stfc.ac.uk}
\and
\IEEEauthorblockN{Jeyan Thiyagalingam}
\IEEEauthorblockA{\textit{Scientific Machine Learning group} \\
\textit{Rutherford Appleton Laboratory}\\
\textit{Science and Technology Facilities Council}\\
United Kingdom \\
t.jeyan@stfc.ac.uk}
}

\maketitle

\begin{abstract}
Noting the importance of factorizing (or disentangling) the latent space, we propose a novel, non-probabilistic disentangling framework for autoencoders, based on the principles of symmetry transformations in group-theory. To the best of our knowledge, this is the first deterministic model that is aiming to achieve disentanglement based on autoencoders without regularizers. The proposed model is compared to seven state-of-the-art generative models based on autoencoders and evaluated based on five supervised disentanglement metrics. The experimental results show that the proposed model can have better disentanglement when variances of each features are different. We believe that this model leads to a new field for disentanglement learning based on autoencoders without regularizers. 
\end{abstract}

\begin{IEEEkeywords}
disentanglement, generative models, unsupervised learning, latent representation
\end{IEEEkeywords}

\section{Introduction}
\label{sec:intro}

Learning generalizable representations of data is one of the fundamental aspects of modern machine learning~\cite{ru22:_disent}. In fact, better representations are more than a luxury now, and is a key to achieve generalization, interpretability, and robustness of machine learning models~\cite{be13:_repre, br17:_ica, sp20:_ica}. One of the primary and desired characteristics of the learned representation is factorizability or disentanglement so that latent representation is composed of multiple, independent generative factors of variations. The disentanglement process renders the latent space features to become independent of one another, and thus provides the basis for novel applications, such as scene rendering, interpretability, and unsupervised deep learning~\cite{es18:_repre, it20:_scinet, hi21:_application}. Deep generative models, particularly that build on autoencoders, from the vanilla variational autoencoder (VAE) model~\cite{ki13:_vae} to various derivatives of VAE, including, $\beta$-VAE~\cite{hi17:_bvae, bu18:_bvae}, $\beta$-Total Correlation Variational Autoencoder (TCVAE)~\cite{ch18:_vae}, Controlled Capacity Increase-VAE(CCI-VAE)~\cite{bu18:_bvae},
Factor-VAE (FVAE)~\cite{ki18:_fvae}, Information Maximizing Variational Autoencoders (InfoVAE)~\cite{zh19:_infovae}, and Wasserstein-AE (WAE)~\cite{to17:_wae}, have shown to be effective in learning factored representations. The disentangling mechanism, and hence the underpinning functionality of these generative models,  rely on two forms of losses: regularization  and reconstruction losses~\cite{hi17:_bvae, ch18:_vae, bu18:_bvae, ki18:_fvae, to17:_wae}. 

Although these approaches have advanced the disentangled representation learning, there are a number of issues that limit their full potential. Among these, two of the salient issues that directly conflict with the process of deriving disentangled representations are:

\begin{itemize}
\item The tension of balancing two loss components in VAE (and their derivatives) is a delicate and a well-known issue~\cite{asperti:balancing-vae:2020}. While the KL-divergence acts as a regularizer by normalizing the smoothness of the latent space (with potential overlapping of latent variables), the reconstruction loss focuses on improving the visual quality of the resulting images. However, the process of improving reconstruction loss (and hence the visual quality of the output) is oblivious to the shape of the latent space. These contrasting effects render the balancing process more delicate, and when not done correctly, the visual quality of the generated images degrade. 

\item The  notion of known prior distribution is the cornerstone of VAEs and often assumed to be simple isotropic Gaussian distribution. Even with approaches that relax the expressive constrains around the prior exists, such as~\cite{to18:_vae,ta19:_vae, zh20:_pervae, an21:_contvae}, the presence of a prior (even if optimal) can easily create a tension between the true distribution and the prior. Hence, this can exert an additional pressure on latent space regularization, particularly if the distribution of the real data does not match the prior.
\end{itemize}
 
In this paper, we propose a novel autoencoder (AE)-based non-probabilistic approach for deriving disentangled representations while addressing the concerns highlighted above. More specifically, the proposed approach, which we name as Disentangling Auto-Encoder (DAE), relies on the concept symmetry transformation~\cite{hi18:_disent}, which is often formalized using group theory. By carefully deriving a set of symmetry transformations on the latent space for each latent variables, we achieve a powerful method for obtaining disentangled representations. The proposed model has the following advantages over conventional VAE-based approaches:

\begin{itemize}
\item[1] It is a non-probabilistic, group theory-based approach. As such, neither there is any assumption of any priors nor the process of learning any posteriors from the input data; and
\item[2] As a consequence of (1), the proposed approach fully eliminates the need for any distribution regularization mechanism (such as KL-divergence) in the latent space, and thus the approach renders a model that improves a reconstruction loss whilst maintaining disentangled representations.
\end{itemize}

Our evaluation, covering seven state-of-the-art VAE-based models across five different supervised disentanglement metrics, shows that the proposed model has a powerful disentangling ability without regularizers. This is particularly proven to be true across our evaluation when the variances of each feature are different. This provides an additional advantage where the method has potential to analyse real datasets which have a combination of categorical and continuous factors. 

The rest of this paper is organized as follows. In Section~\ref{sec:related} we review the related work, particularly focusing on VAE-based approaches due to its nature of strongly principled yet simplistic approach to disentanglement. This is then followed by a derivation of AE-based non-probabilistic approach for deriving disentangled representations in Section~\ref{sec:framework}. In Section~\ref{sec:eval}, we evaluate the proposed method against a number of relevant models with a toy example and three benchmark datasets, and discuss our findings. We then conclude the paper in Section~\ref{sec:conclusions} with directions for further research.


\section{Related Work}
\label{sec:related}
\subsection{Disentanglement}
\label{sec:related:disentanglement}
Disentangled representation learning~\cite{be13:_repre,hi18:_disent} focuses on learning independent factors that have useful but minimal information for a given task, such that their variations are orthogonal with each other and can account for the entire dataset. Decoupling any correlations between latent variables matches single underlying factor with one feature of latent variables and can serve a number of downstream applications including the improvement of predictive performance~\cite{lo19:_disentFewLabel}, effective learning with a small number of samples~\cite{va19:_disent, yu21:_disent}, discovery of physical concepts~\cite{it20:_scinet} and enabling 3D shape reconstruction from 2D images~\cite{pa20:_disent_gan}. 

A large body of work can be found around disentanglement, and ideal properties of a disentangled representation can be found in~\cite{ri16:_disent_prop, ea18:_disent_prop,ri18:_disent_prop, za20:_disent_metric}. Among a number of desirable properties of disentanglement, modularity, compactness and explicitness are three critically important properties. The modularity property focuses on the effect of one feature of learnt representation on others, or in other words,  independence. The compactness property measures how effectively one feature of the learnt representation covers one of the ground truth factor. The explicitness property measures the relationship between the learned factors and true factors of data. A number of metrics have been proposed in the literature to quantify these properties~\cite{hi17:_bvae, ki18:_fvae, ea18:_disent_prop, ch18:_vae, do19:_disent_metric, se19:_disent_metric}. In our work, we use the notions outlined in~\cite{za20:_disent_metric}, where the metrics are divided into three classes, namely, Intervention-based metrics, Predictor-based metrics, and Information-based metrics. These metrics are all used in a supervised manner and can be of indicators to quantify modularity, compactness, explicitness robustness to noise, nonlinear relationships between learnt representations and ground truth factors.

\begin{table*}[ht!]
  \caption{Comparison of different VAE-based models w.r.t the regularizers they employ.}
  \label{tbl:losses}
  \centering
  \begin{tabular}{lll}
    \toprule
    {\bf Model}   & $ {L}_{reg}(\phi)$ & {\bf Notes}  \\
    \midrule
    VAE &  $KL(q_{\phi}(\textbf{z}|\textbf{x}), p(\textbf{z}))$ &  $-$ \\
    
    $\beta$-VAE    &  $\beta KL(q_{\phi}(\textbf{z}|\textbf{x}), p(\textbf{z})) $   & Usually, $\beta$ is greater than 1\\
    
    $\beta$-TCVAE    &  $I(\textbf{z}, \textbf{x}) + \beta KL(q(\textbf{z}),\prod_j q(\textbf{z}_j)) + \sum_j KL(q(\textbf{z}_j), p(\textbf{z}_j)) $   & $I(\cdot, \cdot)$ is a mutual information \\
    
    CCI-VAE       & $\beta \|KL(q_{\phi}(\textbf{z}|\textbf{x}), p(\textbf{z})) - C\| $ & $C$ is a capacity \\
    
    FVAE  & $KL(q_{\phi}(\textbf{z}|\textbf{x}), p(\textbf{z})) + \gamma KL(q(\textbf{z}), \prod_j q(\textbf{z}_j)))$ & The second term is minimised using density-ratio trick \\
    
    InfoVAE &  $KL(q_{\phi}(\textbf{z}|\textbf{x}), p(\textbf{z})) $  + $\lambda MMD(q_{\phi}(\textbf{z}|\textbf{x}), p(\textbf{z})) $ & $MMD(\cdot, \cdot)$ is Maximum Mean Discrepancy \\
    
    WAE   & $ \lambda MMD(q_{\phi}(\textbf{z}|\textbf{x}), p(\textbf{z})) $ & $\lambda$ is a regularization coefficient \\

    \bottomrule
  \end{tabular}
\end{table*}

\subsection{Probabilistic Generative Models based on Autoencoder Model}
Autoencoder (AE), which consists of an encoder $E_\phi$ that maps an observation space to a lower-dimensional latent space, and a decoder $D_\theta$ that re-maps the latent space to the observation space, effectively learn meaningful representations in the latent space by minimizing the reconstruction loss, $\mathcal{L}_{recon}$ (cross-entropy or $L_2$). 

Probabilistic generative models based on AE are achieved by replacing the conventional encoder and decoder with probabilistic variants of them~\cite{ki13:_vae, rezende2015variational, hi17:_bvae, to17:_wae}, respectively. Given an observation $\textbf{x} \in \mathbb{R}^n$, the VAE~\cite{ki13:_vae}-based approaches rely on the variational theory. They use the probabilistic encoder, denoted by $q_{\phi}(\textbf{z}|\textbf{x})$, to approximate the intractable true posterior and the probabilistic decoder, denoted by $p_{\theta}(\textbf{x}|\textbf{z})$ that reconstructs the $\textbf{x}$ from $\textbf{z}$. In an ideal world, the resulting posterior $q_{\phi}(\textbf{z}|\textbf{x})$ should match well with the prior distribution $p(\textbf{z})$. However, this is rarely the case, and weights in the encoder and decoder are trained accounting this fact by relying on a loss function that measures not only the reconstruction loss, but also the similarity of the posterior and prior distributions. The similarity between two different distributions is, usually, computed using the KL-divergence, but alternative techniques can be used~\cite{to17:_wae}. The combined loss is referred to as the Evidence Lower Bound (ELBO)~\cite{ki13:_vae}, and defined as follows, 
\begin{equation}
\label{eq:vae_loss}
\begin{split}
\mathcal{L}&_{VAE}(\phi, \theta) = \\ &\mathbb{E}_{\textbf{z}\sim q_\phi(\textbf{z}|\textbf{x})}[\log p_\theta(\textbf{x}|\textbf{z})]-KL(q_{\phi}(\textbf{z}|\textbf{x})||p(\textbf{z})) \leq \log p(\textbf{x})
\end{split}
\end{equation}
The first term in~(\ref{eq:vae_loss}) can be estimated from samples $\textbf{z}$ drawn from the approximate posterior $q_{\phi}(\textbf{z}|\textbf{x})$ using reparameterization trick~\cite{ki13:_vae}. The second term plays a crucial role as a regularizer to minimize the difference between $q_{\phi}(\textbf{z}|\textbf{x})$ and $p(\textbf{z})$.

Majority of the previous work on disentangled representation learning are predominantly based on probabilistic models, particularly building on VAE. They enforce regularization in the latent space that either regularizes the approximate posterior $q_{\phi}(\textbf{z}|\textbf{x})$ or the aggregate posterior  $q(\textbf{z})=\frac{1}{N}\sum_{i=1}^N q_{\phi}(\textbf{z}|\textbf{x}^{(i)})$, as summarized in~\cite{ts18:_disent_ae}. The overall objective of majority of the VAE-based methods can be expressed as:

\begin{equation}
\label{eq:vae_losses}
\mathcal{L}_{recon}(\phi, \theta) + {L}_{reg}(\phi)
\end{equation}

where ${L}_{reg}(\phi)$ is a regularizer of a generative model, which often includes one or more hyperparameters, to strike a balance between the two losses. A carefully designed regularizer should enable the model achieving better disentanglement either by controlling the capacity of the latent space, or by measuring the total correlation between latent variables. In our evaluation, we compare the proposed model against seven other VAE-based derivatives, namely, vanilla VAE, $\beta$-VAE, $\beta$-TCVAE, CCI-VAE, FVAE, InfoVAE and WAE. All these models vary based on the underlying regularizer ${L}_{reg}(\phi)$. For example, the $\beta$-VAE model constraints on the latent space using $\beta$ to limit the capacity of the latent space, which encourages the model to learn the most efficient representation of the data. The regularization term of these different models (Column~2) are summarized in Table~\ref{tbl:losses} along with relevant notes (Column~3).

 Depending on the selection of the regularizer, each model provides different disentangling capabilities. In contrast, the method we propose here is not a probabilistic model, and thus, does not rely on variational inference or any approximation of posteriors or assumption of priors, totally eliminating the need for any regularizers. Instead, the proposed model relies on a deterministic AE model for deriving the latent space, which is then manipulated very carefully to derive the disentangled latent representations.


\section{Framework for DAE}
\label{sec:framework}

The deterministic, non-probabilistic approach we propose here in this paper, builds on the autoencoder (rather than variational autoencoders). As such, we first provide a relevant background in Section~\ref{sec:disent_sub1} based on~\cite{hi18:_disent}. We then establish the relationship between the autoencoder model and disentangled representation in Section~\ref{sec:dis-ae-association}. We then define the relevant mathematical framework and a corresponding neural network architecture implementing the proposed disentangling autoencoder. 

\subsection{Disentangled representation}
\label{sec:disent_sub1}
The notion of disentangled representation is mathematically defined using the concept of symmetry in~\cite{hi18:_disent}. For example, horizontal and vertical translations are symmetry transformations in two-dimensional grid, and, hence, such transformations change the location of an object in this two-dimensional grid. From the definitions of symmetry group in~\cite{hi18:_disent}, a symmetry group can be decomposed as a product of multiple subgroups, if suitable subgroups can be identified. This can render an intuitive method to disentangle the latent space, if subgroups that independently act on subspaces of a latent space, can be found. If actions by transformations of each subgroup only affect the corresponding subspace, the actions are called \textit{disentangled group actions}. In other words, disentangled group actions only change a specific property of the state of an object, and leaves the other properties invariant. If there is a transformation in a vector space of representations, corresponding to a disentangled group action, the representation is called a \textit{disentangled representation}. We reproduce the formal definitions of disentangled group action and disentangled representation from~\cite{hi18:_disent}, as Definitions~\ref{def:dis-group} and~\ref{def:dis-rep}, respectively. 

\begin{definition}
\label{def:dis-group}
Suppose that we have a group action $\cdot:G \times X  \rightarrow X$, and the group $G$ decomposes as a direct product $G=G_1 \times \cdots \times G_n$. Let the action of the full group, and the actions of each subgroups be referred to as $\cdot$ and $\cdot_i$, respectively. Then, the action is \textit{disentangled} if there is a decomposition $X = X_1 \times \cdots \times X_n$, and actions $\cdot_i:G_i\times X_i \rightarrow X_i$, $i\in \{1, \cdots, n\}$ such that:

\begin{equation}
(g_1, \cdots, g_n) \cdot (\mathbf{x_1}, \cdots, \mathbf{x_n}) = (g_1 \cdot \mathbf{x_1}, \cdots, g_n \cdot \mathbf{x_n})
\end{equation}
for all $g_i\in G_i$ and $\mathbf{x_i} \in X_i$.
\end{definition}

Now, to derive the definition of disentangled representation from the definition of disentangled group action, consider a set of world-states, denoted by $W$. Furthermore, assume that: (a) there is a generative process $b:W\rightarrow O$ leading from world-states to observations, $O$, (b) and an inference process $h:O\rightarrow Z$ leading from observations to an agent's representations, $Z$. With these, consider the composition $f:W\rightarrow Z$, $f=h\circ b$. In terms of transformation, assume that these transformations are represented by a group $G$ of symmetries acting on $W$ via an action $\cdot: G\times W \rightarrow W$. 

The overarching goal of disentangling the latent space now relies on finding a corresponding action $\cdot: G\times Z \rightarrow Z$ so that the symmetry structure of $W$ is reflected in $Z$. In other words, an action on $Z$ corresponding to the action on $W$ is desirable. This can be achieved if the following condition is satisfied:

\begin{equation}
\label{eq:f_eq}
g \cdot f(\mathbf{w}) = f(g\cdot \mathbf{w}) \quad \forall g\in G, \mathbf{w} \in W.
\end{equation}

In other words, the action, $\cdot$, should commute with $f$, which adheres to the definition of the equivariant map, and thus, $f$ is an equivariant map, as shown below.

\begin{center}
\begin{tikzpicture}[-stealth,
  label/.style = { font=\footnotesize }]
  \matrix (m)
    [
      matrix of math nodes,
      row sep    = 4em,
      column sep = 4em
    ]
    {
      G \times W & W   \\
      G \times Z & Z \\
    };
    \path (m-1-1) edge node [above, label] {$\cdot_\mathbf{W}$} (m-1-2);
    \path (m-1-1) edge node [left, label] {$id_G \times f$} (m-2-1);
    \path  (m-1-2) edge node [right,  label] {$f$} (m-2-2);
    \draw[dashed] (m-2-1) edge node [above, label] {$\cdot_\mathbf{Z}$} (m-2-2);
\end{tikzpicture}
\end{center} 

A very good example of an equivariant map from~\cite{hi18:_disent} is, 

\begin{equation}
  f(\mathbf{z}) = (e^{i\mathbf{z}_1}, \cdots, e^{i\mathbf{z}_n}). 
  \label{eqn:equivariant:ex} 
\end{equation}

From~\cite{hi18:_disent}, a disentangled representation can be defined as follows:

\begin{definition}
\label{def:dis-rep}
The representation $Z$ is disentangled with respect to $G=G_1 \times \cdots \times G_n$ if 
\begin{itemize}
    \item[1.] There is an action $\cdot:G \times Z \rightarrow Z$,
    \item[2.] The map $f:W\rightarrow Z$ is equivariant between the actions on $W$ and $Z$, and
    \item[3.] There is a decomposition $Z=Z_1 \times \cdots \times Z_n$ or $Z=Z_1 \oplus \cdots \oplus Z_n$ such that each $Z_i$ is fixed by the action of all $G_j$, $j\neq i$ and affected only by $G_i$.
\end{itemize}
\end{definition}

\begin{figure*}[ht!]
  \centering
\includegraphics[width=0.8\linewidth]{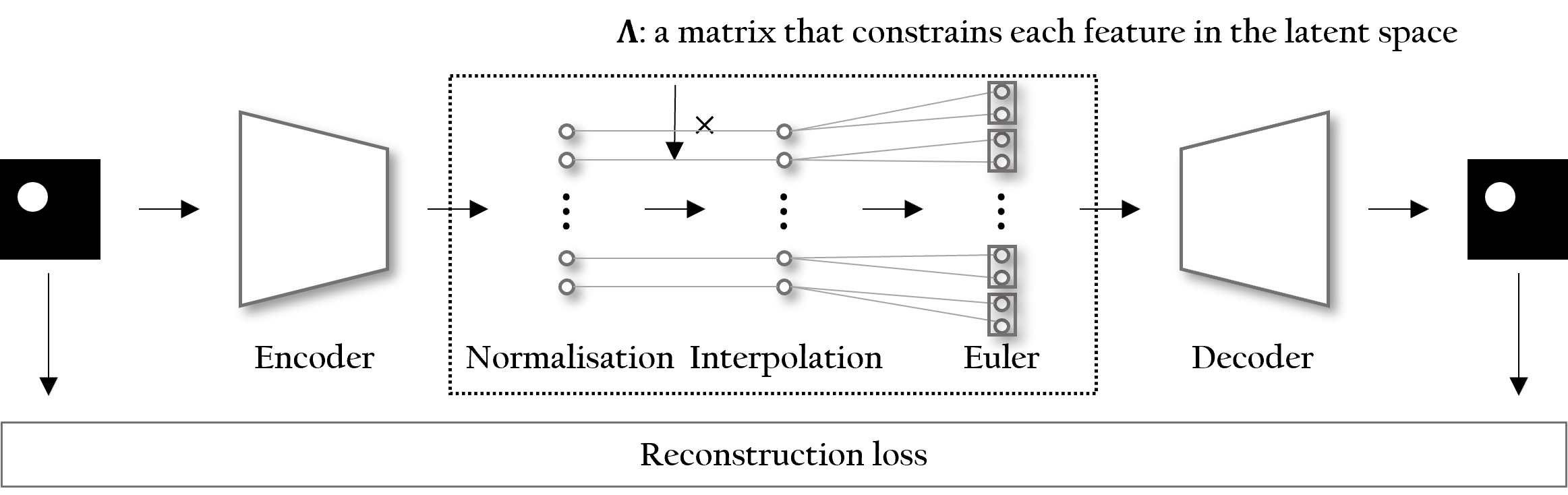} 
   \caption{Illustration of the DAE architecture.}
   \label{fig:dae-model}
\end{figure*}

\subsection{Association between the Disentangled Representation and Autoencoder}
\label{sec:dis-ae-association}

With the definition of equivariant map in place, the overarching goal of finding a disentangled representation is equivalent to finding $f$ that satisfies~(\ref{eq:f_eq}). However, in general, one cannot control the nature of the generative process. In addition, without loss of generality, we can easily assume the generative process $b$ is an equivariant map. In other words, action on the set of world-states commute with $b$, 

\begin{equation}
\label{eq:b_eq}
g \cdot b(\mathbf{w}) = b(g\cdot \mathbf{w}) \quad \forall g\in G, \mathbf{w} \in W.
\end{equation}

\noindent Now, consider the inference process $h$, defined above. 

\begin{theorem}
\label{thm:z_eq}
Suppose a generative process $b$ is an equivariant map satisfying~(\ref{eq:b_eq}). Then, there exists a function $f$ that satisfies~(\ref{eq:f_eq}) if an inference process $h:O\rightarrow Z$ is an equivariant map satisfying,

\begin{equation}
\label{eq:h_eq}
g \cdot h(\mathbf{o}) = h(g\cdot \mathbf{o}) \quad \forall g\in G, \mathbf{o} \in O.
\end{equation}

\end{theorem}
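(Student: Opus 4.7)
The plan is to prove Theorem \ref{thm:z_eq} by showing directly that the composition $f = h \circ b$ inherits the equivariance property from its two components, which is the standard fact that a composition of equivariant maps is equivariant.

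First, I would fix arbitrary $g \in G$ and $\mathbf{w} \in W$, and start from the left-hand side of (\ref{eq:f_eq}): expand $g \cdot f(\mathbf{w})$ using the definition $f = h \circ b$, which gives $g \cdot h(b(\mathbf{w}))$. Then I would apply hypothesis (\ref{eq:h_eq}) with the specific observation $\mathbf{o} = b(\mathbf{w})$ to pull the group element through $h$, obtaining $h(g \cdot b(\mathbf{w}))$. Next, I would apply hypothesis (\ref{eq:b_eq}) to the inner expression $g \cdot b(\mathbf{w})$ to get $h(b(g \cdot \mathbf{w}))$, and finally recognise this as $f(g \cdot \mathbf{w})$ by the definition of $f$. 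Since $g$ and $\mathbf{w}$ were arbitrary, this establishes (\ref{eq:f_eq}).

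Conceptually, the argument is the following chain:
\begin{equation*}
g \cdot f(\mathbf{w}) \;=\; g \cdot h(b(\mathbf{w})) \;=\; h(g \cdot b(\mathbf{w})) \;=\; h(b(g \cdot \mathbf{w})) \;=\; f(g \cdot \mathbf{w}).
\end{equation*}
There is no real obstacle here; the result is a diagram chase in the category of $G$-sets. The only subtle point worth flagging explicitly in the write-up is that the same action $\cdot$ on the middle space $O$ is being used in both hypotheses (\ref{eq:b_eq}) and (\ref{eq:h_eq}), so that the $g \cdot b(\mathbf{w})$ produced by equivariance of $b$ is exactly the input that equivariance of $h$ can absorb. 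Once this compatibility is noted, the proof reduces to three substitutions and requires no further machinery.
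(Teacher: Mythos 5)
Your proof is correct and is essentially identical to the paper's own argument: both proceed by the same four-step chain $g \cdot f(\mathbf{w}) = g \cdot h(b(\mathbf{w})) = h(g \cdot b(\mathbf{w})) = h(b(g \cdot \mathbf{w})) = f(g \cdot \mathbf{w})$, applying equivariance of $h$ and then of $b$. Your explicit remark that the same action on $O$ must be used in both hypotheses is a small clarification the paper leaves implicit, but the route is the same.
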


\begin{proof}
Suppose that there $b$ satisfies~(\ref{eq:b_eq}) and $h$ is an equivariant map. Then 
\begin{align}
g \cdot f(\mathbf{w}) &= g \cdot h(b(\mathbf{w}))  \\
&= h(g \cdot b(\mathbf{w}))  \\
&= h(b(g \cdot \mathbf{w}))  \\
&= f(g \cdot \mathbf{w})) \\
\end{align}
$\forall g\in G, \mathbf{w} \in W$.
\end{proof}

Following the Theorem~\ref{thm:z_eq}, the goal of disentangling is same as  finding an inference process $h: O\rightarrow Z$. Although there is no guarantee that one can find a compatible action $\cdot:G \times Z \rightarrow Z$ satisfying~(\ref{eq:h_eq}), if $h$ is bijective then~(\ref{eq:h_eq}) can be expressed as follows, 

\begin{equation}
\label{eq:z_eq}
g\cdot \textbf{z} = h(g\cdot h^{-1}(\textbf{z}))
\end{equation}

However, as $h$ is a bijective function, simple neural network-based models cannot learn the overall equivariant map. However,  the equivariant map, such as one outlined in \eqref{eqn:equivariant:ex}  can be learned by the autoencoders, which is the central contribution of this paper. To show this mapping, let $h$ and $h^{-1}$ be an encoder, $E_{\phi}$, and a decoder, $D_{\theta}$, of an autoencoder. Then, the group action $\cdot:G \times Z \rightarrow Z$ can be defined as follows:

\begin{center}
\begin{tikzpicture}[-stealth,
  label/.style = { font=\footnotesize }]
  \matrix (m)
    [
      matrix of math nodes,
      row sep    = 4em,
      column sep = 4em
    ]
    {
    G \times Z & G \times O &  O & Z   \\
    };
    \path (m-1-1) edge node [above, label] {$id_G \times D_{\theta}$} (m-1-2);
    \path (m-1-2) edge node [above, label] {$\cdot_\textbf{O}$} (m-1-3);
    \path  (m-1-3) edge node [above,  label] {$E_{\phi}$} (m-1-4);
\end{tikzpicture}
\end{center} 

This shows that the equivariant map can indeed be learned by an autoencoder. However, this is not without a number of challenges, which we discuss in  Section~\ref{subsec:towardAE} below.

\subsection{Towards Disentangling Autoencoder: Challenges}
\label{subsec:towardAE}

Consider the generic equivariant map $f$ stated in~\eqref{eqn:equivariant:ex}, now applied to an $n$ dimensional latent space vector $\mathbf{z}$. One way this mapping can be made more specific to our case is from~\cite{hi18:_disent}, which can be expressed as: 

\begin{equation}
\label{eq:natural_f}
f(\mathbf{x_1},\ldots,\mathbf{x_n}) = (e^{2\pi i\mathbf{x_1} /N_1}, \ldots, e^{2\pi i\mathbf{x_n} /N_n}).
\end{equation}

\noindent where $N_j$ (for $j=1,\ldots,n$) is the number of elements in subgroup $j$. Given that $e^{2\pi i\theta} = \cos(2\pi\theta) + i\sin(2\pi\theta)$, \eqref{eq:natural_f} provides an excellent route for disentangling groups. However, there are still a number of challenges in realising the overall idea to be of practical utility, particularly in the AE setting. These are:

\begin{itemize}
  \item {\bf Number of Elements in a Subgroup}: The number of possible elements in the subgroups $N_j$ ($j=1,\ldots,n$), or at least the relative ratio of the number of elements between the subgroups are not known a priori. Without access to this information, learning \eqref{eq:natural_f} becomes impossible. 
  \item {\bf Robustness to Small Perturbations}: Although mapping like~\eqref{eq:natural_f} renders an approach for disentanglement, the model is not resilient to small perturbations (such as due to noise), which is essential for the model to behave in robust manner when presented with unseen examples. 
  \item {\bf Spatial Distribution of Features}: An ideal factorized latent space must have the features spatially distributed in an equally likely manner. However, the equivariant map we discussed above alone may not take care of this. 
\end{itemize}

Although it is possible to address some of these concerns from the theoretical stand point, nearly all of these are addressable by carefully designing the architecture that exploits both the AE and the equivariant map principle discussed above to achieve the best disentanglement process. We discuss this in the next sub section.

\subsection{Architecture of the DAE}
\label{subsec:archi}
In mapping our theory to an architecture, we build on the AE model, which constitutes an encoder, that maps the observation space $O$ to a factorized latent space $Z$, followed by the disentangling process that factorizes/disentangles the latent space $Z$ to $Z'$, and finally the decoding layer, that maps the factorized latent $Z'$ to regenerated observation space $O'$. Each of the concerns that were discussed in Section~\ref{subsec:towardAE} are handled by a network layer in our architecture, as shown in Figure~\ref{fig:dae-model}.  We describe how each of these layers addresses the concerns in the following sub sections.

\subsubsection{Number of Elements in a Subgroup}

Although the number of elements in a subgroup is not known a priori, these numbers or the relative ratio of the possible number of elements across subgroups can be estimated using techniques that can extract the variance information from compressed information, such as principal component analysis (PCA)~\cite{jo02:_pca}, independent component analysis (ICA)~\cite{hy00:_ica}, or even a variational encoder (VAE). In this paper, for the reasons of simplification, we will be using the PCA technique. Assume that $\Lambda$ denotes the relative ratio of the possible number of elements across subgroups. 

\subsubsection{Uniform Spatial Distribution of Features using Batch min-max Normalisation}
\label{sec:framework:stretching}

To ensure that each feature is  equally/likely distributed across the latent space, we introduce a normalisation layer, where we apply batch min-max normalisation to the outputs of the encoder. This layer uses the batch minimum and the maximum of each feature of the encoder output during training. As minimum and maximum values vary from batch (mini-batch) to batch (mini-batch), we update the moving minimum and maximum values during the training  process, and use them during the test phase, akin to a batch normalization layer~\cite{io15:_batch}. In order to slowly learn the moving minimum and maximum values, they are initialized close to the middle point of $[0, 1)$. After batch min-max normalisation, we need to multiply $\Lambda$ (obtained using PCA method in our case) to the output of the batch min-max normalisation layer to consider the different number of possible elements at different features.

Since the singular values from PCA are proportional to the variances of the principal components of compressed data, these values are used to obtain relative ratio of the number of possible element in the subgroups~\cite{wa03:_pca}. Then, all singular values are divided by the maximum values and are rounded to the nearest one decimal place. The values smaller than unity are replaced with hyperparameter $\alpha$. The relevant algorithm is shown in Algorithm~\ref{alg:findingW} in the supplementary material.

\subsubsection{Adding Robustness to Small Perturbations using Interpolation Layer}
\label{sec:framework:Interpolation}

We achieve this by introducing a layer (Interpolation layer)  that performs Gaussian interpolation on the output of the normalized latent space. In~\cite{vi10:_denAE,be18:_denAE} show that interpolation by Gaussian noise helps mapping unseen examples to known examples, and also makes the latent space locally smooth. Since the proposed model is deterministic, it is important to map a number of unseen examples to the learned representations. This is achieved by adding weight-sensitive Gaussian noise to the outputs of the previous layer during training. Weight-sensitive Gaussian is obtained based on the closest proximal distance of each dimension of the representations. This approach enables unseen examples to fall into the closet representations in the latent space. The relevant algorithm is shown in Algorithm~\ref{alg:Interpolation} in the supplementary material. It is worth noting that this layer will not be used during the inference / test phase.

\subsubsection{Mapping using Euler Layer}
\label{sec:framework:euler}

The final stage of the disentangling process is to perform the mapping outlined in~\eqref{eqn:equivariant:ex}. We define a dedicated layer, referred to as the Euler layer, by mapping each latent variable to its cosine and sine values by 
\begin{equation}
\label{eq:euler_map}
\mathbf{z_j} \rightarrow (cos(2\pi \mathbf{z_j}), sin(2\pi \mathbf{z_j}))
\end{equation}
for all $j\in\{1, \dots, n\}$ as discussed in Section~\ref{subsec:towardAE}. We illustrate this in Figure~\ref{fig:dae-model}, where the outputs from the interpolation layer are mapped to cosine and sine values as discussed above.

\section{Evaluation and Results}
\label{sec:eval}

\subsection{Evaluation Method}

We perform our evaluation using five different supervised disentanglement metrics to show the disentanglement ability of the proposed model. Therefore, we use datasets which have ground truth factors for disentanglement analysis. 

\subsubsection{Datasets}
\label{subsec:datasets}

One of the critical challenges around evaluating disentanglement is identifying suitable datasets. It is difficult to identify a common dataset that can be used to study this problem. In the literature, different datasets have been used for different purposes. For example, dSprite~\cite{ds:dsprites:2017} dataset has been used in $\beta$-VAE, $\beta$-TCVAE, CCI-VAE and FVAE. Although this dataset is useful to understand the traversal order of the latent space,  the lack of possibility to fully disentangle the feature space of this dataset prevents us from using this for our study. Similarly, majority of the datasets, such as 3D Chair~\cite{ds:3dshapes:2018} and CelebA~\cite{li15:_data} despite having the  ground truth, they lack the coherent labelling needed for quantifying the disentanglement. Therefore, in this paper, we utilise the datasets that have been first utilised in~\cite{hi18:_disent}, with relevant enhancements, which we describe in the supplementary material (See~\ref{sec:supp:dataset}). In addition to this toy dataset, we also use three benchmark datasets to evaluate our model, namely, 3D Shape Dataset~\cite{ds:3dshapes:2018}, 3D Teapots Dataset~\cite{ea18:_disent_prop} and 3D Face Model Dataset~\cite{pa09:_dataset}.

\begin{figure*}[ht!]
      \begin{minipage}{0.11\linewidth}
     \centering
   \includegraphics[width=\linewidth]{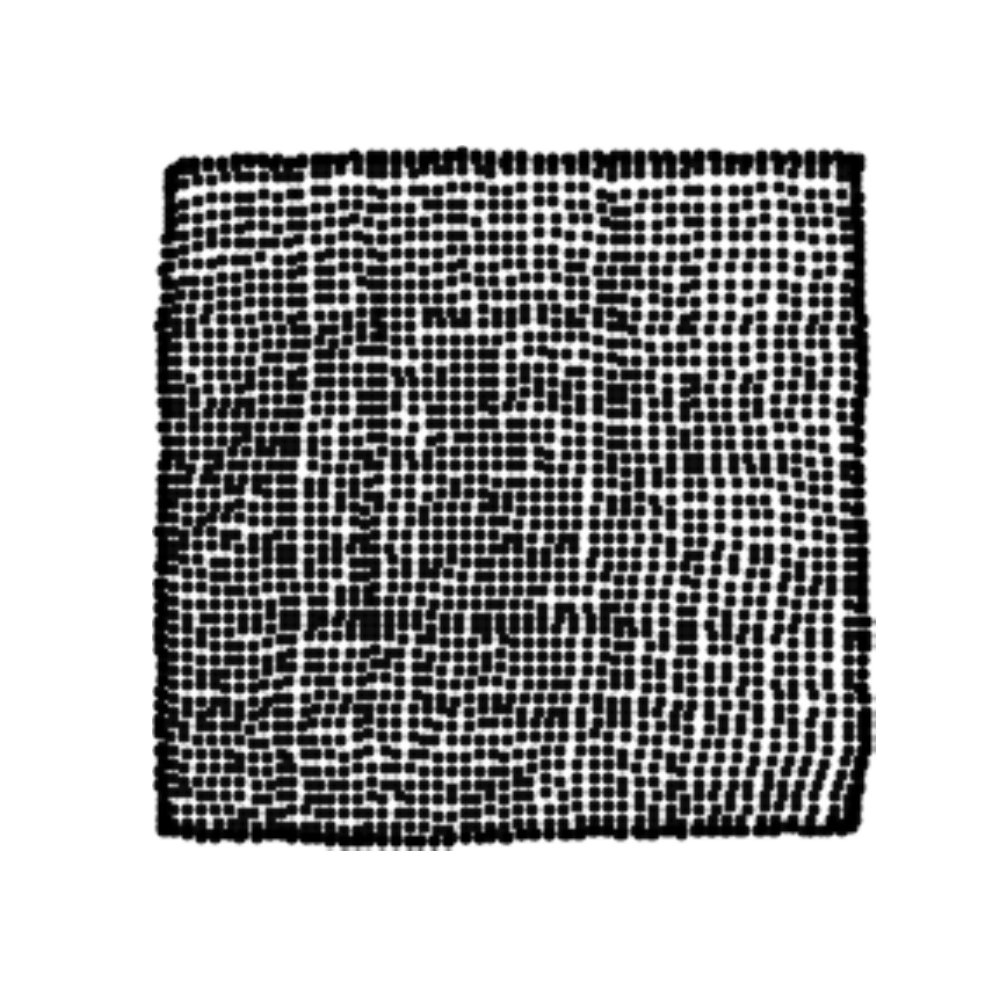} 
  {\scriptsize (a) DAE}
      \end{minipage}
      \hfill
      \begin{minipage}{0.11\linewidth}
      \centering
      \includegraphics[ width=\linewidth]{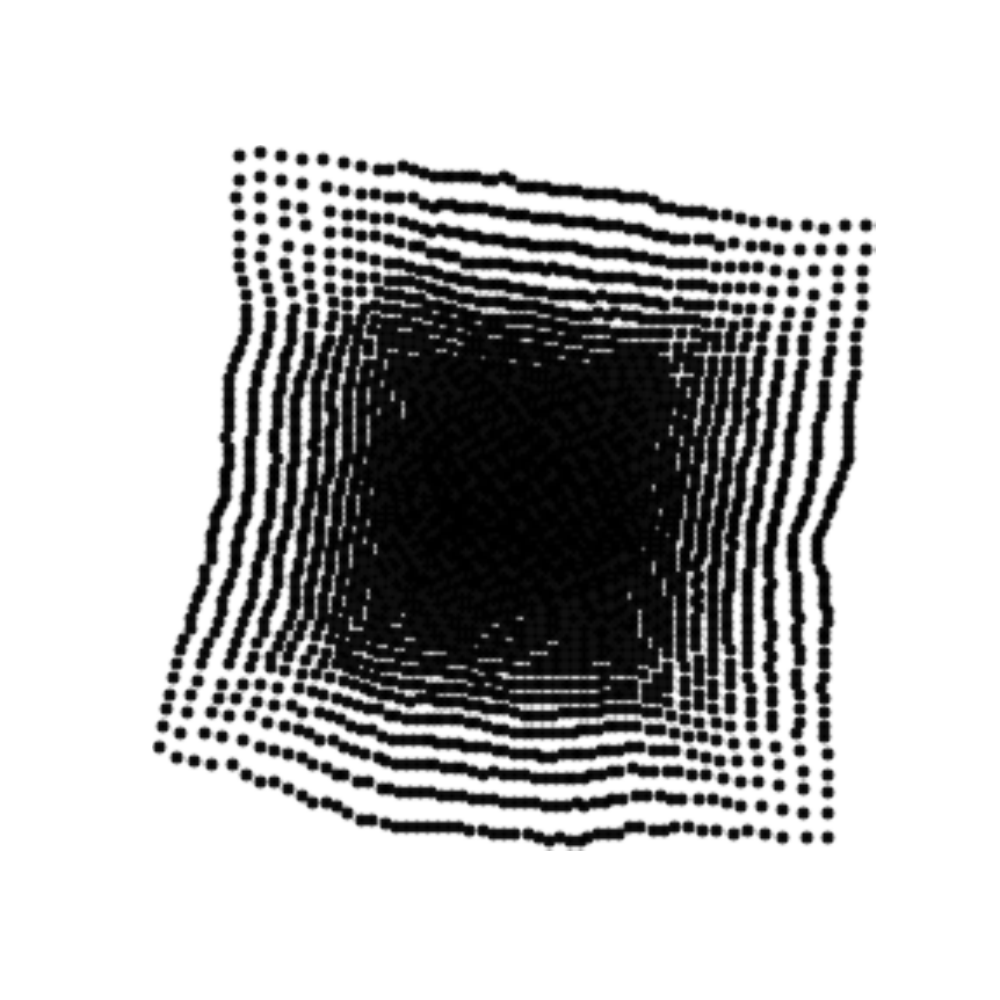}
  {\scriptsize (b) $\beta$-VAE}
      \end{minipage}
      \hfill
      \begin{minipage}{0.11\linewidth}
     \centering
   \includegraphics[width=\linewidth]{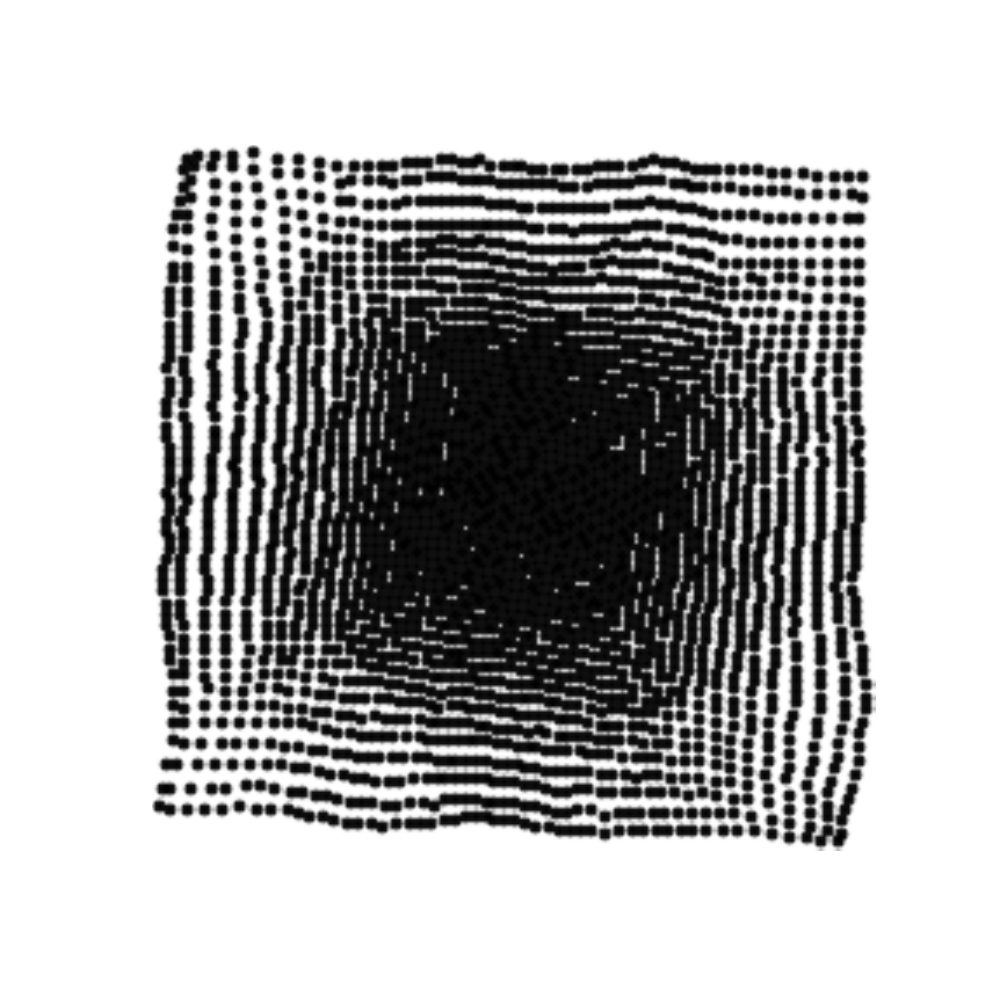} 
  {\scriptsize (c) $\beta$-TCVAE}
      \end{minipage}
      \hfill
      \begin{minipage}{0.11\linewidth}
      \centering
      \includegraphics[ width=\linewidth]{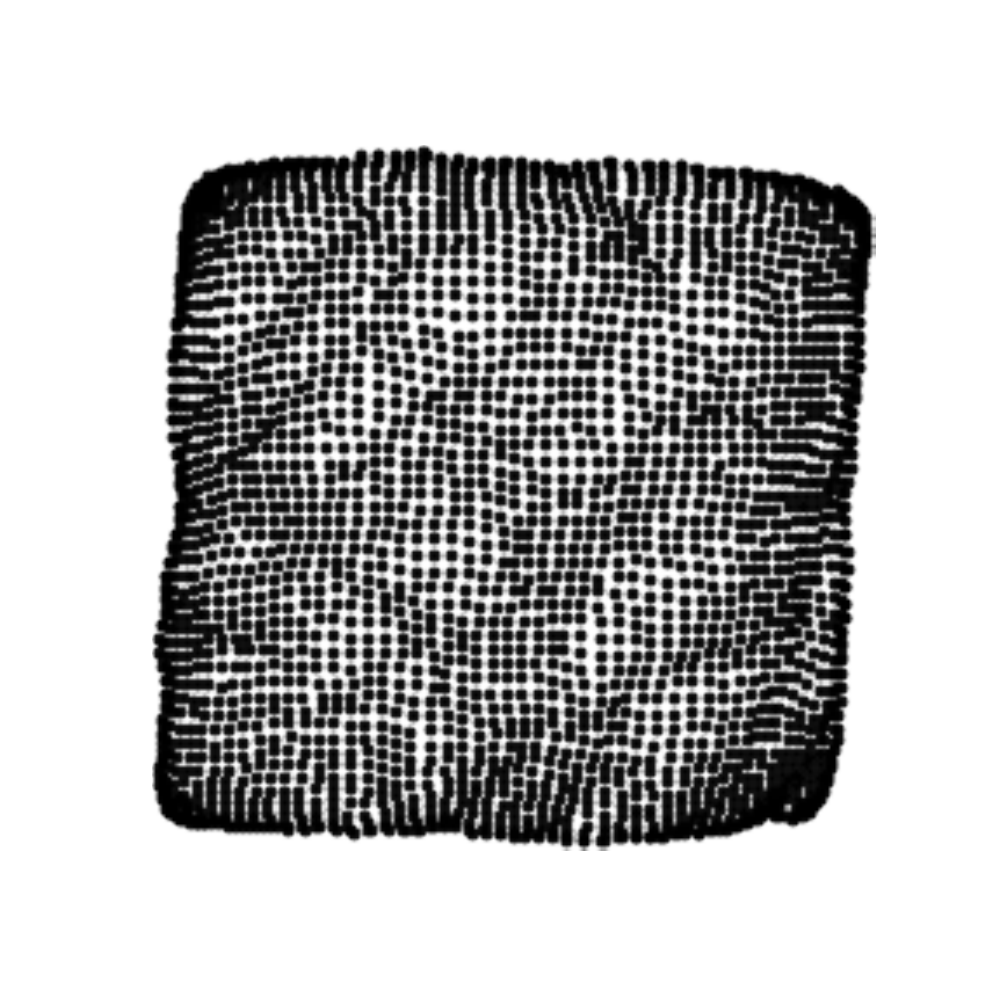}
  {\scriptsize (d) CCI-VAE}
      \end{minipage}
      \hfill
      \begin{minipage}{0.11\linewidth}
     \centering
   \includegraphics[width=\linewidth]{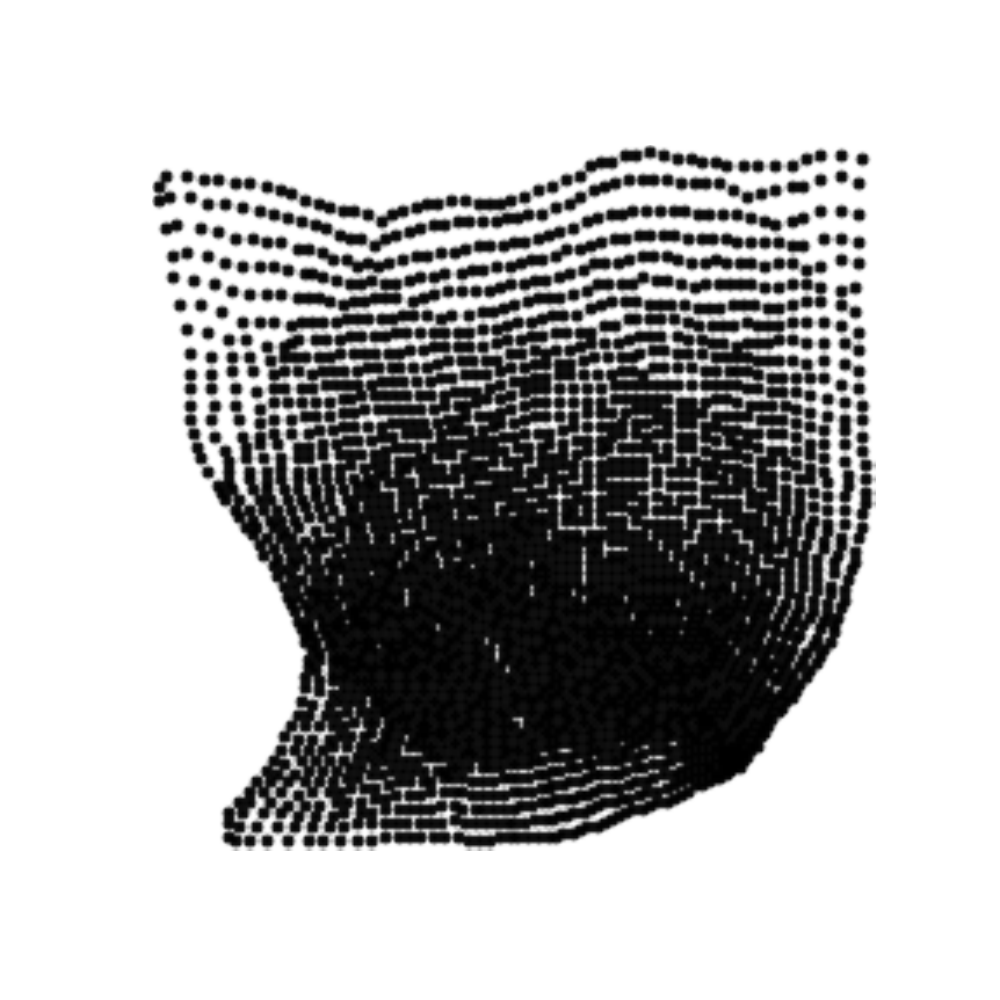} 
  {\scriptsize (e) FVAE}
      \end{minipage}
      \hfill
      \begin{minipage}{0.11\linewidth}
     \centering
   \includegraphics[width=\linewidth]{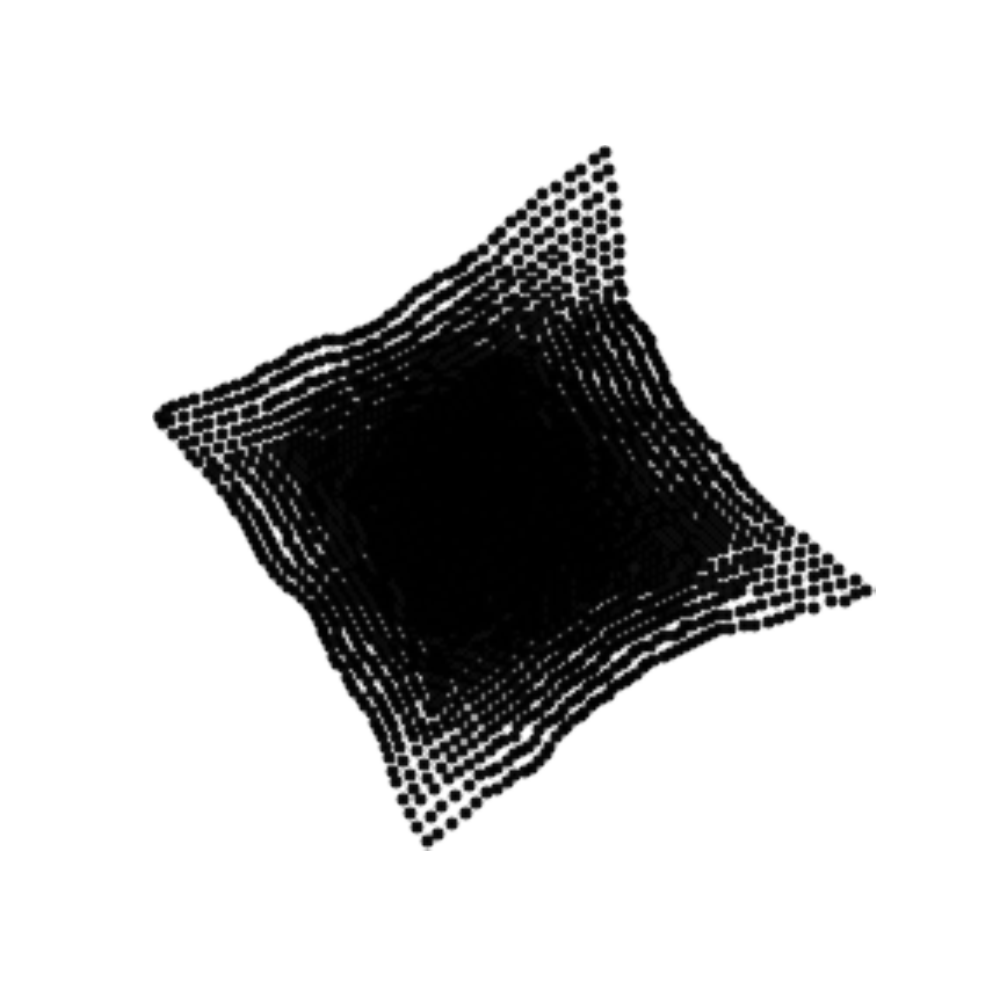} 
  {\scriptsize (f) InfoVAE}
      \end{minipage}
      \hfill
      \begin{minipage}{0.11\linewidth}
     \centering
   \includegraphics[width=\linewidth]{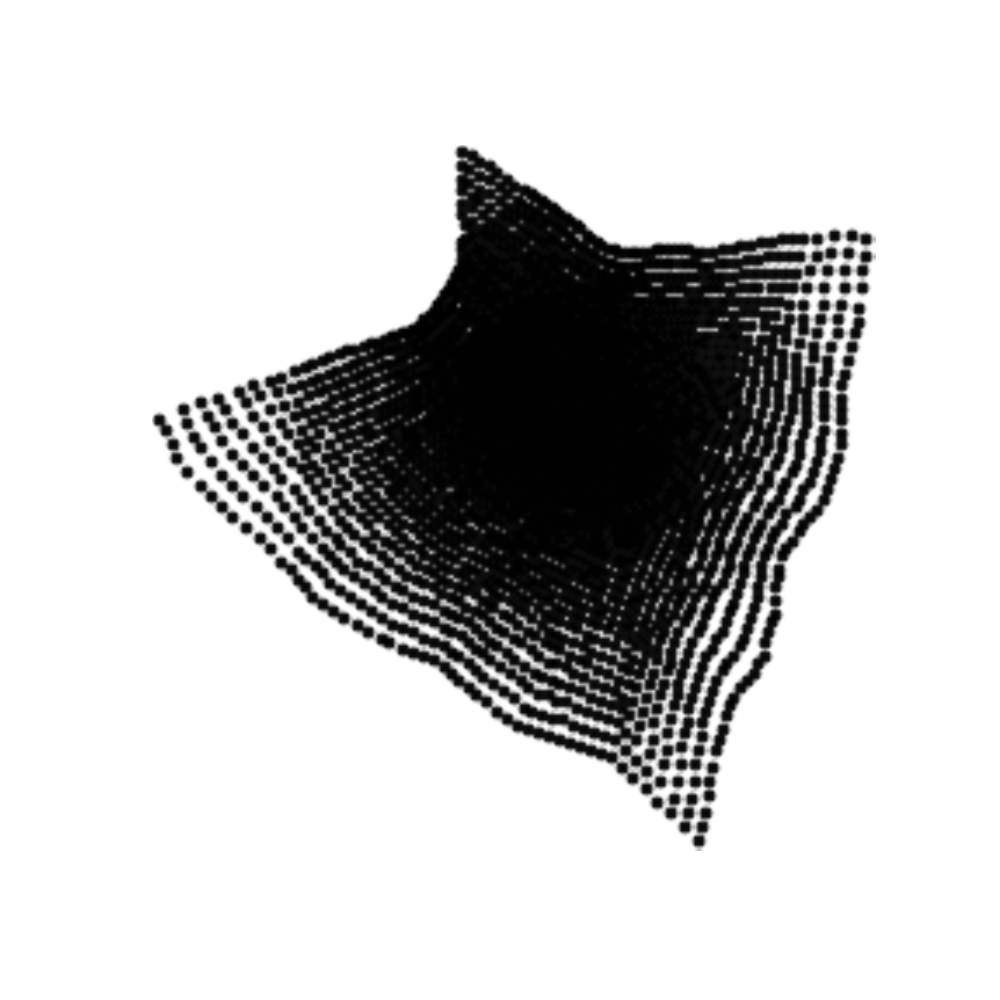} 
  {\scriptsize (g) VAE}
      \end{minipage} 
      \hfill
      \begin{minipage}{0.11\linewidth}
     \centering
   \includegraphics[width=\linewidth]{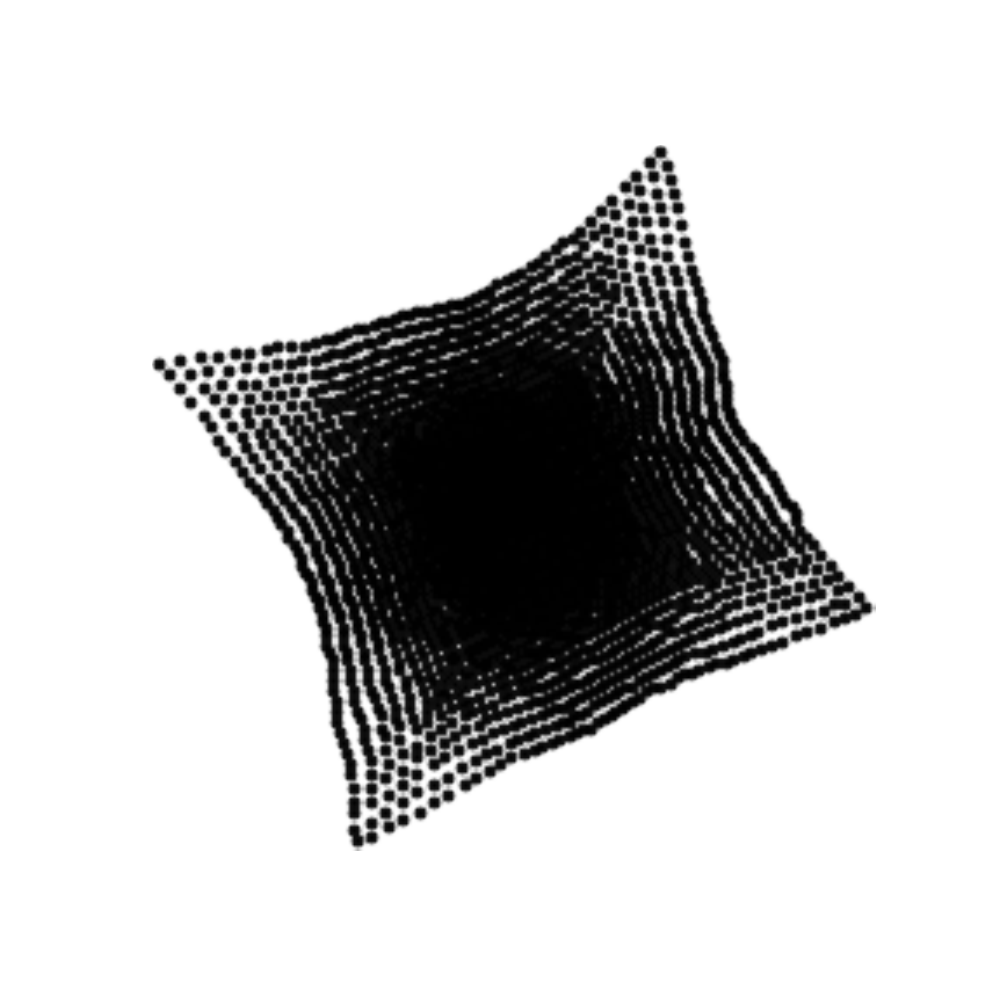} 
  {\scriptsize (g) WAE}
      \end{minipage} 
     \caption{Positional relationships (X-Y) in the latent space learned by different models when a dataset has only $x$ and $y$ positional features. In this case, most models are able to disentangle $x$ and $y$ positional features in the latent space.}
     \label{fig:xy_latent}
\end{figure*}

\subsubsection{Baseline Models}

We considered seven different baselines for evaluation, namely, VAE, $\beta$-VAE, $\beta$-TCVAE, CCI-VAE, FVAE, InfoVAE and WAE. As the proposed technique is purely an AE-based method, we have not included any GAN-specific baselines. To render a fair evaluation mechanism, we used the same encoder and decoder architectures, and same latent space dimensions (for each baseline) throughout the evaluation. We provide a detailed description of these, including the details of the system on which these evaluations were carried out as part of the supplementary material.

\subsubsection{Performance Metrics}

As outlined in Section~\ref{sec:related:disentanglement}, there are a number of metrics that can be used to study the performance of disentanglement, depending on the nature of the dataset, access to ground truth, availability of latent factors, and the number of dimensions in the latent space. We use two metrics: {\bf (a) Visualization of the latent space}, and  (b) {\bf (b) Numerical disentanglement score}. The former metric permits one to visualize the orthogonality between features and can be used to demonstrate that the model handles combination of categorical and continuous factors in the latent space. The second metric provides a quantifiable method of the disentanglement. We have used five supervised disentanglement scores  each of the disentanglement metric classes (see Section~\ref{sec:related:disentanglement}), namely, {\bf z-diff} and {\bf z-min} from the intervention-based, {\bf dci-rf} from the predictor-based, and {\bf jemmig} and {\bf dcimig} from the information-based metric classes.

\subsubsection{Hyperparameter Setting}
 
 The proposed model relies on an easily determinable hyperparameter, namely, $\Lambda$, that captures the relative ratio of the number of elements for each feature. As shall we discussed later, although the proposed model is not heavily sensitive to this hyperparameter, providing sensible value can lead to best outcomes. Each of the dataset used for the evaluation has varying number of features, and hence the variance between these features. As stated before, we used Algorithm~\ref{alg:findingW} to obtain the values for this hyperparameter, which utilises the PCA method (in our case), and the typical values for $\bar{S}$ in Algorithm~\ref{alg:findingW} are shown in Table~\ref{tab:singular_v}.

\begin{figure}
      \begin{minipage}{0.24\linewidth}
     \centering
   \includegraphics[width=\linewidth]{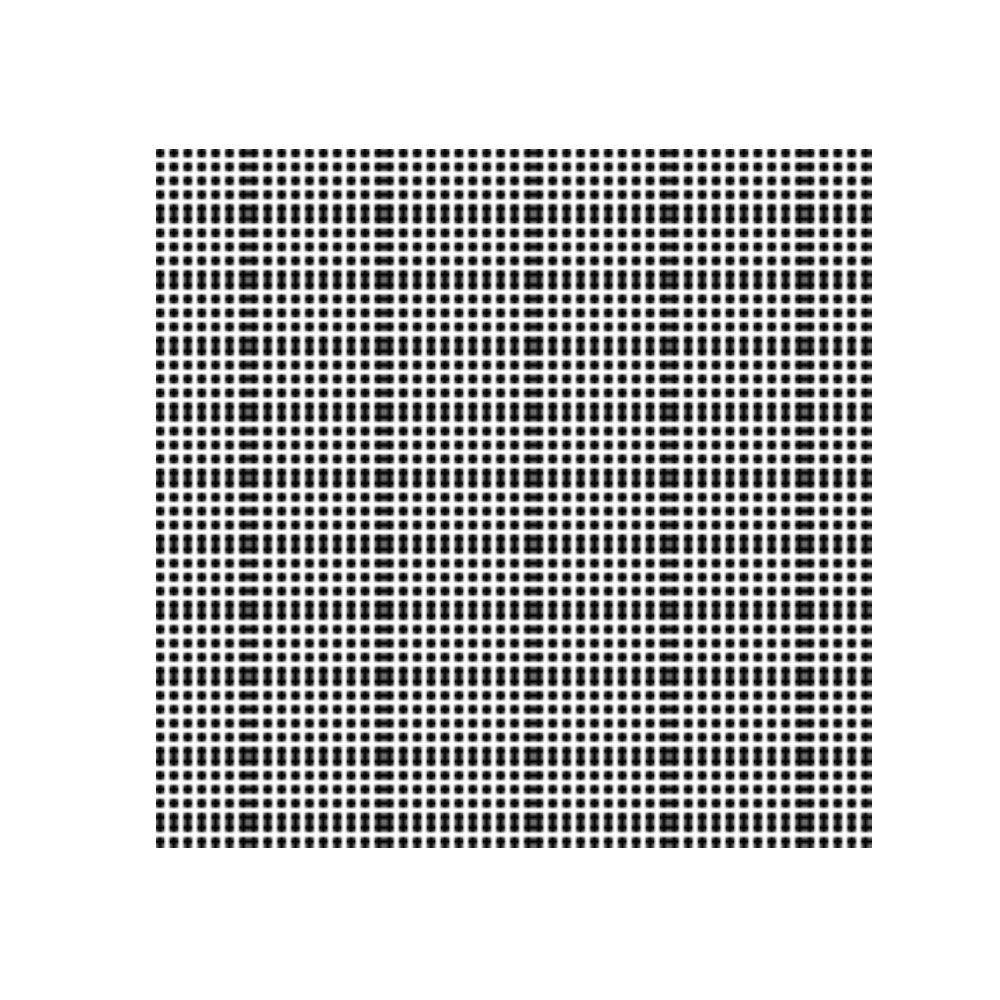} 
  {\scriptsize (a)}
      \end{minipage}
      \hfill
      \begin{minipage}{0.24\linewidth}
      \centering
      \includegraphics[ width=\linewidth]{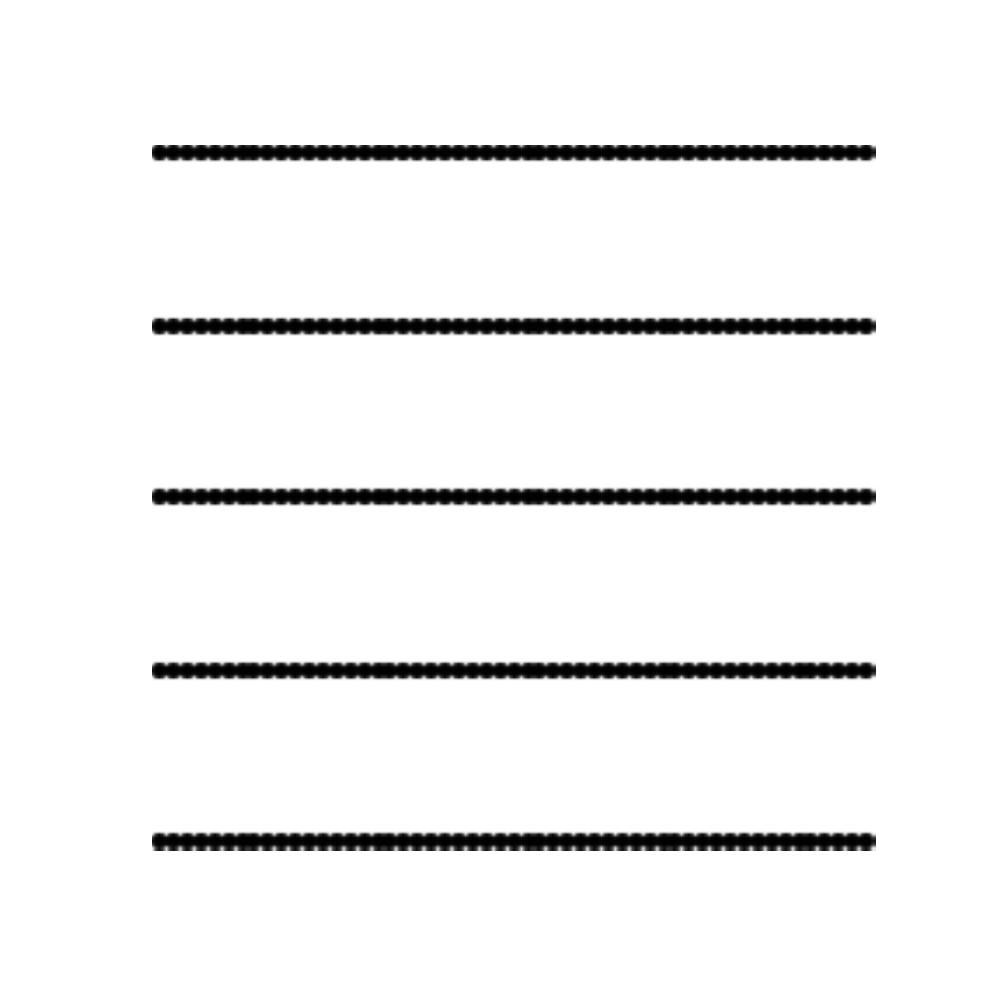}
  {\scriptsize (b)}
      \end{minipage}
      \hfill
      \begin{minipage}{0.24\linewidth}
     \centering
   \includegraphics[width=\linewidth]{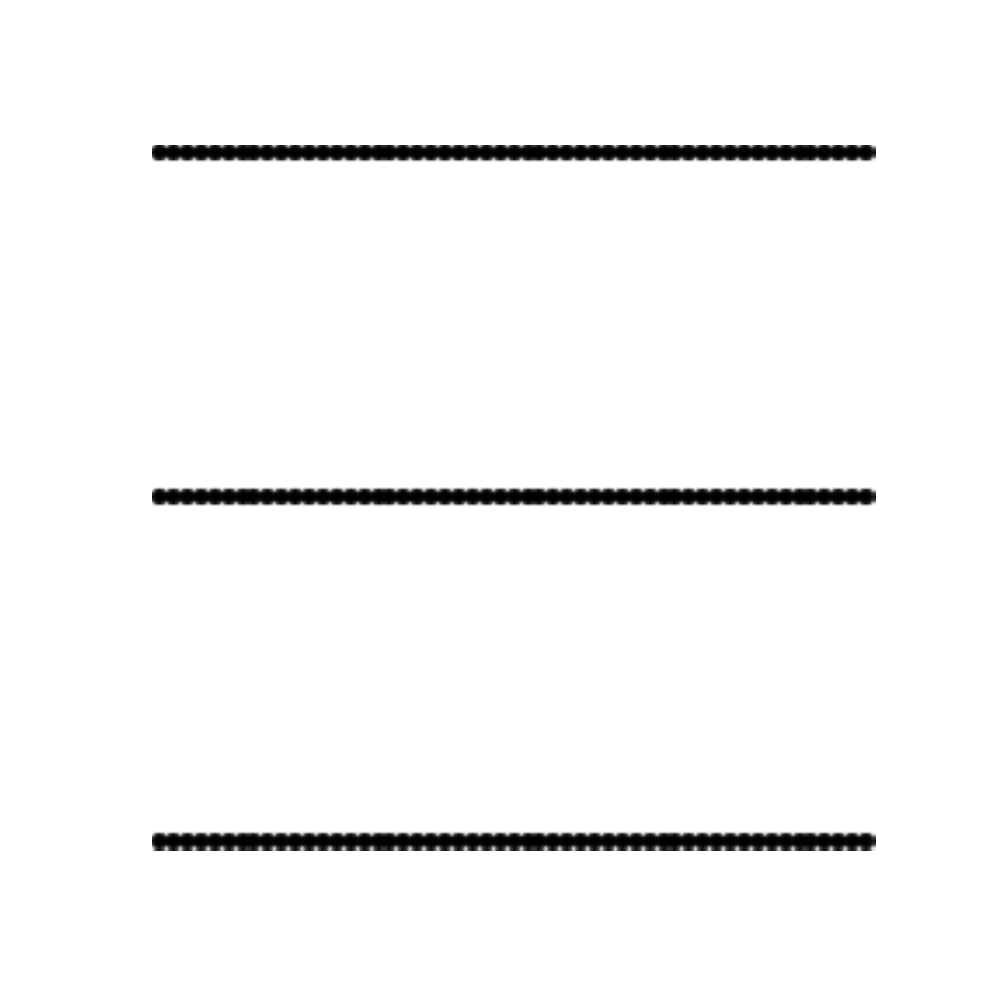} 
  {\scriptsize (c) }
      \end{minipage}
      \hfill
      \begin{minipage}{0.24\linewidth}
      \centering
      \includegraphics[ width=\linewidth]{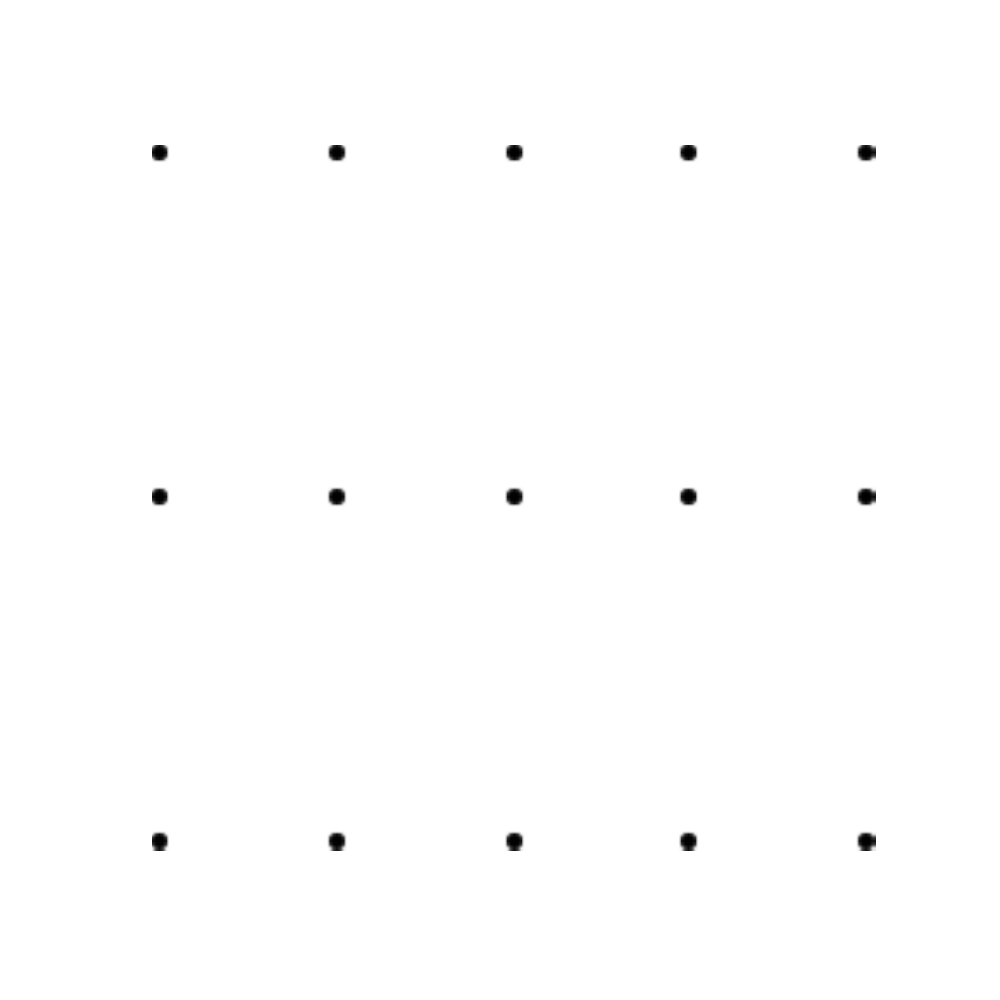}
  {\scriptsize (d) }
      \end{minipage}

      \begin{minipage}{0.24\linewidth}
     \centering
   \includegraphics[width=\linewidth]{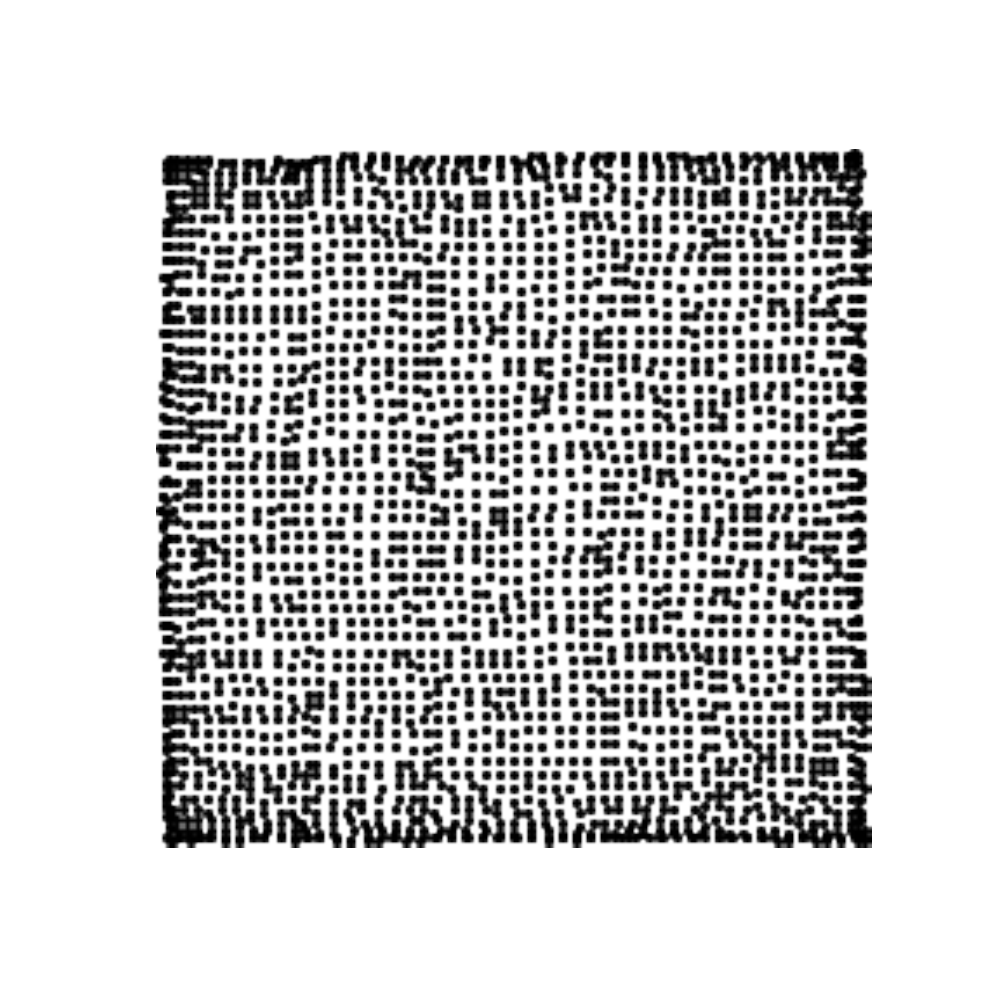} 
  {\scriptsize (e)}
      \end{minipage}
      \hfill
      \begin{minipage}{0.24\linewidth}
      \centering
      \includegraphics[ width=\linewidth]{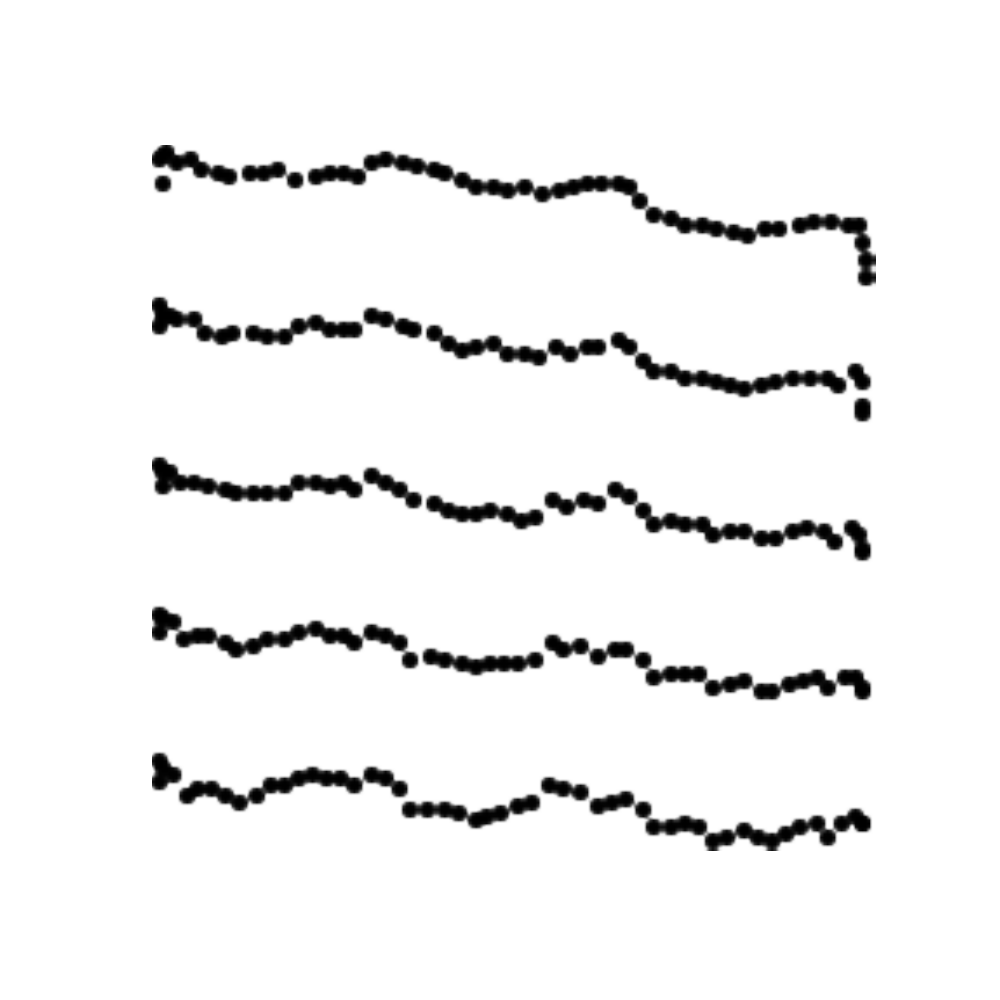}
  {\scriptsize (f) }
      \end{minipage}
      \hfill
      \begin{minipage}{0.24\linewidth}
     \centering
   \includegraphics[width=\linewidth]{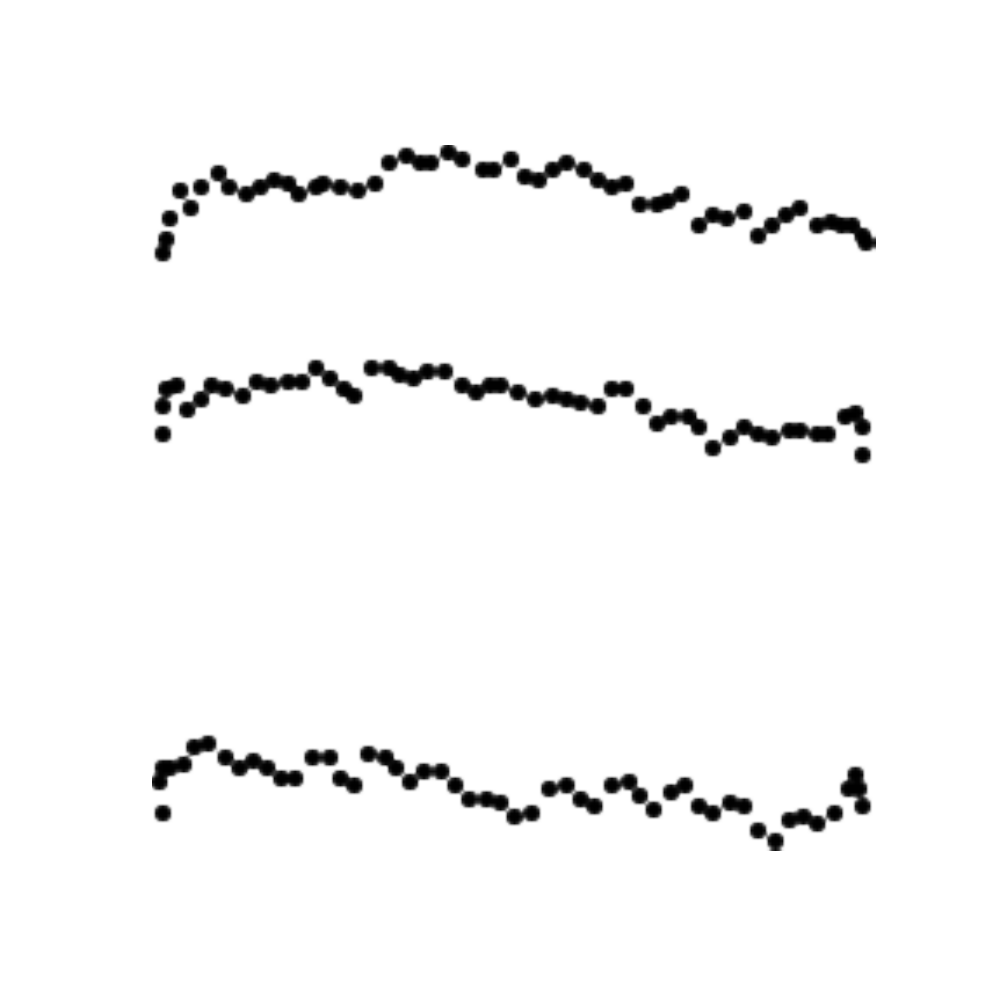} 
  {\scriptsize (g)}
      \end{minipage}
      \hfill
      \begin{minipage}{0.24\linewidth}
      \centering
      \includegraphics[ width=\linewidth]{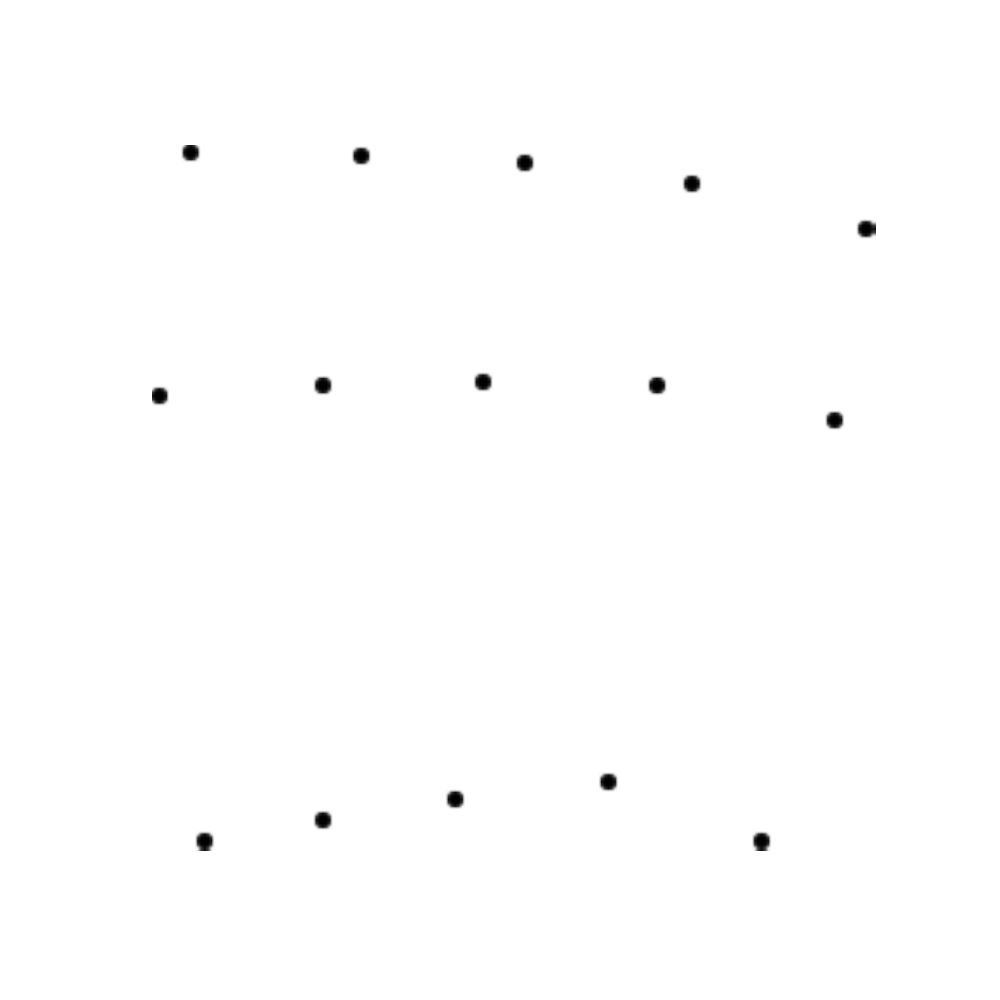}
  {\scriptsize (h) }
      \end{minipage}
      
      \begin{minipage}{0.24\linewidth}
     \centering
   \includegraphics[width=\linewidth]{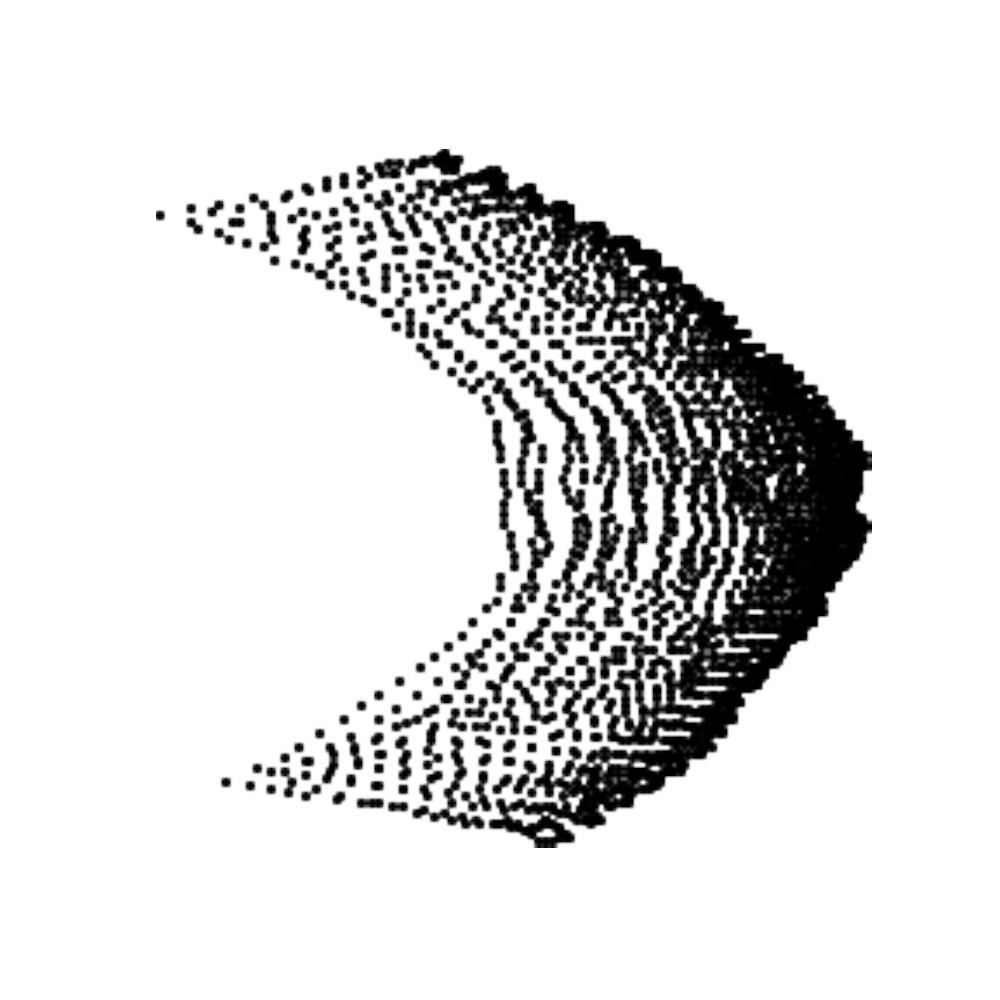} 
  {\scriptsize (i)}
      \end{minipage}
      \hfill
      \begin{minipage}{0.24\linewidth}
      \centering
      \includegraphics[ width=\linewidth]{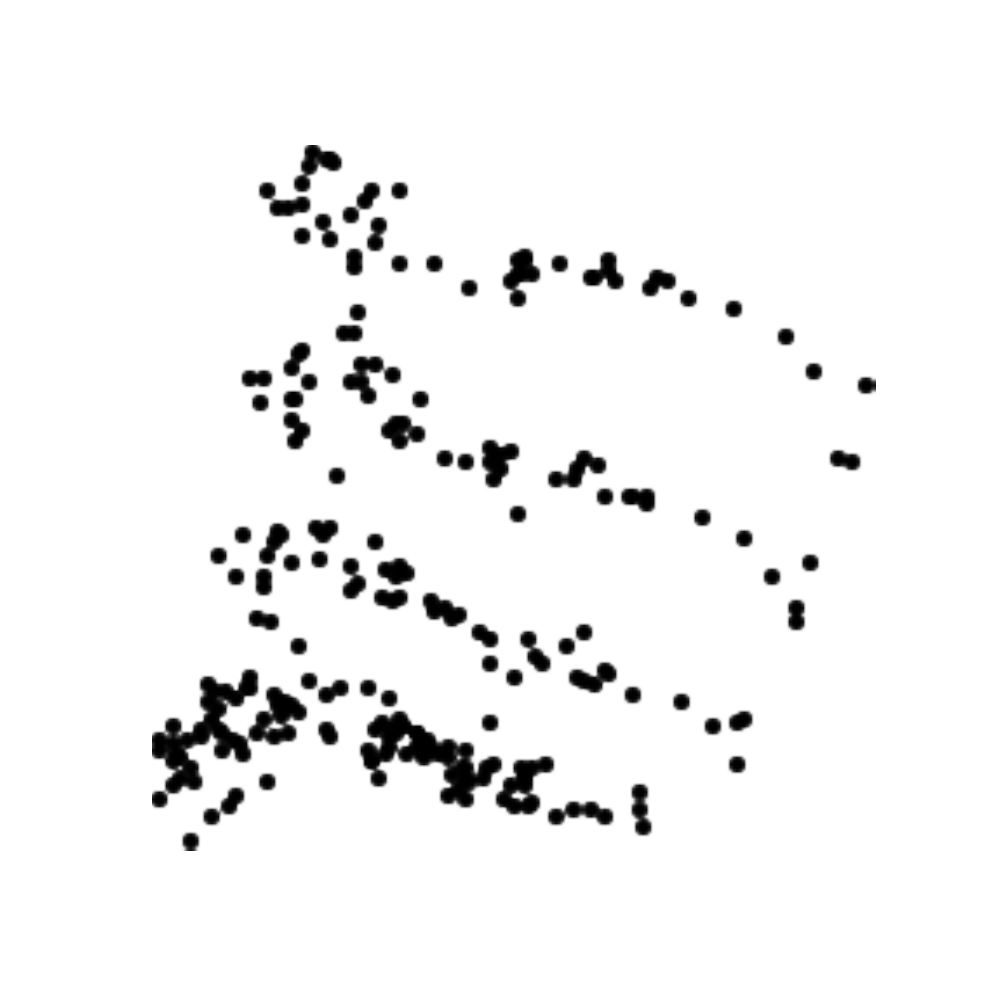}
  {\scriptsize (j) }
      \end{minipage}
      \hfill
      \begin{minipage}{0.24\linewidth}
     \centering
   \includegraphics[width=\linewidth]{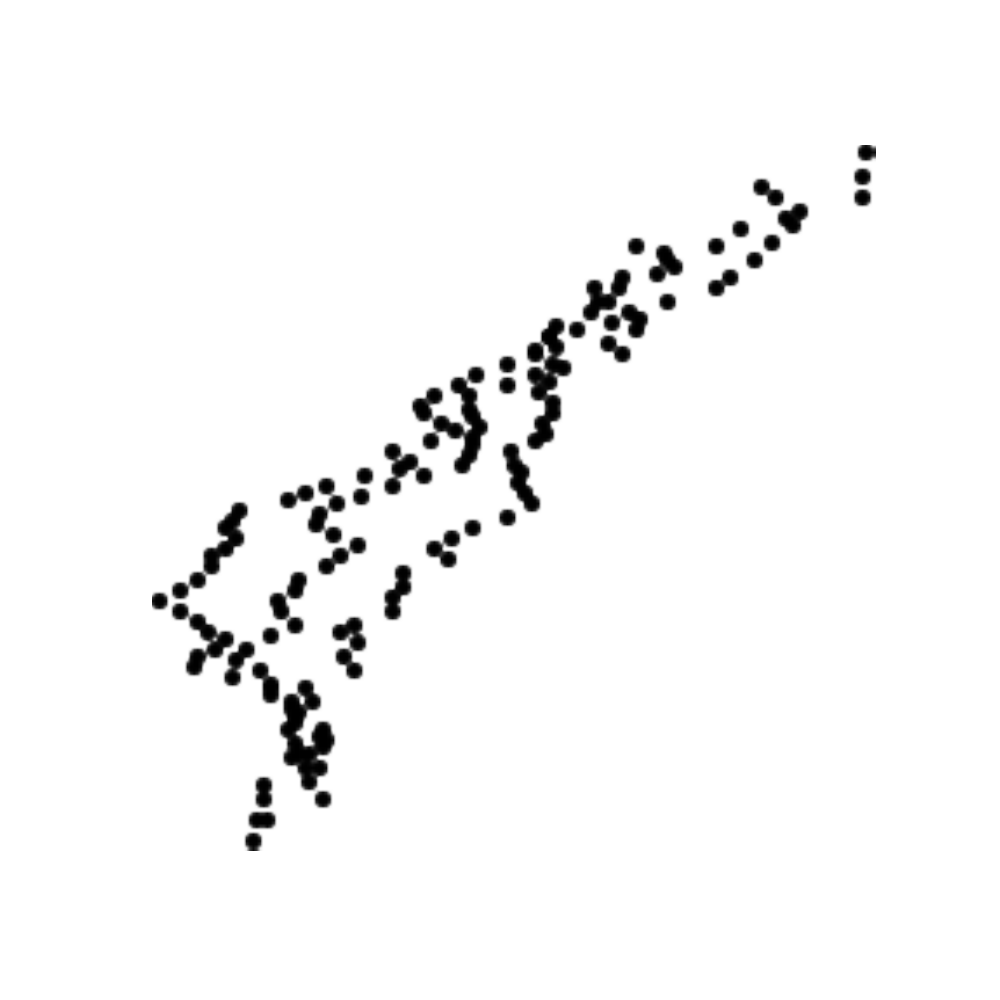} 
  {\scriptsize (k) }
      \end{minipage}
      \hfill
      \begin{minipage}{0.24\linewidth}
      \centering
      \includegraphics[ width=\linewidth]{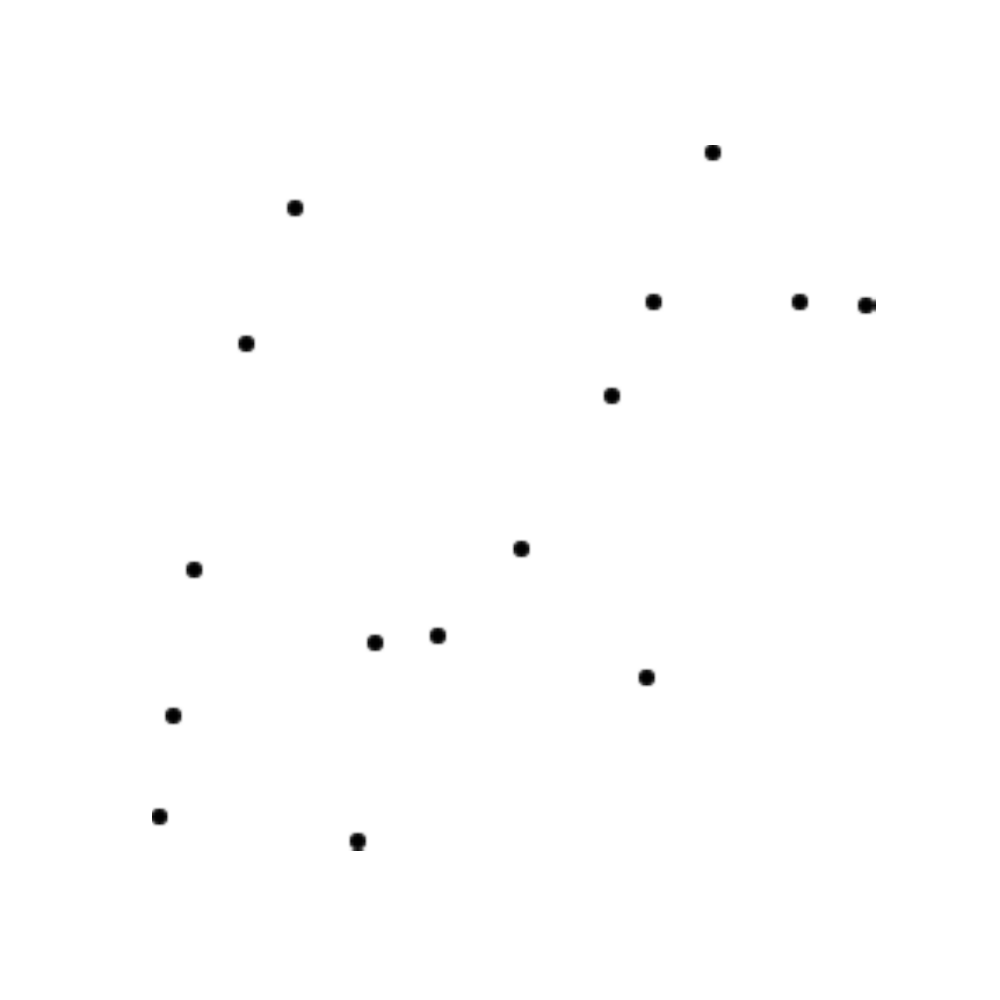}
  {\scriptsize (l)}
      \end{minipage}
     \caption{Relationships between X-Y, X-C, X-S and C-S features in each column, respectively. The first row shows the ideal relationships in the latent space. The second and third rows show the learned latent variables for the DAE and CCI-VAE models.}
     \label{fig:xycs_latent}
\end{figure}

\subsection{Results and Discussions}

Our exhaustive evaluation has produced a considerable volume of results, and accounting the limitations of space here, we make the following measures: {\bf (a)} we present the results only for the 2D Toy dataset as main part of the paper, {\bf (b)} we present the remaining results (for the 3D Shape, 3D Face Model and 3D Teapots datasets)  as part of the supplementary material. Please see additional notes provided in the supplementary material, and {\bf (c)} we list additional details, such as hyperparameters of each of the baseline models that yields the best possible outcomes for corresponding baseline model, as part of the supplementary material.  

\subsubsection{Results for the 2D Toy Dataset: Visualization Metric}

We show the disentangled (two-dimensional) latent space for the $XY$ dataset in Figure~\ref{fig:xy_latent} (please see Table~\ref{tab:hyperparams1} in Appendix~\ref{sec:supp} for details of relevant hyperparameters). As $x$ and $y$ positions collectively have 53 possible elements, the ratio of the number of elements in each sub-group is simply one, and hence the notion of hyperparameter for the proposed model under this setting is irrelevant. As can be seen in the figure, the proposed model, in general, provides the ideal grid-shape outlined in~\cite{hi18:_disent}. The plain vanilla VAE model offers the worst performance. Other models, such as $\beta$-VAE, $\beta$-TCVAE and CCI-VAE models also come closer to the ideal pattern, and thus most models are able to disentangle $x$ and $y$ positions.
However, when colour or shape feature is added to this $XY$ dataset (i.e., for $XYC$, $XYS$ and $XYCS$ datasets), the disentanglement can become a significant challenge, other than for the proposed model. We show this in Figure~\ref{fig:xycs_latent}. For the reasons of brevity, we present the ideal X-Y, X-C, X-S, and C-S relationships and the relationships learned by top two models based on disentanglement scores. These are from the proposed and CCI-VAE models. As we can see, the learned latent space using the proposed model is almost same as the ideal case. However, CCI-VAE  fails to ideally disentangle the X-Y positions when colours and shape features are added. In addition to these pairs of latent space, reconstructions of latent traversals across each latent dimension of all datasets are shown in the supplementary materials along with the  latent spaces for the other models. 

\subsubsection{Results for the 2D Toy Dataset: Disentanglement Scores}

We present the disentanglement scores for $XY$ and $XYCS$ datasets in Tables~\ref{tab:xy} and ~\ref{tab:xycs}, respectively, with best performing results bolded.  

\begin{table}[tbh]
  \caption{Disentanglement scores for the $XY$ dataset}
  \label{tab:xy}
  \centering
  \begin{small}
    \begin{tabular}{lllllll}
      \toprule
      {\bf Models/Metrics}   & {\bf z-diff} & {\bf z-var} & {\bf dci-rf} & {\bf jemmig} & {\bf dcimig} 
      \\
      \midrule
      DAE    &  $\bm{1.00}$ & $\bm{1.00}$ &  $\bm{0.99}$ & $\bm{0.85}$ & $\bm{0.84}$  \\
      VAE     &  $\bm{1.00}$ & $0.84$ &  $0.23$      & $0.38$      & $0.25$  \\
      $\beta$-VAE & $\bm{1.00}$ & $\bm{1.00} $    &  $0.91$      & $0.63$      & $0.60$  \\
      $\beta$-TCVAE & $\bm{1.00}$ & $\bm{1.00} $    &  $0.93$      & $0.69$      & $0.68$  \\
      CCI-VAE   & $\bm{1.00}$ & $\bm{1.00}$ &  $0.97$      & $0.82$      & $0.81$  \\
      FVAE     & $\bm{1.00}$ & $\bm{1.00}$   &  $0.94$      & $0.68$      & $0.65$  \\
      InfoVAE     &  $\bm{1.00}$ & $\bm{1.00}$&  $0.20$      & $0.34$      & $0.20$  \\
      WAE     & $\bm{1.00}$ & $\bm{1.00}$&  $0.58$      & $0.51$      & $0.43$  \\
      \bottomrule
    \end{tabular}
\end{small}
\end{table}

\begin{table}[tbh]
  \caption{Disentanglement scores for the $XYCS$ dataset}
  \label{tab:xycs}
  \centering
  \begin{small}
    \begin{tabular}{lllllll}
      \toprule
      {\bf Models/Metrics}   & {\bf z-diff} & {\bf z-var} & {\bf dci-rf} & {\bf jemmig} & {\bf dcimig}  \\
      \midrule
      DAE    &  $\bm{1.00}$ & $\bm{1.00}$ &  $\bm{0.95}$ & $\bm{0.83}$ & $\bm{0.84}$  \\
      VAE     &  $0.82$ & $0.24$ &  $0.08$      & $0.27$      & $0.09$  \\
      $\beta$-VAE &  $0.97$ & $0.89$ &  $0.51$      & $0.40$      & $0.37$  \\
      $\beta$-TCVAE & $\bm{1.00}$ & $0.73$    &  $0.55$      & $0.50$      & $0.52$  \\
      CCI-VAE   & $\bm{1.00}$ & $0.99$ &  $0.62$      & $0.48$      & $0.41$  \\
      FVAE     & $0.99$ & $0.92$   &  $0.19$      & $0.27$      & $0.15$  \\
      InfoVAE     &  $0.90$ & $0.50$&  $0.21$      & $0.31$      & $0.13$  \\
      
      WAE     & $0.83$ & $0.58$&  $0.20$      & $0.27$      & $0.13$  \\
      \bottomrule
    \end{tabular}
\end{small}
\end{table}

From the results presented in this paper (including the ones includes as part of the supplementary), we can draw the following key observations. First, the proposed model outperforms all models across all metrics for the 2D Toy dataset (covering XY, XYC, XYS, XYCS datasets), and 3D Teapots datasets (See Table~\ref{tab:3D Teapots}). Second, the proposed model is the only model that can successfully disentangle the 3D Teapots dataset (See Table~\ref{tab:3D Teapots} and Figure~\ref{fig:3d_teapots_latent_raversal}). Third, for the 2D Toy dataset,  the proposed model maintains the reconstruction loss as small as possible whilst offering improved disentanglement scores (i.e. scores increase) (See Figures~\ref{fig:hyper_xy}-\ref{fig:hyper_xycs}). On the other hand, the reconstruction losses for the $\beta$-VAE, $\beta$-TCVAE and CCI-VAE models  increase along with their disentanglement scores. Finally, $\beta$-VAE, $\beta$-TCVAE, CCI-VAE and FVAE show relatively better performance than the other models. However, their {\bf dic-rf}, {\bf jemmig} and {\bf dcimig} scores decrease when colour and shape factors, which have much smaller number of elements, are added to the dataset.

\section{Conclusions}
\label{sec:conclusions}

In the context of representation learning, being able to factorize or disentangle the latent space dimensions is crucial for obtaining latent representations that  is composed of multiple, independent factors of variations. On this aspect, deep generative models, particularly that build on autoencoders, play an important role. AE-, particularly, VAE-based models employ two forms of losses to balance the two conflicting goals representation learning: reconstruction loss and factorizability. To favour one over the other, many factorizing models rely on one or more hyperparameters which increases disentanglement ability while reducing reconstruction ability. 

In this paper, we presented a non-probabilistic, disentangling autoencoder model, namely, DAE, to address this problem. By exploiting the principles of  symmetry transformations in group-theory, we presented a model that only has a reconstruction loss. Although the model relies on a hyperparameter, the model is not overly sensitive to this, and the value can easily be obtained using a number of techniques, such as PCA or ICA or VAE. Our evaluations, performed against a number of VAE-based models, using a number of metrics show that our model can offer the best performance on a number of datasets. 

Although the results are encouraging, a number of aspects remain to be investigated. We intend to investigate a number of issues, including evaluation against other metrics and public datasets, and automatic determination of an optimal value for the  hyperparameter.

\bibliography{references.bib}
\bibliographystyle{plainnat}

\appendix

\section{Supplementary}
\label{sec:supp}

\subsection{Algorithms}
\label{sec:supp:algorithms}

\begin{algorithm}[h]
\SetAlgoLined
\textbf{Input:} $X$: the entire dataset and $\alpha$: hyperparameter less than $1$\\
\textbf{Output:} $\Lambda = [w_1, w_2, \cdots, w_n]$ \\
$S=[s_1, s_2, \cdots, s_n]$: singular values from PCA$(X)$ \\
$\bar{S}=[\bar{s_1}, \bar{s_2}, \cdots, \bar{s_n}] = S/max(S)$ \\
$\Lambda=[w_1, w_2, \cdots, w_n]$: round to 1 decimal place of $\bar{S}$ \\
If there exists $i$ such that $w_i < 1$, then $w_i = \alpha$
 \caption{Obtaining $\Lambda$ using PCA}
 \label{alg:findingW}
\end{algorithm}

\begin{algorithm}[h]
\SetAlgoLined
\textbf{Input:} $x$ over a mini-batch: $B = \{x_1,\cdots,x_m\}$.\\
\textbf{Output:} $\{y_i = I(x_i)\}$ \\
$w_i^k= min_{j\in\{1,\cdots,m\}}d(x_i^k, x_j^k)$ where $x_i = (x_i^k)_{k = 1, \cdots, n}$\\
 $y_i^k = x_i^k + w_i^k*\varepsilon \equiv S(x)$
where $\varepsilon~\mathcal{N}(0,1)$
 \caption{Interpolation layer}
 \label{alg:Interpolation}
\end{algorithm}

\subsection{Disentanglement scores}
\label{sec:supp:scores}

\begin{table}[h]
  \caption{Disentanglement scores for the $XYC$ dataset}
  \label{tab:xyc}
  \centering
  \begin{small}
    \begin{tabular}{lllllll}
      \toprule
      Models  / Metrics   & z-diff & z-var & dci-rf & jemmig & dcimig  \\
      \midrule
      DAE    &  $\bm{1.00}$ & $\bm{1.00}$ &  $\bm{0.99}$ & $\bm{0.91}$ & $\bm{0.91}$  \\
      VAE     &  $\bm{1.00}$ & $0.70$ &  $0.14$      & $0.24$      & $0.16$  \\
      $\beta$-VAE & $\bm{1.00}$ & $\bm{1.00} $    &  $0.83$      & $0.58$      & $0.52$  \\
      $\beta$-TCVAE & $\bm{1.00}$ & $\bm{1.00} $    &  $0.94$      & $0.78$      & $0.77$  \\
      CCI-VAE   & $\bm{1.00}$ & $\bm{1.00}$ &  $0.91$      & $0.66$      & $0.62$  \\
      FVAE     & $\bm{1.00}$ & $\bm{1.00}$   &  $0.27$      & $0.34$      & $0.20$  \\
      InfoVAE     &  $\bm{1.00}$ & $0.67$&  $0.25$      & $0.29$      & $0.21$  \\
      WAE     & $\bm{1.00}$ & $0.77$&  $0.21$      & $0.28$      & $0.14$  \\
      \bottomrule
    \end{tabular}
\end{small}
  \caption{Disentanglement scores for the $XYS$ dataset}
  \label{tab:xys}
  \centering
  \begin{small}
    \begin{tabular}{lllllll}
      \toprule
      Models  / Metrics   & z-diff & z-var & dci-rf & jemmig & dcimig  \\
      \midrule
      DAE    &  $\bm{1.00}$ & $\bm{1.00}$ &  $\bm{0.98}$ & $\bm{0.85}$ & $\bm{0.86}$  \\
      VAE     &  $\bm{1.00}$ & $0.78$ &  $0.37$      & $0.37$      & $0.33$  \\
      $\beta$-VAE & $\bm{1.00}$ & $\bm{1.00} $    &  $0.96$      & $0.69$      & $0.65$  \\
      $\beta$-TCVAE & $\bm{1.00}$ & $\bm{1.00} $    &  $0.97$      & $0.83$      & $0.80$  \\
      CCI-VAE   & $\bm{1.00}$ & $\bm{1.00}$ &  $0.91$      & $0.68$      & $0.65$  \\
      FVAE     & $\bm{1.00}$ & $0.85$   &  $0.48$      & $0.46$      & $0.39$  \\
      InfoVAE     &  $\bm{1.00}$ & $0.68$&  $0.28$      & $0.31$      & $0.28$  \\
      WAE     & $0.99$ & $0.46$&  $0.10$      & $0.24$      & $0.15$  \\
      \bottomrule
    \end{tabular}
\end{small}
\end{table}

\begin{table}[ht!]
\vspace{-32.0em}%
  \caption{Disentanglement scores for 3D Shape dataset}
  \label{tab:3D shape}
  \centering
  \begin{small}
    \begin{tabular}{lllllll}
      \toprule
      Models  / Metrics   & z-diff & z-var & dci-rf & jemmig & dcimig  \\
      \midrule
      DAE    &  $\bm{1.00}$ & $0.96$ &  $0.90$ & $0.74$ & $0.73$  \\
      VAE     &  $0.96$ & $0.59$ &  $0.36$      & $0.26$      & $0.22$  \\
      $\beta$-VAE &  $\bm{1.00}$ & $\bm{1.00}$ &  $0.91$      & $0.70$      & $0.68$  \\
      $\beta$-TCVAE & $\bm{1.00}$ & $0.85$    &  $0.78$      & $0.61$      & $0.63$  \\
      CCI-VAE   & $0.98$ & $0.89$ &  $0.74$      & $0.59$      & $0.59$  \\
      FVAE     & $\bm{1.00}$ & $\bm{1.00}$   &  $\bm{0.97}$      & $\bm{0.82}$      & $\bm{0.81}$  \\
      InfoVAE     &  $0.99$ & $0.73$&  $0.33$      & $0.23$      & $0.20$  \\
      
      WAE     & $0.93$ & $0.43$&  $0.13$      & $0.12$      & $0.06$  \\
      \bottomrule
    \end{tabular}
\end{small}
  \caption{Disentanglement scores for 3D Teapots dataset}
  \label{tab:3D Teapots}
  \centering
  \begin{small}
    \begin{tabular}{lllllll}
      \toprule
      Models  / Metrics   & z-diff & z-var & dci-rf & jemmig & dcimig  \\
      \midrule
      DAE    &  $\bm{1.00}$ & $\bm{1.00}$ &  $\bm{0.80}$ & $\bm{0.54}$ & $\bm{0.53}$  \\
      VAE     &  $0.99$ & $0.77$ &  $0.44$      & $0.38$      & $0.22$  \\
      $\beta$-VAE &  $0.90$ & $0.73$ &  $0.46$      & $0.36$      & $0.23$  \\
      $\beta$-TCVAE & $\bm{1.00}$ & $0.85$    &  $0.68$      & $0.47$      & $0.37$  \\
      CCI-VAE   & $0.89$ & $0.62$ &  $0.41$      & $0.35$      & $0.14$  \\
      FVAE     & $0.99$ & $0.79$   &  $0.50$      & $0.39$      & $0.26$  \\
      InfoVAE     &  $0.99$ & $0.70$&  $0.46$      & $0.37$      & $0.24$  \\
      
      WAE     & $0.77$ & $0.52$&  $0.16$      & $0.22$      & $0.05$  \\
      \bottomrule
    \end{tabular}
\end{small}
  \caption{Disentanglement scores for 3D Face Model dataset}
  \label{tab:3D Face Model shape}
  \centering
  \begin{small}
    \begin{tabular}{lllllll}
      \toprule
      Models  / Metrics   & z-diff & z-var & dci-rf & jemmig & dcimig  \\
      \midrule
      DAE    &  $\bm{1.00}$ & $0.82$ &  $0.57$ & $0.46$ & $\bm{0.44}$  \\
      VAE     &  $\bm{1.00}$ & $0.66$ &  $0.48$      & $0.38$      & $0.23$  \\
      $\beta$-VAE &  $\bm{1.00}$ & $0.74$ &  $\bm{0.68}$      & $0.48$      & $0.36$  \\
      $\beta$-TCVAE & $\bm{1.00}$ & $\bm{0.83}$    &  $0.65$      & $\bm{0.54}$      & $\bm{0.44}$  \\
      CCI-VAE   & $\bm{1.00}$ & $\bm{0.83}$ &  $0.61$      & $0.47$      & $0.34$  \\
      FVAE     & $\bm{1.00}$ & $0.65$   &  $0.48$      & $0.37$      & $0.21$  \\
      InfoVAE     &  $0.99$ & $0.68$&  $0.46$      & $0.39$      & $0.21$  \\
      
      WAE     & $\bm{1.00}$ & $0.75$&  $0.21$      & $0.26$      & $0.16$  \\
      \bottomrule
    \end{tabular}
\end{small}
\end{table}
 

\clearpage

\subsection{Hyperparameters}

\begin{table}[h]
  \caption{$\bar{S}$ values for different datasets}
  \label{tab:singular_v}
  \centering
  \begin{small}
  \begin{tabular}{ll}
    \toprule
    Dataset &  $\bar{S}$    \\
    \midrule
    XY    & [1.0, 1.0]  \\
  XYC & [1.0, 1.0, 0.8] \\
  XYS & [1.0, 1.0, 0.8] \\
    XYCS   & [1.0, 1.0, 0.8, 0.8]   \\
    3D Shape & [1.0, 1.0, 1.0, 1.0, 0.5, 0.5] \\
    3D Teapots & [1.0, 0.8, 0.8, 0.4, 0.3, 0.3] \\
    3D Face Model & [1.0, 0.4, 0.4, 0.3] \\
    \bottomrule     
  \end{tabular}
\end{small}
\end{table}

\begin{table}[h]
  \caption{Best hyperparameters for models for different datasets.}
  \label{tab:hyperparams1}
  \centering
  \begin{small}
  \begin{tabular}{llllllllll}
    \toprule
    Model  / Dataset   & XY & XYC & XYS & XYCS  \\
    \midrule
    DAE ($\alpha$)    & $-$ & $0.005$ & $0.001$  & $0.0005$ \\
    $\beta$-VAE ($\beta$) & $16$ & $64$ & $64$  & $32$\\
    $\beta$-TCVAE ($\beta$) & $32$ & $64$ & $128$  & $128$\\
    CCI-VAE (C)   & $500$  & $100$ & $100$ & $100$ \\
    FVAE ($\gamma$)    & $200$   & $100$ & $100$  & $500$ \\
    InfoVAE  ($\lambda$)    &  $100$ & $100$ & $100$  & $500$ \\
    WAE ($\lambda$)    & $1$  & $50$ & $30$ & $50$\\
    \bottomrule     
  \end{tabular}
\end{small}
\end{table}

\begin{table}[h]
  \caption{Best hyperparameters for models for different datasets.}
  \label{tab:hyperparams2}
  \centering
  \begin{small}
  \begin{tabular}{llllllllll}
    \toprule
    Model  / Dataset   & 3D Shape & 3D Teapots & 3D Face Model   \\
    \midrule
    DAE ($\alpha$)  & $0.01$ & $0.1$ & $0.1$\\
    $\beta$-VAE ($\beta$) & $64$ & $6$ & $16$\\
    $\beta$-TCVAE ($\beta$)  & $32$ & $6$ & $32$\\
    CCI-VAE (C)   & $100$ & $50$ & $100$\\
    FVAE ($\gamma$)    & $5$ & $1$ & $1$\\
    InfoVAE  ($\lambda$)   & $100$ & $50$& $2000$\\
    WAE ($\lambda$)   & $50$ & $10$& $50$\\
    \bottomrule     
  \end{tabular}
\end{small}
\end{table}

\newpage

\subsection{Encoder and Decoder architectures}

\begin{table}[tbh]
  \caption{Architecture for 2D Toy Dataset}
  \label{archi:2dtoy}
  \centering
  \begin{small}
    \begin{tabular}{lll}
      \toprule
       Encoder & Decoder &  \\
      \midrule
      Input $84\!\times\!84\!\times\!1$ image  &  $3\!\times\!3\;1$ Conv $\downarrow$, Sigmoid \\ 
      $10\!\times\!10\;8$ Conv $\downarrow$, BN, LReLU    &  $10\!\times\!10\;1$ Conv $\uparrow$, BN, LReLU \\ 
      $10\!\times\!10\;16$ Conv $\downarrow$, BN, LReLU  & $10\!\times\!10\;8$ Conv $\uparrow$, BN, LReLU  \\
     FC $64$ & FC $256$, LReLU  \\ 
      FC The number of features & FC $64$, LReLU  \\ 
      \bottomrule
    \end{tabular}
\end{small}
  \caption{Architecture for 3D Shape Dataset}
  \label{archi:3dshape}
  \centering
  \begin{small}
    \begin{tabular}{lll}
      \toprule
       Encoder & Decoder &  \\
      \midrule
      Input $64\!\times\!64\!\times\!3$ image  &  $3\!\times\!3\;1$ Conv $\downarrow$, Sigmoid \\ 
      $4\!\times\!4\;32$ Conv $\downarrow$, BN, LReLU    &  $4\!\times\!4\;3$ Conv $\uparrow$, BN, LReLU \\ 
      $4\!\times\!4\;32$ Conv $\downarrow$, BN, LReLU  & $4\!\times\!4\;32$ Conv $\uparrow$, BN, LReLU  \\
      $4\!\times\!4\;64$ Conv $\downarrow$, BN, LReLU    &  $4\!\times\!4\;32$ Conv $\uparrow$, BN, LReLU \\ 
      $4\!\times\!4\;64$ Conv $\downarrow$, BN, LReLU  & $4\!\times\!4\;64$ Conv $\uparrow$, BN, LReLU  \\
      FC $256$ & FC $1024$, LReLU  \\ 
      FC $6$ & FC $256$, LReLU  \\ 
      \bottomrule
    \end{tabular}
\end{small}
  \caption{Architecture for 3D Teapots Dataset}
  \label{archi:3dteapots}
  \centering
  \begin{small}
    \begin{tabular}{lll}
      \toprule
       Encoder & Decoder &  \\
      \midrule
      Input $64\!\times\!64\!\times\!3$ image  &  $3\!\times\!3\;1$ Conv $\downarrow$ \\ 
      $4\!\times\!4\;32$ Conv $\downarrow$, BN, ReLU    &  $4\!\times\!4\;3$ Conv $\uparrow$, BN, ReLU \\ 
      $4\!\times\!4\;32$ Conv $\downarrow$, BN, ReLU  & $4\!\times\!4\;32$ Conv $\uparrow$, BN, ReLU  \\
      $4\!\times\!4\;64$ Conv $\downarrow$, BN, ReLU    &  $4\!\times\!4\;32$ Conv $\uparrow$, BN, ReLU \\ 
      $4\!\times\!4\;64$ Conv $\downarrow$, BN, ReLU  & $4\!\times\!4\;64$ Conv $\uparrow$, BN, ReLU  \\
      FC $128$ & FC $1024$, LReLU  \\ 
      FC $6$ & FC $128$, LReLU  \\ 
      \bottomrule
    \end{tabular}
\end{small}
  \caption{Architecture for 3D Face Model Dataset}
  \label{archi:3dface}
  \centering
  \begin{small}
    \begin{tabular}{lll}
      \toprule
       Encoder & Decoder &  \\
      \midrule
      Input $64\!\times\!64\!\times\!1$ image  &  $3\!\times\!3\;1$ Conv $\downarrow$, Sigmoid \\ 
      $4\!\times\!4\;32$ Conv $\downarrow$, BN, LReLU    &  $4\!\times\!4\;1$ Conv $\uparrow$, BN, LReLU \\ 
      $4\!\times\!4\;32$ Conv $\downarrow$, BN, LReLU  & $4\!\times\!4\;32$ Conv $\uparrow$, BN, LReLU  \\
      $4\!\times\!4\;64$ Conv $\downarrow$, BN, LReLU    &  $4\!\times\!4\;32$ Conv $\uparrow$, BN, LReLU \\ 
      $4\!\times\!4\;64$ Conv $\downarrow$, BN, LReLU  & $4\!\times\!4\;64$ Conv $\uparrow$, BN, LReLU  \\
      FC $128$ & FC $1024$, LReLU  \\ 
      FC $4$ & FC $128$, LReLU  \\ 
      \bottomrule
    \end{tabular}
\end{small}
\end{table}

\clearpage

\subsection{Dataset}
\label{sec:supp:dataset}
\begin{figure}[h!]
   \centering
 \includegraphics[width=\linewidth]{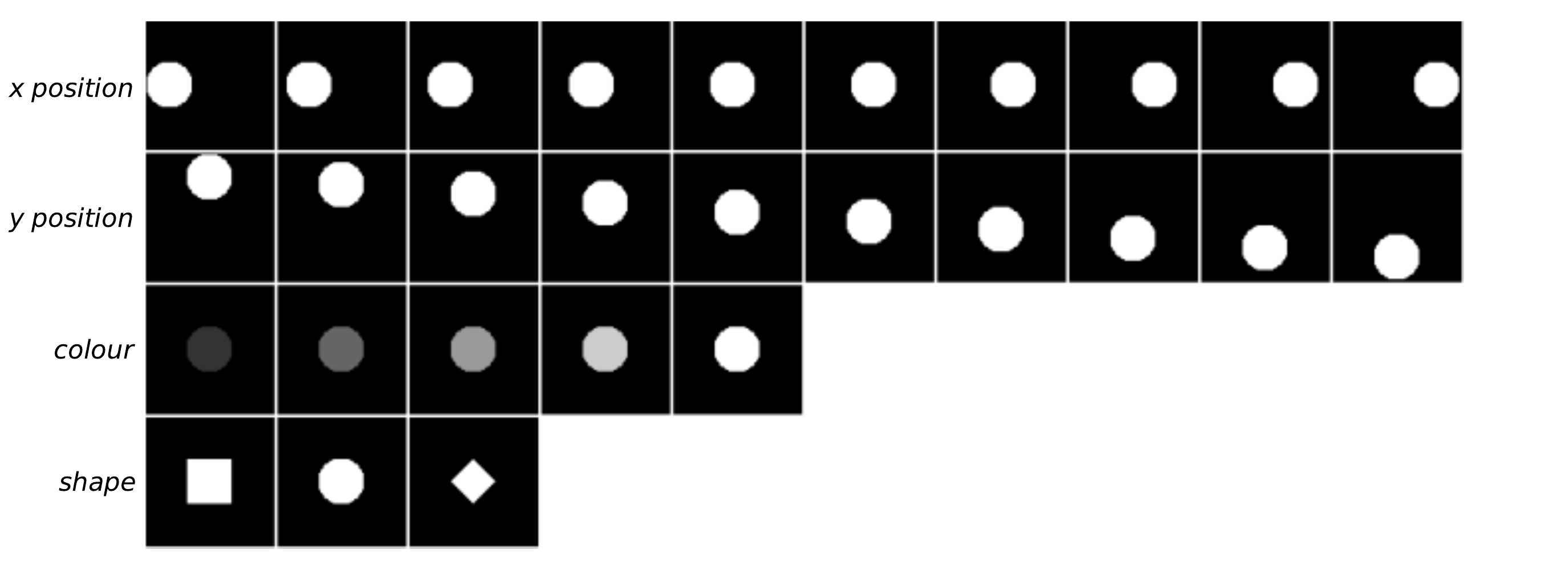} 
    \caption{Four factors in datasets. $x$ and $y$ positions have 53 elements, colour has 5 elements and shape has 3 elements.}
    \label{fig:fourelements}
\end{figure}

\begin{enumerate}
\item 2D Toy Dataset: This dataset has objects with three shapes ($S$) (a circles, a rectangles and a diamonds), and variations to their $x$ and $y$ positions and colour information (more specifically, the brightness). This is a rather small, but very effective, dataset. There are 53 unique $x$ positions ($X$), 53 unique $y$ positions ($Y$) and 5 colours ($C$). We create $XY$, $XYC$, $XYS$ and $XYCS$ sub-datasets to show the differences of the latent space when the combination of categorical and continuous factors are presented. 

\item 3D Shape Dataset~\cite{ds:3dshapes:2018}: This dataset has $480,000$, three-channel RGB, $64\times 64\times 3$ images of 3D objects with ground truth factors
 of four shapes, eight scales, 15 orientations, 10 floor colour, 10 wall colours, and 10 object colours. 
 
\item 3D Teapots Dataset~\cite{ea18:_disent_prop}: This dataset has two million, three-channel RGB, $64\times 64\times 3$ images of a 3D object (teapot) with ground truth factors
 of independently sampled from its respective distribution: azimuth $\sim U[0, 2\pi]$, elevation $\sim U[0, \pi/2]$, and three colours, namely, red (R), green (G) and blue (B), sampled with  $R\sim U[0, 1]$, $G\sim U[0, 1]$, and $B\sim U[0, 1]$. This dataset is very effective to evaluate model when all factors are independently from the uniform distributions.  
 
\item 3D Face Model Dataset~\cite{pa09:_dataset}: This dataset has $127,050$, greyscale, $64\times 64$ images of 3D faces with ground truth factors of 50 different face ids, 21 azimuth, 11 elevation and 11 lighting conditions.

\end{enumerate}

\newpage

 \subsection{System and Model Configurations}

 All of our experiments were run on a single hardware consisting two DGX2 nodes, collectively consisting of 32-V100 GPUs, 1.5GB GPU RAM, and 3TB System RAM. Encoder and decoder architecture are the same in all experiments. Encoder has two convolutional layers followed by Batch Normalization layer and LeakyReLU activation. After convolutional layers, there is one fully-connected layer with 64 nodes and another layer which maps to the latent space. The decode part is symmetric to the encoder part. $C$ for CCI-VAE is set as 25 for all experiments.

\begin{figure*}
      \begin{minipage}{0.22\linewidth}
     \centering
   \includegraphics[width=\linewidth]{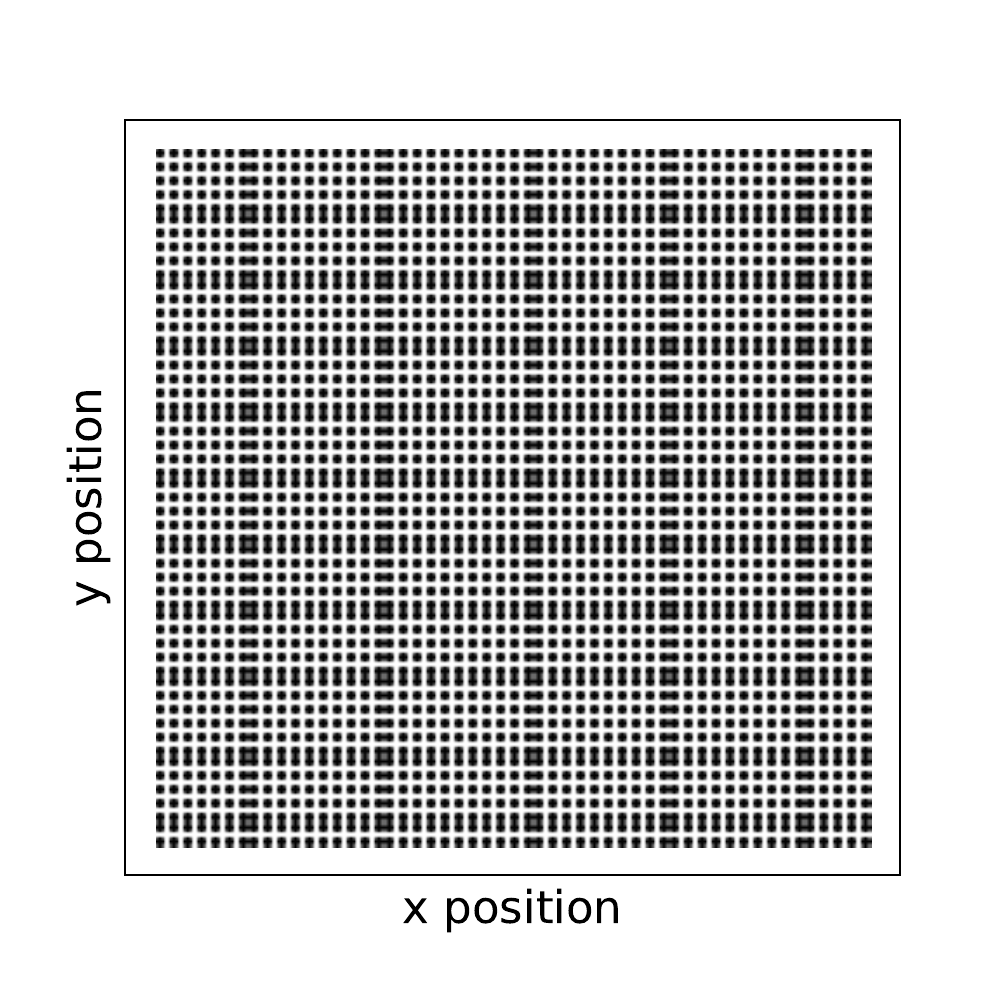} 
  {\scriptsize (a) Ideal X-Y relationship}
      \end{minipage}
      \hfill
      \begin{minipage}{0.22\linewidth}
      \centering
      \includegraphics[ width=\linewidth]{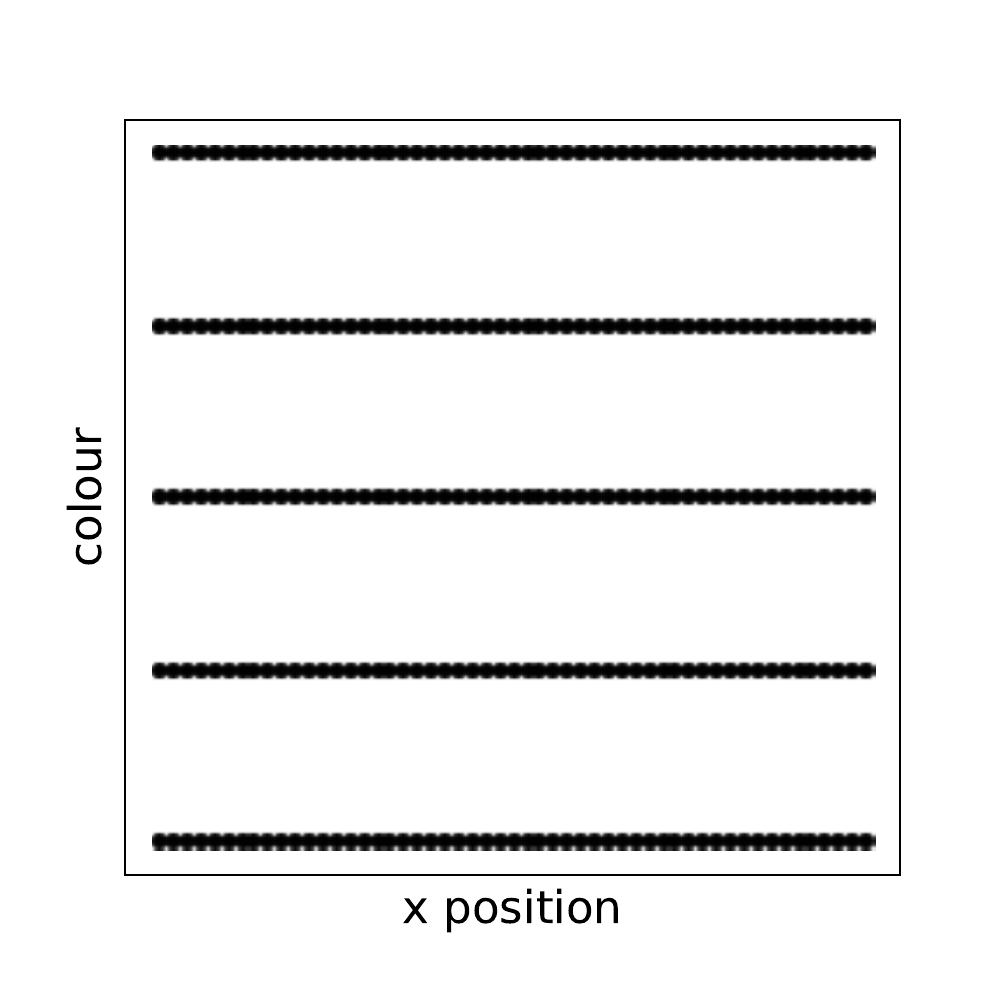}
  {\scriptsize (b) Ideal X-C relationship}
      \end{minipage}
      \hfill
      \begin{minipage}{0.22\linewidth}
     \centering
   \includegraphics[width=\linewidth]{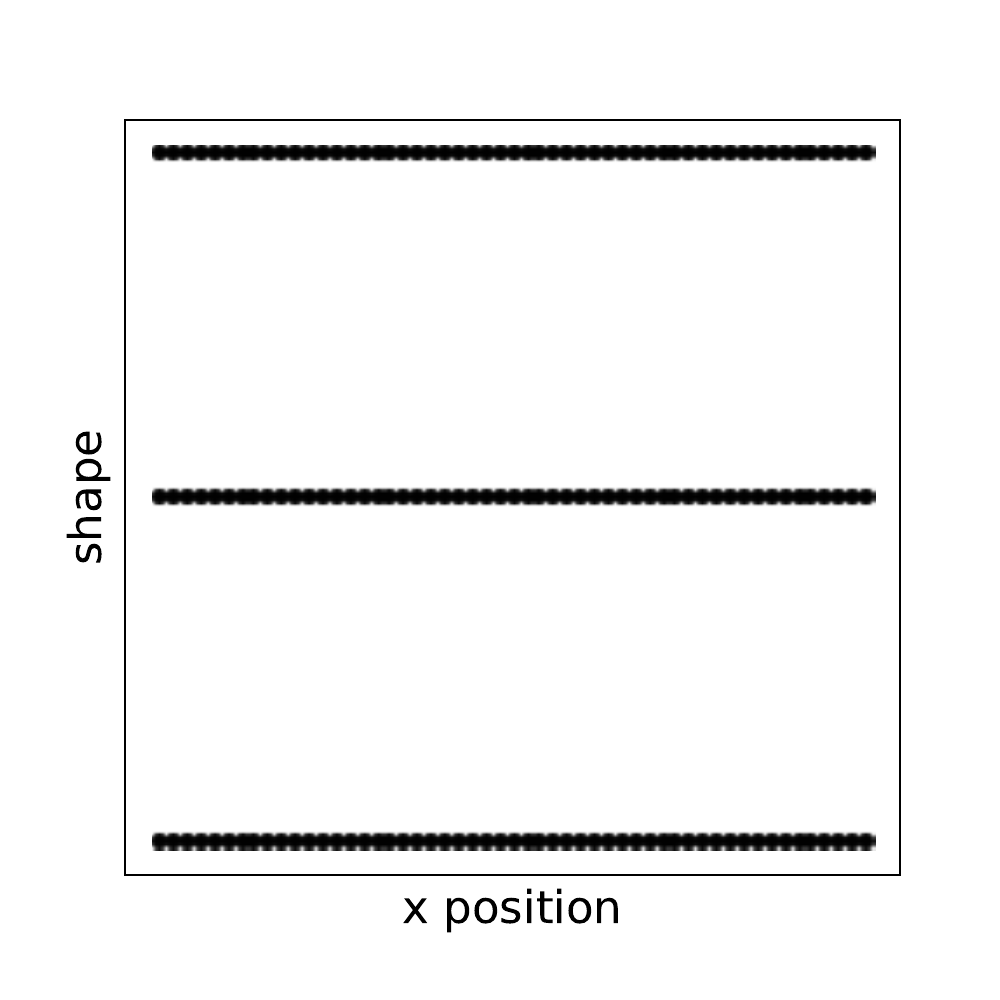} 
  {\scriptsize (c) Ideal X-S relationship}
      \end{minipage}
      \hfill
      \begin{minipage}{0.22\linewidth}
      \centering
      \includegraphics[ width=\linewidth]{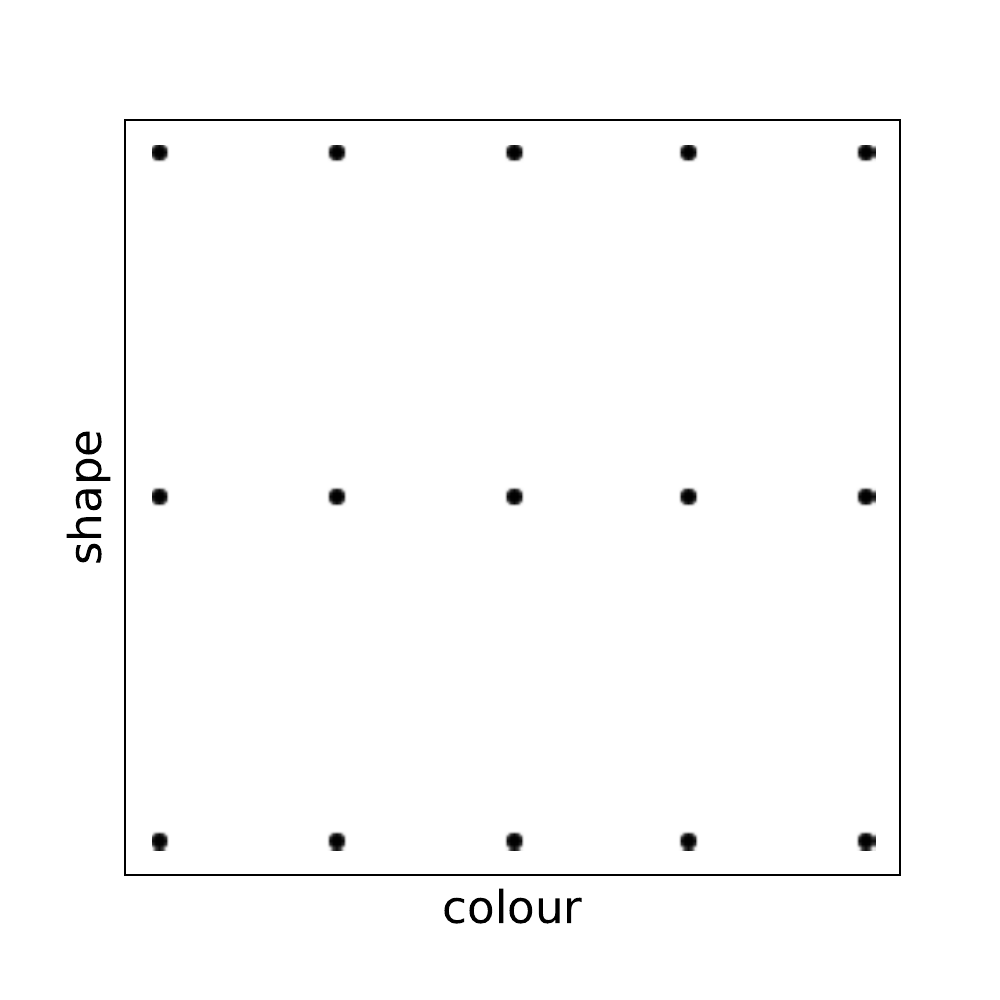}
  {\scriptsize (d) Ideal C-S relationship}
      \end{minipage}
     \caption{Ideal relationships between X-Y, X-C, X-S and C-S features.}
     \label{fig:idea_xycs}
\end{figure*}

\begin{figure*}
      \begin{minipage}{0.22\linewidth}
     \centering
   \includegraphics[width=\linewidth]{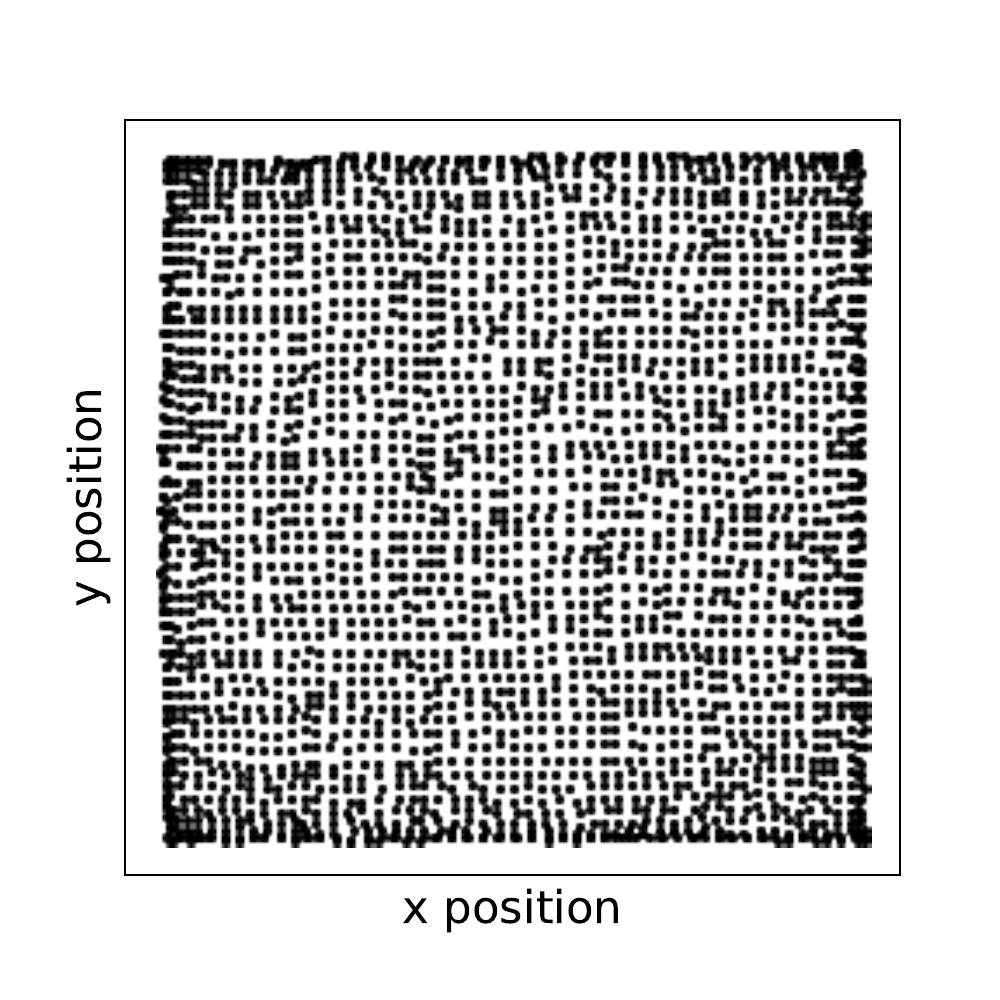} 
  {\scriptsize (a) X-Y relationship in DAE}
      \end{minipage}
      \hfill
      \begin{minipage}{0.22\linewidth}
      \centering
      \includegraphics[ width=\linewidth]{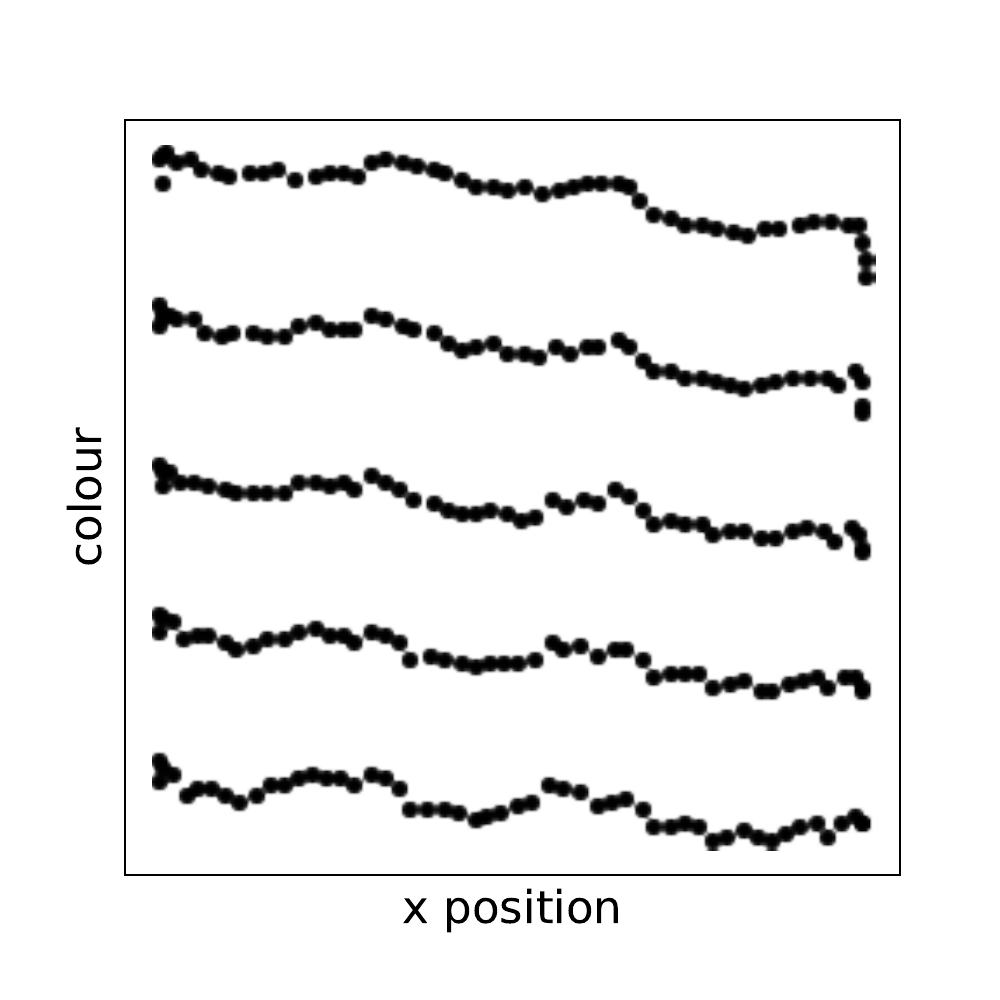}
  {\scriptsize (b) X-C relationship in DAE}
      \end{minipage}
      \hfill
      \begin{minipage}{0.22\linewidth}
     \centering
   \includegraphics[width=\linewidth]{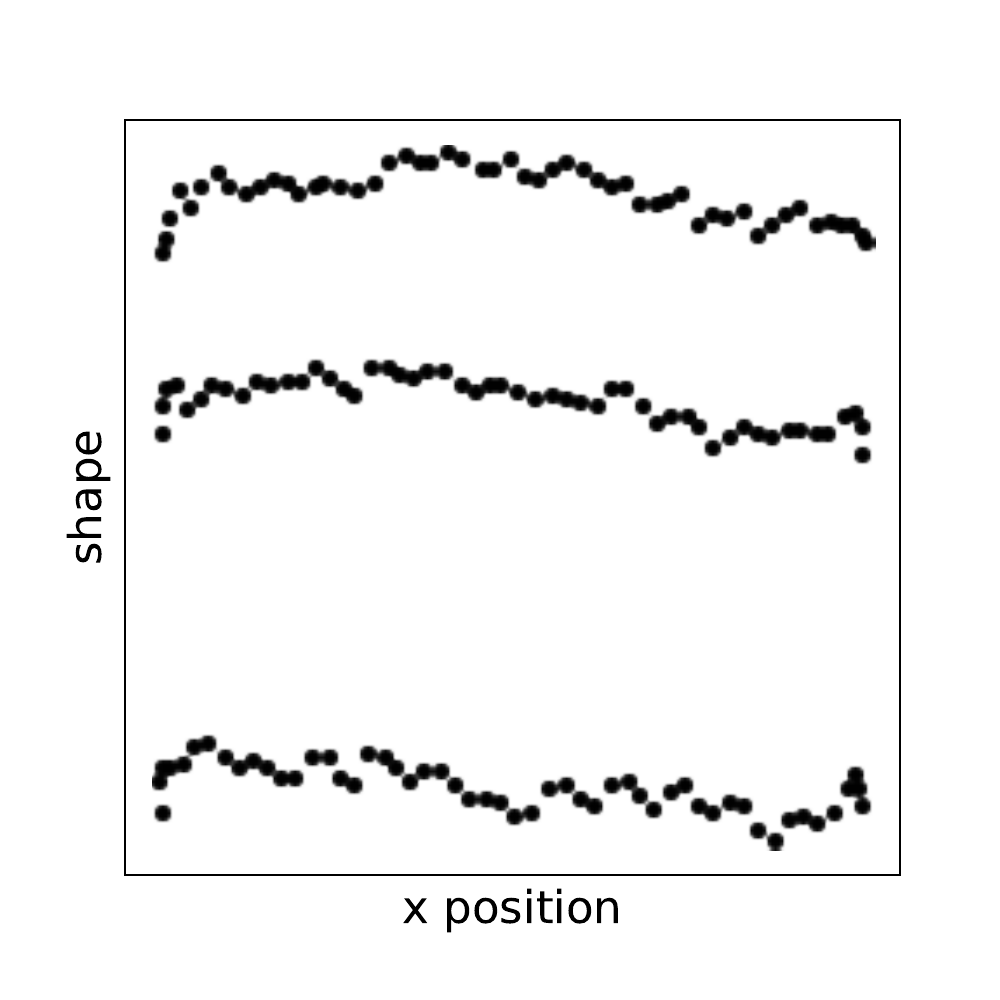} 
  {\scriptsize (c) X-S relationship in DAE}
      \end{minipage}
      \hfill
      \begin{minipage}{0.22\linewidth}
      \centering
      \includegraphics[ width=\linewidth]{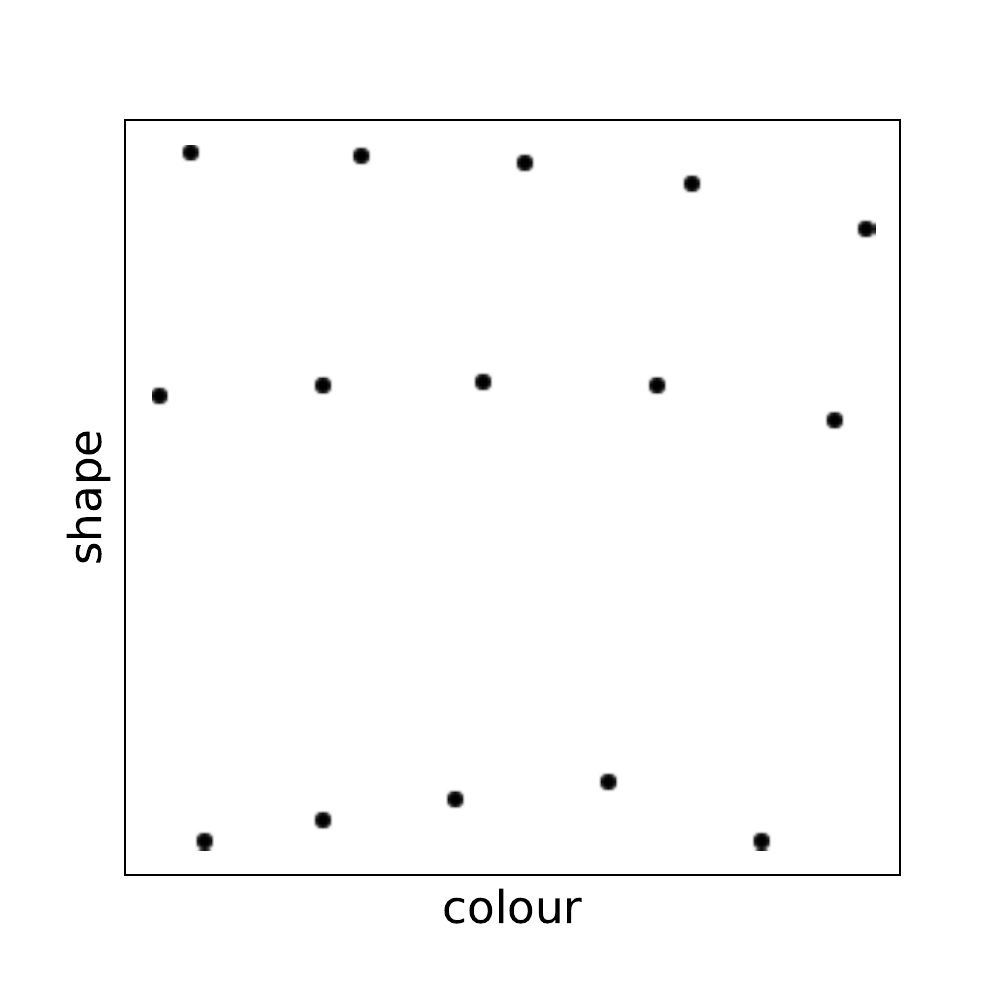}
  {\scriptsize (d) C-S relationship in DAE}
      \end{minipage}
     \caption{Relationships between X-Y, X-C, X-S and C-S features in DAE.}
     \label{fig:dae_xycs}
\end{figure*}

\begin{figure*}
      \begin{minipage}{0.22\linewidth}
     \centering
   \includegraphics[width=\linewidth]{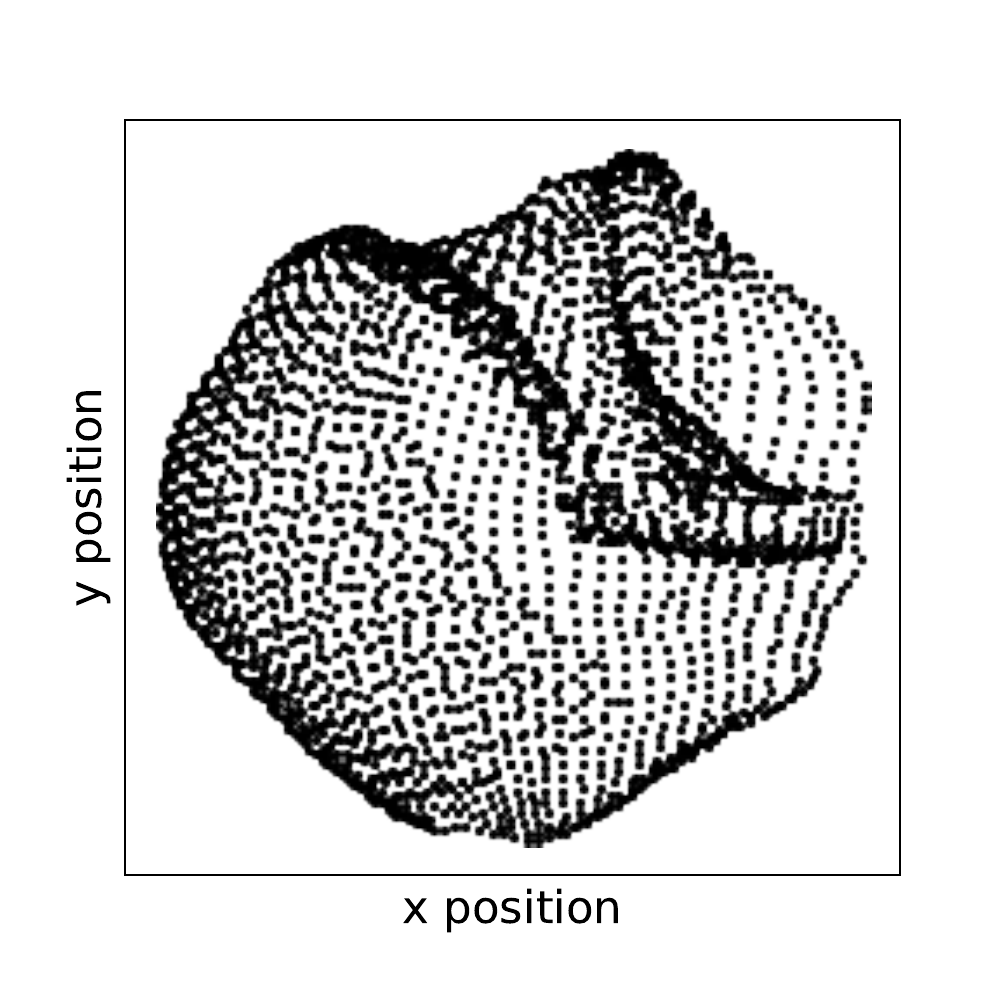} 
  {\scriptsize (a) X-Y relationship in $\beta$-VAE}
      \end{minipage}
      \hfill
      \begin{minipage}{0.22\linewidth}
      \centering
      \includegraphics[ width=\linewidth]{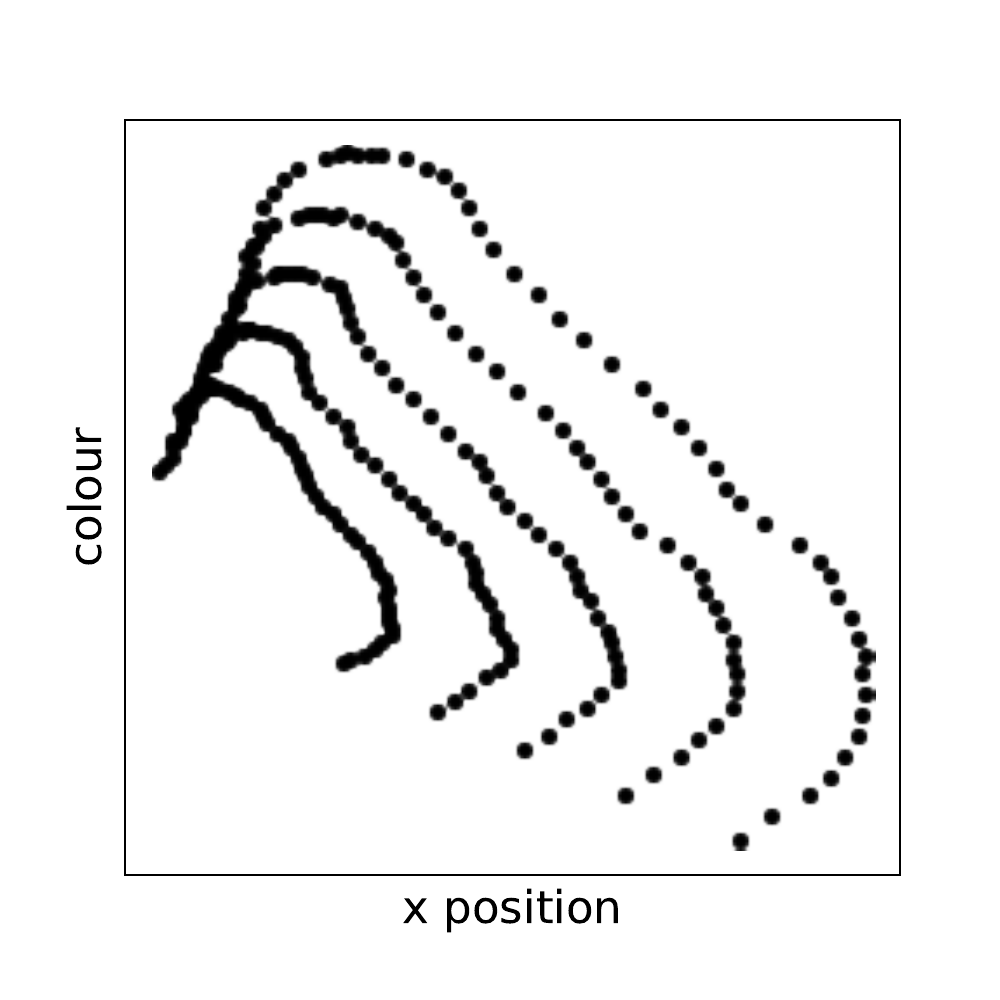}
  {\scriptsize (b) X-C relationship in $\beta$-VAE}
      \end{minipage}
      \hfill
      \begin{minipage}{0.22\linewidth}
     \centering
   \includegraphics[width=\linewidth]{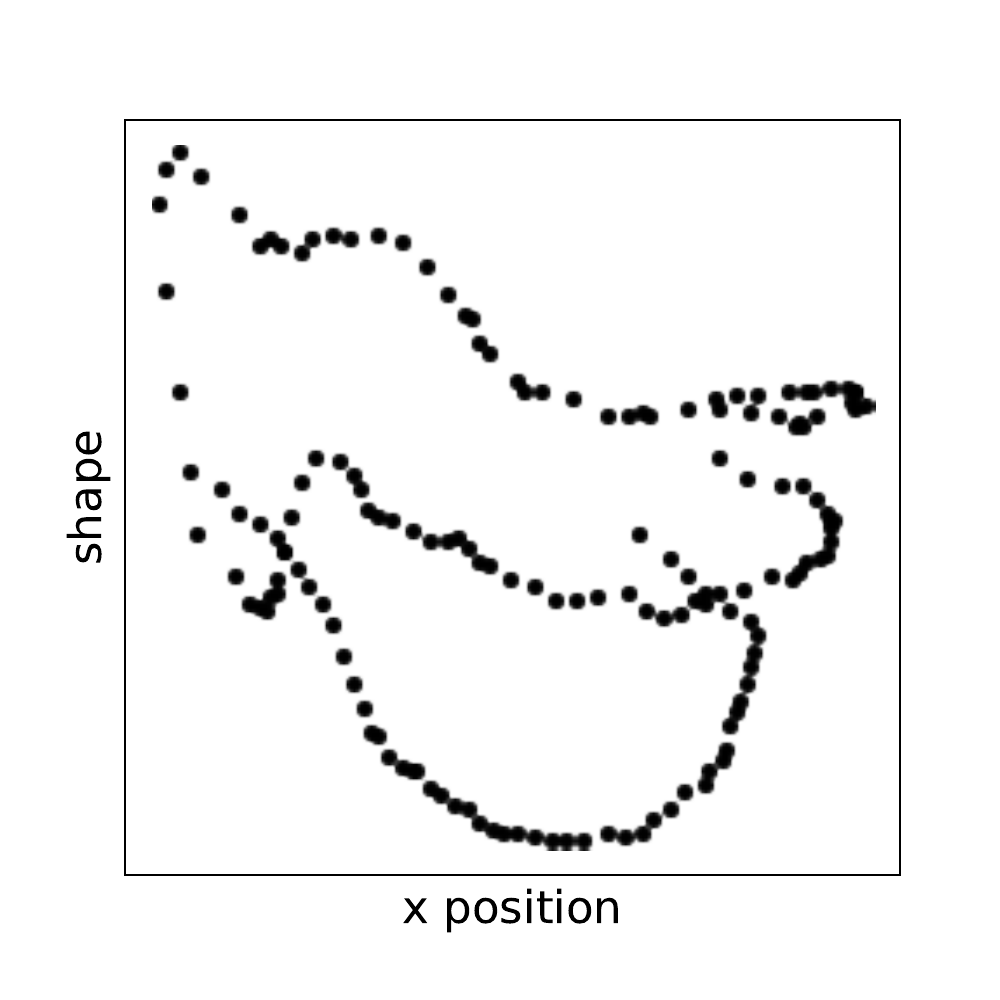} 
  {\scriptsize (c) X-S relationship in $\beta$-VAE}
      \end{minipage}
      \hfill
      \begin{minipage}{0.22\linewidth}
      \centering
      \includegraphics[ width=\linewidth]{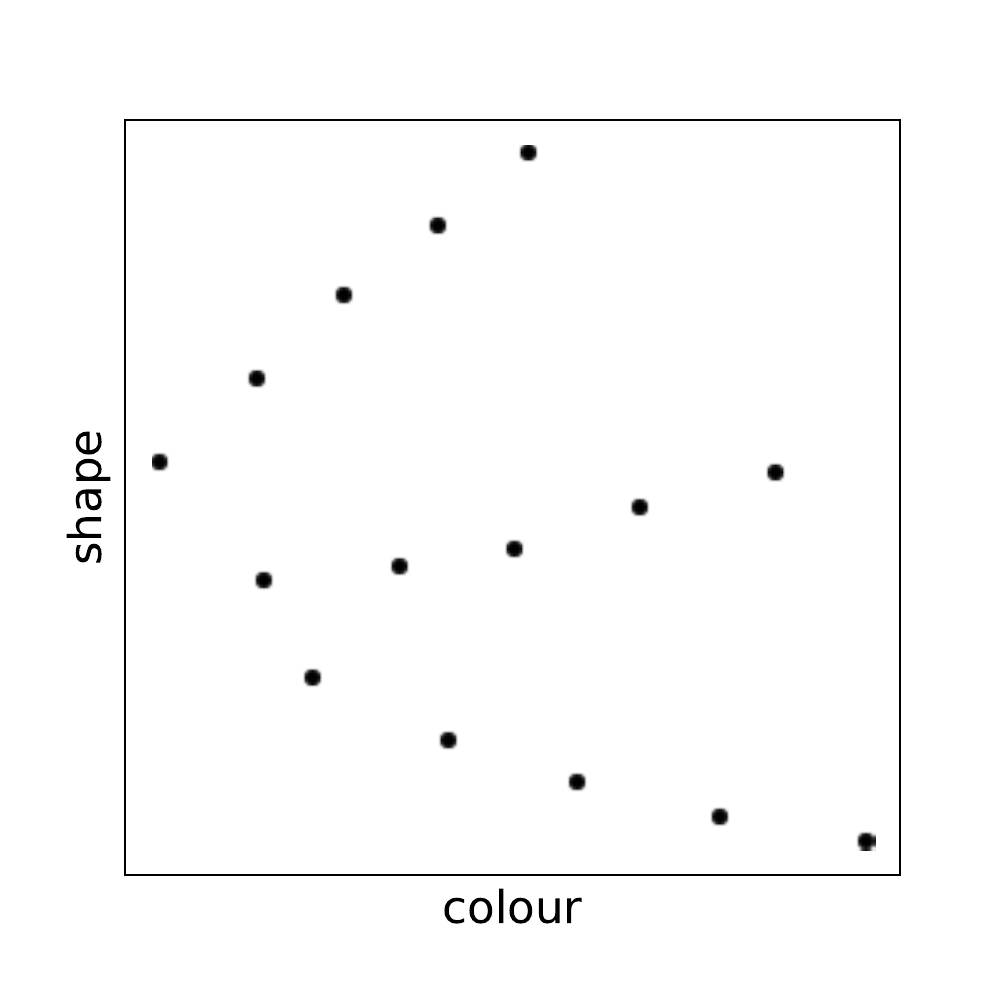}
  {\scriptsize (d) C-S relationship in $\beta$-VAE}
      \end{minipage}
     \caption{Relationships between X-Y, X-C, X-S and C-S features in $\beta$-VAE.}
     \label{fig:bvae_xycs}
\end{figure*}

\begin{figure*}
      \begin{minipage}{0.22\linewidth}
     \centering
   \includegraphics[width=\linewidth]{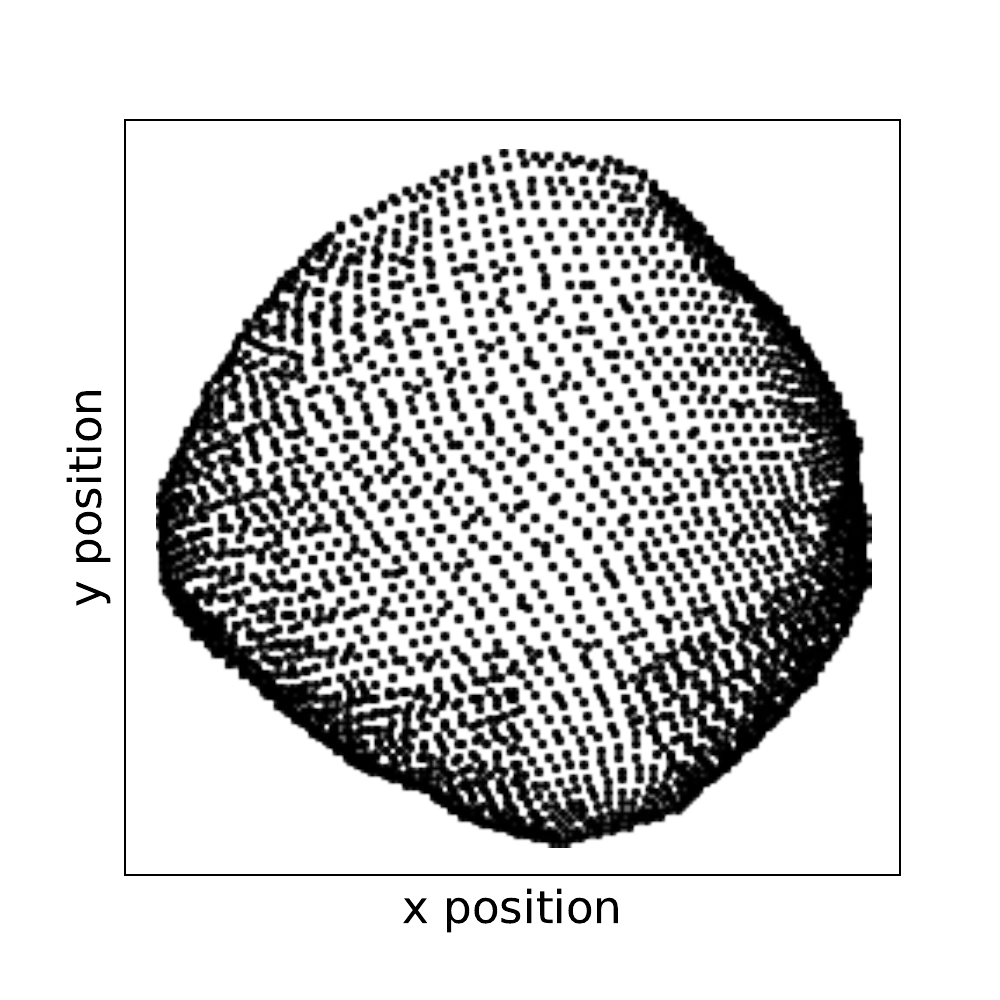} 
  {\scriptsize (a) X-Y relationship in $\beta$-TCVAE}
      \end{minipage}
      \hfill
      \begin{minipage}{0.22\linewidth}
      \centering
      \includegraphics[ width=\linewidth]{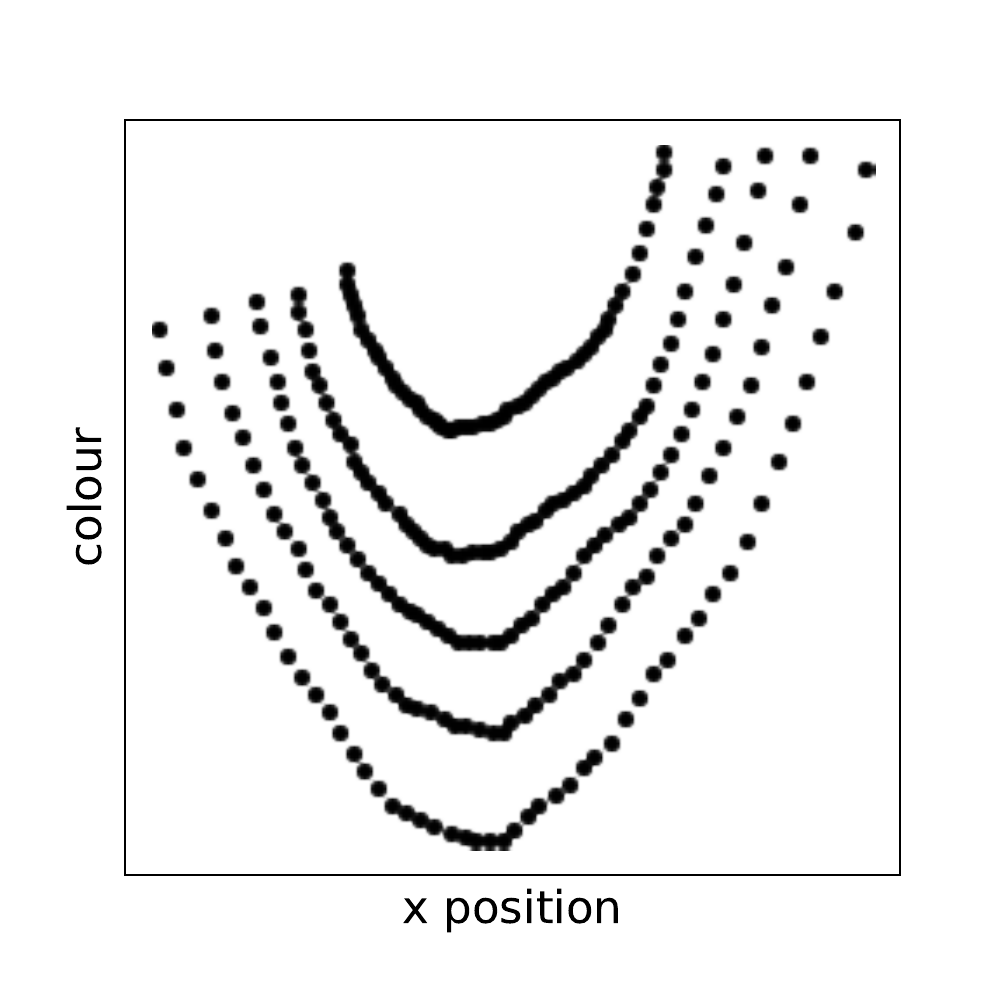}
  {\scriptsize (b) X-C relationship in $\beta$-TCVAE}
      \end{minipage}
      \hfill
      \begin{minipage}{0.22\linewidth}
     \centering
   \includegraphics[width=\linewidth]{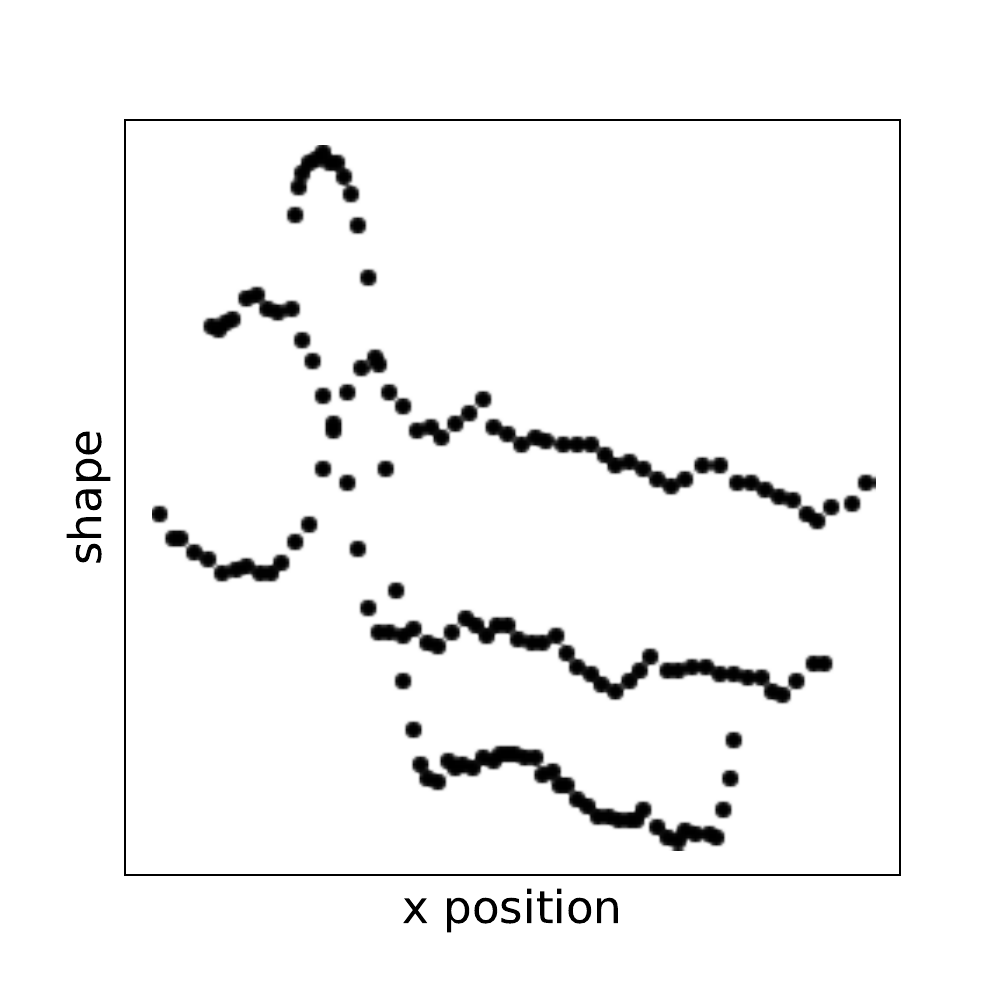} 
  {\scriptsize (c) X-S relationship in $\beta$-TCVAE}
      \end{minipage}
      \hfill
      \begin{minipage}{0.22\linewidth}
      \centering
      \includegraphics[ width=\linewidth]{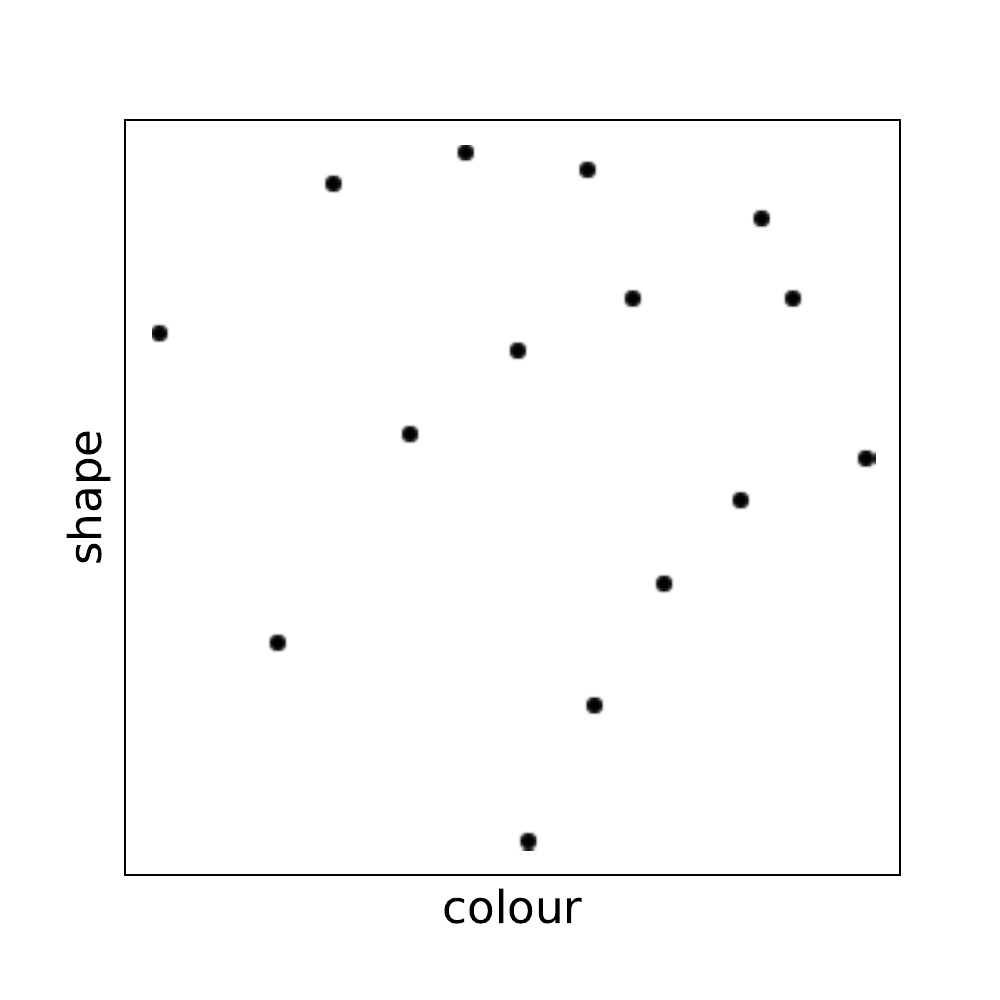}
  {\scriptsize (d) C-S relationship in $\beta$-TCVAE}
      \end{minipage}
     \caption{Relationships between X-Y, X-C, X-S and C-S features in $\beta$-TCVAE.}
     \label{fig:tcvae_xycs}
\end{figure*}

\begin{figure*}
      \begin{minipage}{0.22\linewidth}
     \centering
   \includegraphics[width=\linewidth]{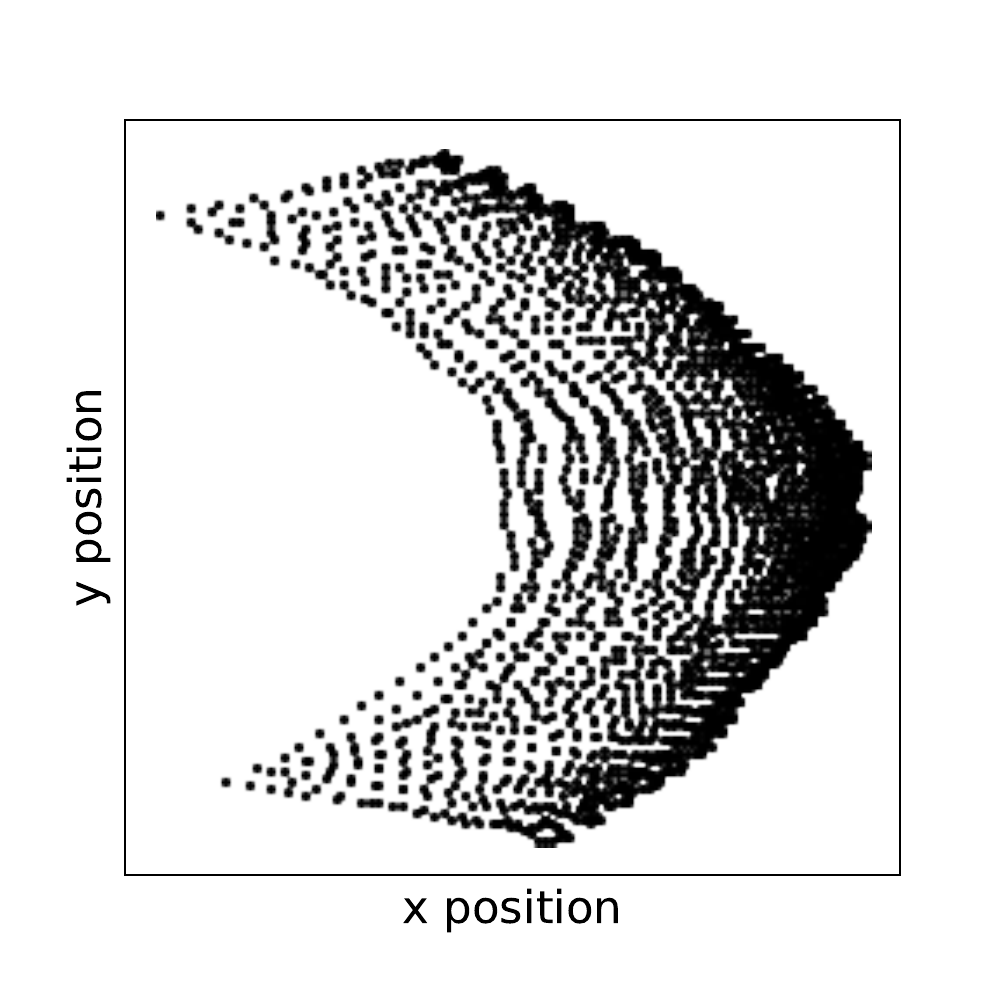} 
  {\scriptsize (a) X-Y relationship in CCIVAE}
      \end{minipage}
      \hfill
      \begin{minipage}{0.22\linewidth}
      \centering
      \includegraphics[ width=\linewidth]{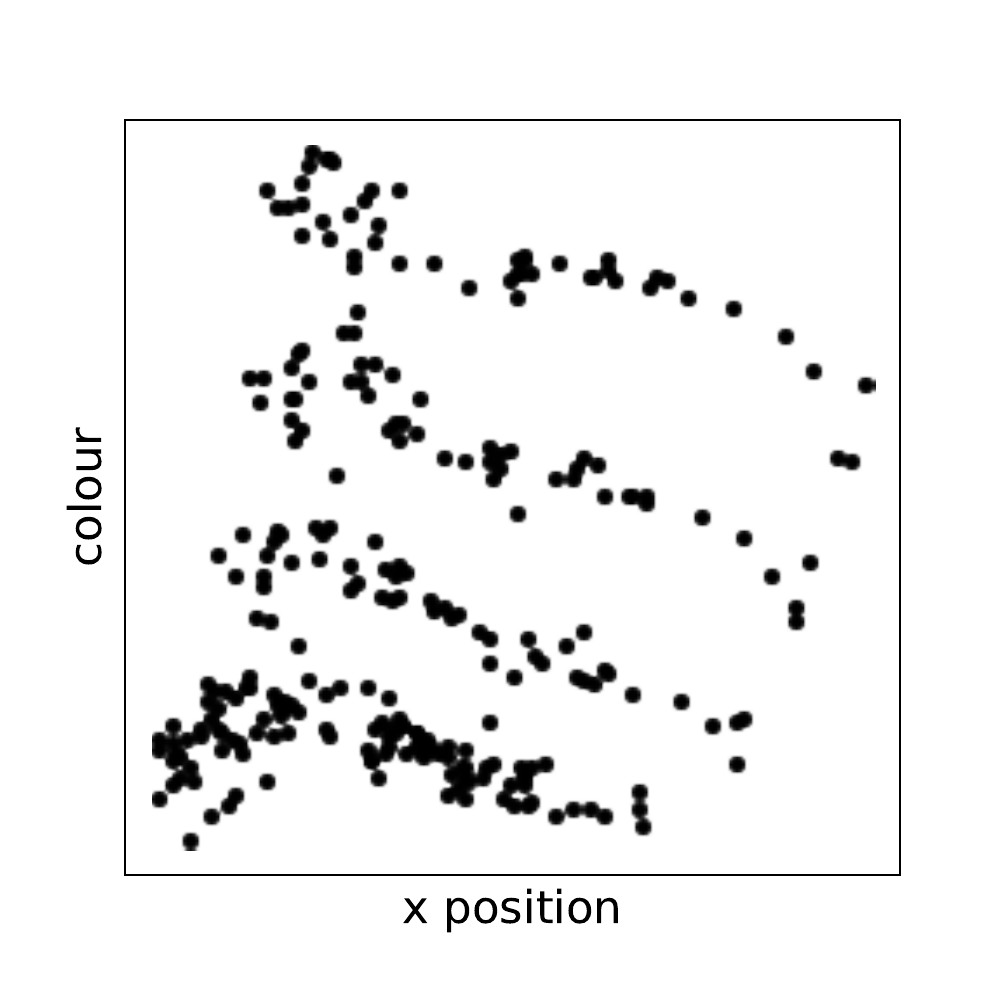}
  {\scriptsize (b) X-C relationship in CCI-VAE}
      \end{minipage}
      \hfill
      \begin{minipage}{0.22\linewidth}
     \centering
   \includegraphics[width=\linewidth]{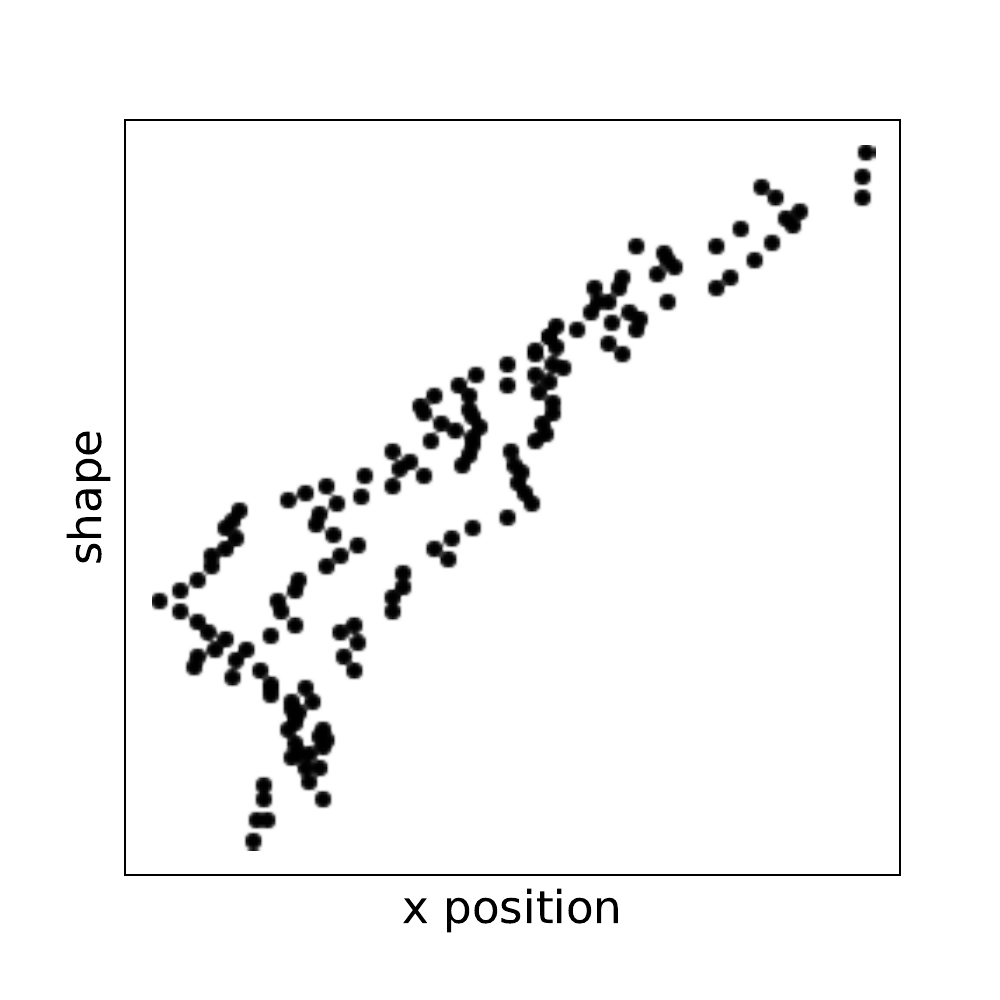} 
  {\scriptsize (c) X-S relationship in CCI-VAE}
      \end{minipage}
      \hfill
      \begin{minipage}{0.22\linewidth}
      \centering
      \includegraphics[ width=\linewidth]{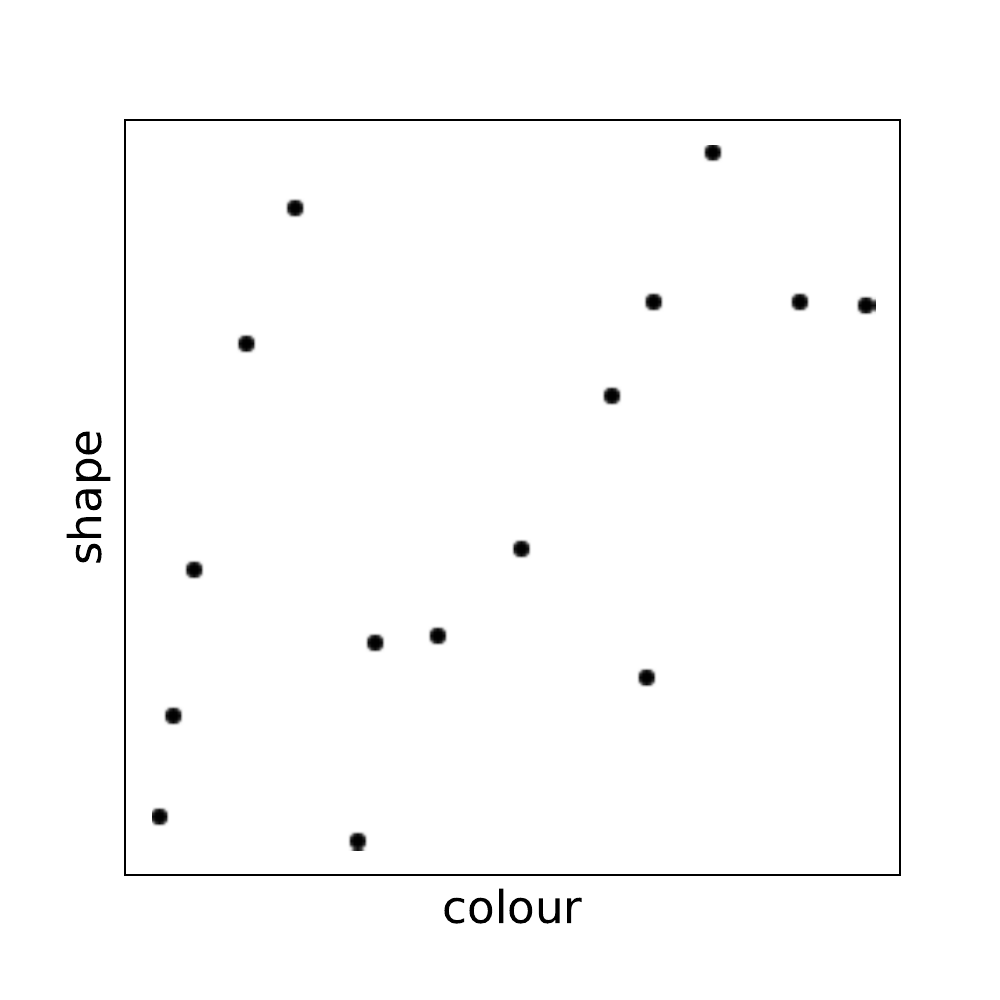}
  {\scriptsize (d) C-S relationship in CCI-VAE}
      \end{minipage}
     \caption{Relationships between X-Y, X-C, X-S and C-S features in CCI-VAE.}
     \label{fig:ccivae_xycs}
\end{figure*}

\begin{figure*}
      \begin{minipage}{0.22\linewidth}
     \centering
   \includegraphics[width=\linewidth]{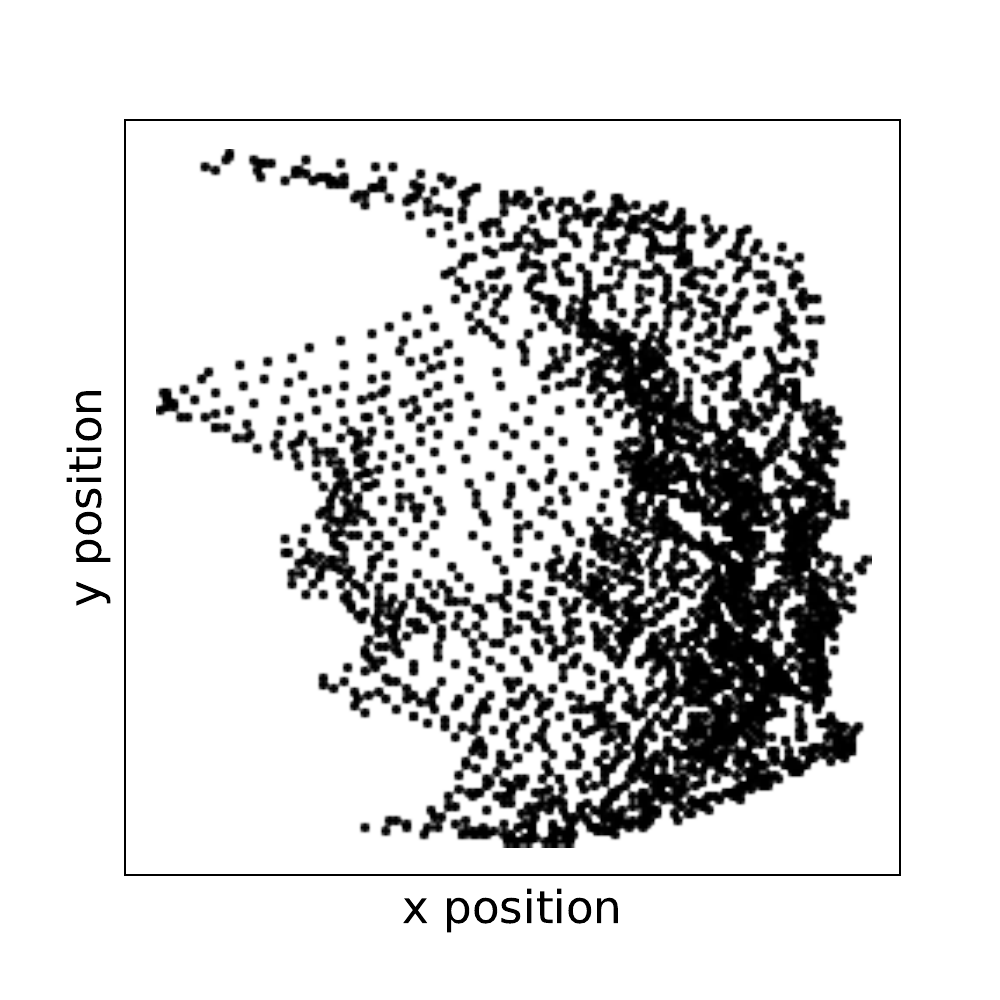} 
  {\scriptsize (a) X-Y relationship in FVAE}
      \end{minipage}
      \hfill
      \begin{minipage}{0.22\linewidth}
      \centering
      \includegraphics[ width=\linewidth]{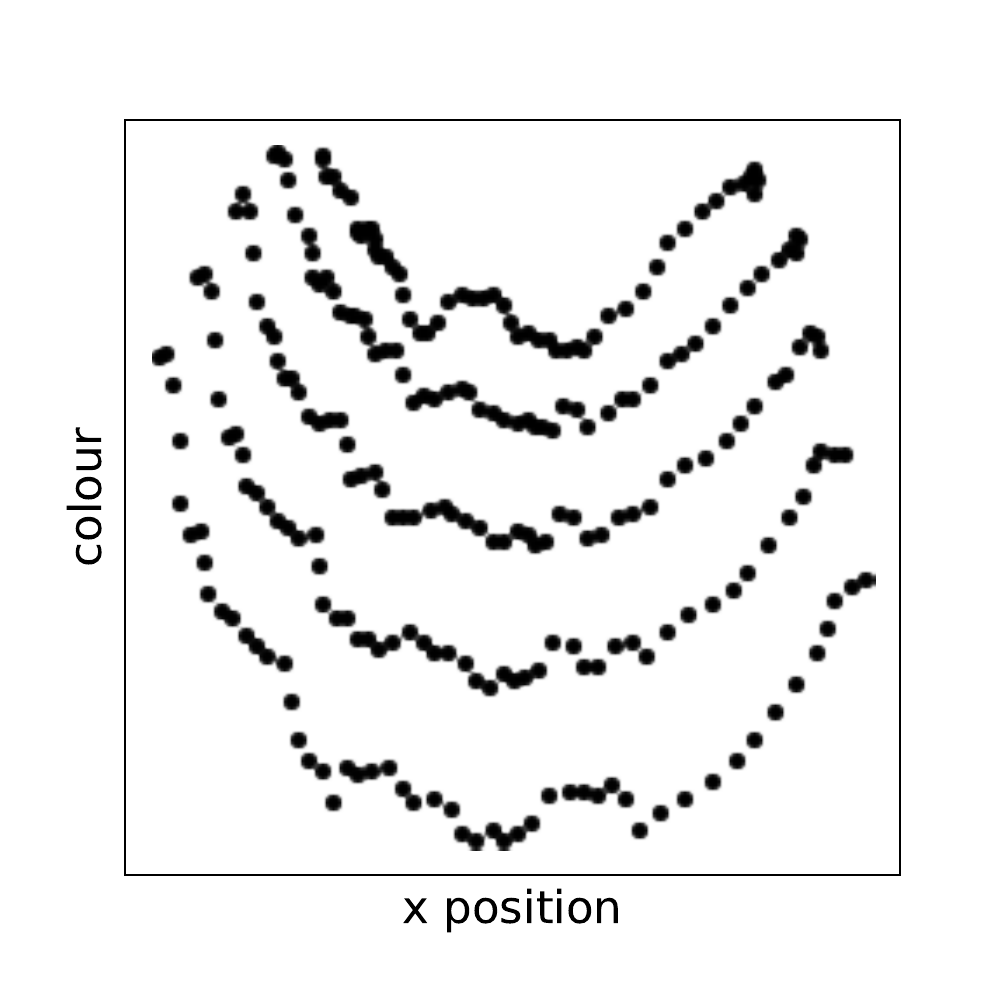}
  {\scriptsize (b) X-C relationship in FVAE}
      \end{minipage}
      \hfill
      \begin{minipage}{0.22\linewidth}
     \centering
   \includegraphics[width=\linewidth]{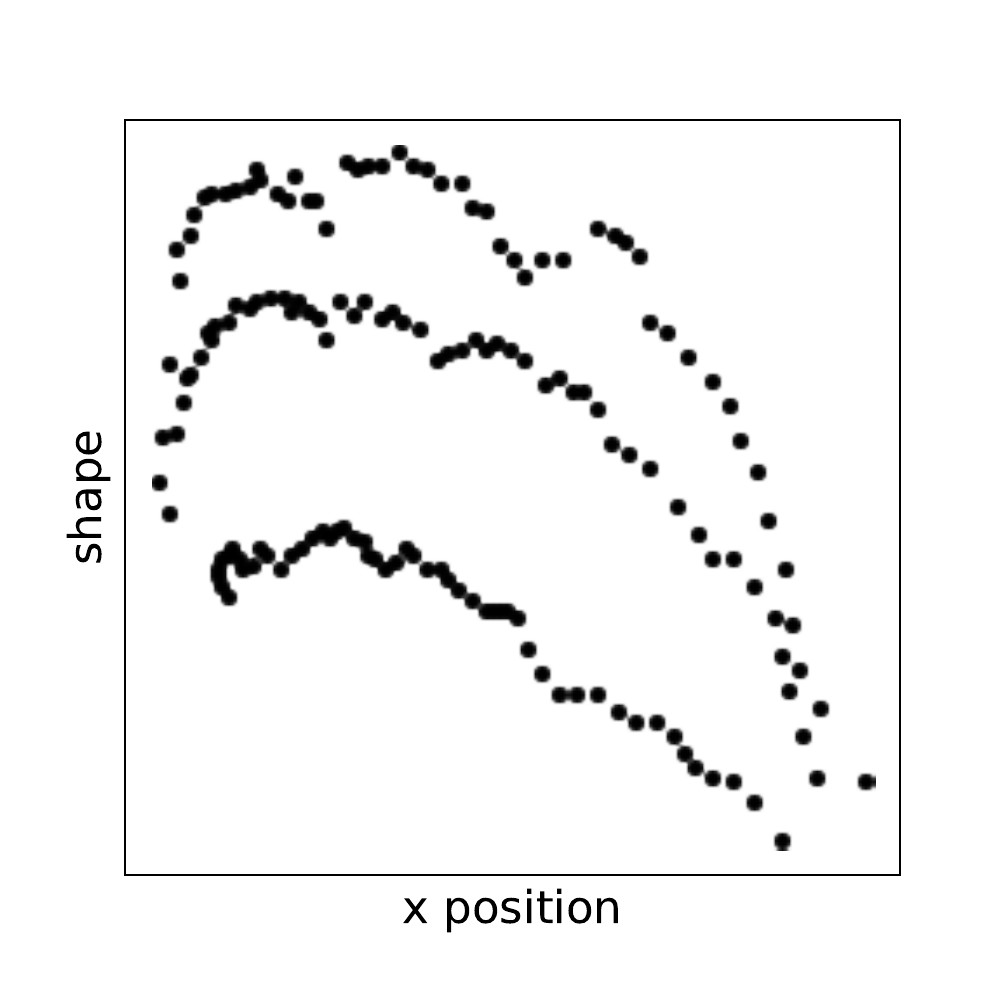} 
  {\scriptsize (c) X-S relationship in FVAE}
      \end{minipage}
      \hfill
      \begin{minipage}{0.22\linewidth}
      \centering
      \includegraphics[ width=\linewidth]{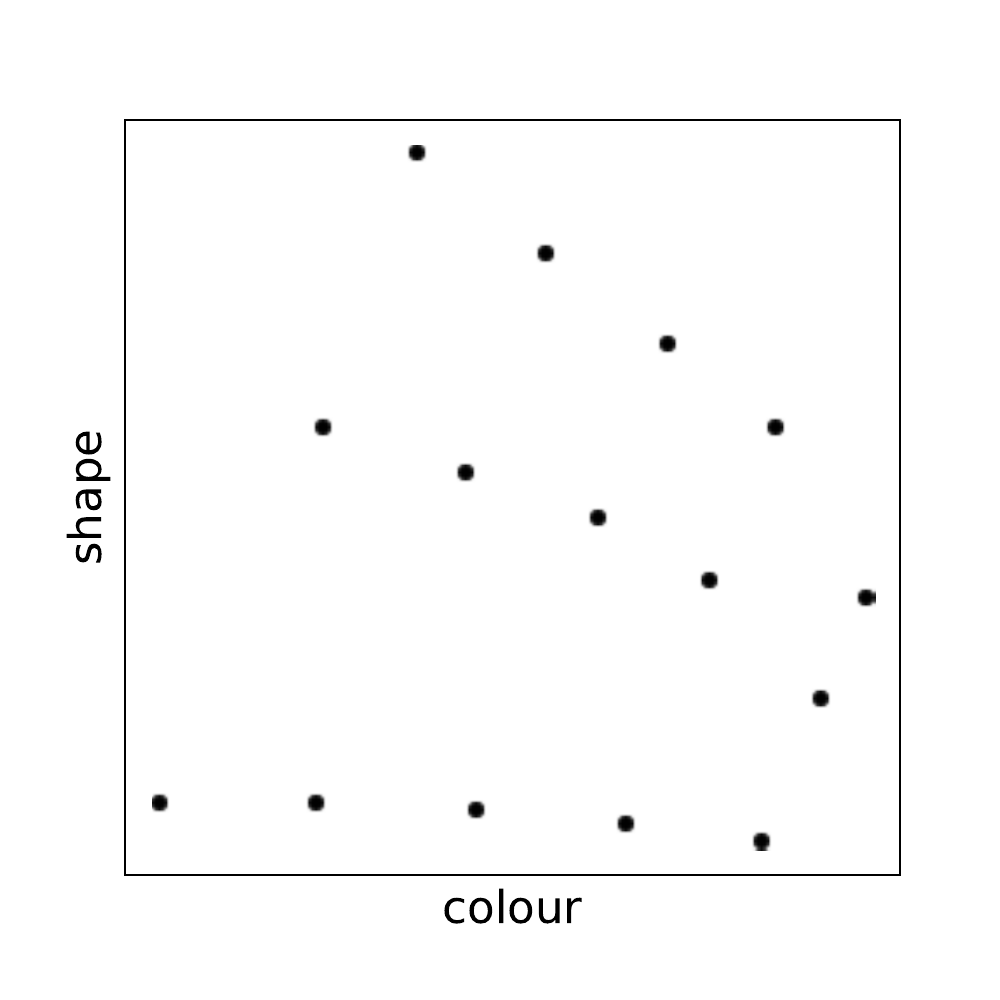}
  {\scriptsize (d) C-S relationship in FVAE}
      \end{minipage}
     \caption{Relationships between X-Y, X-C, X-S and C-S features in FVAE.}
     \label{fig:fvae_xycs}
\end{figure*}

\begin{figure*}
      \begin{minipage}{0.22\linewidth}
     \centering
   \includegraphics[width=\linewidth]{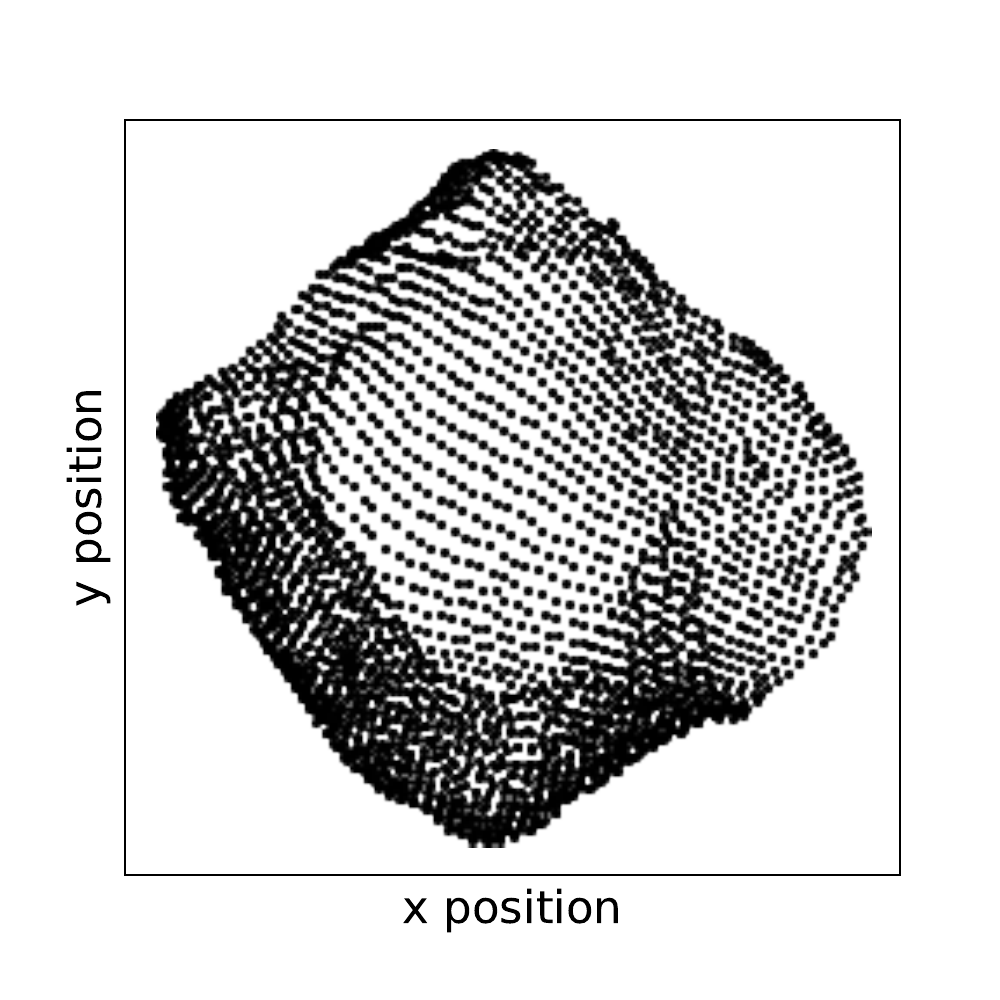} 
  {\scriptsize (a) X-Y relationship in InfoVAE}
      \end{minipage}
      \hfill
      \begin{minipage}{0.22\linewidth}
      \centering
      \includegraphics[ width=\linewidth]{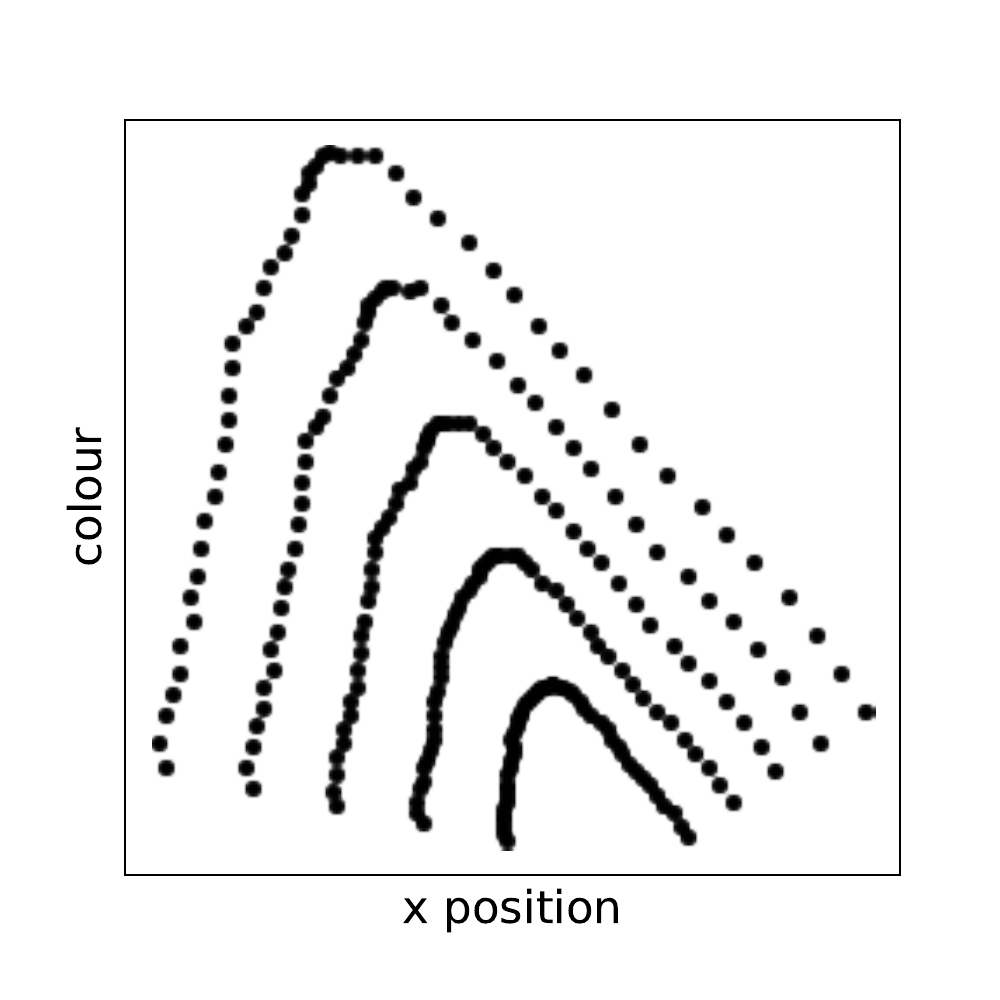}
  {\scriptsize (b) X-C relationship in InfoVAE}
      \end{minipage}
      \hfill
      \begin{minipage}{0.22\linewidth}
     \centering
   \includegraphics[width=\linewidth]{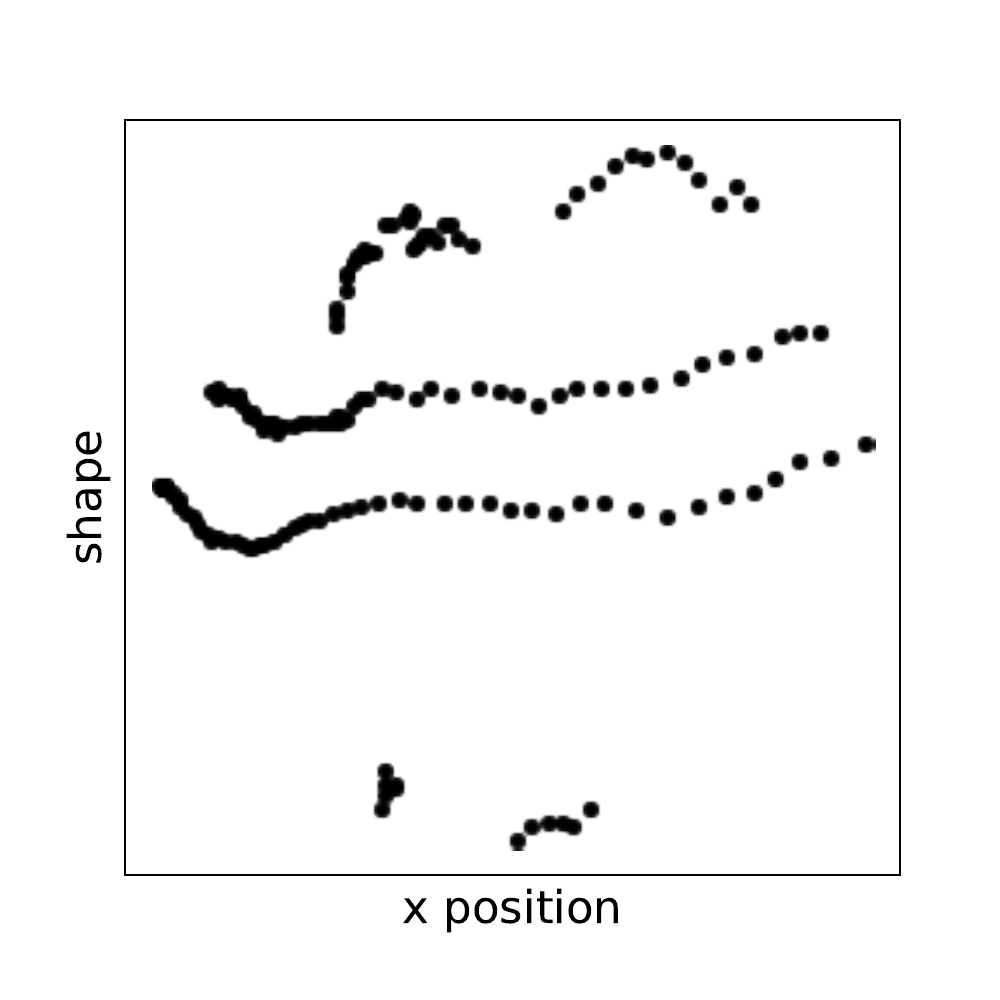} 
  {\scriptsize (c) X-S relationship in InfoVAE}
      \end{minipage}
      \hfill
      \begin{minipage}{0.22\linewidth}
      \centering
      \includegraphics[ width=\linewidth]{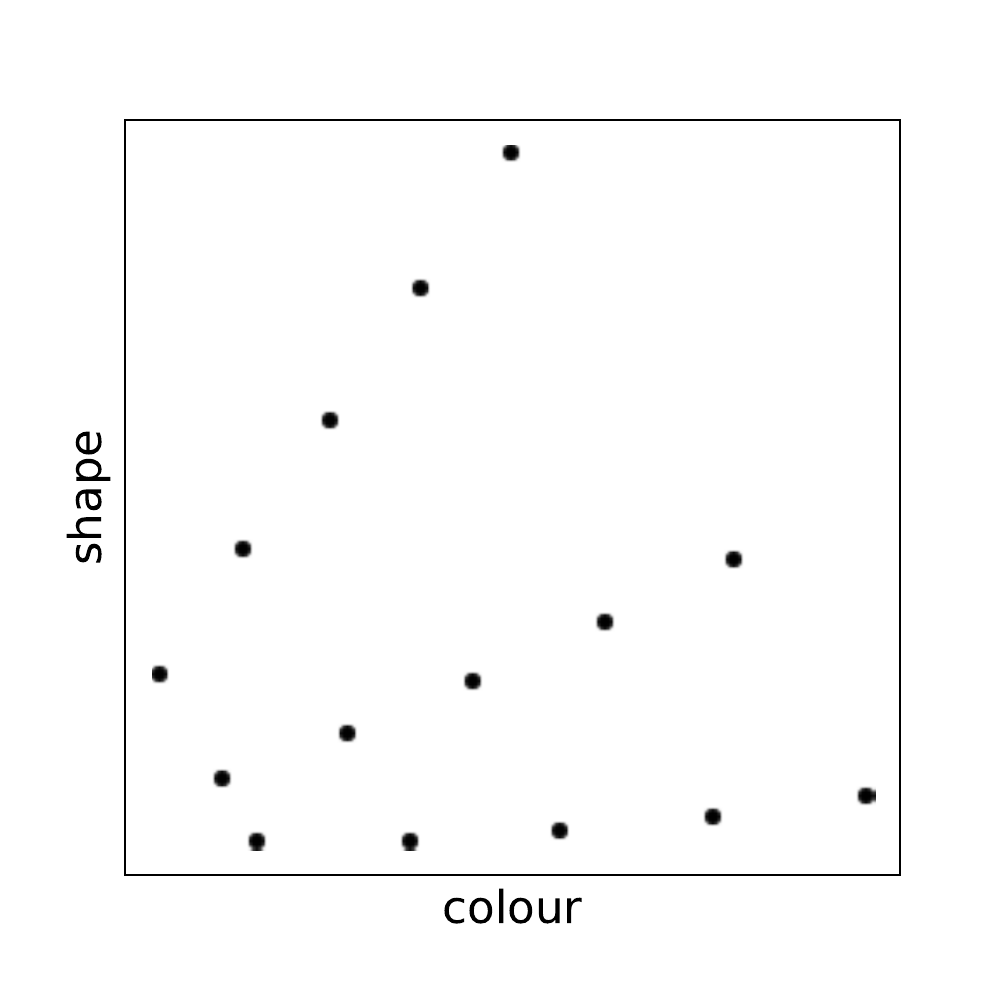}
  {\scriptsize (d) C-S relationship in InfoVAE}
      \end{minipage}
     \caption{Relationships between X-Y, X-C, X-S and C-S features in InfoVAE.}
     \label{fig:infovae_xycs}
\end{figure*}

\begin{figure*}
      \begin{minipage}{0.22\linewidth}
     \centering
   \includegraphics[width=\linewidth]{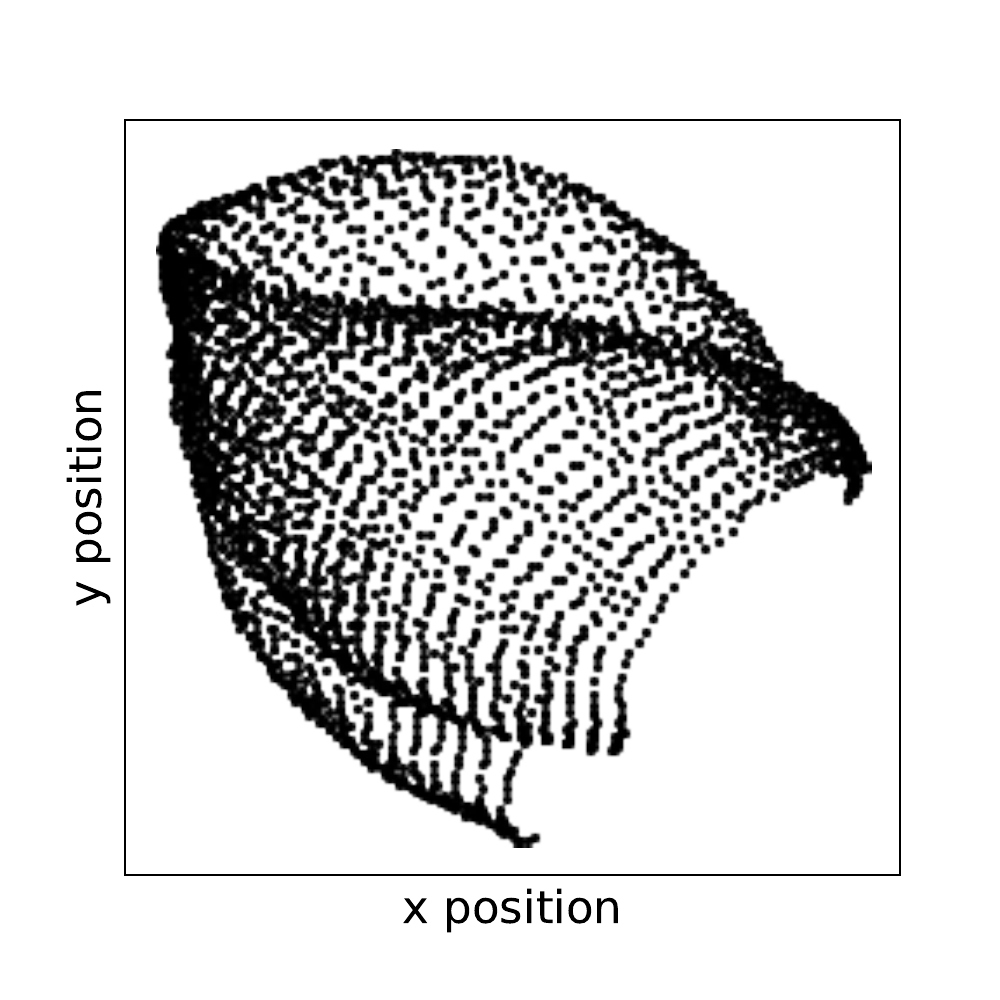} 
  {\scriptsize (a) X-Y relationship in VAE}
      \end{minipage}
      \hfill
      \begin{minipage}{0.22\linewidth}
      \centering
      \includegraphics[ width=\linewidth]{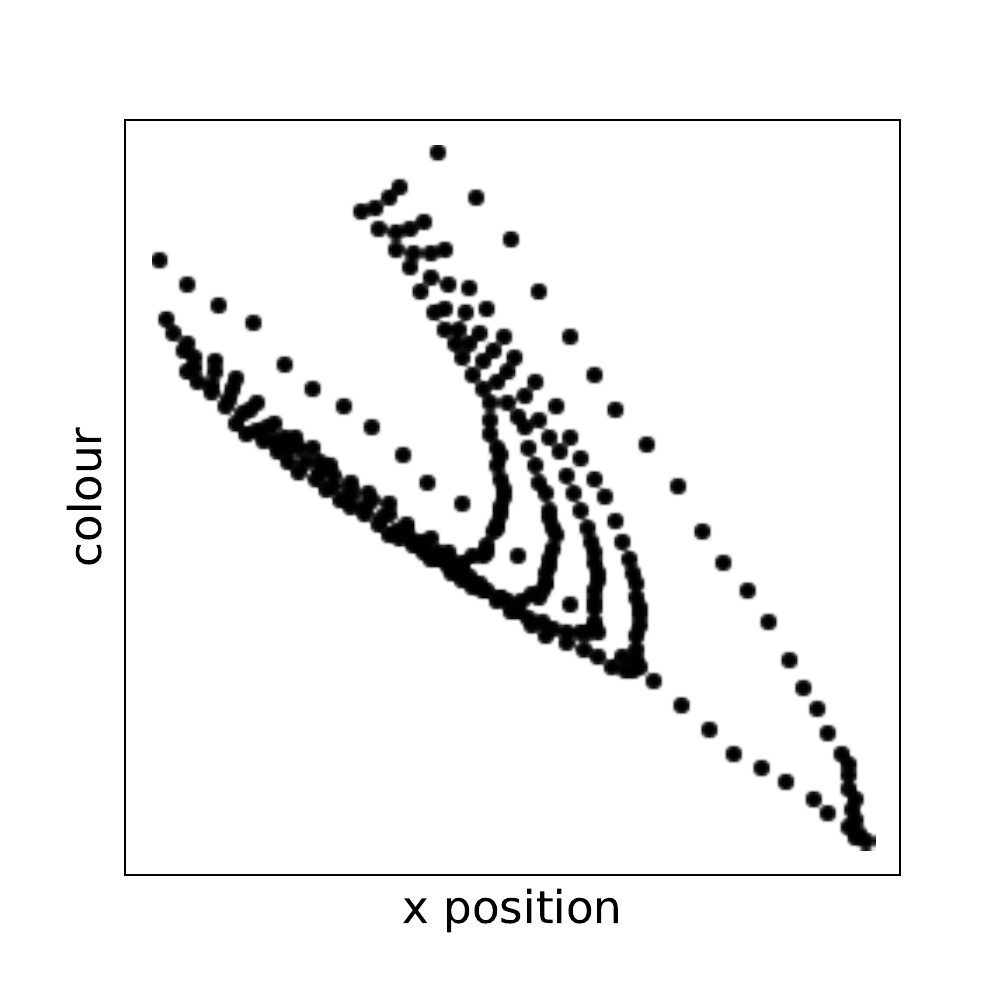}
  {\scriptsize (b) X-C relationship in VAE}
      \end{minipage}
      \hfill
      \begin{minipage}{0.22\linewidth}
     \centering
   \includegraphics[width=\linewidth]{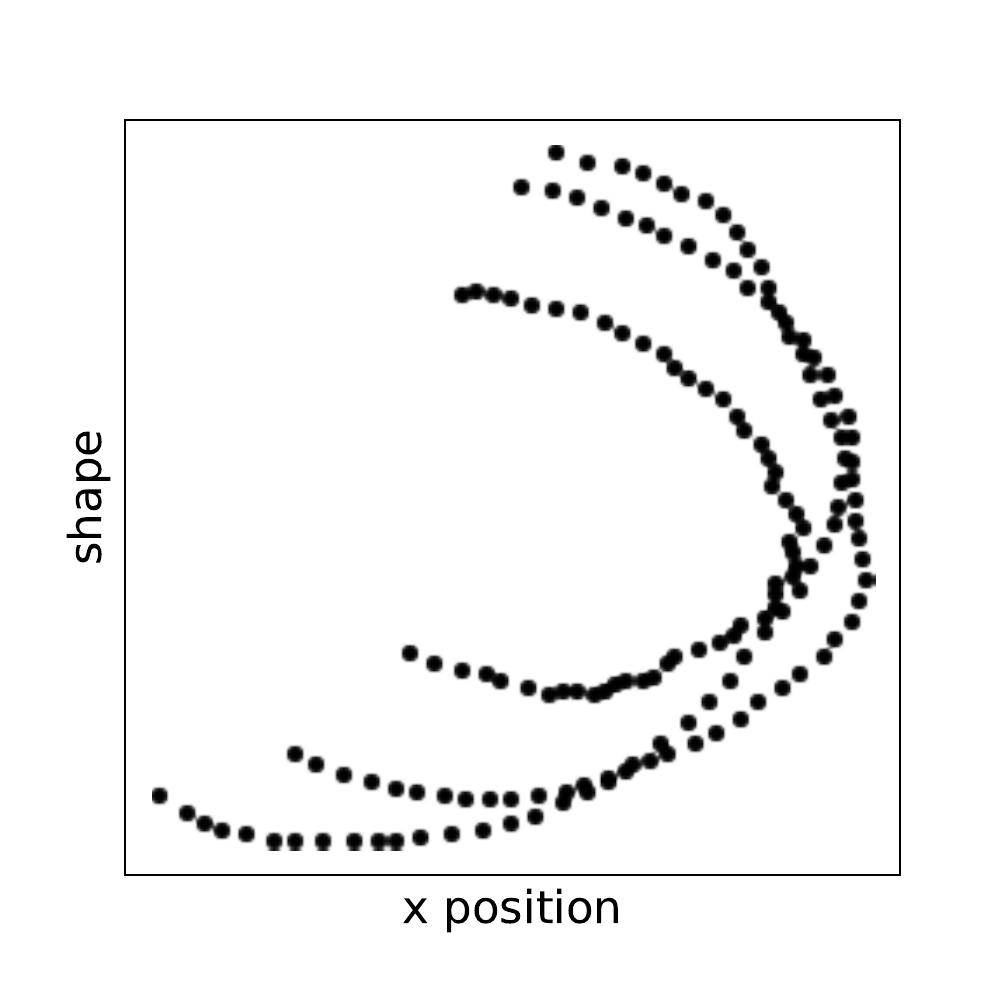} 
  {\scriptsize (c) X-S relationship in VAE}
      \end{minipage}
      \hfill
      \begin{minipage}{0.22\linewidth}
      \centering
      \includegraphics[ width=\linewidth]{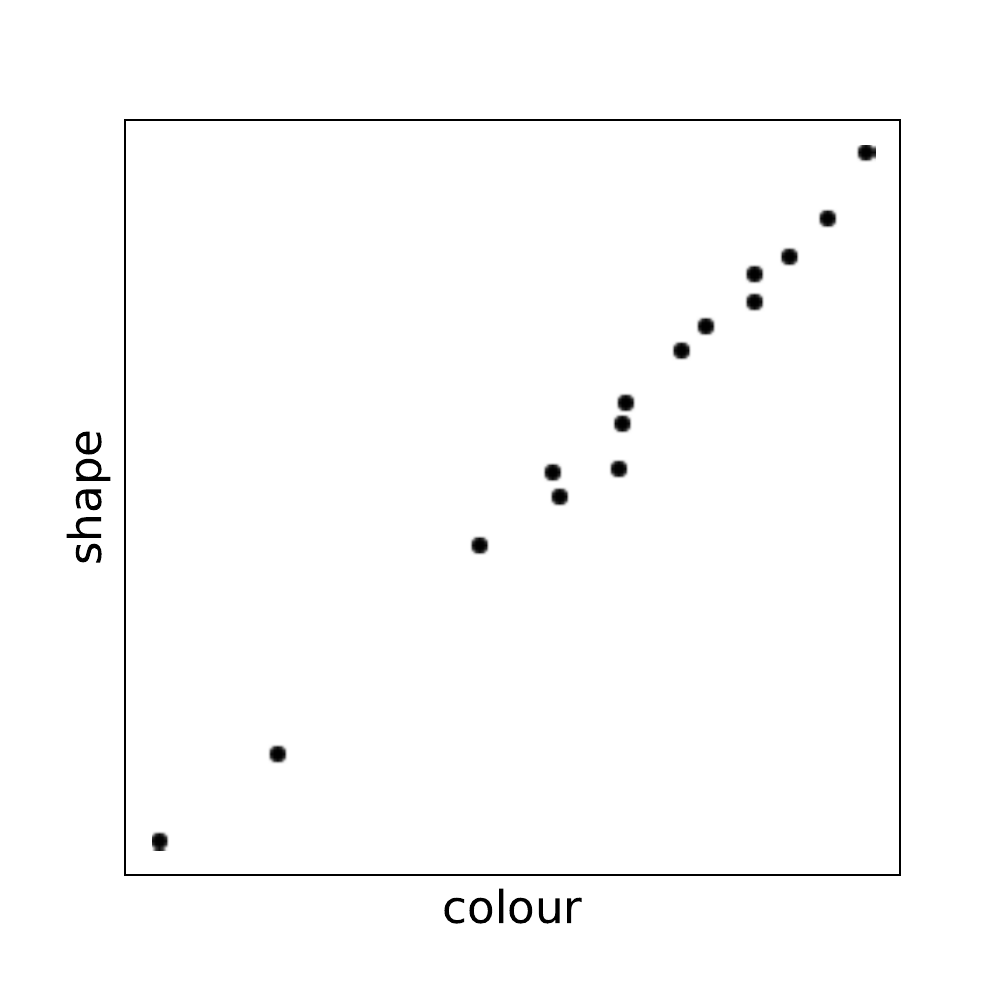}
  {\scriptsize (d) C-S relationship in VAE}
      \end{minipage}
     \caption{Relationships between X-Y, X-C, X-S and C-S features in VAE.}
     \label{fig:vae_xycs}
\end{figure*}

\begin{figure*}
      \begin{minipage}{0.22\linewidth}
     \centering
   \includegraphics[width=\linewidth]{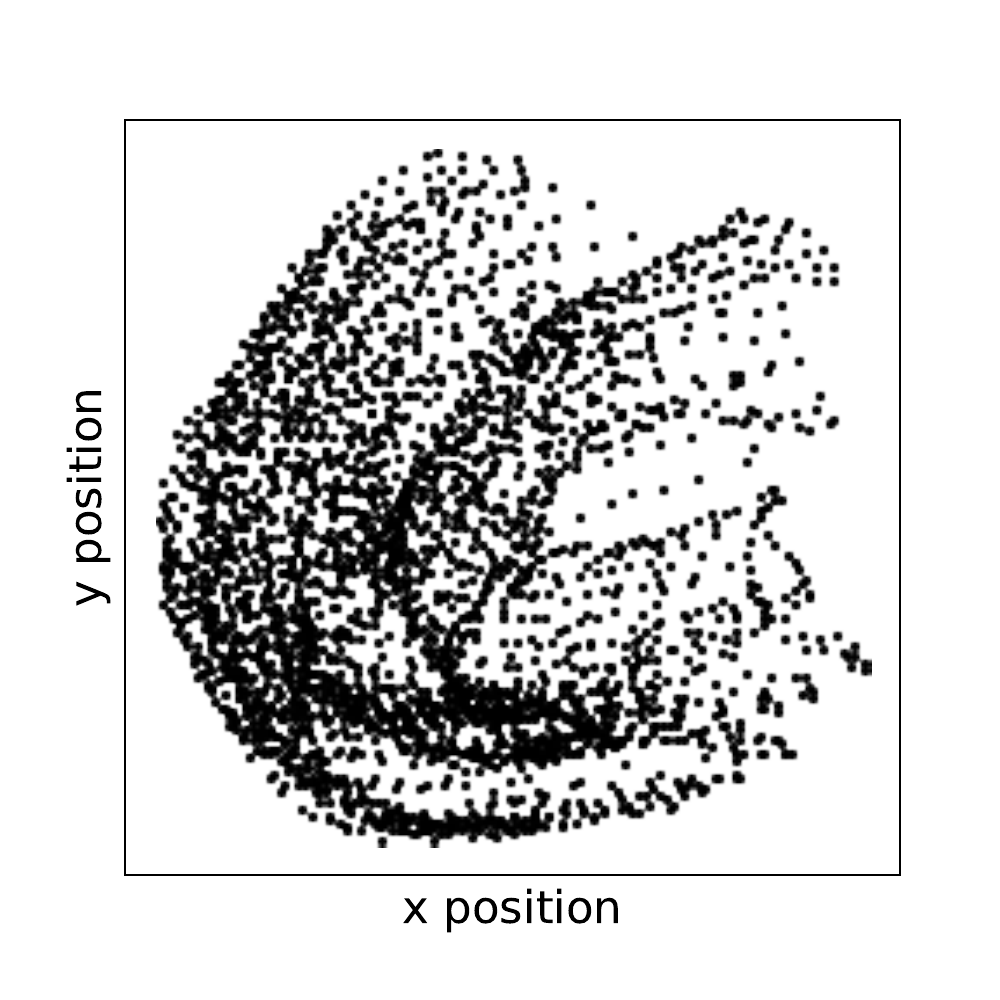} 
  {\scriptsize (a) X-Y relationship in WAE}
      \end{minipage}
      \hfill
      \begin{minipage}{0.22\linewidth}
      \centering
      \includegraphics[ width=\linewidth]{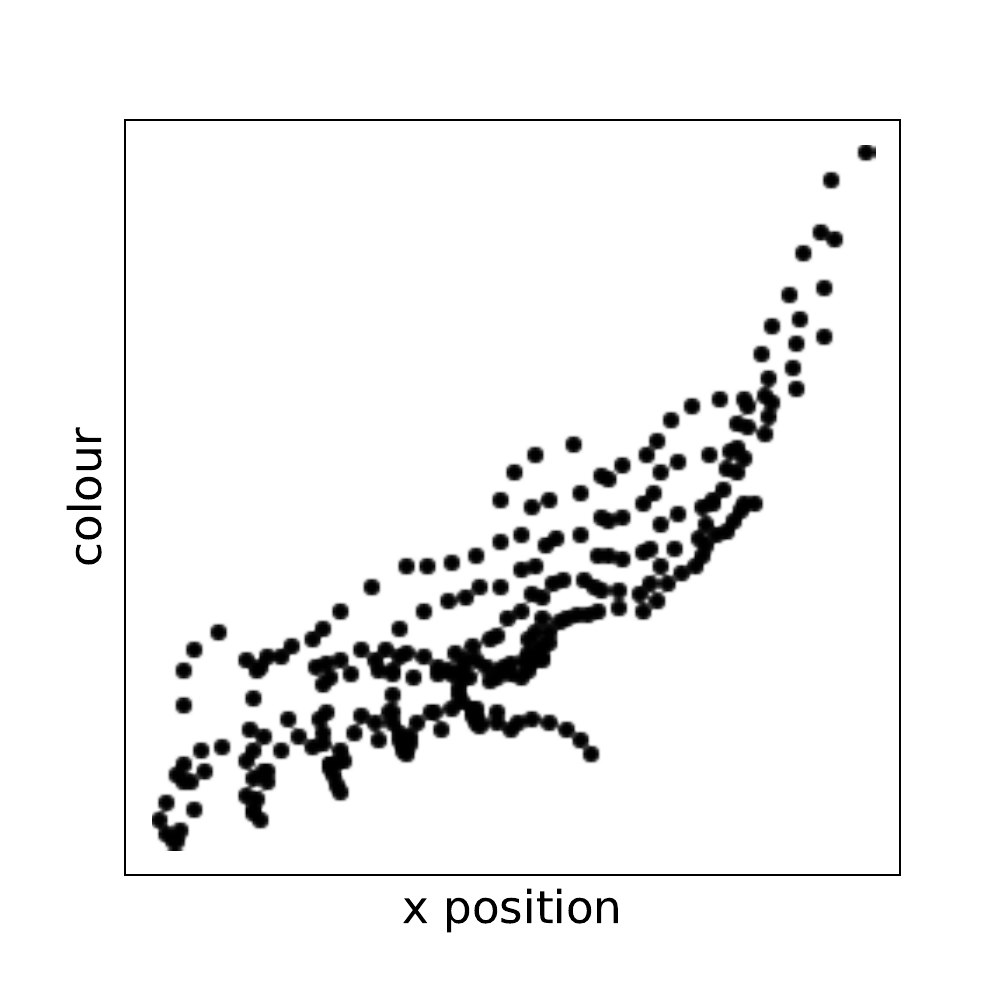}
  {\scriptsize (b) X-C relationship in WAE}
      \end{minipage}
      \hfill
      \begin{minipage}{0.22\linewidth}
     \centering
   \includegraphics[width=\linewidth]{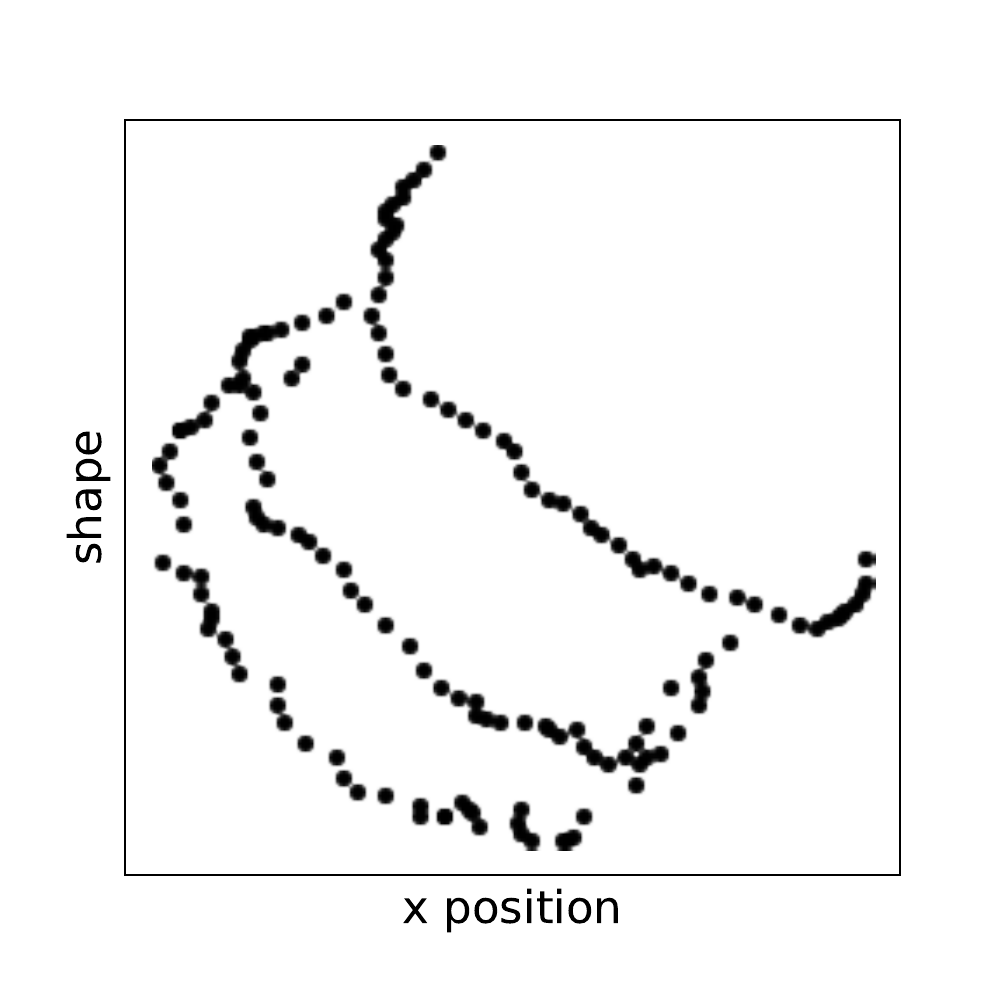} 
  {\scriptsize (c) X-S relationship in WAE}
      \end{minipage}
      \hfill
      \begin{minipage}{0.22\linewidth}
      \centering
      \includegraphics[ width=\linewidth]{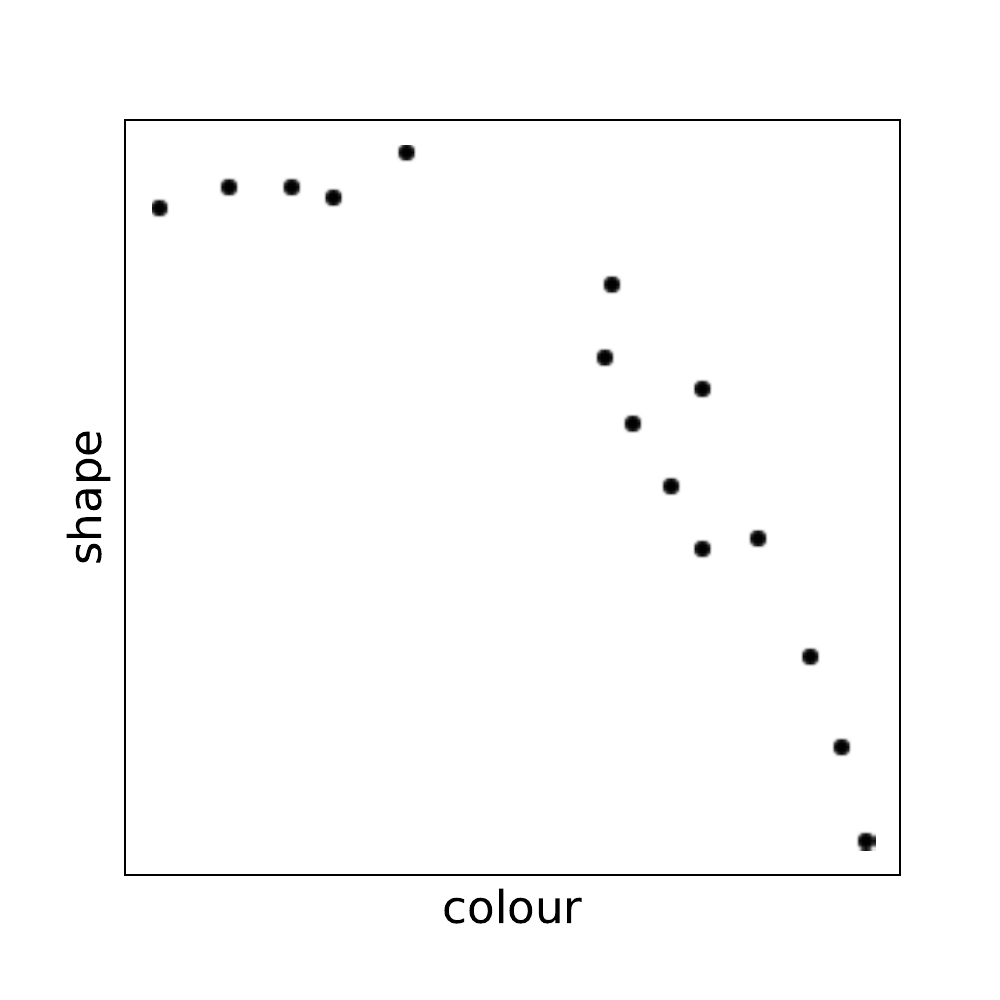}
  {\scriptsize (d) C-S relationship in WAE}
      \end{minipage}
     \caption{Relationships between X-Y, X-C, X-S and C-S features in WAE.}
     \label{fig:wae_xycs}
\end{figure*}

\begin{figure*}
      \begin{minipage}{0.49\linewidth}
     \centering
   \includegraphics[width=\linewidth]{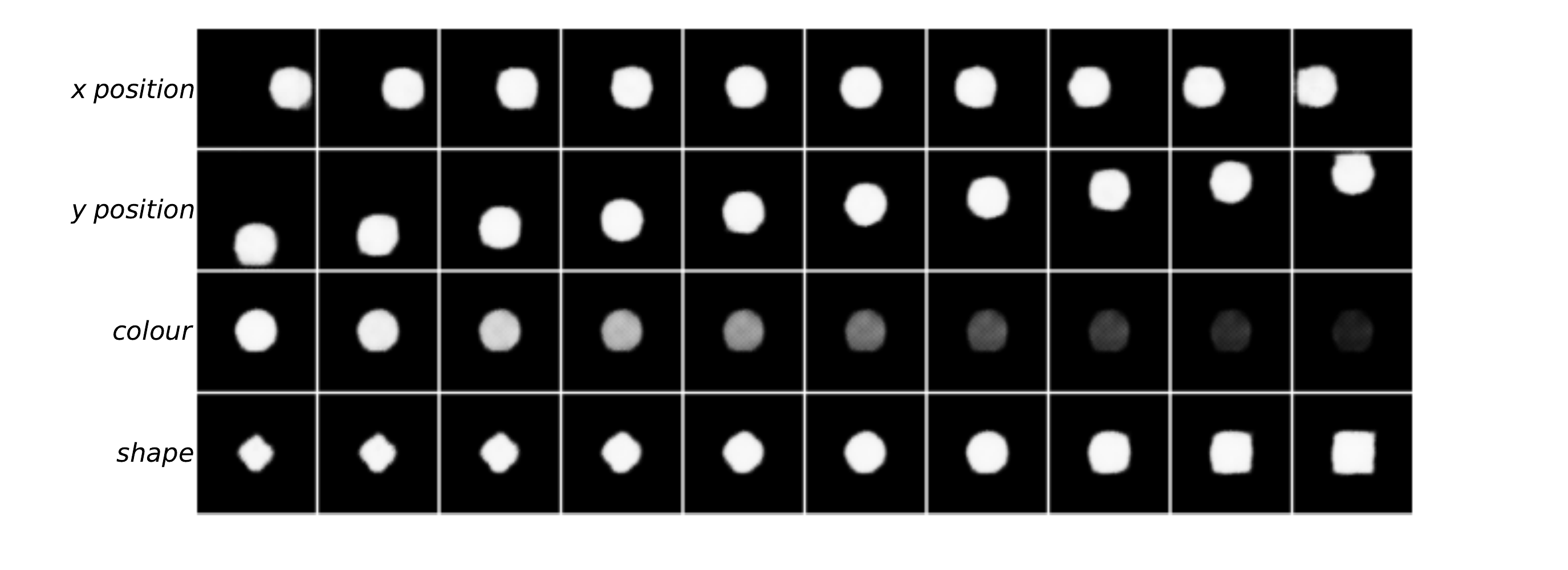} 
  {\scriptsize (a) DAE}
      \end{minipage}
      \hfill
      \begin{minipage}{0.49\linewidth}
      \centering
      \includegraphics[ width=\linewidth]{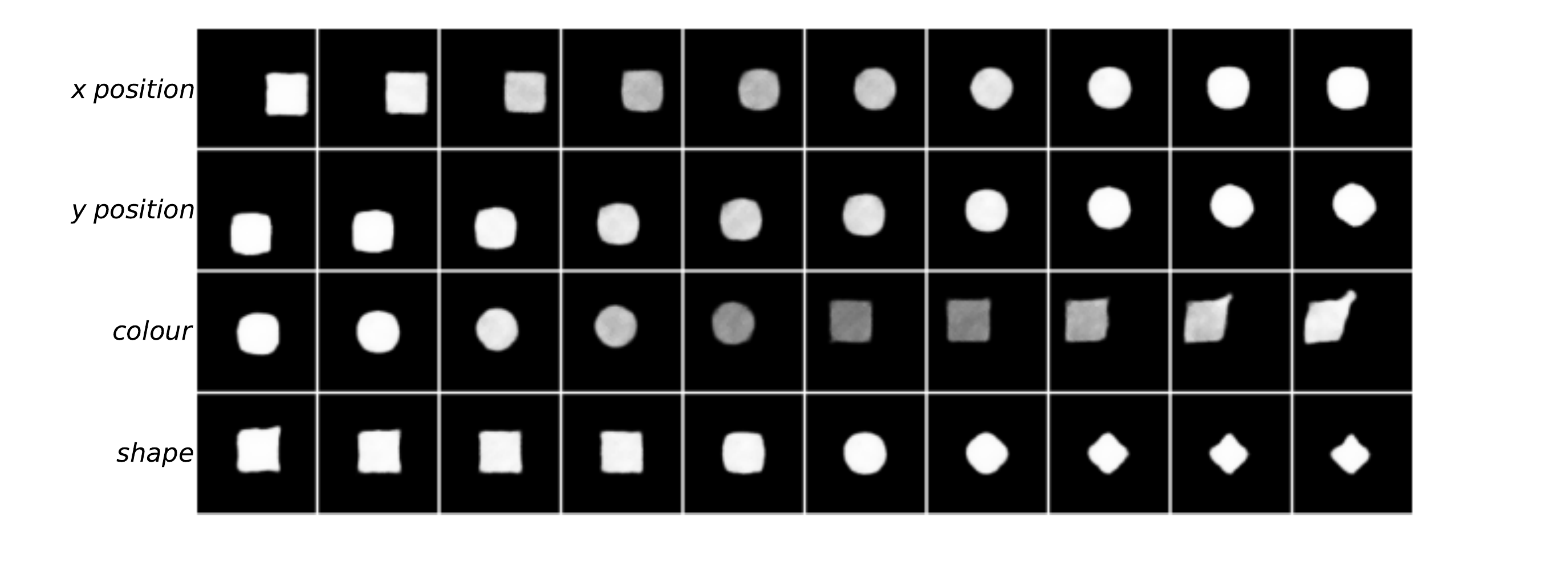}
  {\scriptsize (b) $\beta$-VAE}
      \end{minipage}
      \hfill
      \begin{minipage}{0.49\linewidth}
     \centering
   \includegraphics[width=\linewidth]{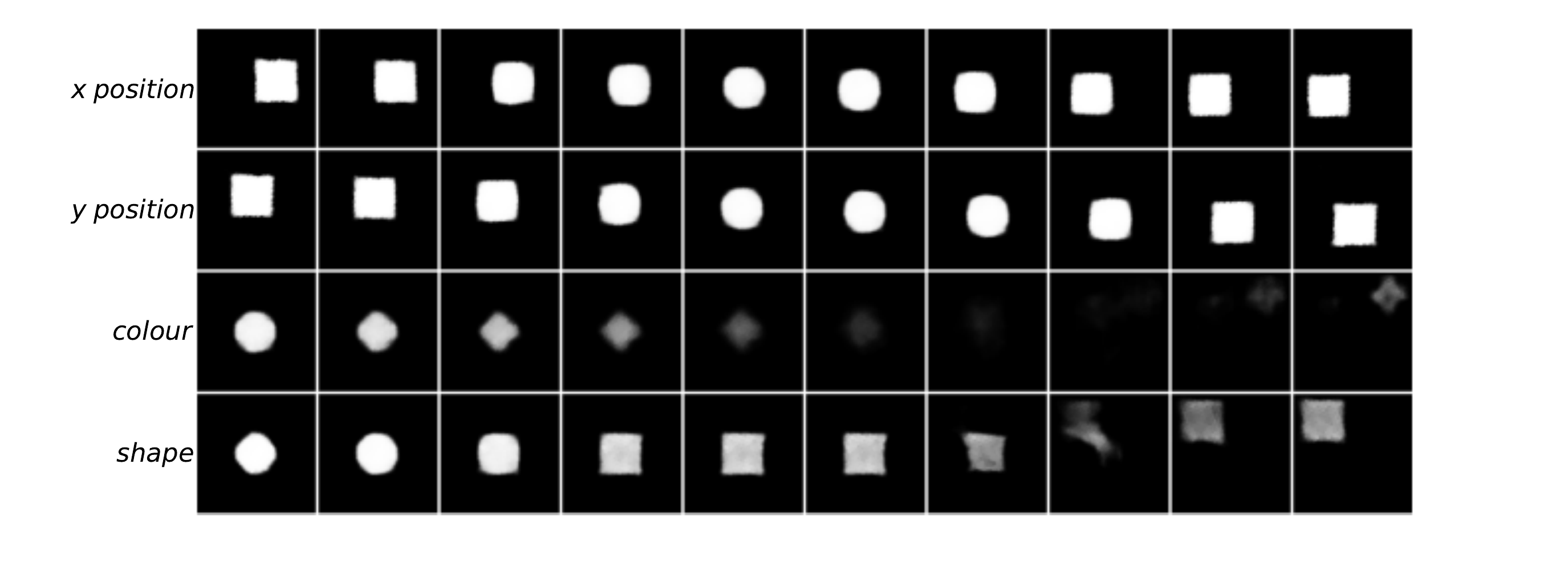} 
  {\scriptsize (c) $\beta$-TCVAE}
      \end{minipage}
      \hfill
      \begin{minipage}{0.49\linewidth}
      \centering
      \includegraphics[ width=\linewidth]{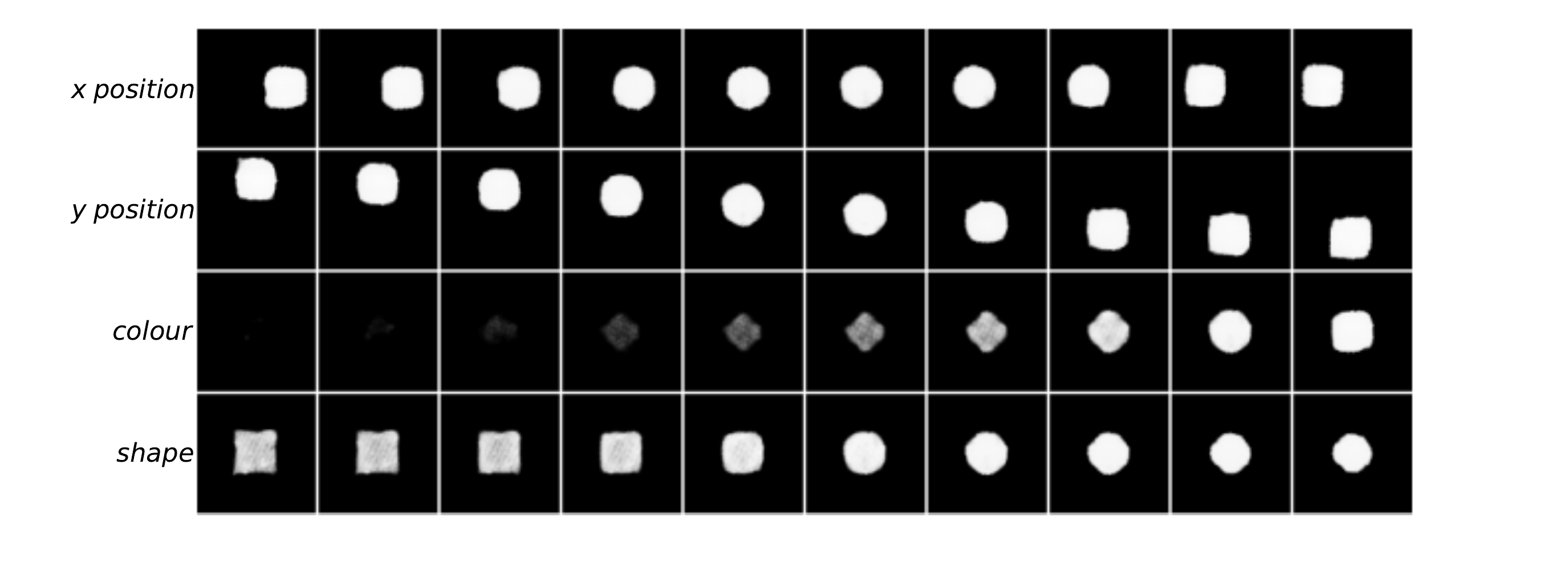}
  {\scriptsize (d) CCI-VAE}
      \end{minipage}
      \begin{minipage}{0.49\linewidth}
     \centering
   \includegraphics[width=\linewidth]{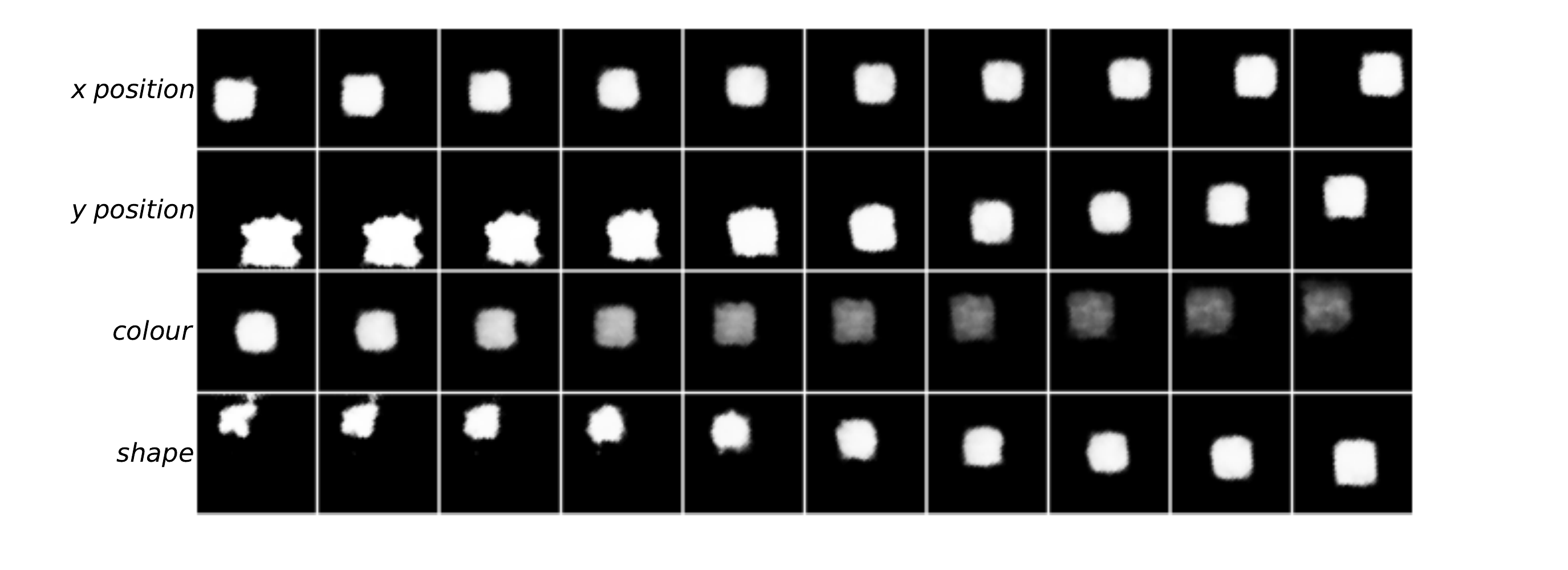} 
  {\scriptsize (e) FVAE}
      \end{minipage}
      \hfill
      \begin{minipage}{0.49\linewidth}
      \centering
      \includegraphics[ width=\linewidth]{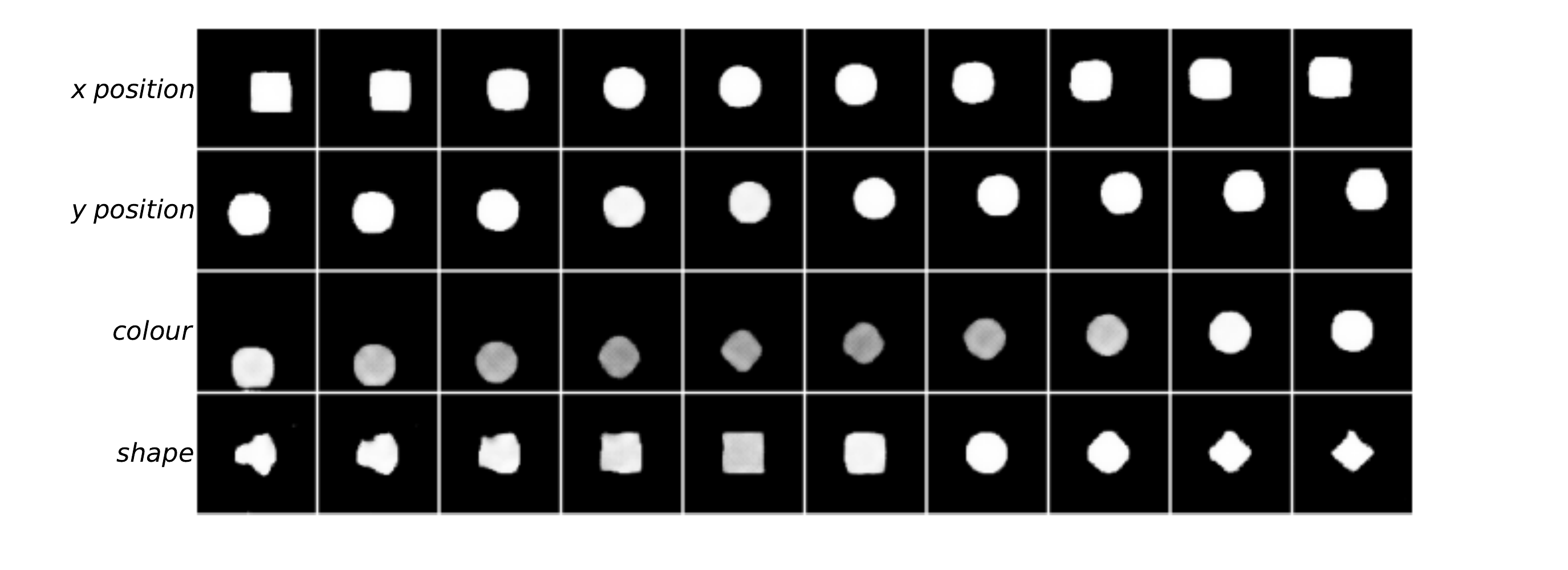}
  {\scriptsize (f) InfoVAE}
      \end{minipage}
      \hfill
      \begin{minipage}{0.49\linewidth}
     \centering
   \includegraphics[width=\linewidth]{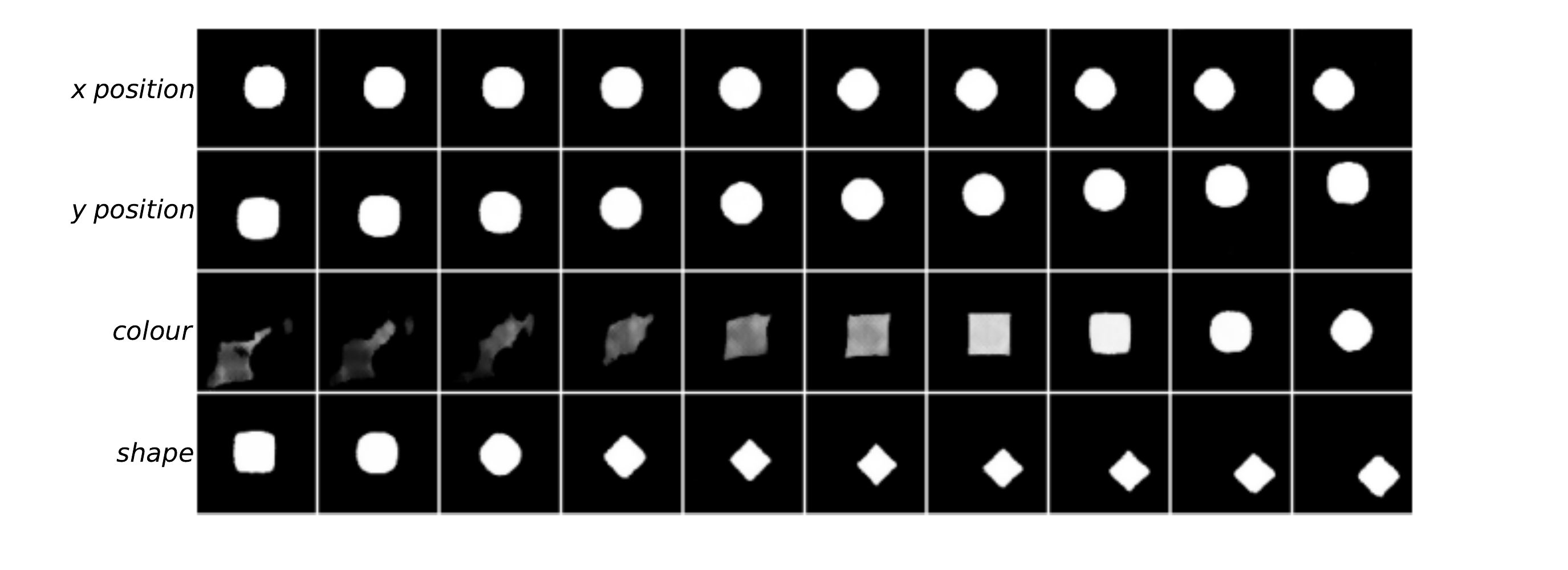} 
  {\scriptsize (g) VAE}
      \end{minipage}
      \hfill
      \begin{minipage}{0.49\linewidth}
      \centering
      \includegraphics[ width=\linewidth]{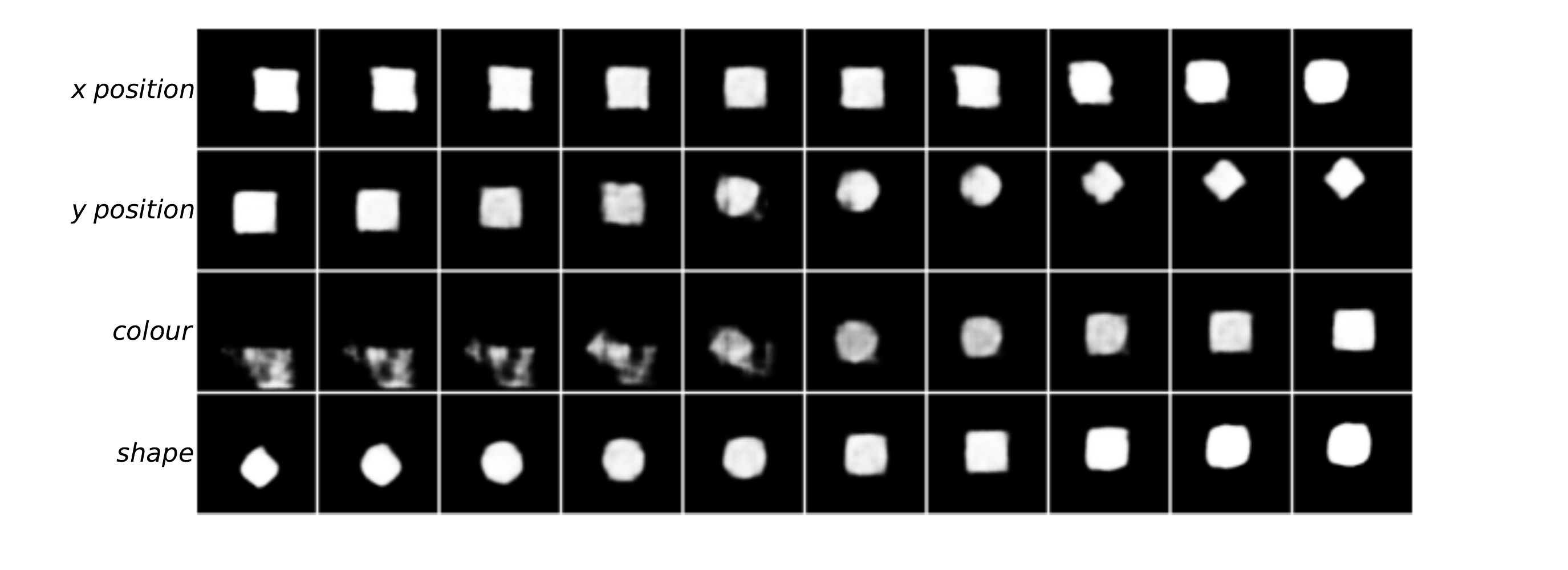}
  {\scriptsize (h) WAE}
      \end{minipage}
     \caption{Reconstructions of latent traversals across each
latent dimension in the XYCS dataset.}
     \label{fig:2d_latent_raversal}
\end{figure*}

\begin{figure}
      \begin{minipage}{\linewidth}
      \centering
      \includegraphics[width=\linewidth]{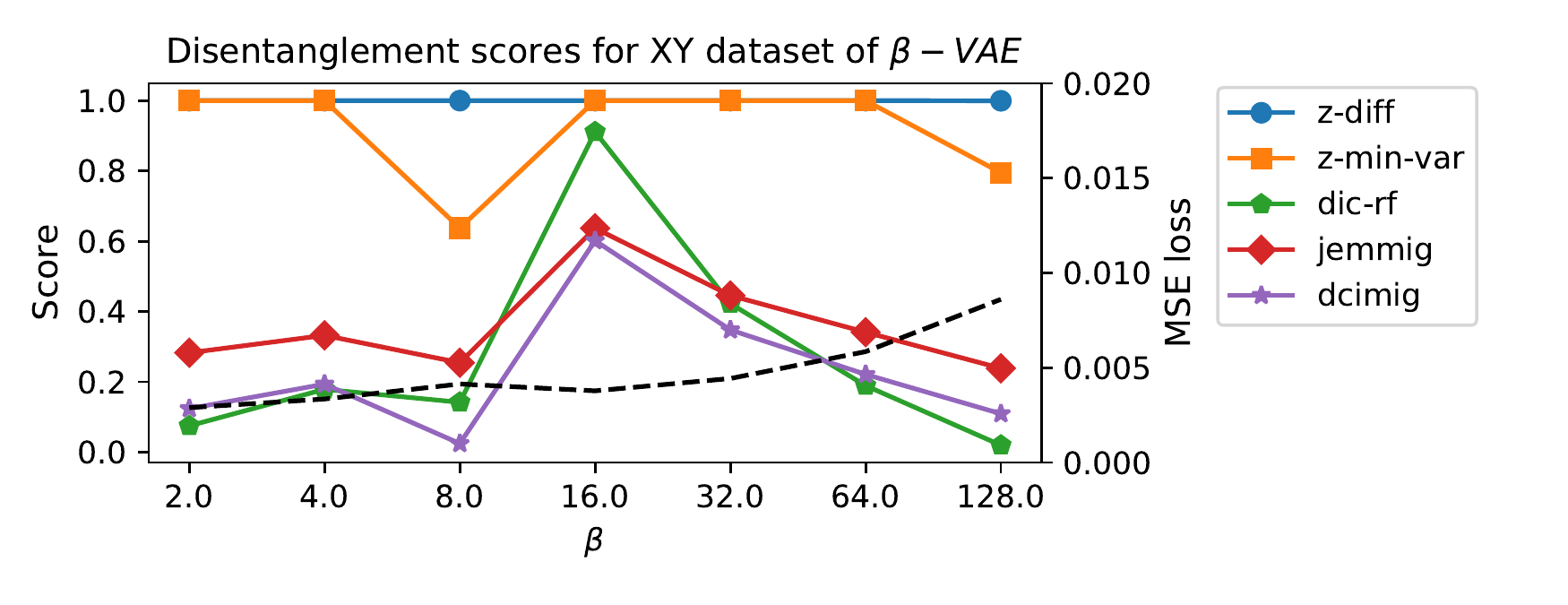}
      \end{minipage}
      
      \begin{minipage}{\linewidth}
     \centering
   \includegraphics[width=\linewidth]{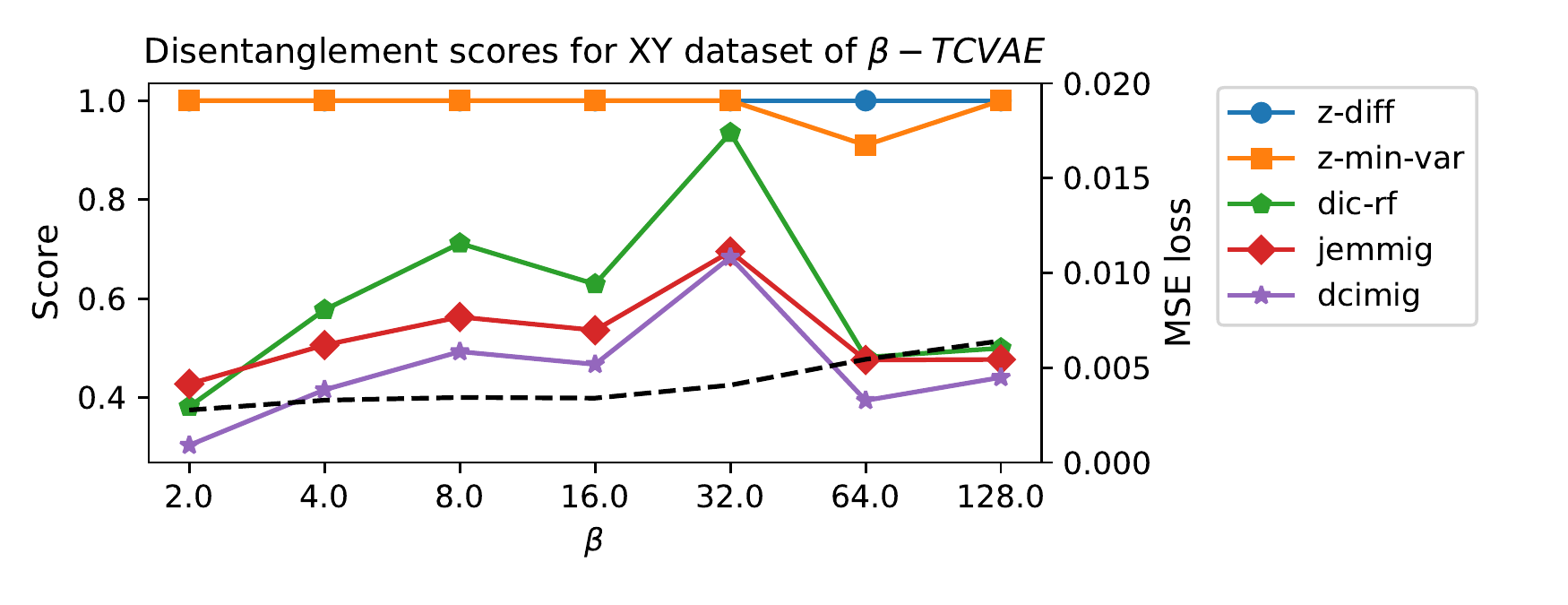} 
      \end{minipage}
      
      
      \begin{minipage}{\linewidth}
      \centering
      \includegraphics[ width=\linewidth]{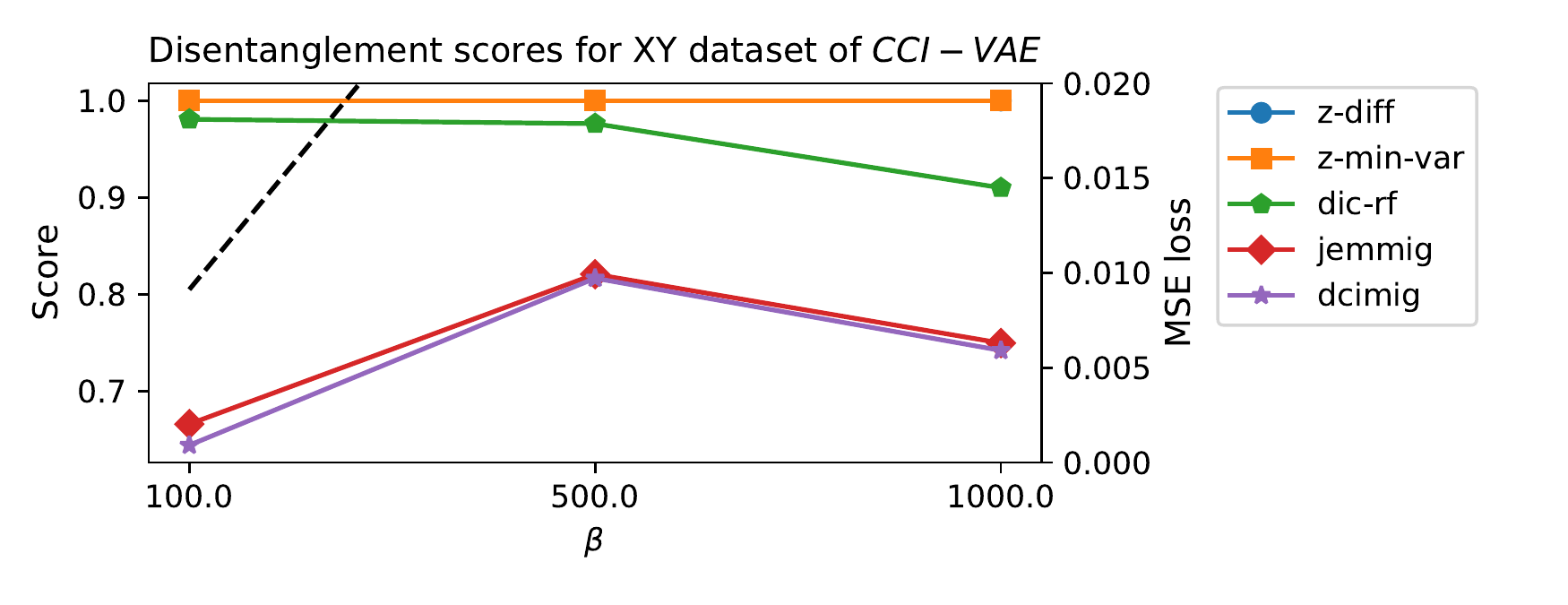}
      \end{minipage}
      
      \begin{minipage}{\linewidth}
     \centering
   \includegraphics[width=\linewidth]{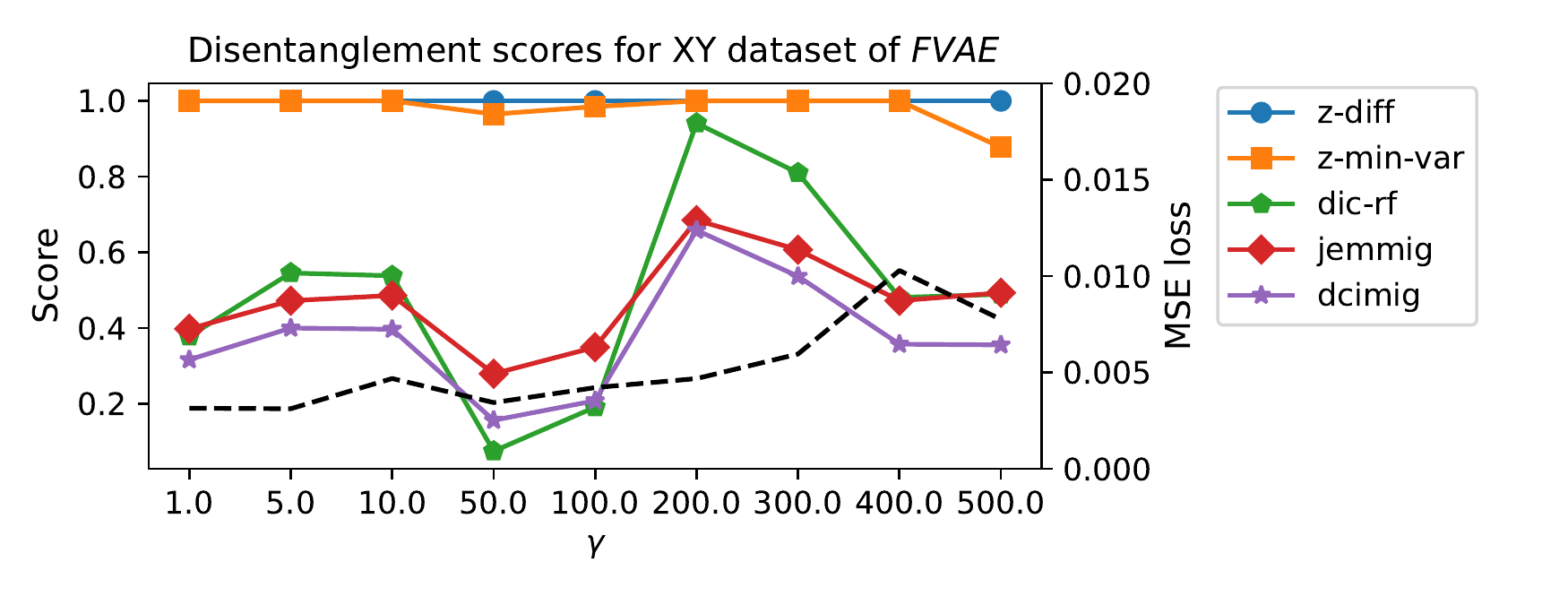} 
      \end{minipage}
      
      \begin{minipage}{\linewidth}
     \centering
   \includegraphics[width=\linewidth]{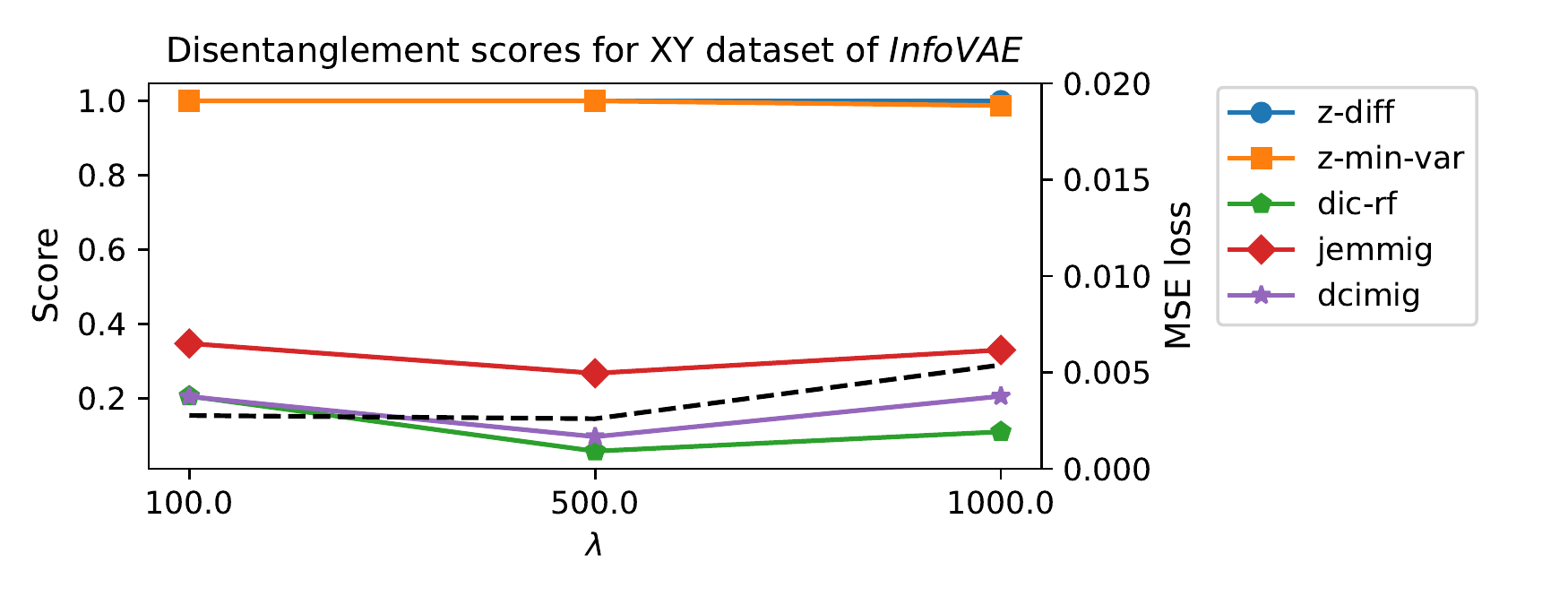} 
      \end{minipage}
      
      
      \begin{minipage}{\linewidth}
     \centering
   \includegraphics[width=\linewidth]{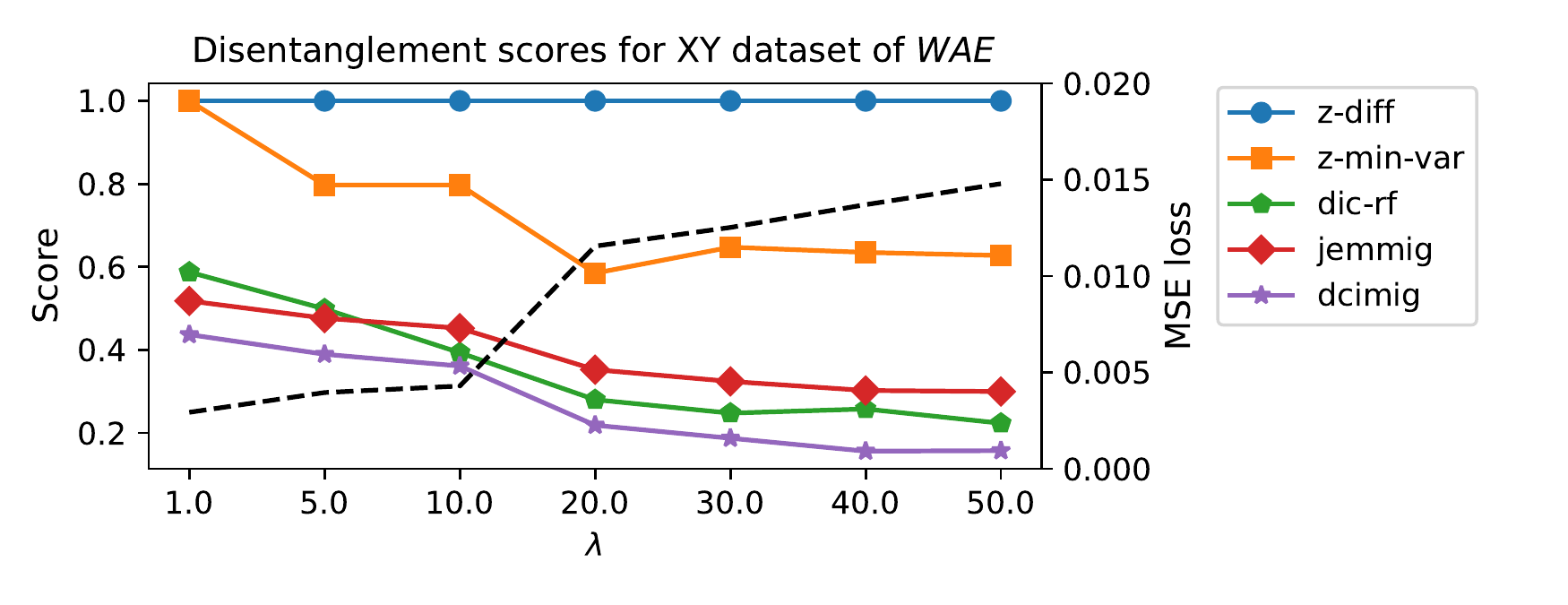} 
      \end{minipage}   
     \caption{Disentanglement scores with XY dataset with respect to hyperparameters.}
     \label{fig:hyper_xy}
\end{figure}

\begin{figure}
      \begin{minipage}{\linewidth}
     \centering
   \includegraphics[width=\linewidth]{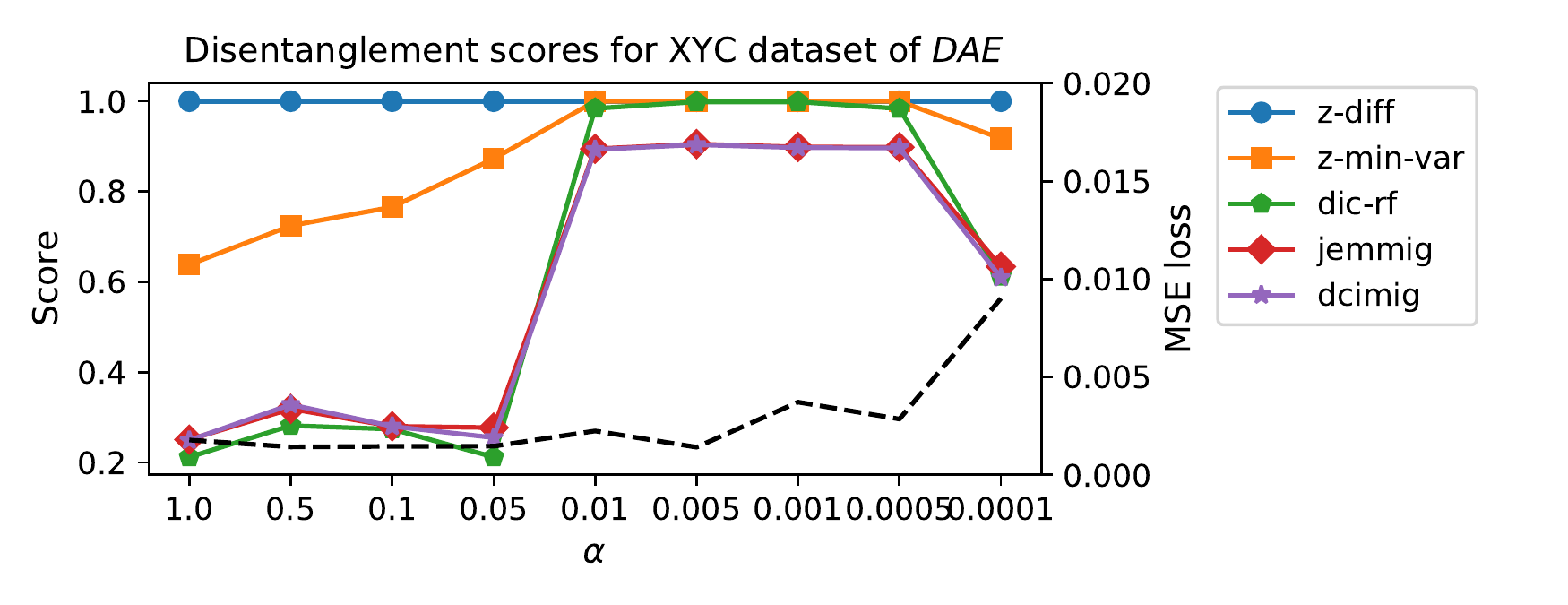} 
      \end{minipage}
      
      \begin{minipage}{\linewidth}
      \centering
      \includegraphics[width=\linewidth]{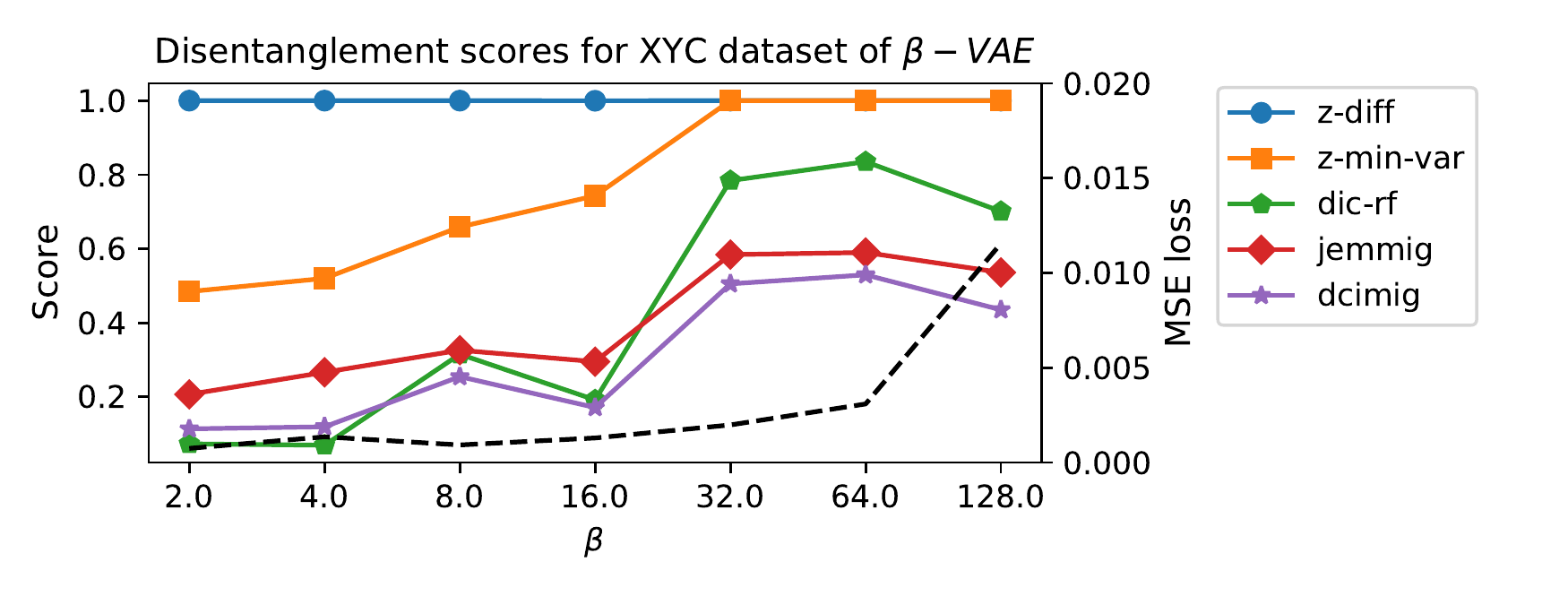}
      \end{minipage}
      
      \begin{minipage}{\linewidth}
     \centering
   \includegraphics[width=\linewidth]{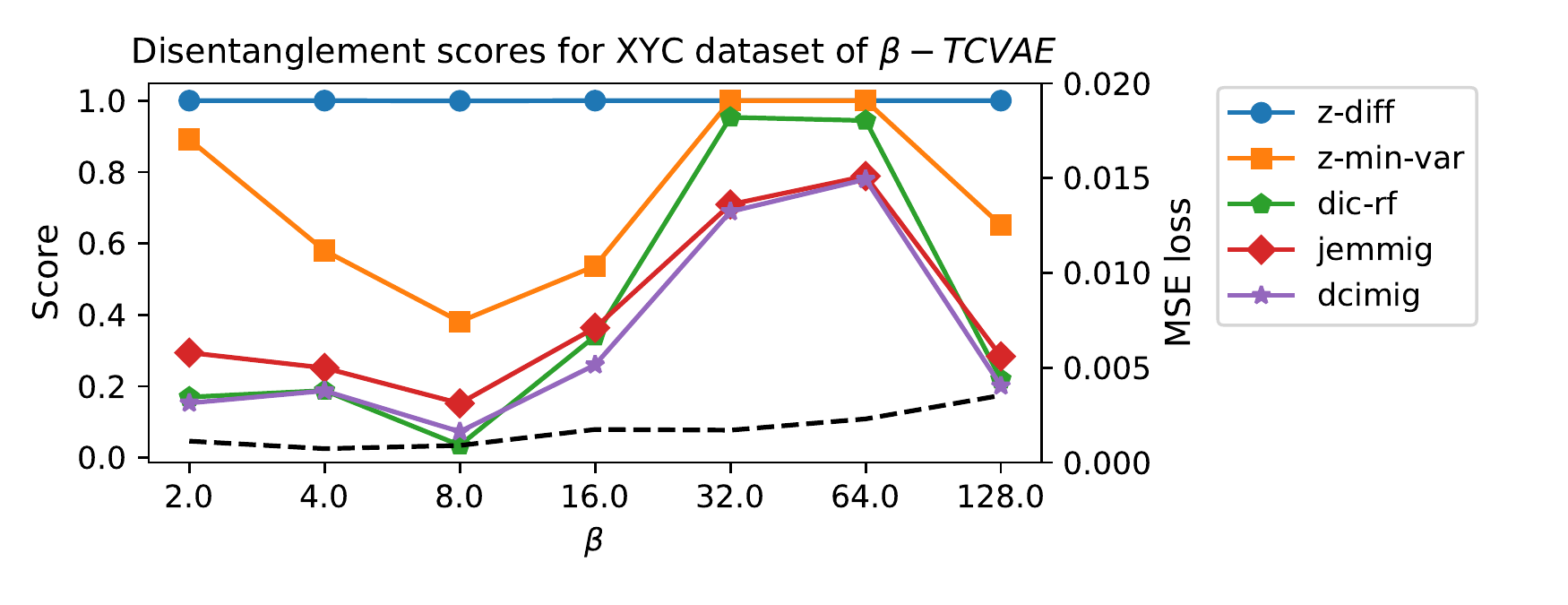} 
      \end{minipage}
      
      
      \begin{minipage}{\linewidth}
      \centering
      \includegraphics[ width=\linewidth]{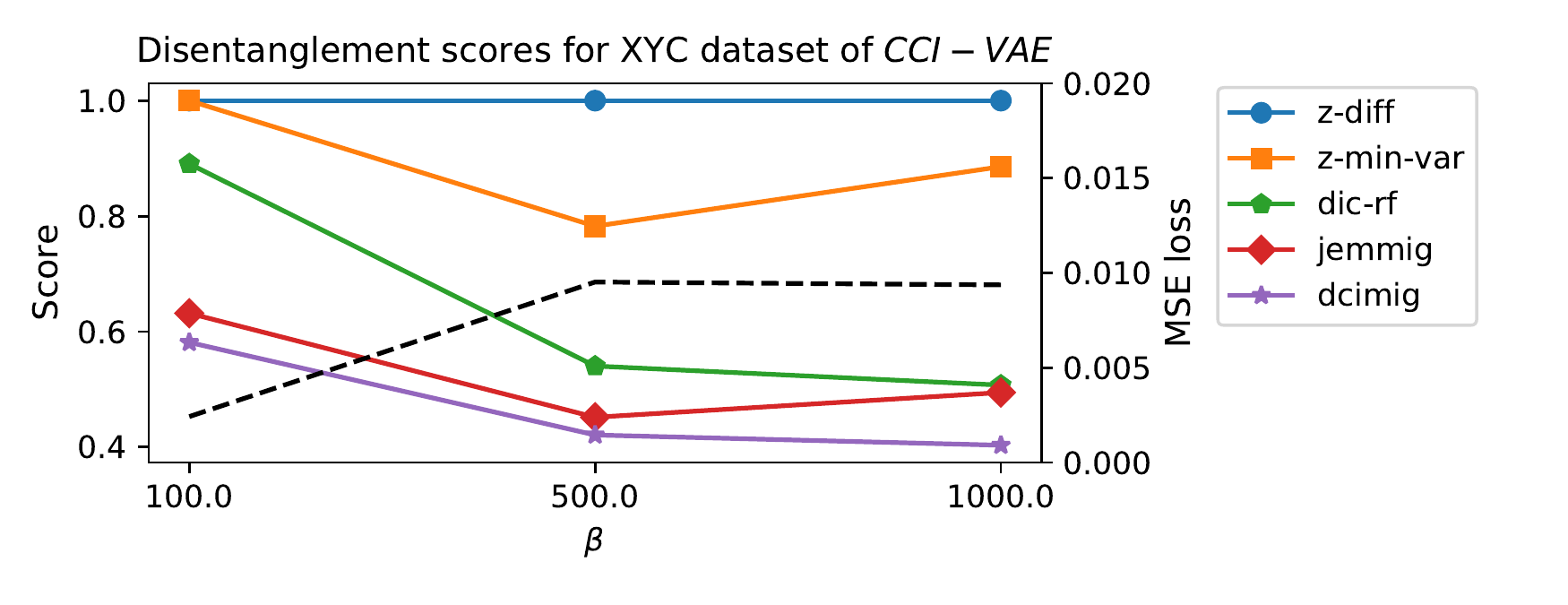}
      \end{minipage}
      
      \begin{minipage}{\linewidth}
     \centering
   \includegraphics[width=\linewidth]{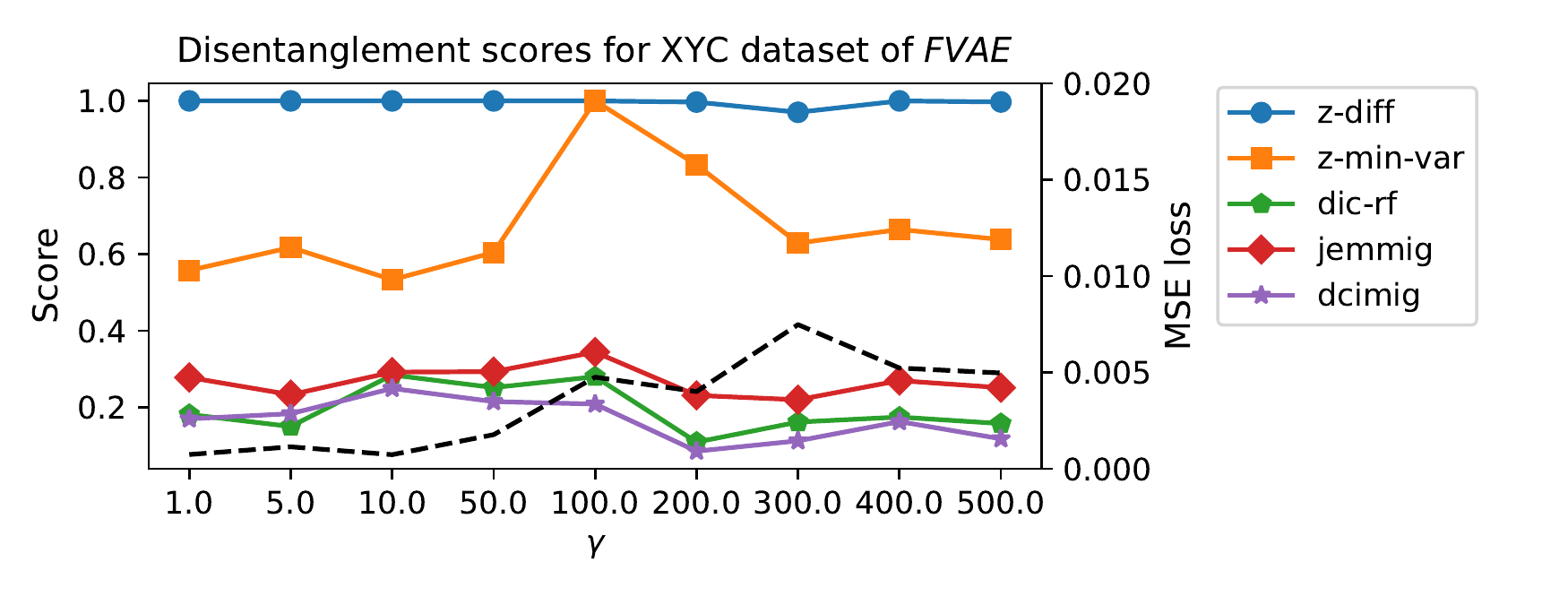} 
      \end{minipage}
      
      \begin{minipage}{\linewidth}
     \centering
   \includegraphics[width=\linewidth]{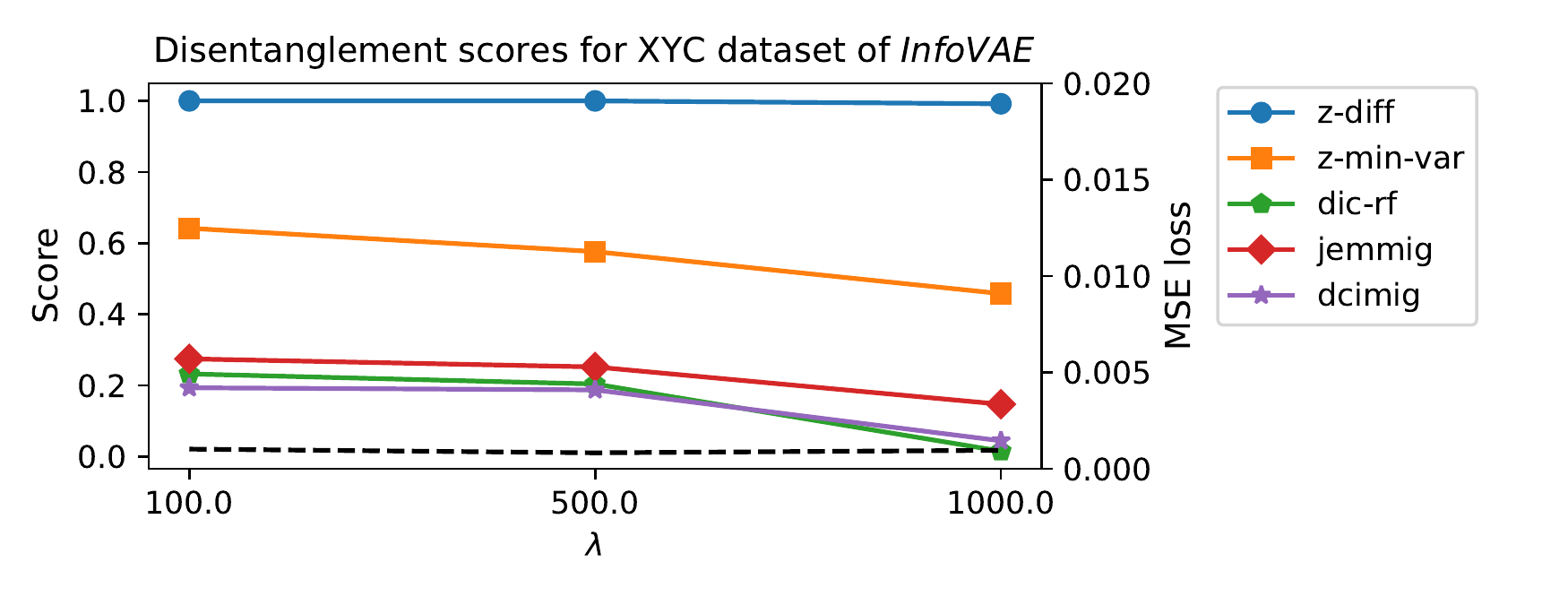} 
      \end{minipage}
      
      
      \begin{minipage}{\linewidth}
     \centering
   \includegraphics[width=\linewidth]{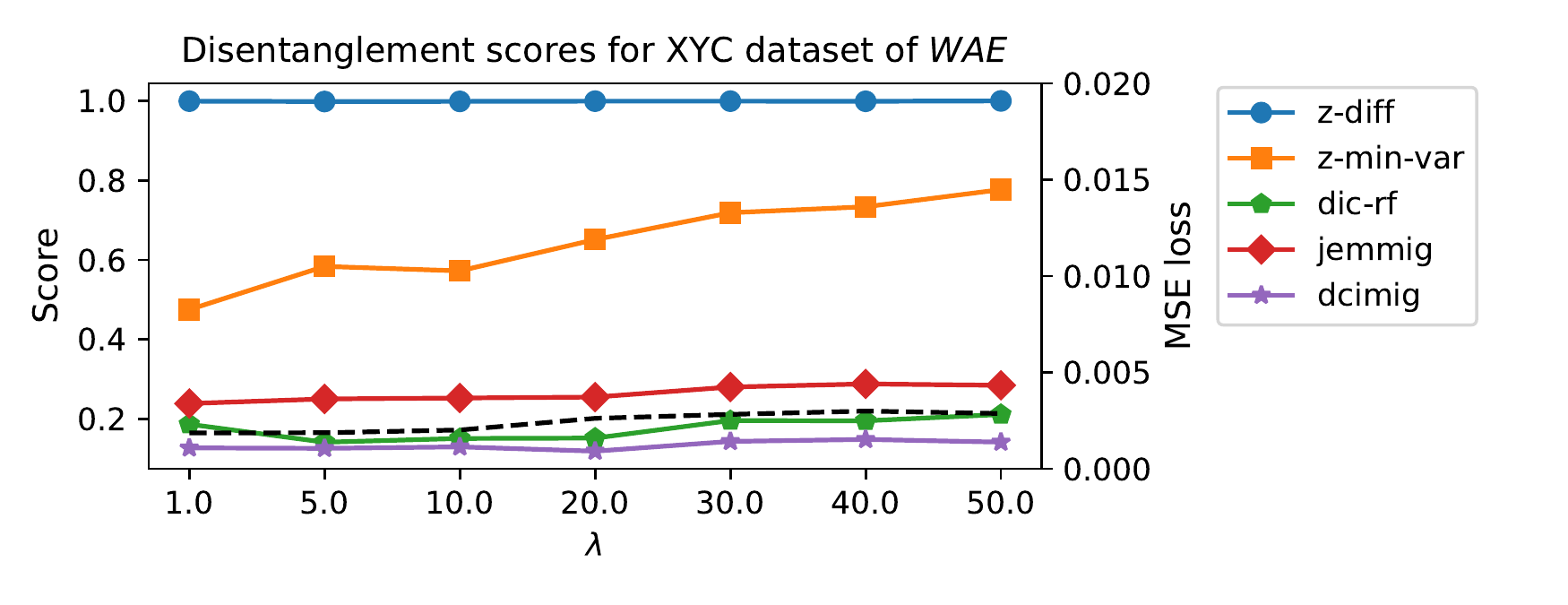} 
      \end{minipage}   
     \caption{Disentanglement scores with XYC dataset with respect to hyperparameters.}
     \label{fig:hyper_xyc}
\end{figure}

\begin{figure}
      \begin{minipage}{\linewidth}
     \centering
   \includegraphics[width=\linewidth]{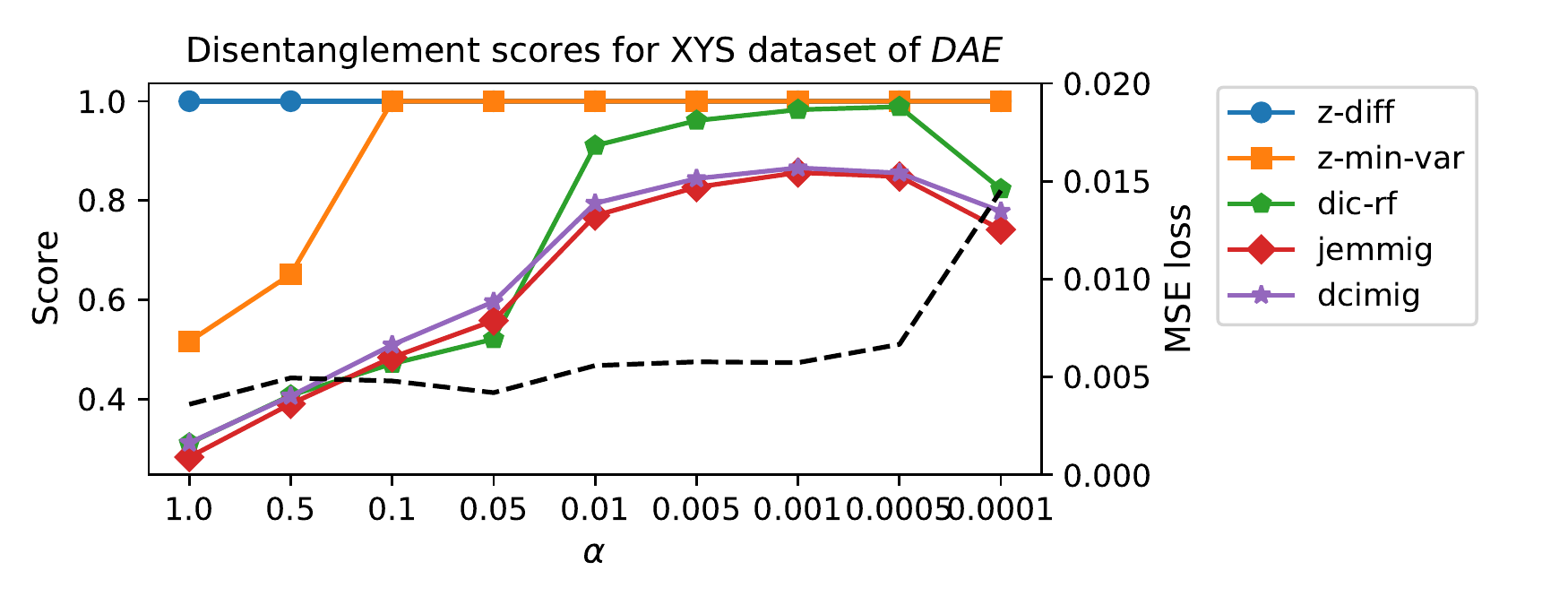} 
      \end{minipage}
      
      \begin{minipage}{\linewidth}
      \centering
      \includegraphics[width=\linewidth]{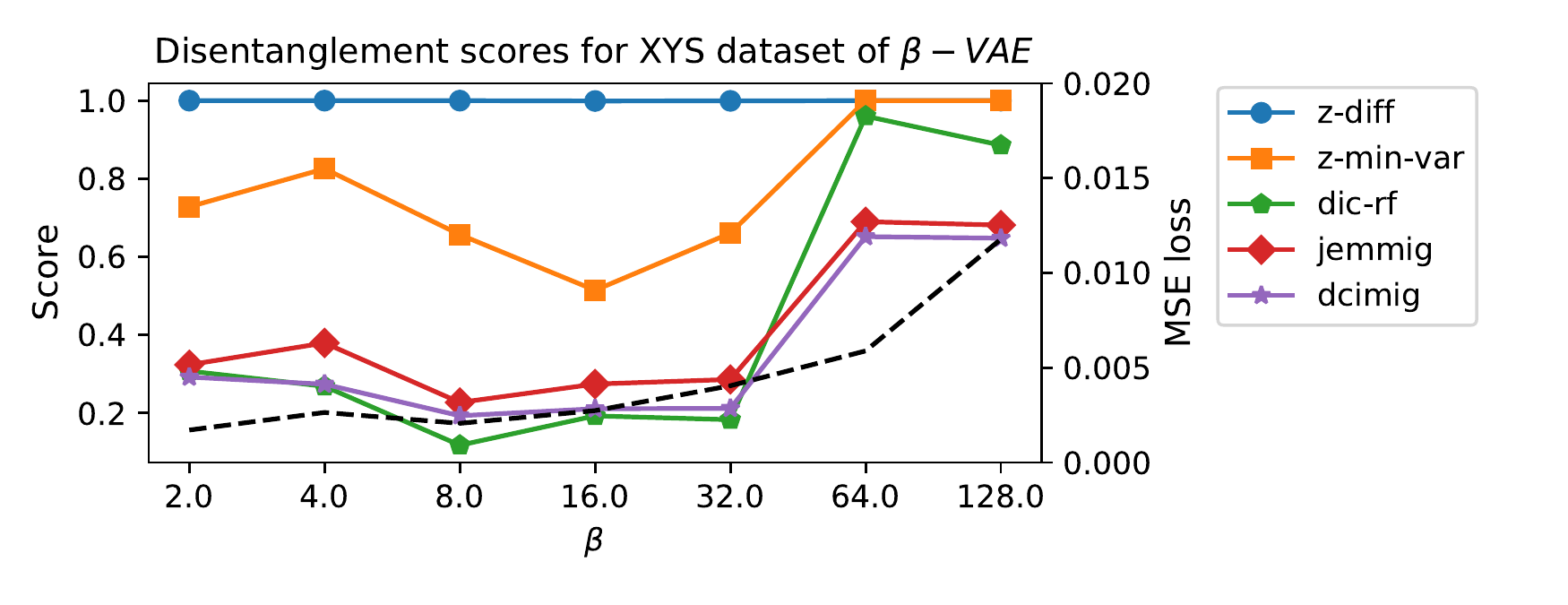}
      \end{minipage}
      
      \begin{minipage}{\linewidth}
     \centering
   \includegraphics[width=\linewidth]{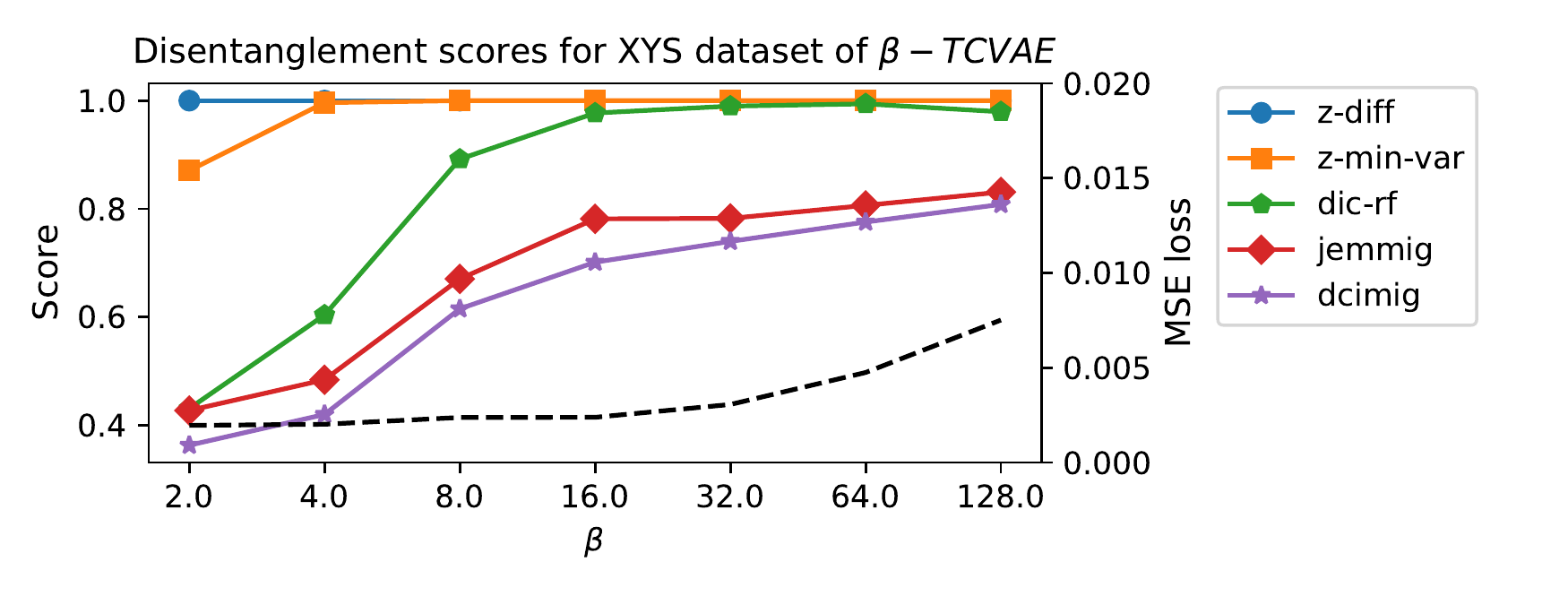} 
      \end{minipage}
      
      
      \begin{minipage}{\linewidth}
      \centering
      \includegraphics[ width=\linewidth]{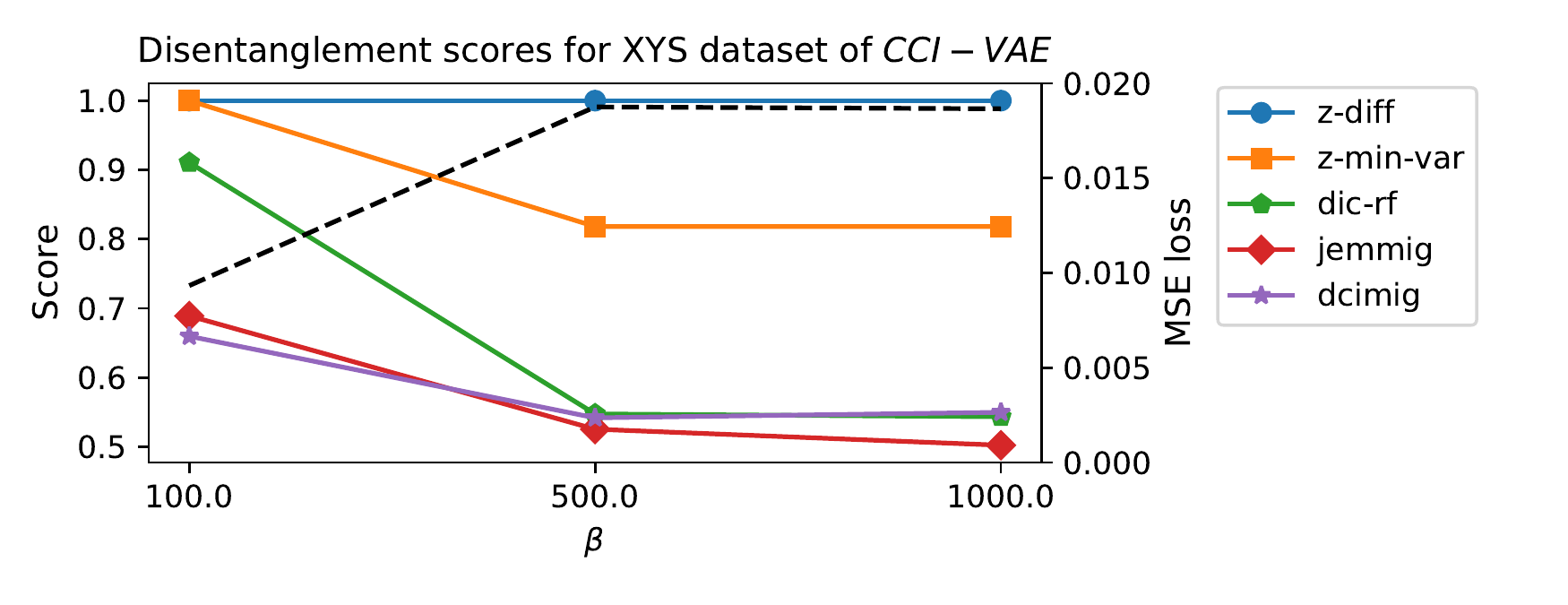}
      \end{minipage}
      
      \begin{minipage}{\linewidth}
     \centering
   \includegraphics[width=\linewidth]{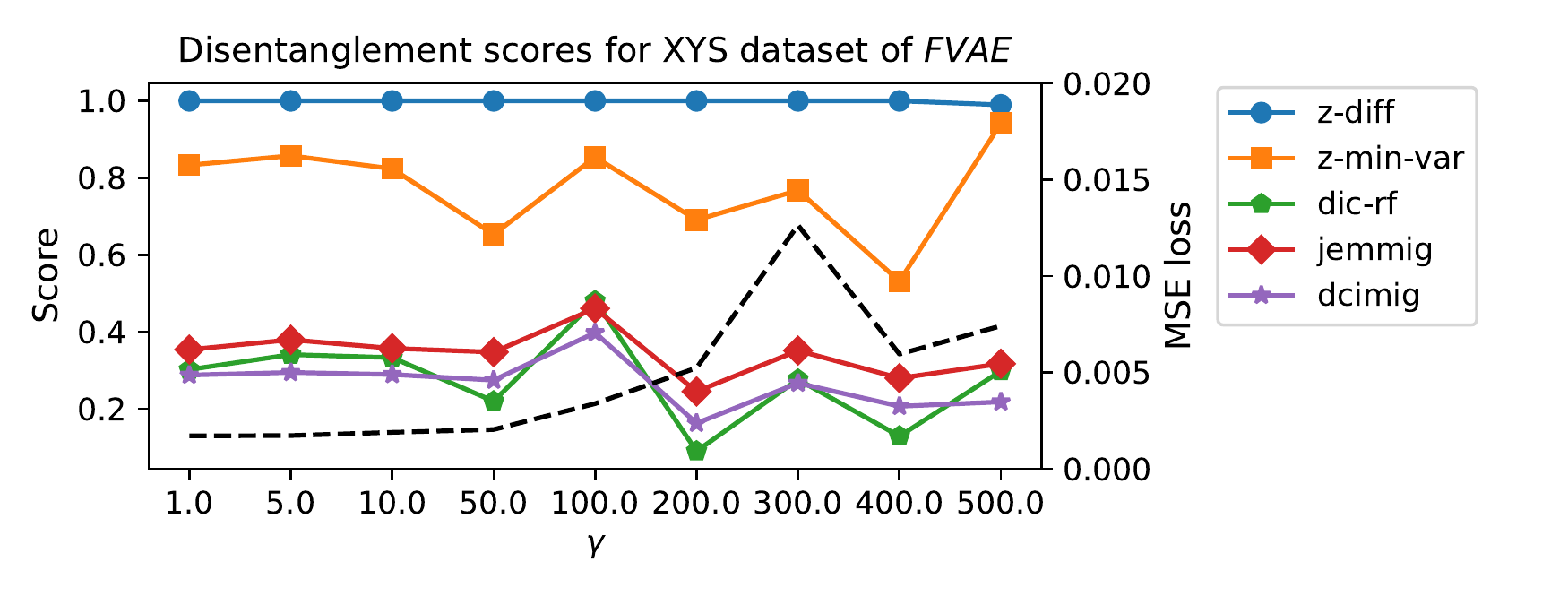} 
      \end{minipage}
      
      \begin{minipage}{\linewidth}
     \centering
   \includegraphics[width=\linewidth]{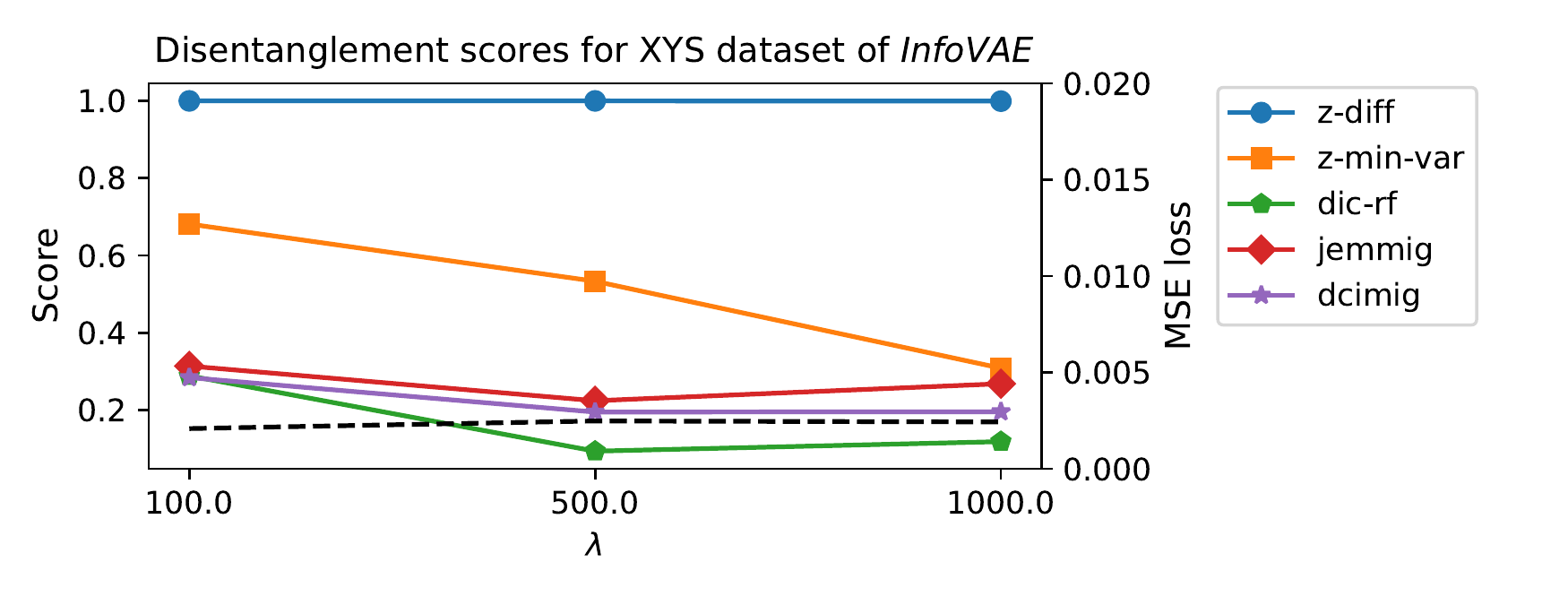} 
      \end{minipage}
      
      
      \begin{minipage}{\linewidth}
     \centering
   \includegraphics[width=\linewidth]{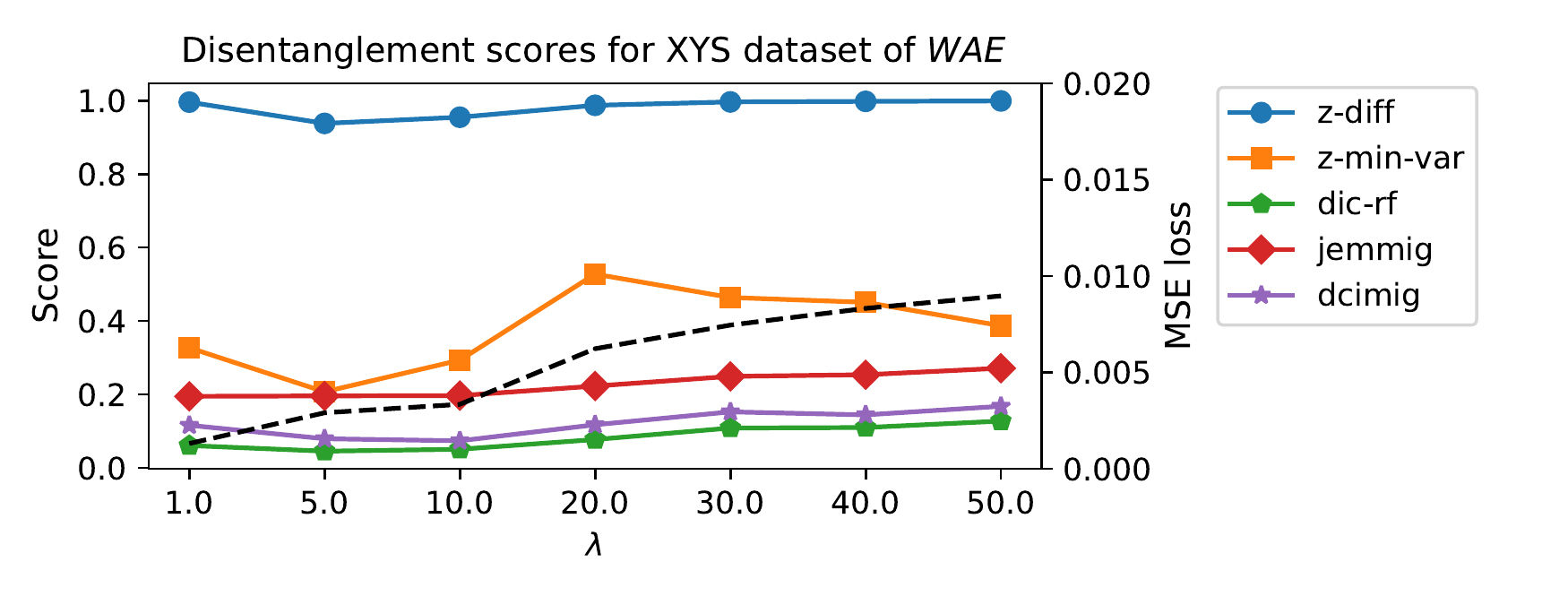} 
      \end{minipage}   
     \caption{Disentanglement scores with XYS dataset with respect to hyperparameters.}
     \label{fig:hyper_xys}
\end{figure}

\begin{figure}
      \begin{minipage}{\linewidth}
     \centering
   \includegraphics[width=\linewidth]{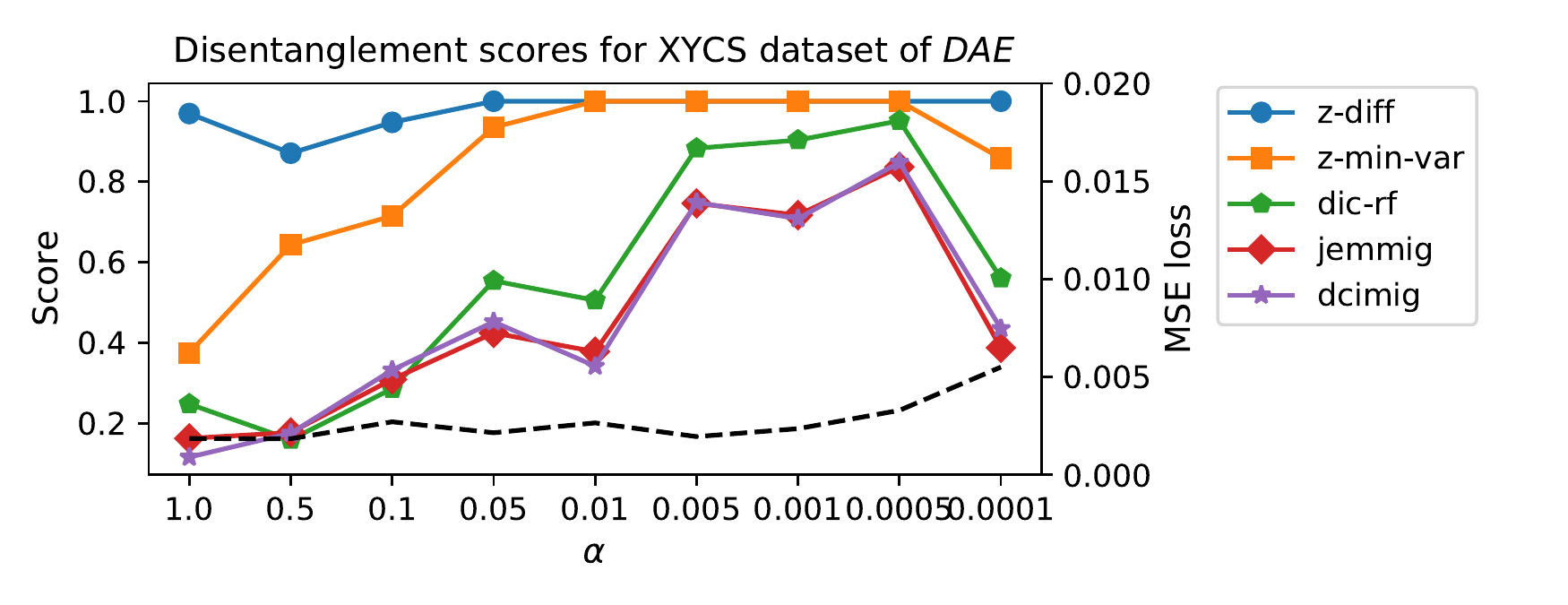} 
      \end{minipage}
      
      \begin{minipage}{\linewidth}
      \centering
      \includegraphics[width=\linewidth]{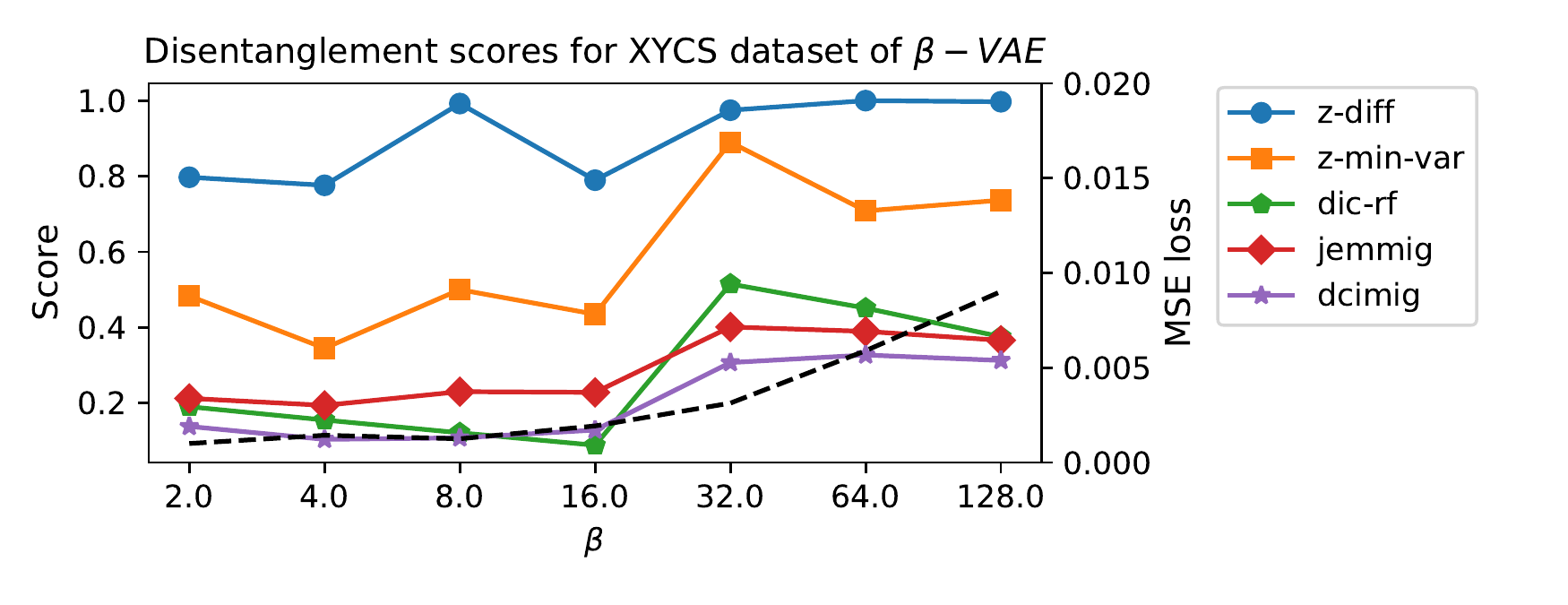}
      \end{minipage}
      
      \begin{minipage}{\linewidth}
     \centering
   \includegraphics[width=\linewidth]{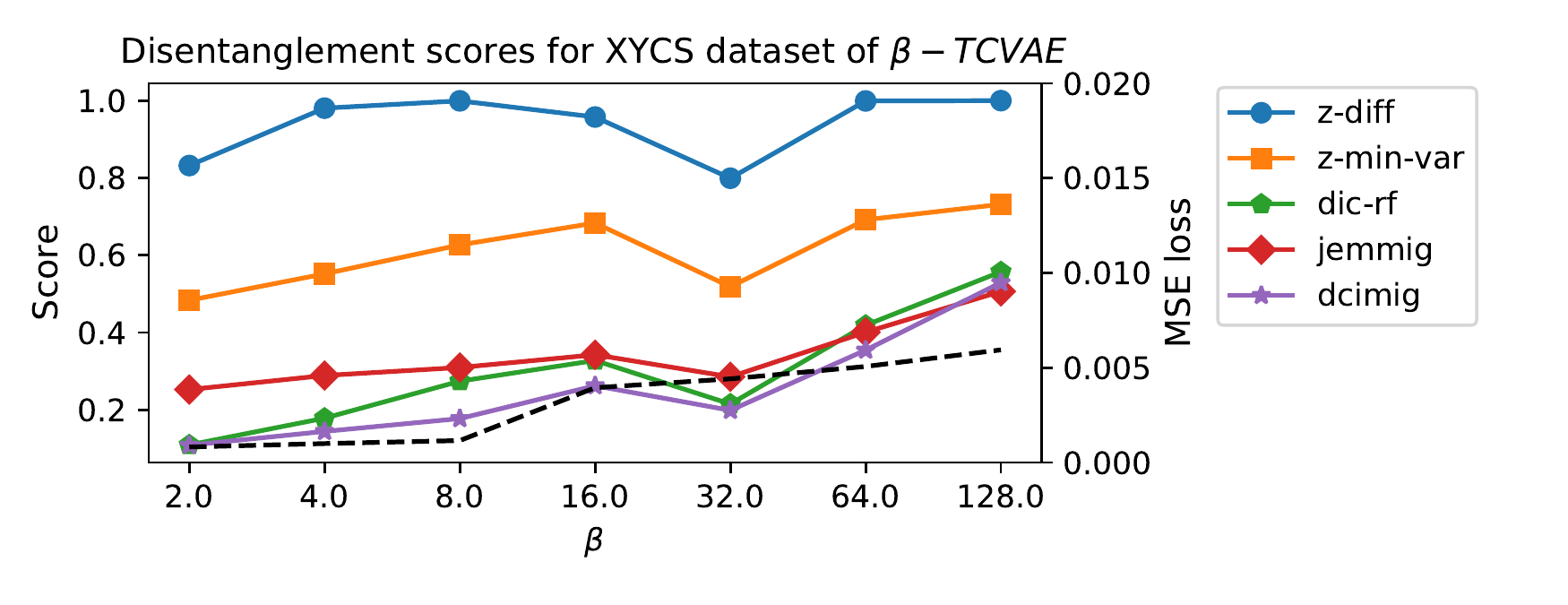} 
      \end{minipage}
      
      
      \begin{minipage}{\linewidth}
      \centering
      \includegraphics[ width=\linewidth]{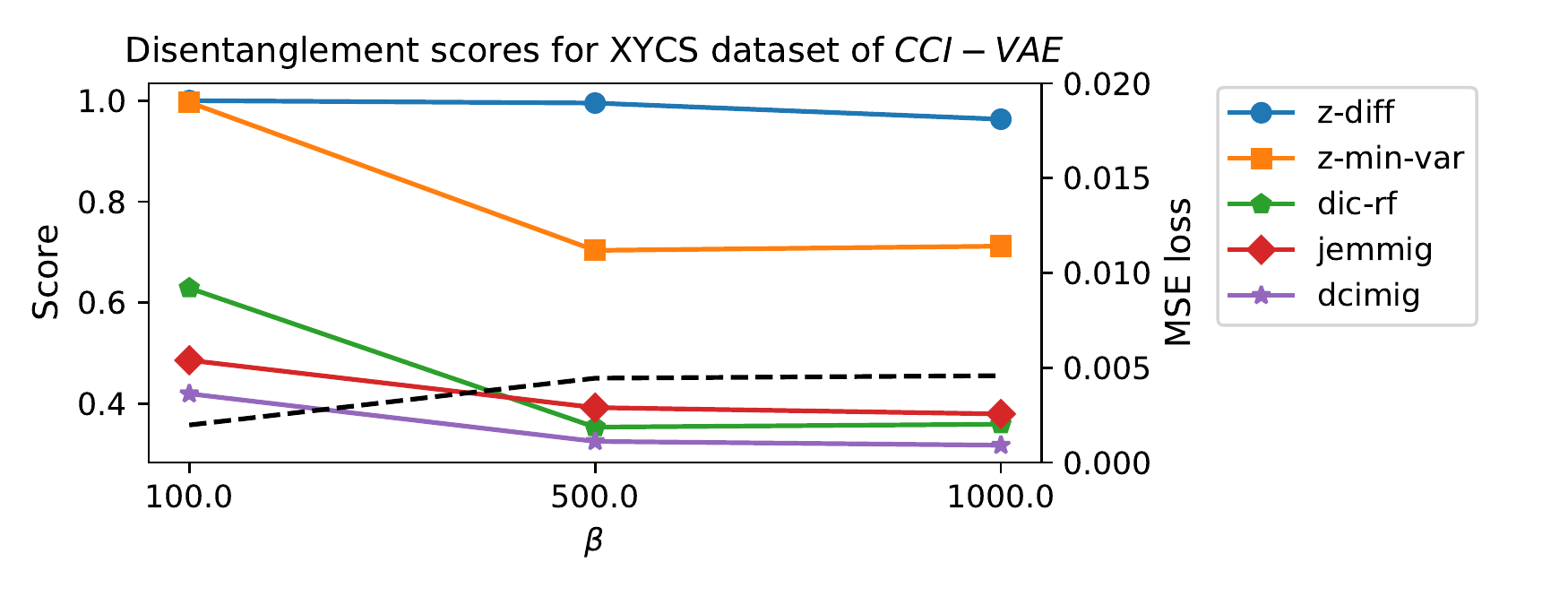}
      \end{minipage}
      
      \begin{minipage}{\linewidth}
     \centering
   \includegraphics[width=\linewidth]{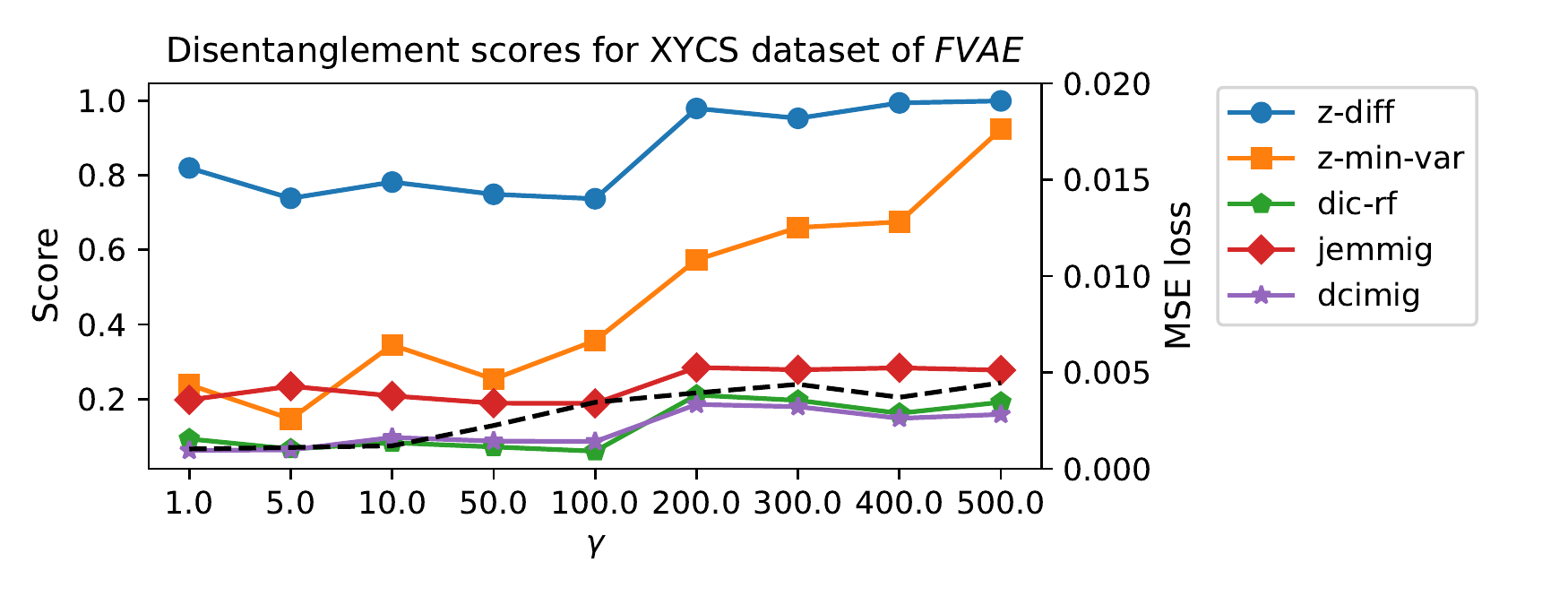} 
      \end{minipage}
      
      \begin{minipage}{\linewidth}
     \centering
   \includegraphics[width=\linewidth]{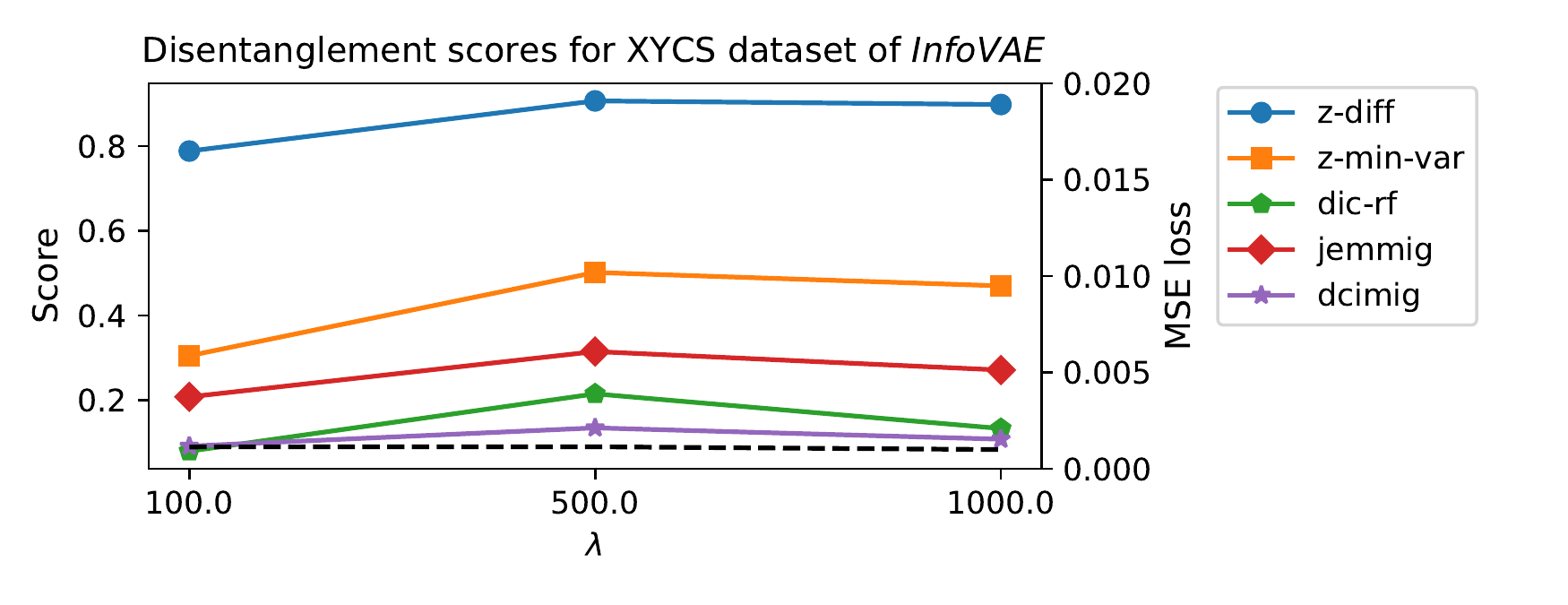} 
      \end{minipage}
      
      
      \begin{minipage}{\linewidth}
     \centering
   \includegraphics[width=\linewidth]{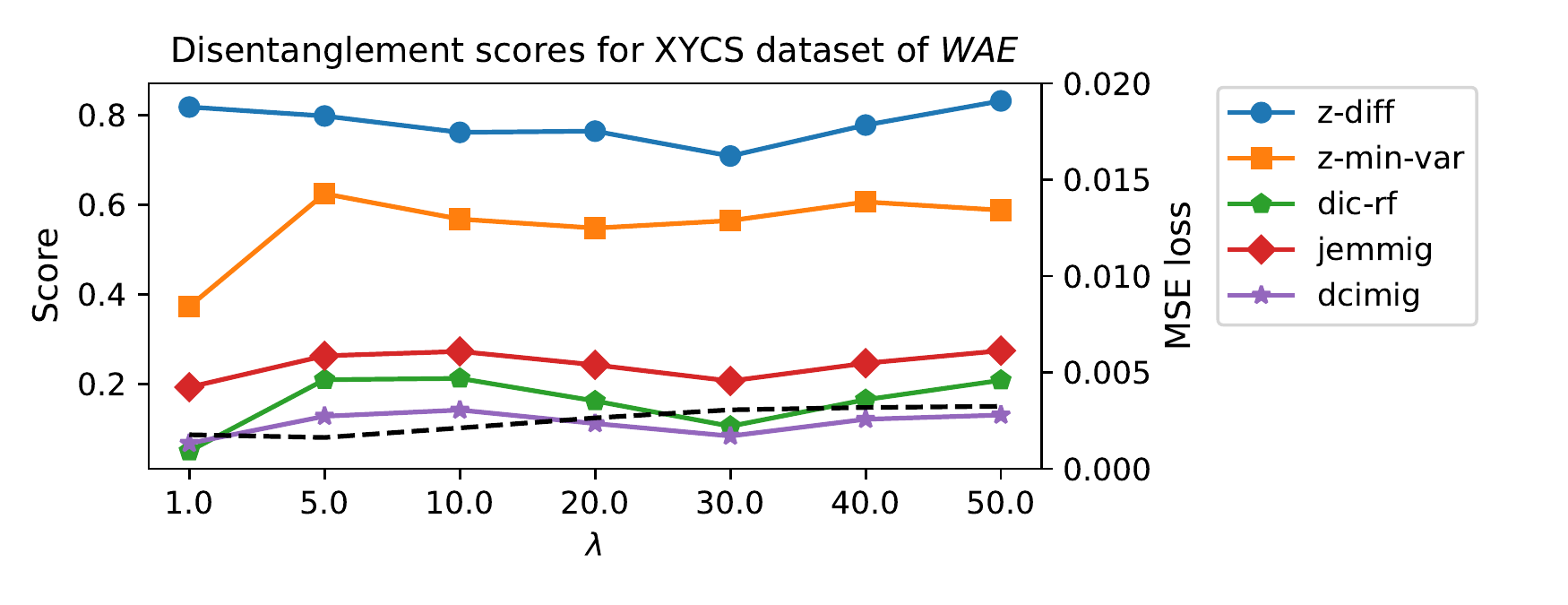} 
      \end{minipage}   
     \caption{Disentanglement scores with XYCS dataset with respect to hyperparameters.}
     \label{fig:hyper_xycs}
\end{figure}

\begin{figure*}
      \begin{minipage}{0.49\linewidth}
     \centering
   \includegraphics[width=\linewidth]{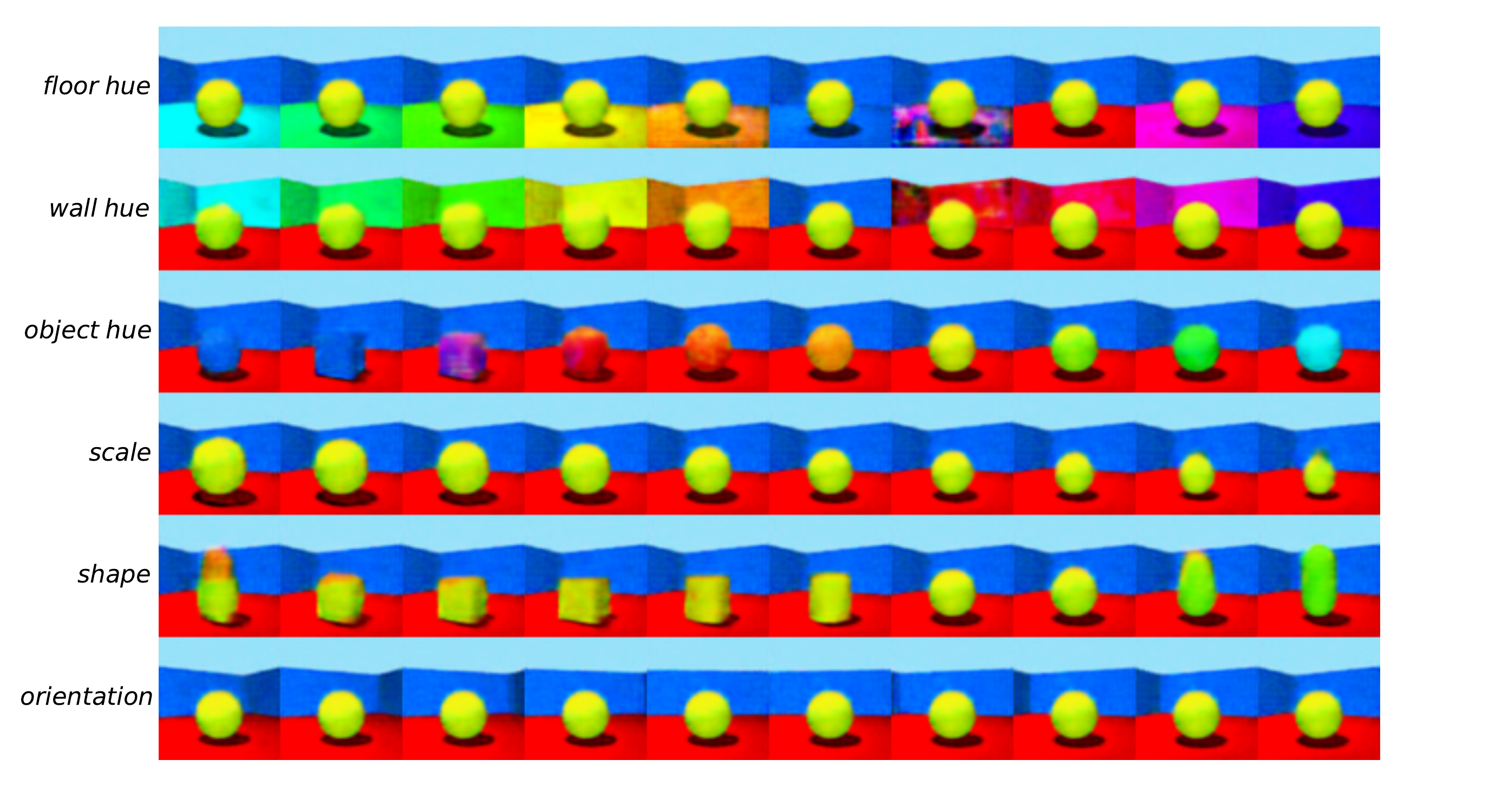} 
  {\scriptsize (a) DAE}
      \end{minipage}
      \hfill
      \begin{minipage}{0.49\linewidth}
      \centering
      \includegraphics[ width=\linewidth]{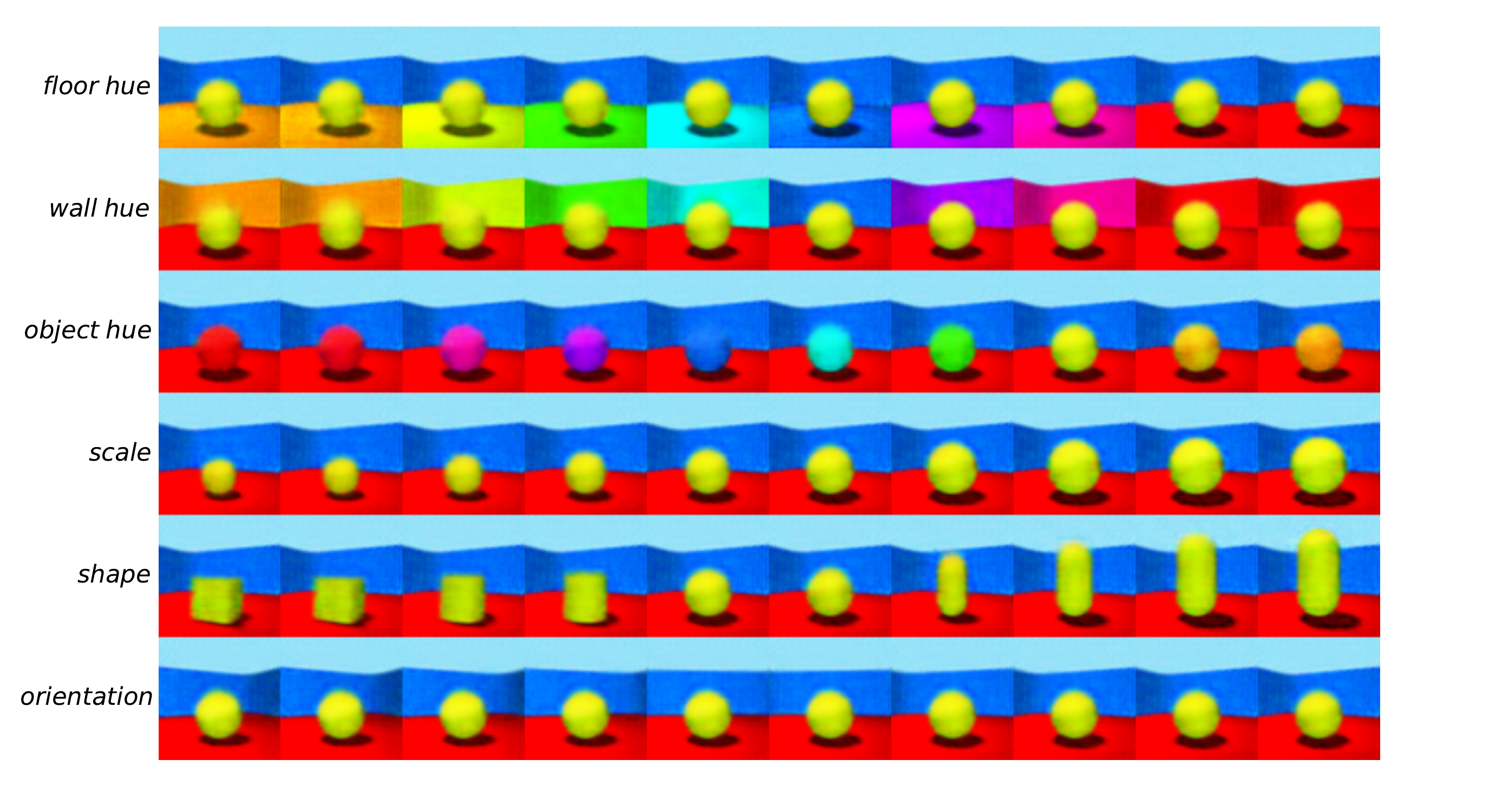}
  {\scriptsize (b) $\beta$-VAE}
      \end{minipage}
      \hfill
      \begin{minipage}{0.49\linewidth}
     \centering
   \includegraphics[width=\linewidth]{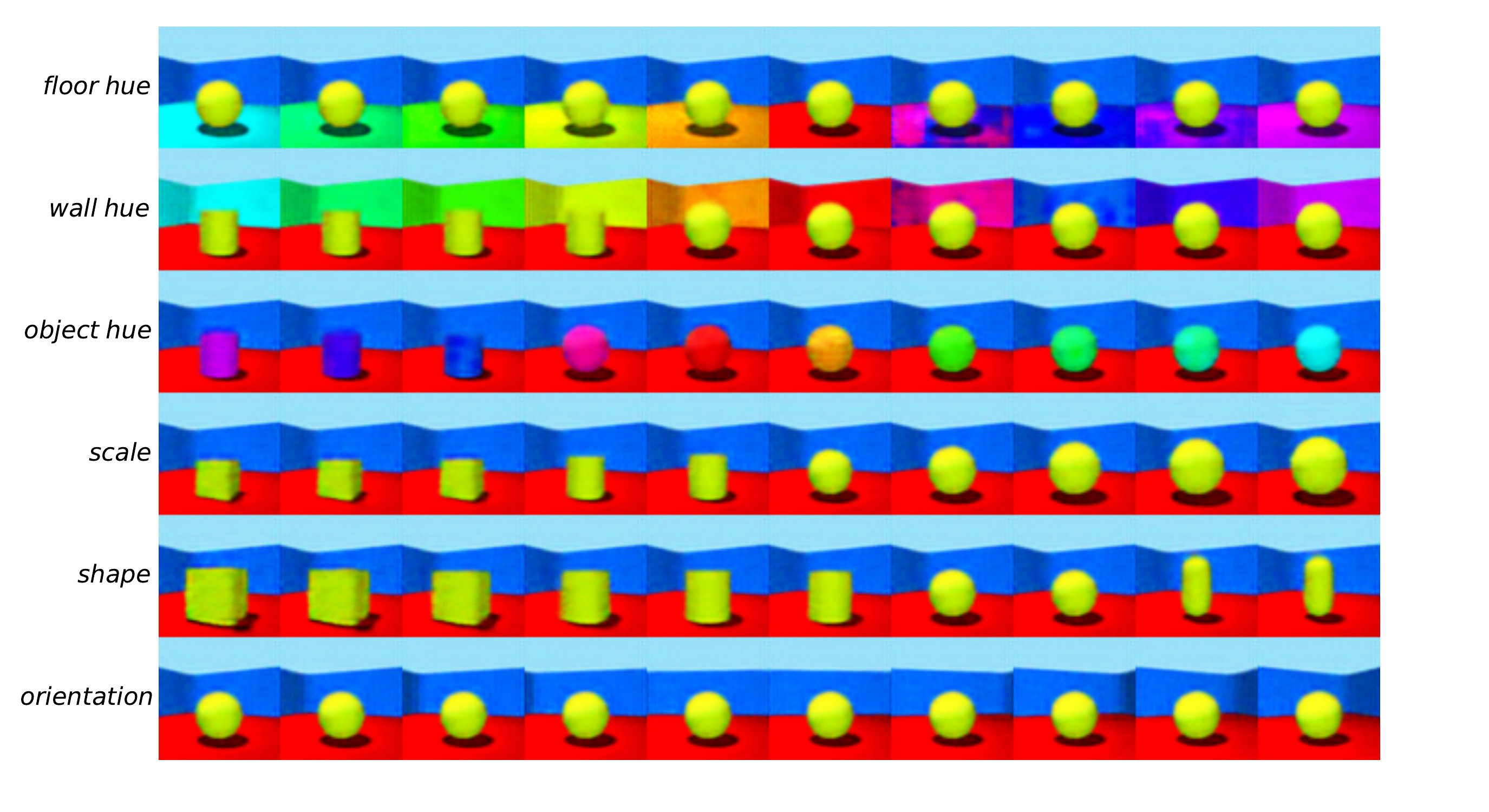} 
  {\scriptsize (c) $\beta$-TCVAE}
      \end{minipage}
      \hfill
      \begin{minipage}{0.49\linewidth}
      \centering
      \includegraphics[ width=\linewidth]{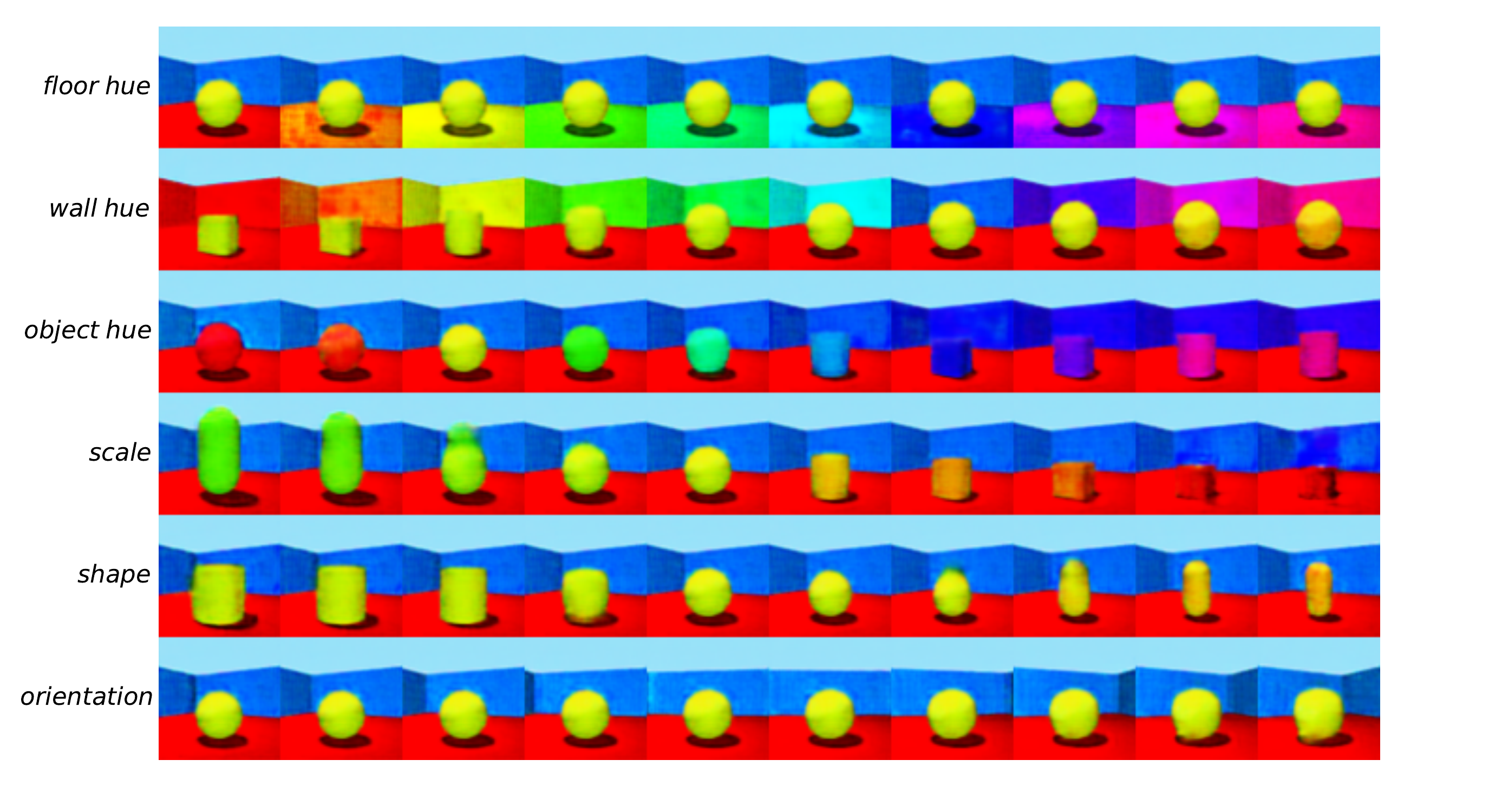}
  {\scriptsize (d) CCI-VAE}
      \end{minipage}
      \begin{minipage}{0.49\linewidth}
     \centering
   \includegraphics[width=\linewidth]{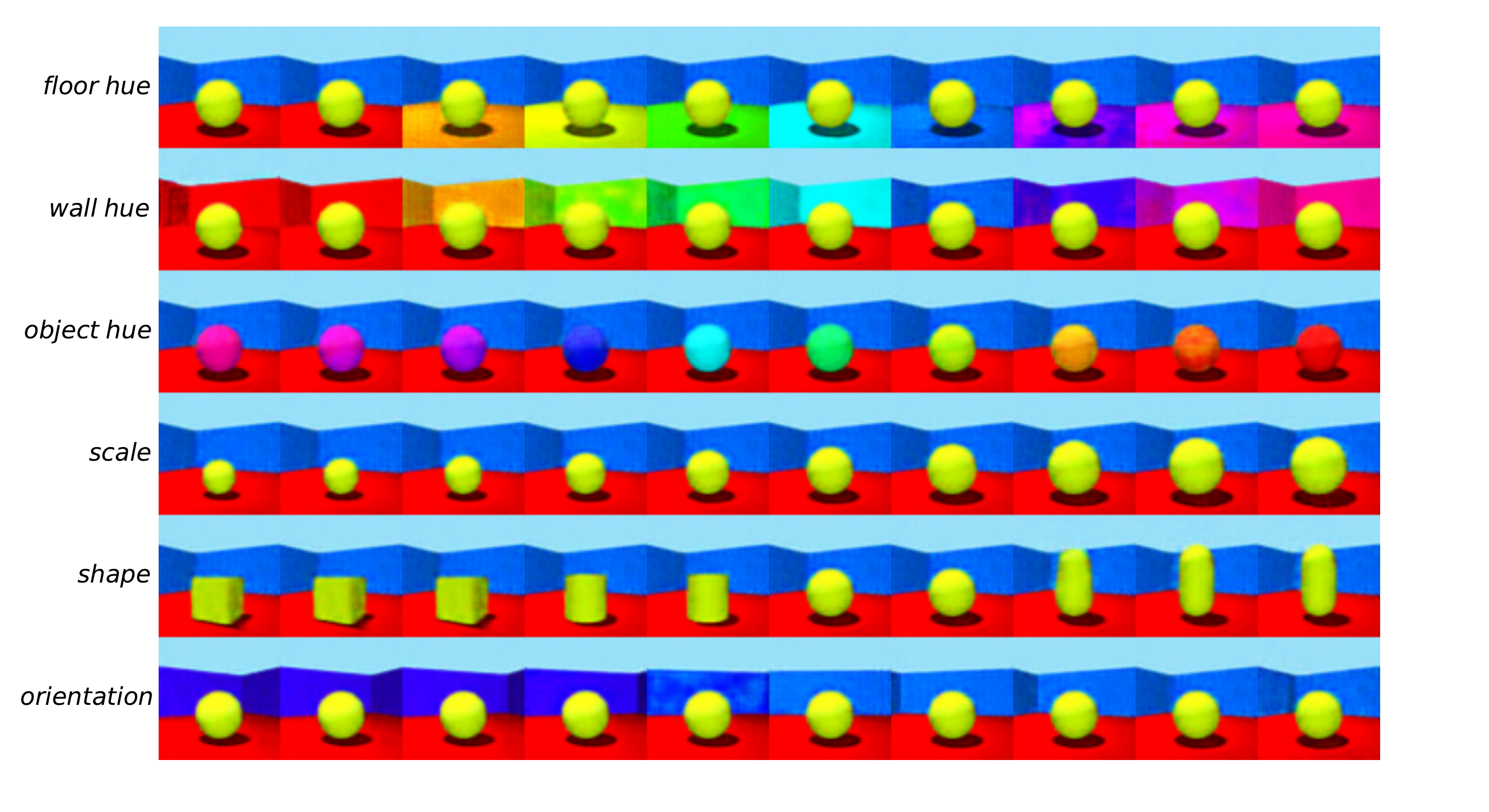} 
  {\scriptsize (e) FVAE}
      \end{minipage}
      \hfill
      \begin{minipage}{0.49\linewidth}
      \centering
      \includegraphics[ width=\linewidth]{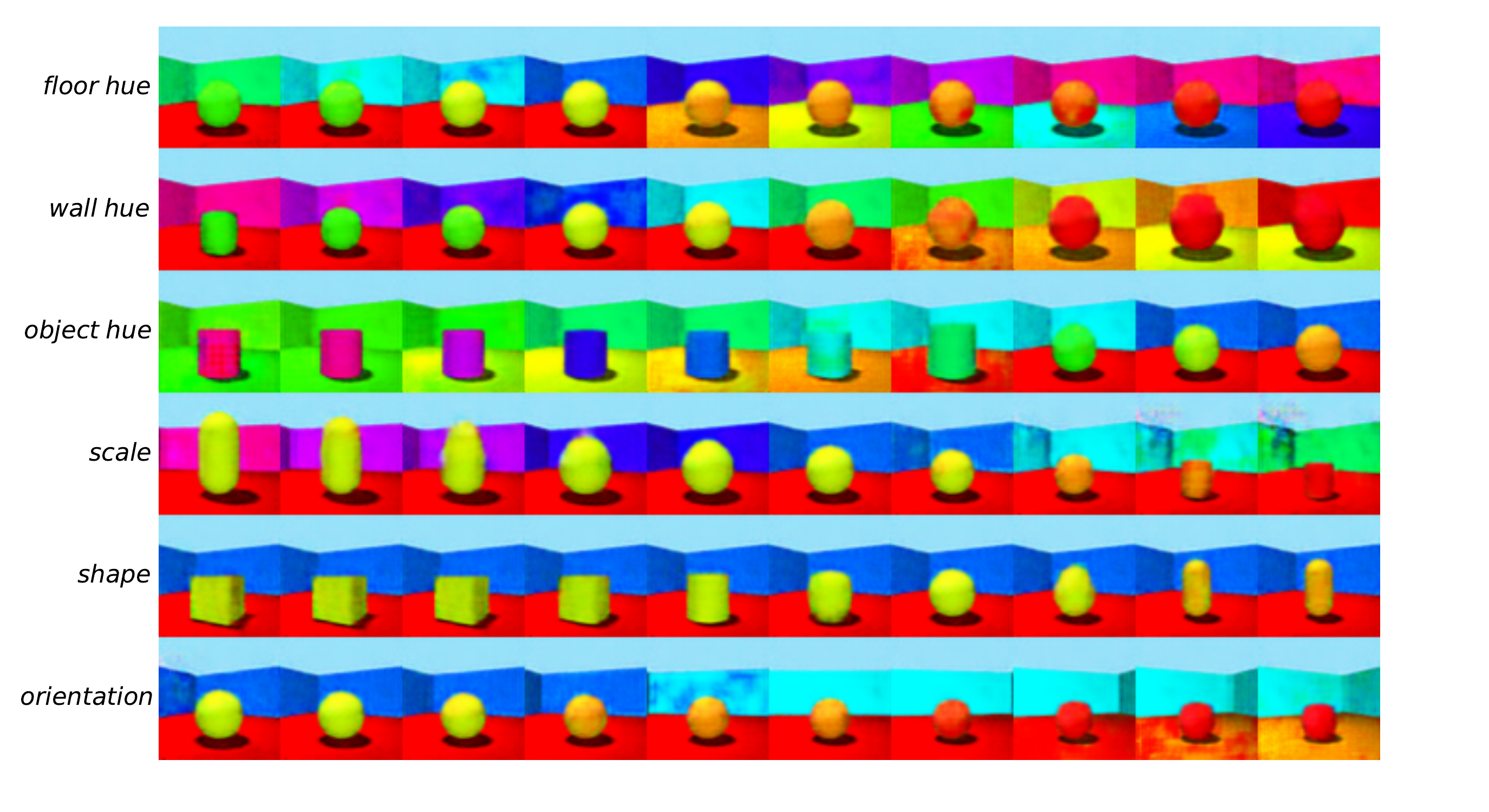}
  {\scriptsize (f) InfoVAE}
      \end{minipage}
      \hfill
      \begin{minipage}{0.49\linewidth}
     \centering
   \includegraphics[width=\linewidth]{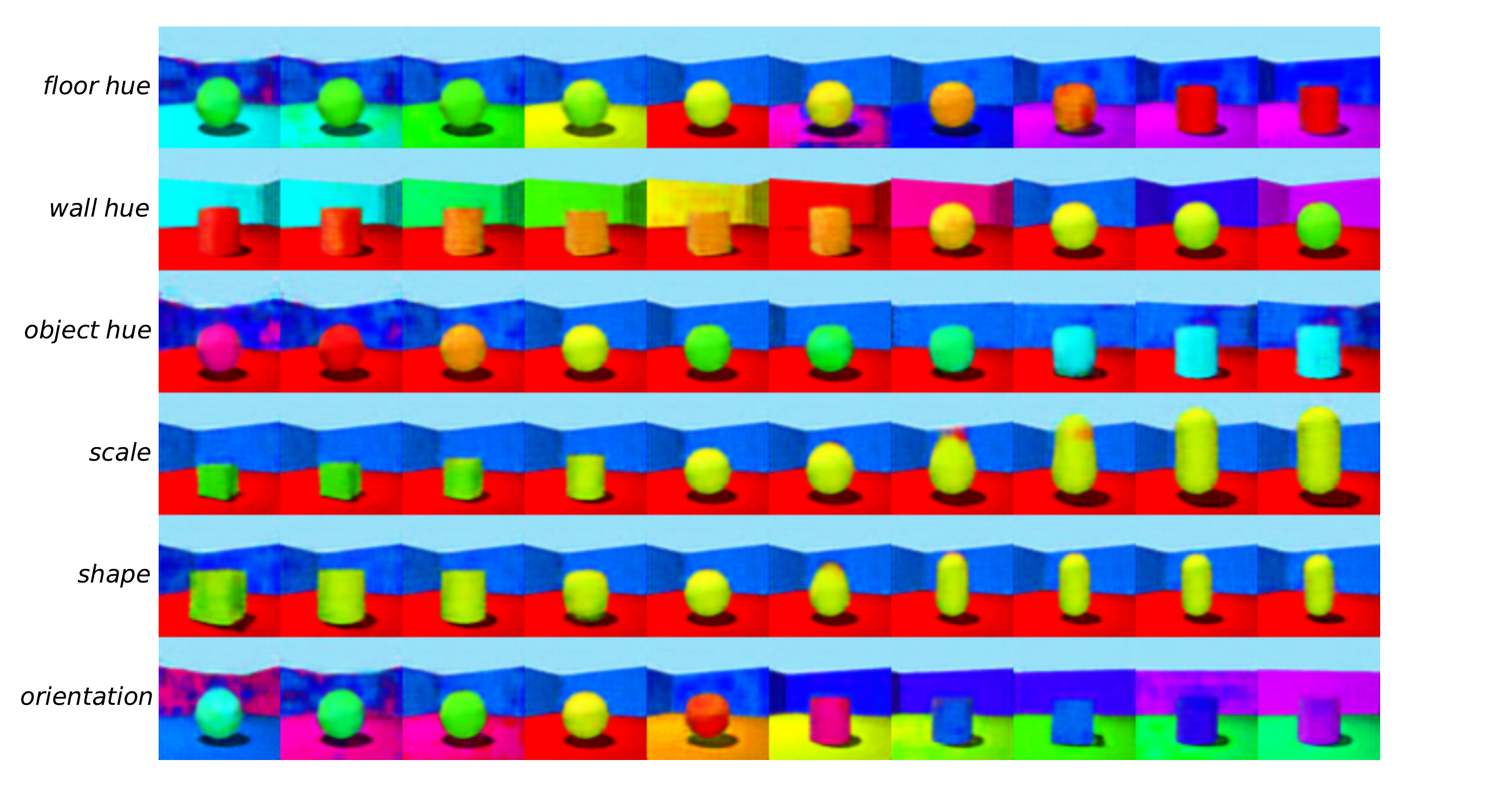} 
  {\scriptsize (g) VAE}
      \end{minipage}
      \hfill
      \begin{minipage}{0.49\linewidth}
      \centering
      \includegraphics[ width=\linewidth]{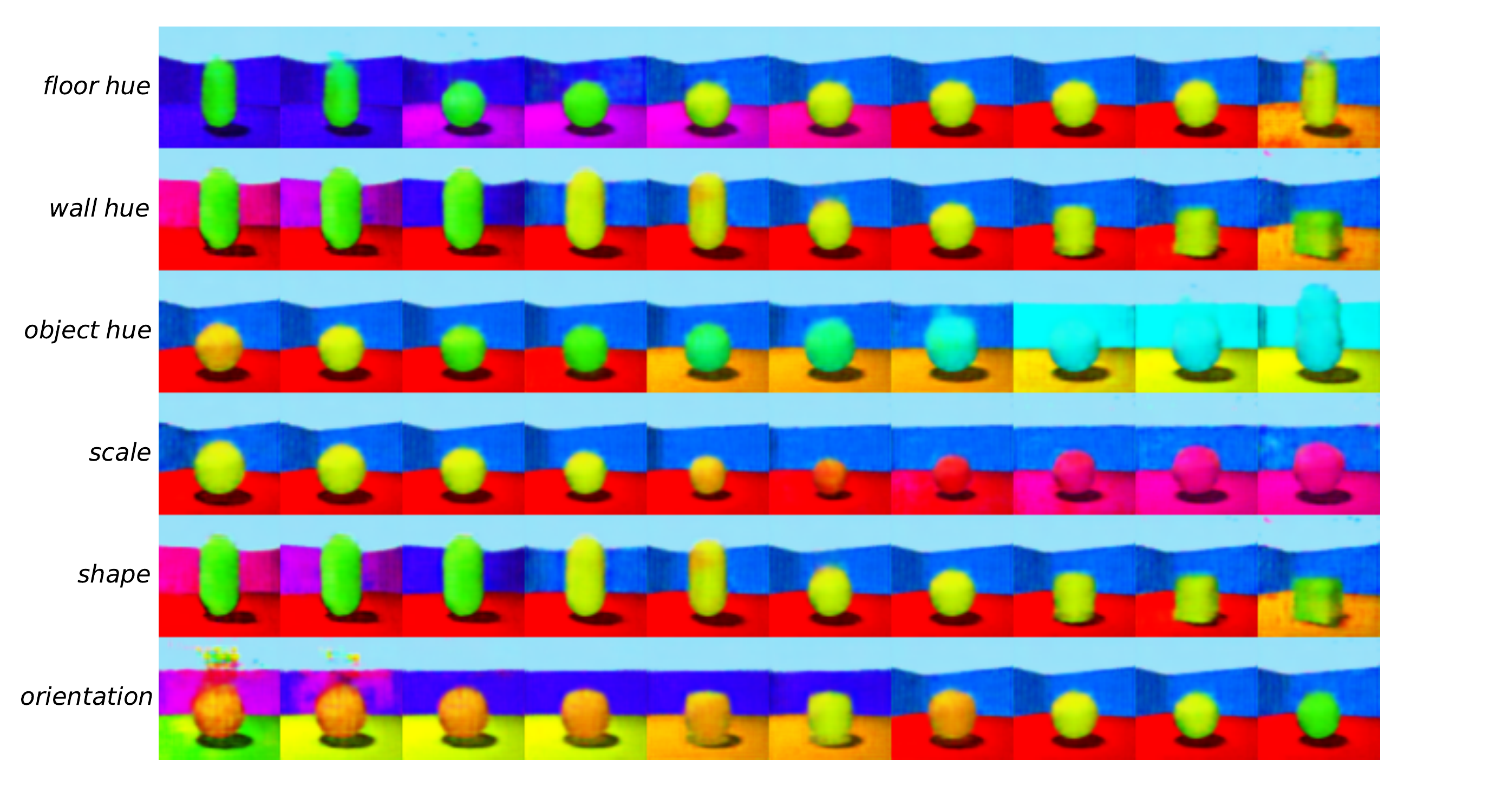}
  {\scriptsize (h) WAE}
      \end{minipage}
     \caption{Reconstructions of latent traversals across each
latent dimension in the 3D Shape dataset.}
     \label{fig:3d_latent_raversal}
\end{figure*}

\begin{figure*}
      \begin{minipage}{0.49\linewidth}
     \centering
   \includegraphics[width=\linewidth]{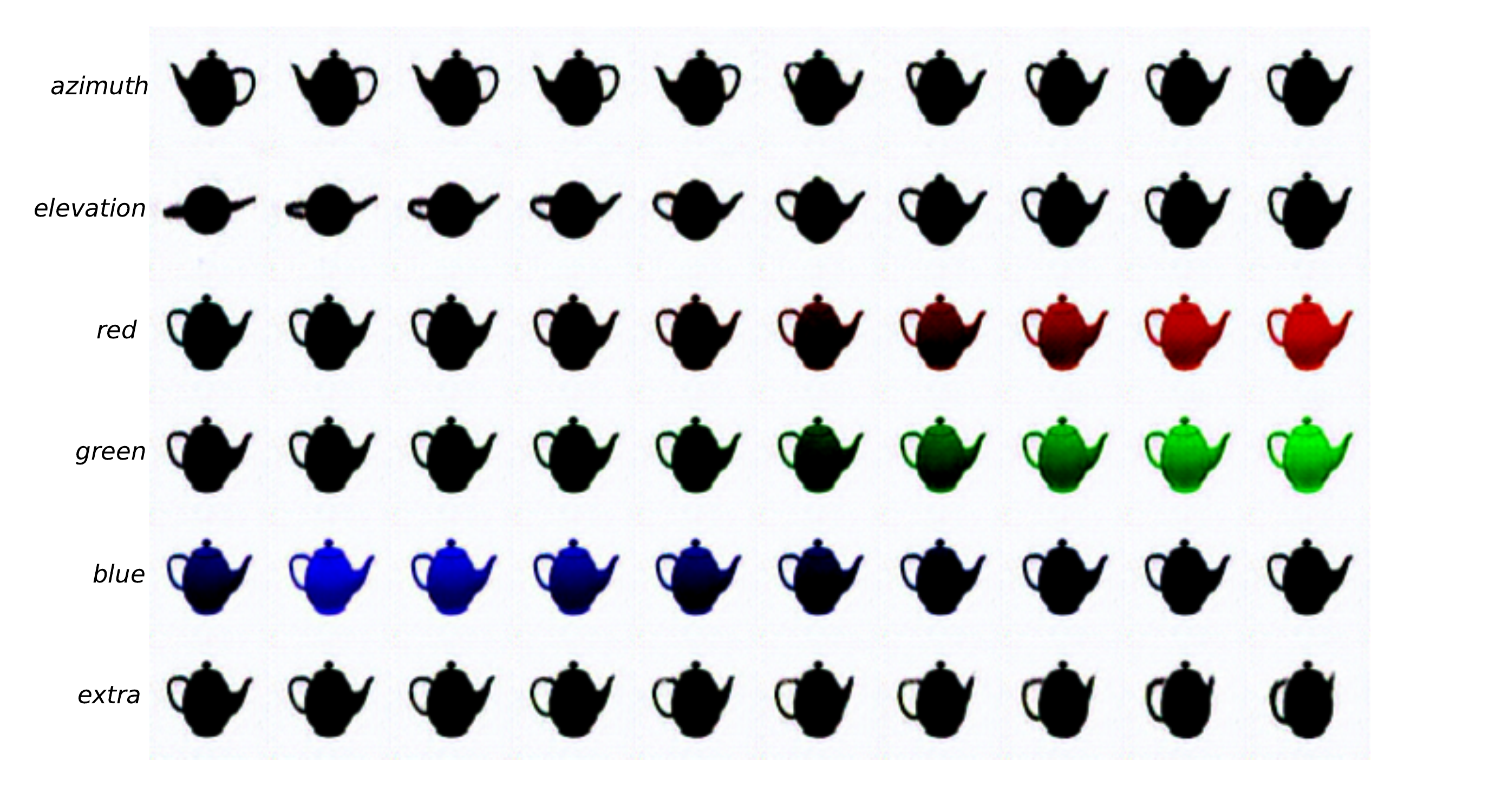} 
  {\scriptsize (a) DAE}
      \end{minipage}
      \hfill
      \begin{minipage}{0.49\linewidth}
      \centering
      \includegraphics[ width=\linewidth]{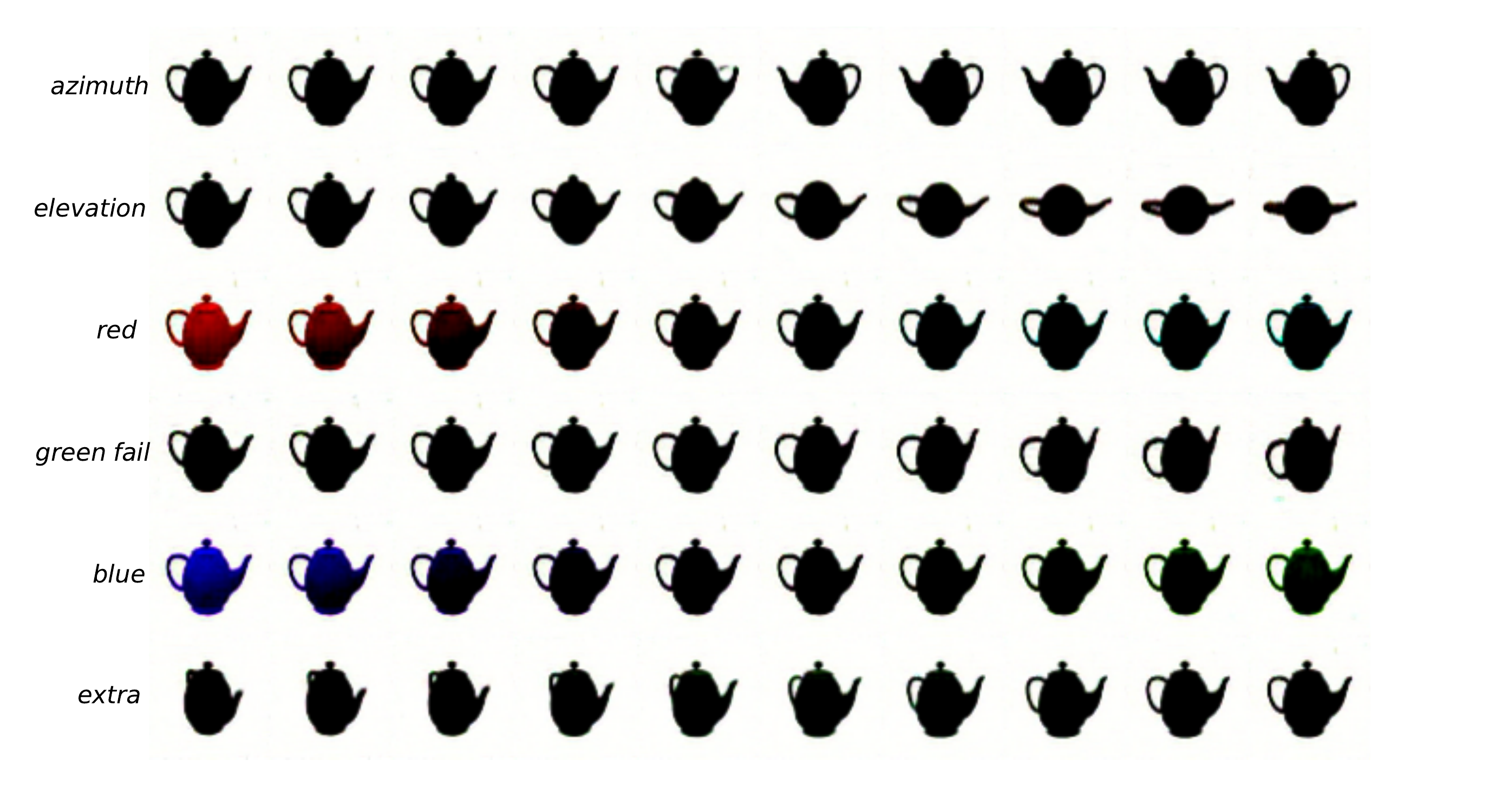}
  {\scriptsize (b) $\beta$-VAE}
      \end{minipage}
      \hfill
      \begin{minipage}{0.49\linewidth}
     \centering
   \includegraphics[width=\linewidth]{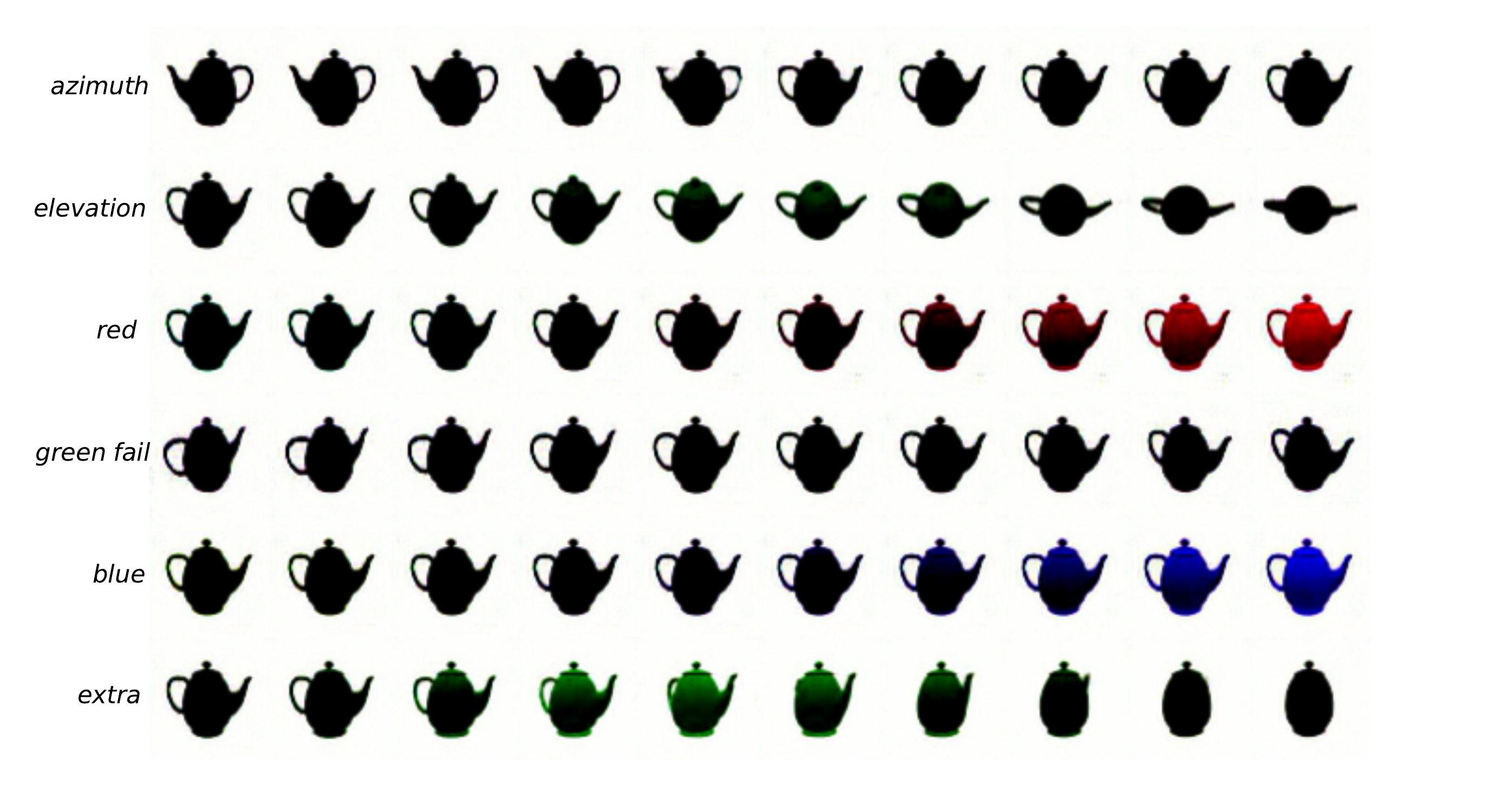} 
  {\scriptsize (c) $\beta$-TCVAE}
      \end{minipage}
      \hfill
      \begin{minipage}{0.49\linewidth}
      \centering
      \includegraphics[ width=\linewidth]{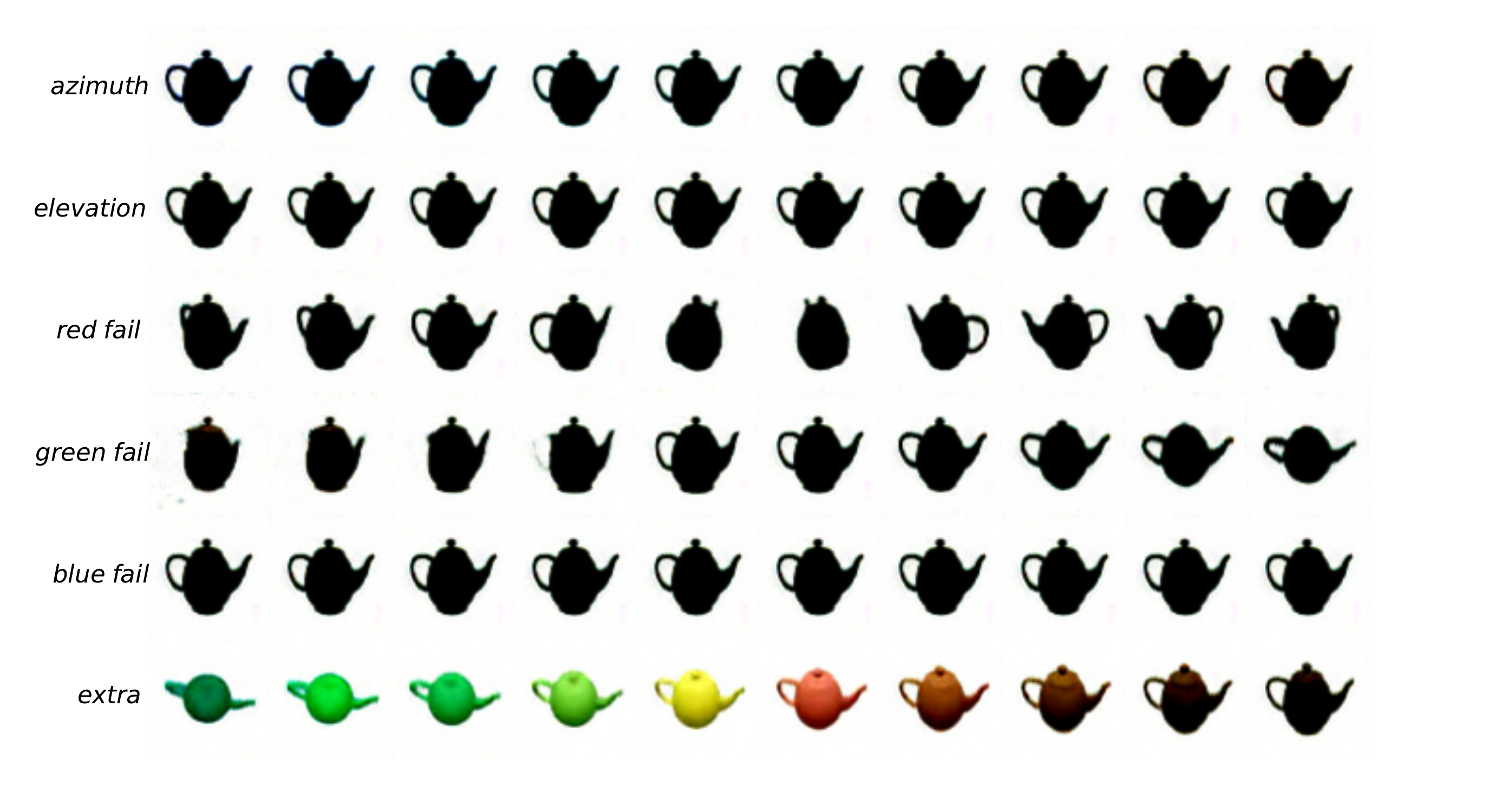}
  {\scriptsize (d) CCI-VAE}
      \end{minipage}
      \begin{minipage}{0.49\linewidth}
     \centering
   \includegraphics[width=\linewidth]{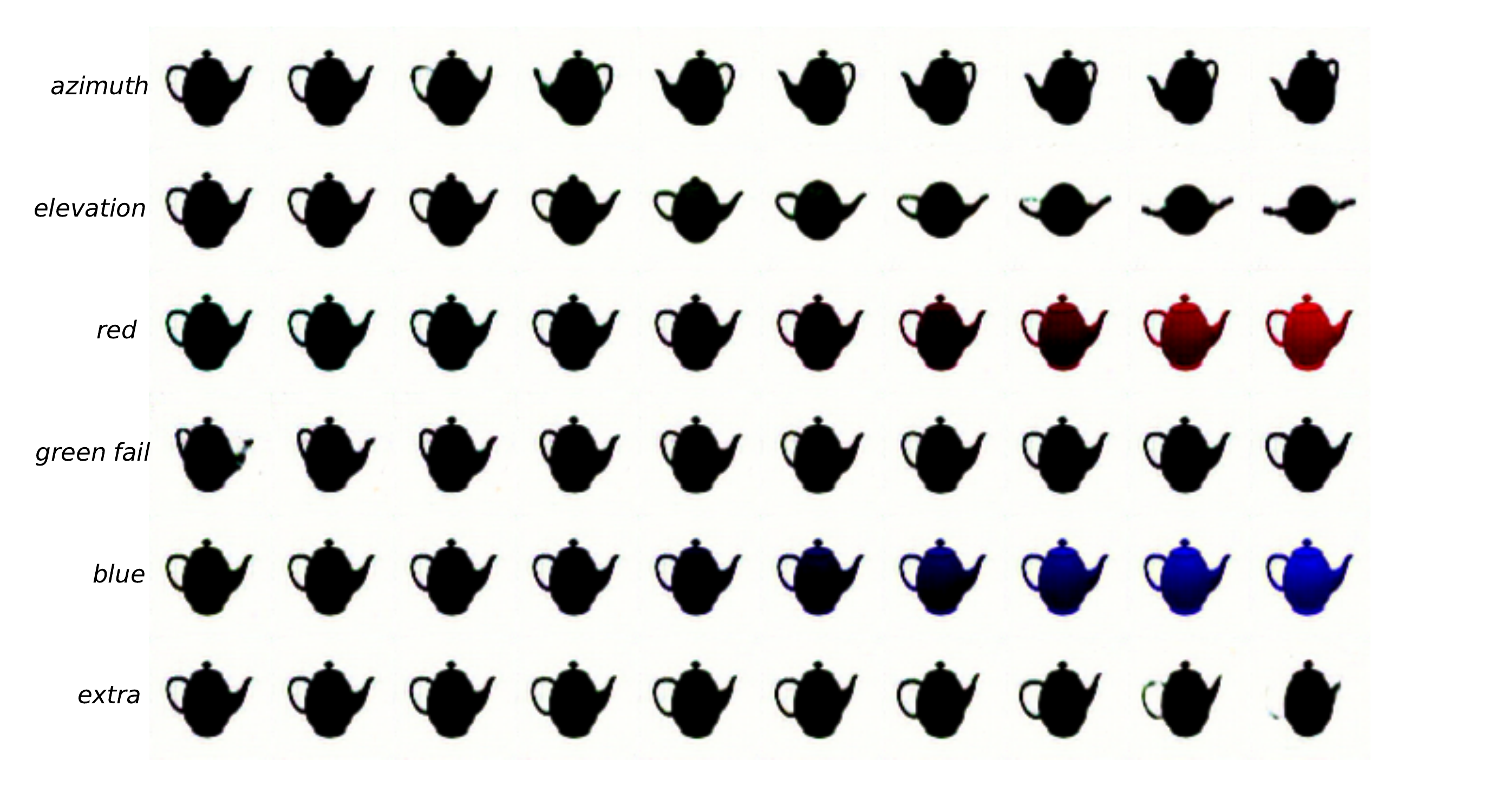} 
  {\scriptsize (e) FVAE}
      \end{minipage}
      \hfill
      \begin{minipage}{0.49\linewidth}
      \centering
      \includegraphics[ width=\linewidth]{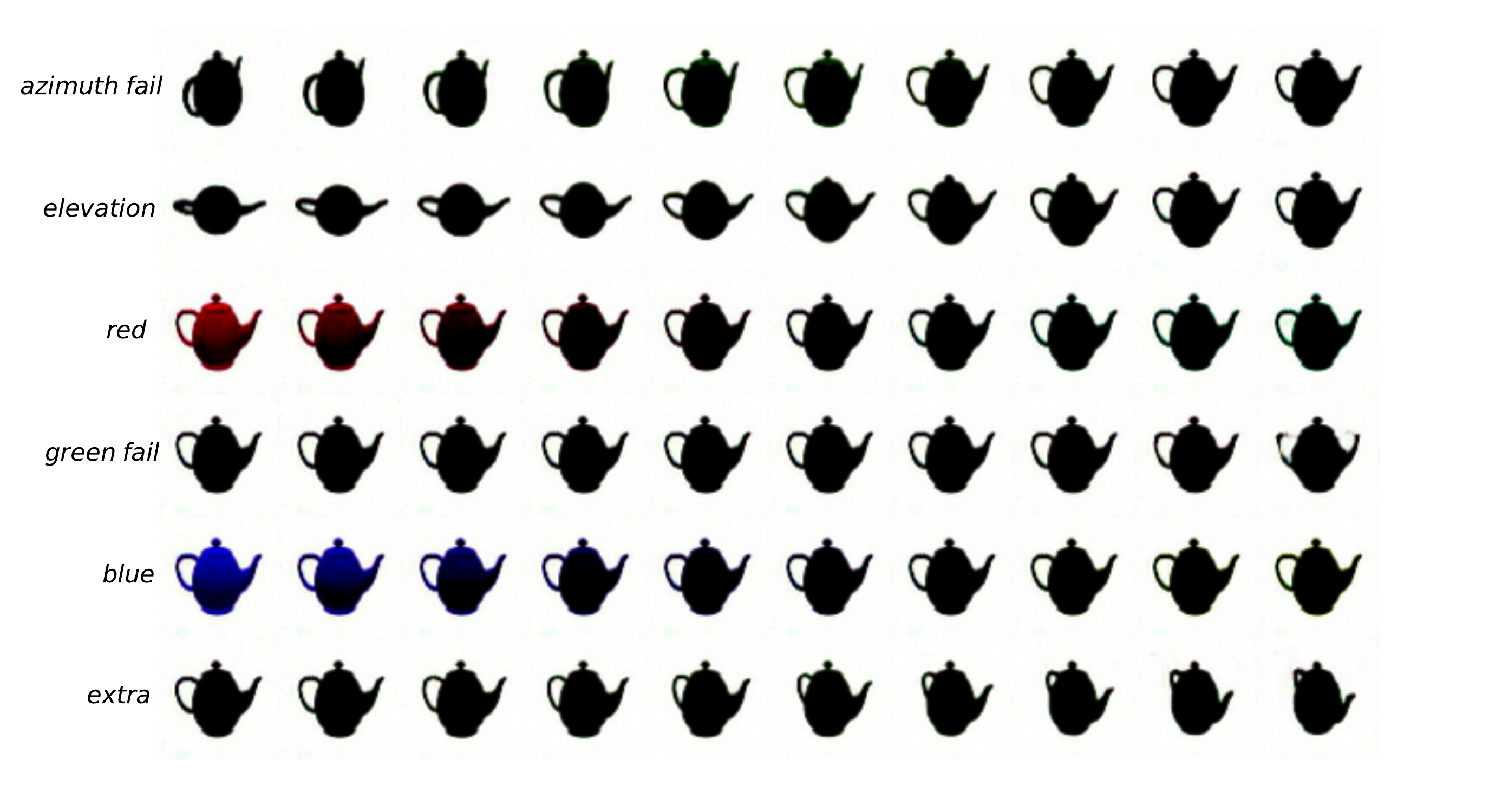}
  {\scriptsize (f) InfoVAE}
      \end{minipage}
      \hfill
      \begin{minipage}{0.49\linewidth}
     \centering
   \includegraphics[width=\linewidth]{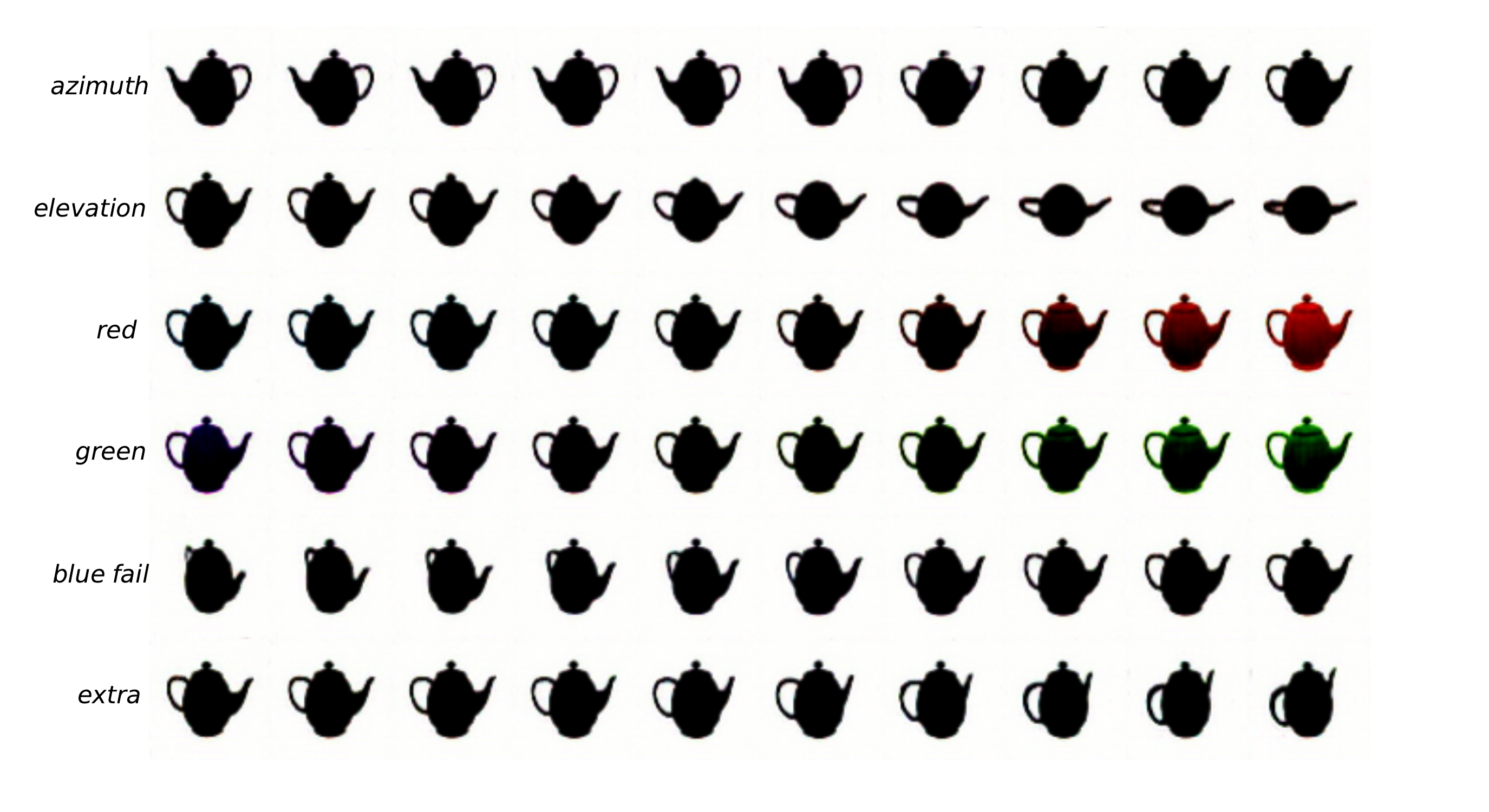} 
  {\scriptsize (g) VAE}
      \end{minipage}
      \hfill
      \begin{minipage}{0.49\linewidth}
      \centering
      \includegraphics[ width=\linewidth]{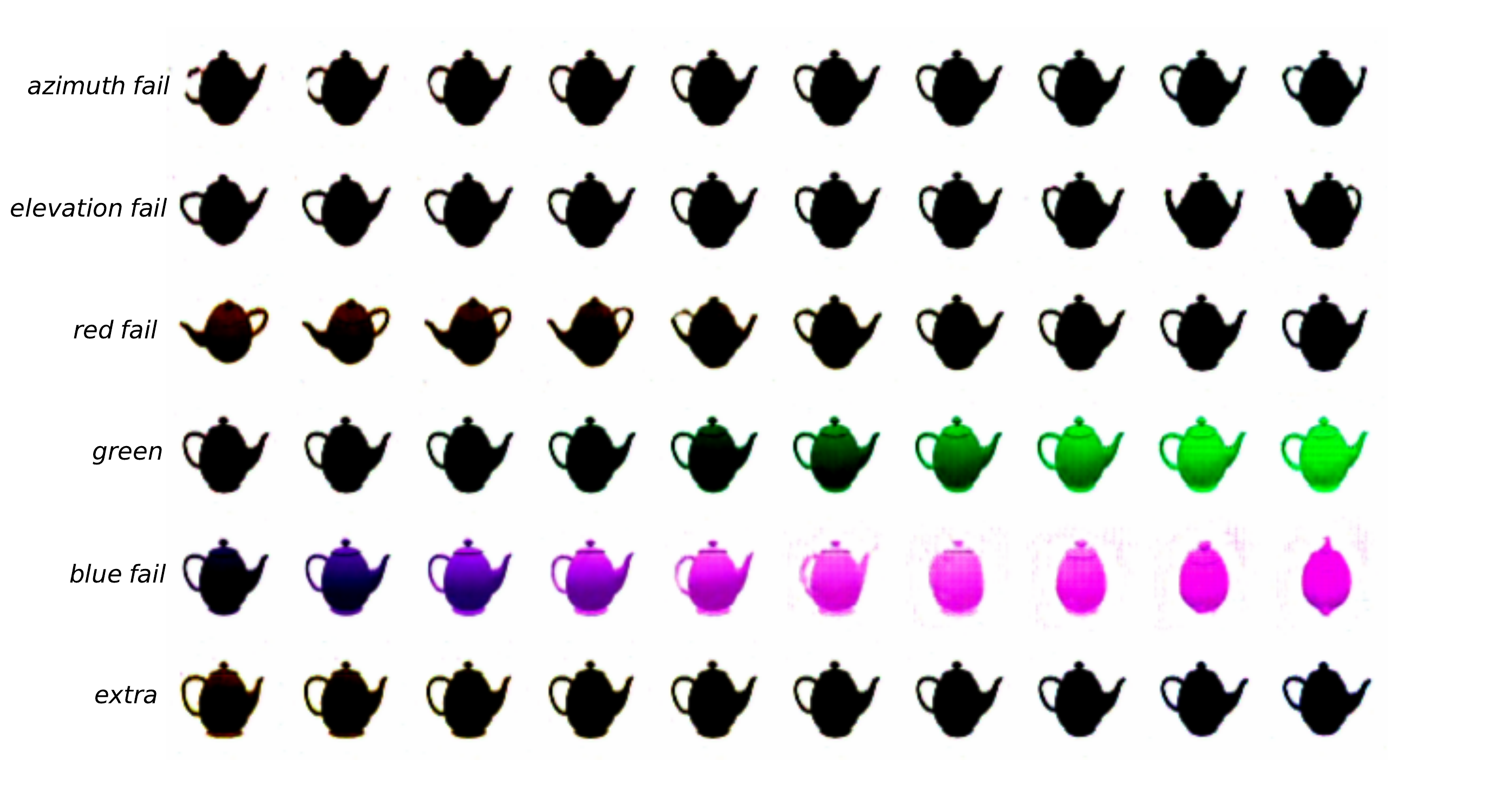}
  {\scriptsize (h) WAE}
      \end{minipage}
     \caption{Reconstructions of latent traversals across each
latent dimension in the 3D Teapots dataset.}
     \label{fig:3d_teapots_latent_raversal}
\end{figure*}

\begin{figure*}
      \begin{minipage}{0.49\linewidth}
     \centering
   \includegraphics[width=\linewidth]{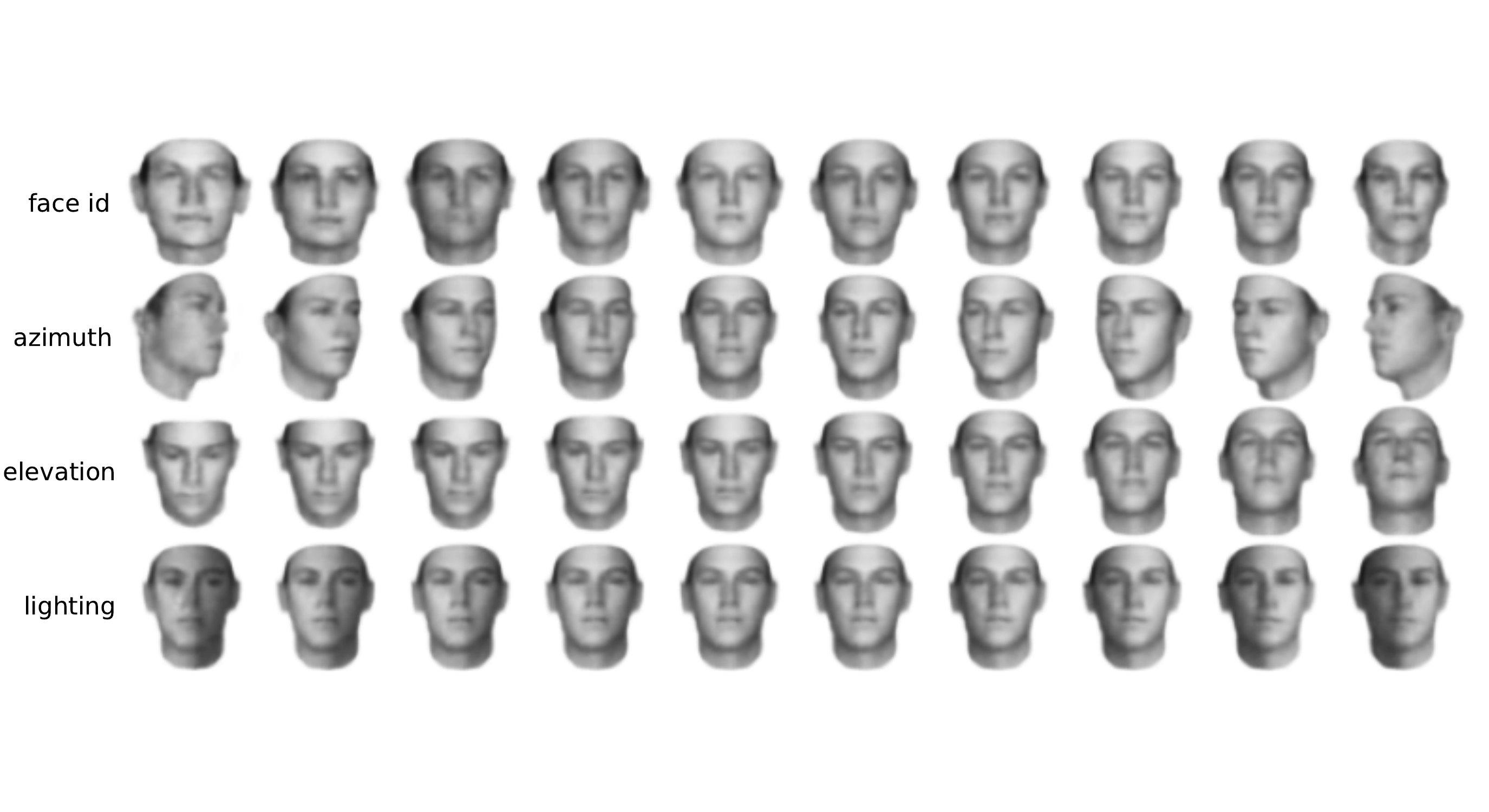} 
  {\scriptsize (a) DAE}
      \end{minipage}
      \hfill
      \begin{minipage}{0.49\linewidth}
      \centering
      \includegraphics[ width=\linewidth]{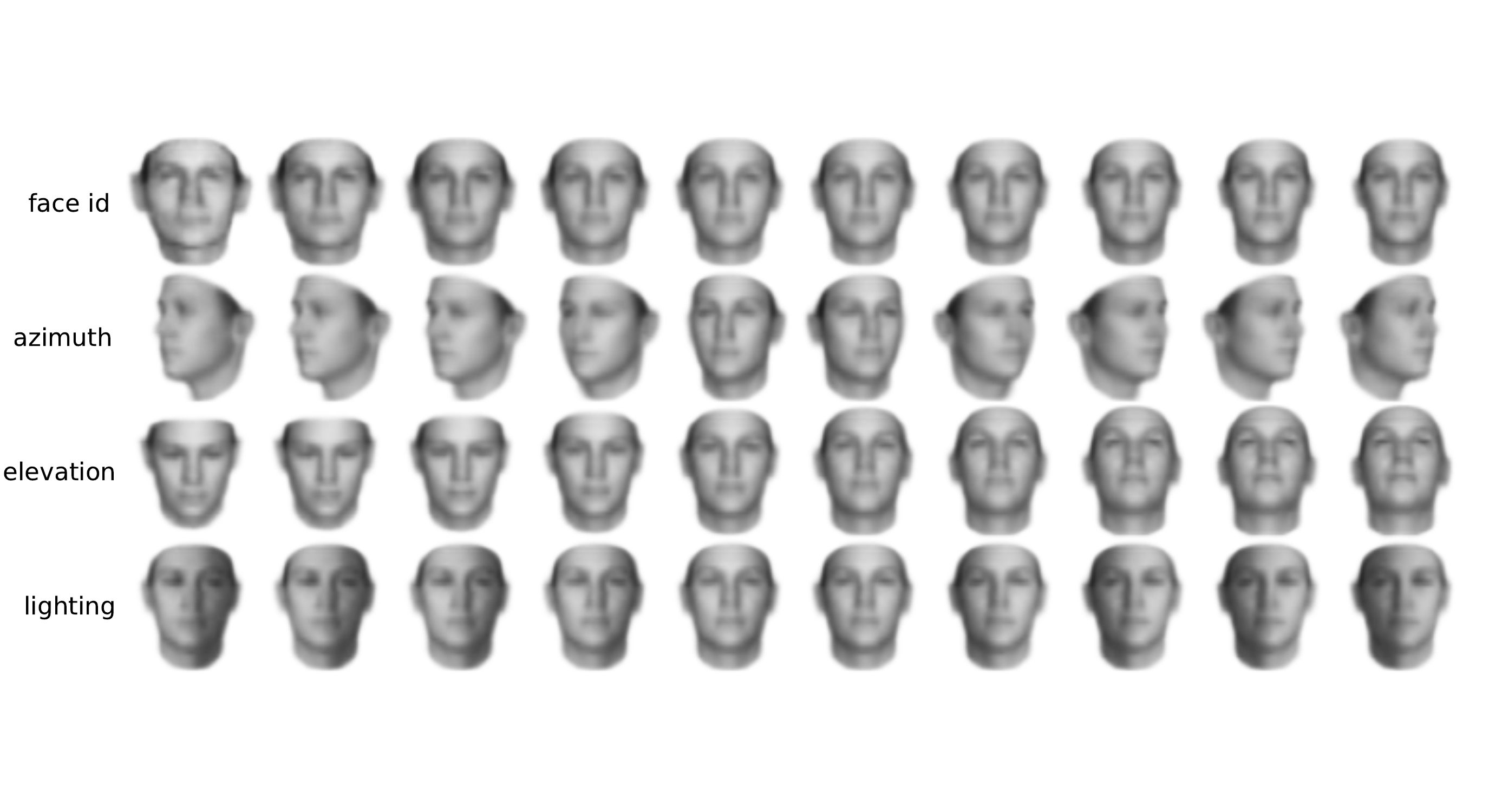}
  {\scriptsize (b) $\beta$-VAE}
      \end{minipage}
      \hfill
      \begin{minipage}{0.49\linewidth}
     \centering
   \includegraphics[width=\linewidth]{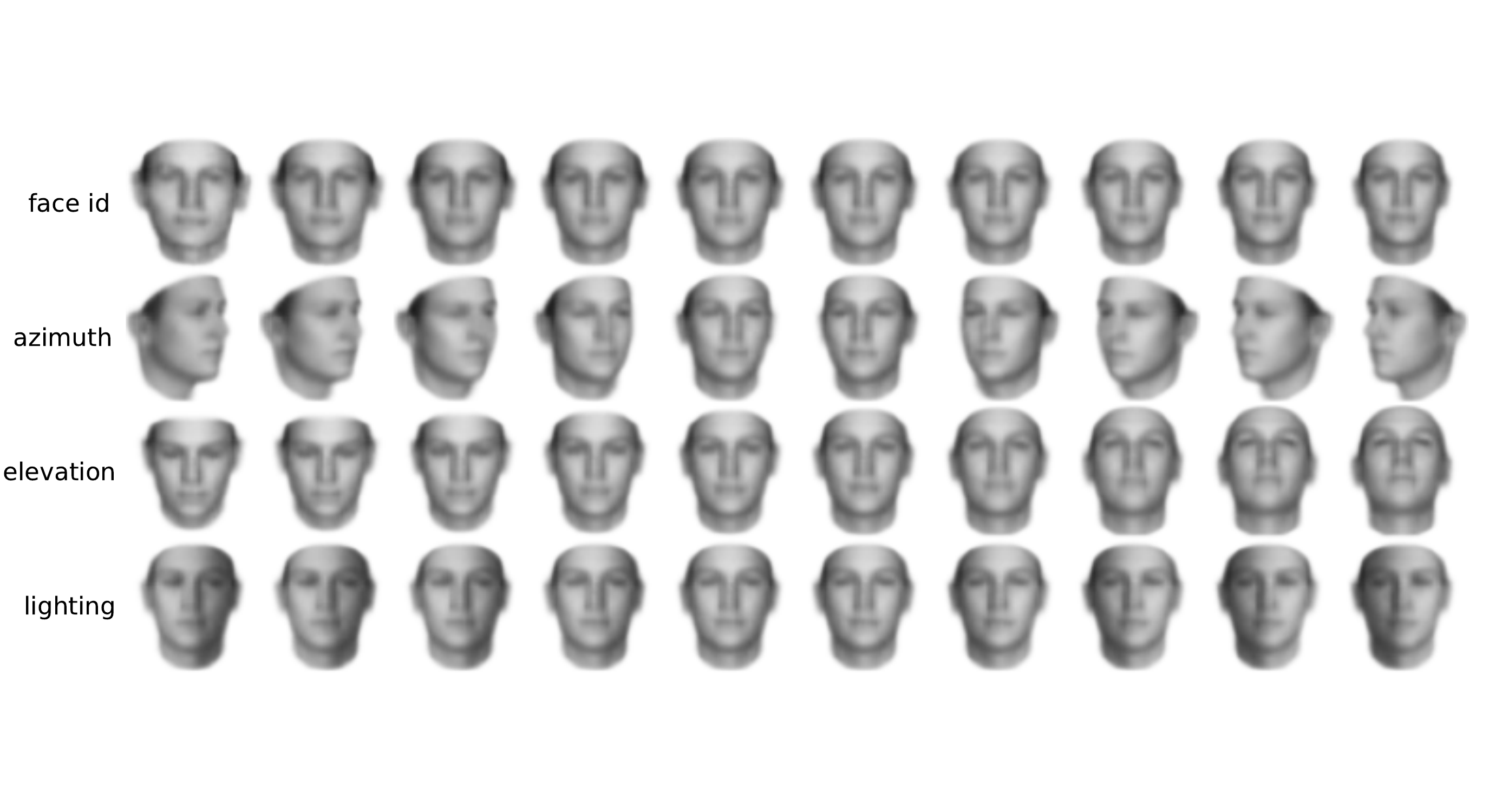} 
  {\scriptsize (c) $\beta$-TCVAE}
      \end{minipage}
      \hfill
      \begin{minipage}{0.49\linewidth}
      \centering
      \includegraphics[ width=\linewidth]{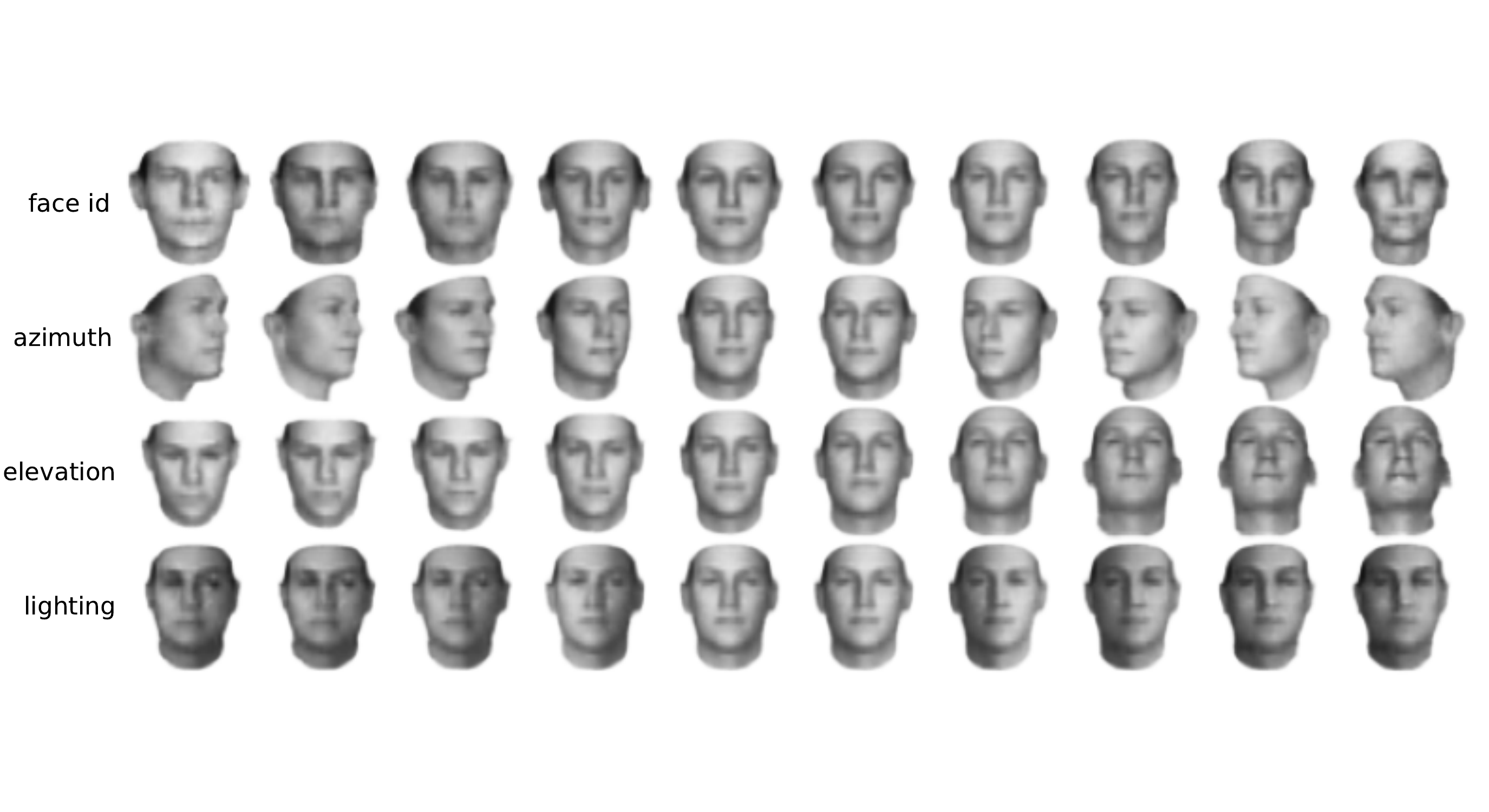}
  {\scriptsize (d) CCI-VAE}
      \end{minipage}
      \begin{minipage}{0.49\linewidth}
     \centering
   \includegraphics[width=\linewidth]{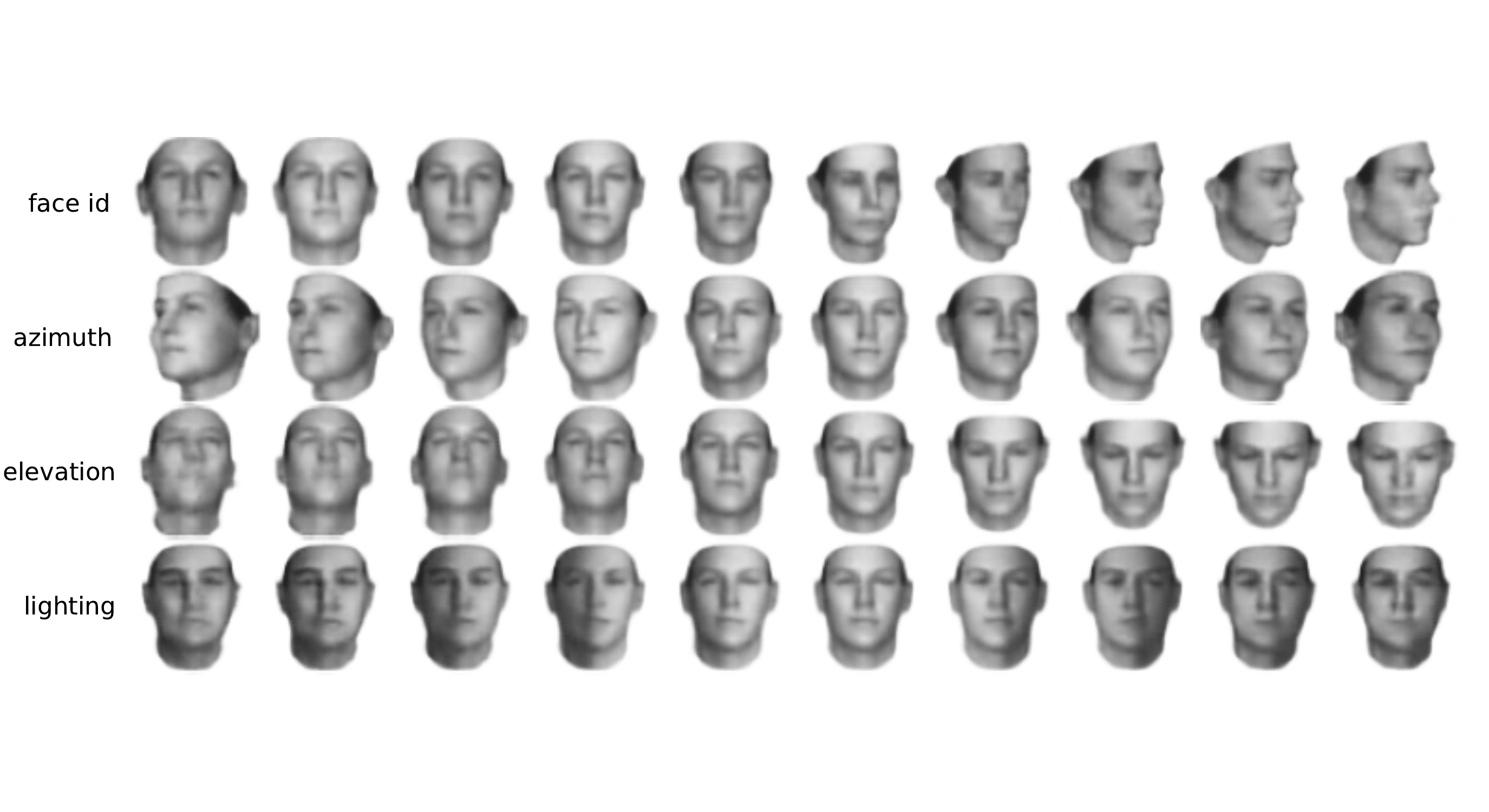} 
  {\scriptsize (e) FVAE}
      \end{minipage}
      \hfill
      \begin{minipage}{0.49\linewidth}
     \centering
   \includegraphics[width=\linewidth]{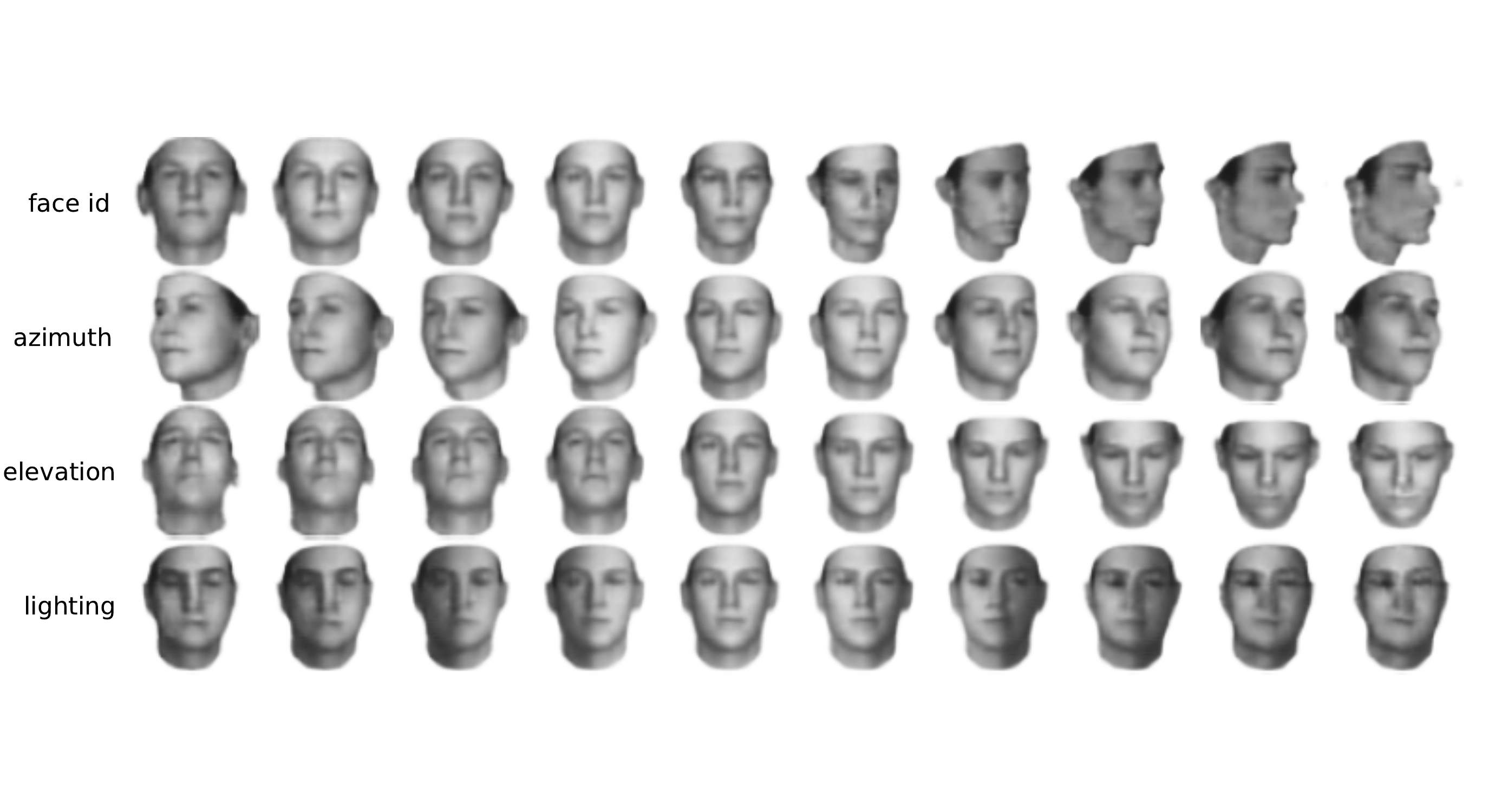} 
  {\scriptsize (g) VAE}
      \end{minipage}
     \caption{Reconstructions of latent traversals across each
latent dimension in the 3D Face Model dataset. We do not visualize results of InfoVAE and WAE since both models fail to disentangle the data.}
     \label{fig:3d_face_latent_raversal}
\end{figure*}

\end{document}